\definecolor{myblue}{RGB}{204, 233, 239}
\definecolor{mygreen}{RGB}{234, 240, 225}
\newcommand{\ie}{\textit{i.e. }}
\theoremstyle{plain}
\newtheorem{thm}{Theorem}
\newtheorem{lem}{Lemma}
\newtheorem{prop}{Proposition}
\theoremstyle{definition}
\newtheorem{rem}{Remark}
\newtheorem{defn}{Definition}
\newtheorem{assmp}{Assumption}
\newtheorem*{rthm1}{\textbf{Restate of Theorem} \ref{thm: Generalization Bound for AUCSeg}}
\newtheorem*{rprop2}{\textbf{Restate of Proposition} \ref{prop: N}}
\definecolor{myred}{RGB}{255,166,166}
\definecolor{myblue}{RGB}{108,172,218}
\definecolor{mypurple}{RGB}{144,72,200}
\newenvironment{qbox}
{\begin{tcolorbox}[enhanced jigsaw, drop shadow=black!50!white,colback=white, width=0.95\linewidth, center, left=2pt,right=2pt,top=1pt,bottom=1pt]}
{\end{tcolorbox}}
\title{AUCSeg: AUC-oriented Pixel-level Long-tail \\Semantic Segmentation}
\author{\parbox{14cm}
  {\centering
    {\large  Boyu Han$^{1,2}$ \ \ \ \ \ \ \ \ \  Qianqian Xu$^{1,3}$\thanks{Corresponding authors.} \ \ \ \ \ \ \ \ \ Zhiyong Yang$^{2}$ \ \ \ \ \ \ \ \ \  Shilong Bao$^{2}$  \\ \quad\quad Peisong Wen$^{1,2}$ \ \ \ \ \ \ \ \ \ Yangbangyan Jiang$^{2}$ \ \ \ \ \ \ \ \ \ Qingming Huang$^{2,1,4*}$ }\\
    {\normalsize
    \textnormal{$^1$ Key Lab. of Intelligent Information Processing, Institute of Computing Technology, CAS}\\
    \textnormal{$^2$ School of Computer Science and Tech., University of Chinese Academy of Sciences}\\
    \textnormal{$^3$ Peng Cheng Laboratory}\\
    \textnormal{$^4$ Key Laboratory of Big Data Mining and Knowledge Management, CAS} \\
    }
    {\tt\small \{hanboyu23z,xuqianqian,wenpeisong20z\}@ict.ac.cn, \{yangzhiyong21,baoshilong,jiangyangbangyan,qmhuang\}@ucas.ac.cn ~~ }
  }
}
\begin{document}
\maketitle
\begin{abstract}
The \textit{Area Under the ROC Curve (AUC)} is a well-known metric for evaluating instance-level long-tail learning problems. In the past two decades, many AUC optimization methods have been proposed to improve model performance under long-tail distributions. In this paper, we explore AUC optimization methods in the context of pixel-level long-tail semantic segmentation, a much more complicated scenario. This task introduces two major challenges for AUC optimization techniques. On one hand, AUC optimization in a pixel-level task involves complex coupling across loss terms, with structured inner-image and pairwise inter-image dependencies, complicating theoretical analysis. On the other hand, we find that mini-batch estimation of AUC loss in this case requires a larger batch size, resulting in an unaffordable space complexity. To address these issues, we develop a pixel-level AUC loss function and conduct a dependency-graph-based theoretical analysis of the algorithm's generalization ability. Additionally, we design a \textit{Tail-Classes Memory Bank (T-Memory Bank)} to manage the significant memory demand. Finally, comprehensive experiments across various benchmarks confirm the effectiveness of our proposed AUCSeg method. The code is available at \href{https://github.com/boyuh/AUCSeg}{https://github.com/boyuh/AUCSeg}.
\end{abstract}

\section{Introduction}
\label{sec: introduction}

Semantic segmentation aims to categorize each pixel within an image into a specific class, which is a fundamental task in image processing and computer vision~\cite{guo2022attention,lateef2019survey,wang2020deep}. Over the past decades, substantial efforts \cite{long2015fully,ding2019boundary,huang2019ccnet,yu2015multi} have advanced the field of semantic segmentation. The mainstream paradigm is to develop innovative network architectures that encode more discriminative features for dense pixel-level classifications. Typical backbones include CNN-based~\cite{long2015fully,deeplabv3plus2018,wu2019fastfcn} and newly emerging Transformer-based methods~\cite{zheng2021rethinking,xie2021segformer,ranftl2021vision,cheng2022masked,guo2022segnext}, which have achieved the state-of-the-art (SOTA) performance. Beyond this direction, researchers \cite{dong2019denseu,rezaei2020recurrent,kampffmeyer2016semantic,hu2021semi} have recently realized the \textit{\textbf{Pixel-level Long-tail issue} in Semantic Segmentation (PLSS)}, as shown at the top of \Cref{fig: intro}. Similar to the flaws of traditional long-tail problems, the major classes will dominate the model learning process, causing the model to overlook the segmentation of minority classes in an image. Several remedies have been proposed to alleviate this~\cite{kini2021label,menon2020long,cao2019learning,lin2017focal,zhang2021distribution,yangharnessing,wang2023unified}. For example, \cite{wang2023balancing} introduces a category-wise variation technique inversely proportional to distribution to achieve balanced segmentation; \cite{rezaei2020recurrent} introduces a sequence-based generative adversarial network for imbalanced medical image segmentation, and \cite{hu2021semi} develops a re-weighting scheme for semi-supervised segmentation.

Currently, mainstream studies fall into two camps. One is to develop carefully designed backbones for long-tail distributions but leave the effect of loss functions unconsidered. The other is to conduct empirical studies on the loss functions without exploring their theoretical impact on the generalization performance. A question then arises naturally:

\begin{qbox}
    \begin{center}
      \textit{Can we find a theoretically grounded loss function for PLSS on top of SOTA backbones?}
    \end{center}
\end{qbox}

\begin{wrapfigure}{r}{0.5\columnwidth}
    \begin{center}
        \includegraphics[width=0.45\columnwidth]{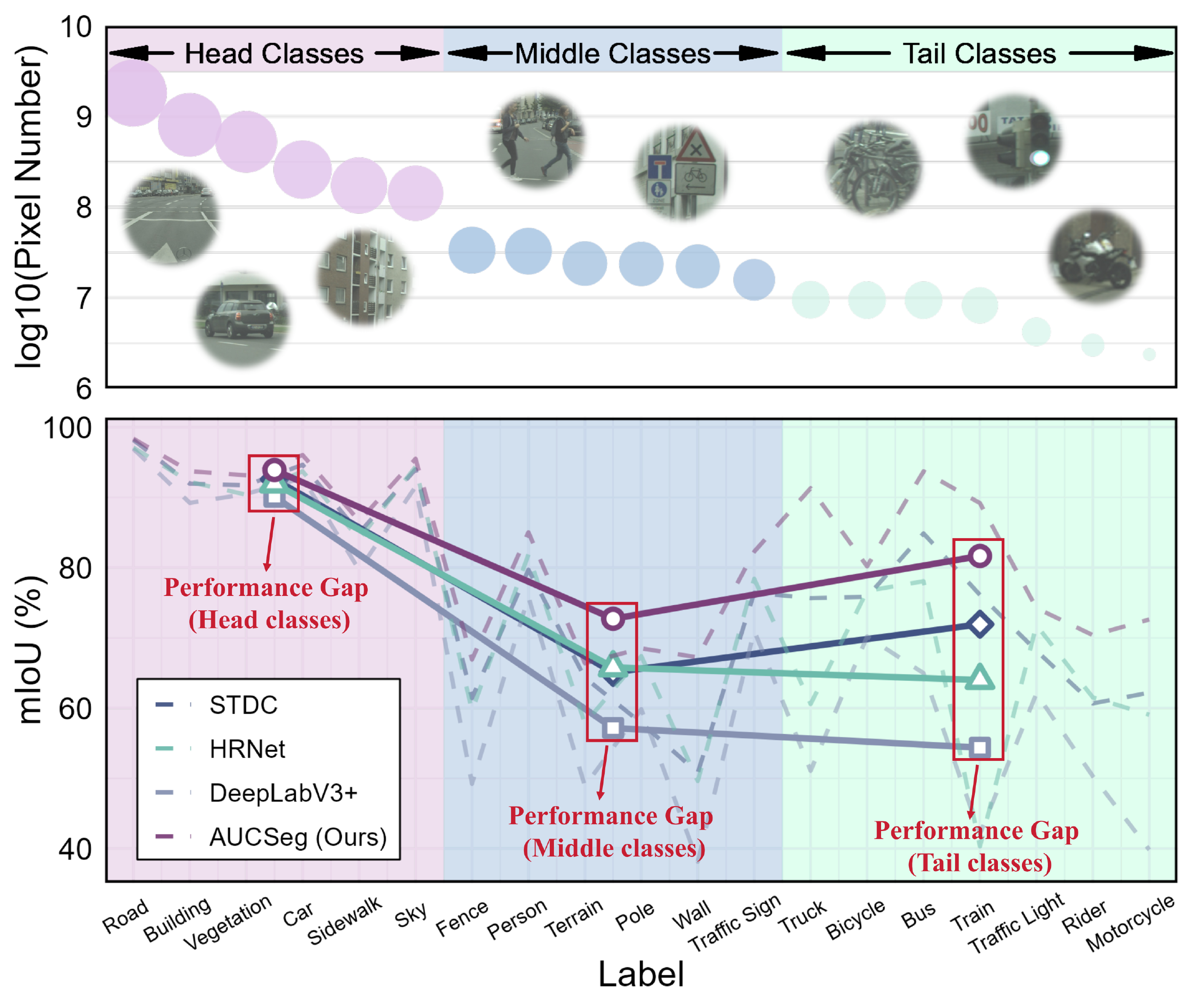}
    \end{center}
    \caption{Statistic information of pixel number for each class in the Cityscapes training set and the performance of previous methods (DeepLabV3+, HRNet and STDC) compared to our method (AUCSeg). Our method aims to improve overall performance, particularly for tail classes. The dashed lines represent mIoU values for each class, while the solid lines represent the average mIoU for the head, middle, and tail classes.}
    \label{fig: intro}
\end{wrapfigure}

This paper provides an affirmative answer from the AUC perspective and proposes a novel framework called \textit{AUC-oriented Pixel-level Long-tail Semantic Segmentation (AUCSeg)}. Specifically, AUC indicates the likelihood that a positive sample scores higher than a negative one, which has been proven to be \textbf{insensitive} to data distribution~\cite{yang2021learning,yuan2023libauc}. Applying AUC to \textbf{instance-level long-tail} classifications has shown promising progress in the machine learning community \cite{yang2021learning,yuan2021large,yang2022optimizing,shao2022asymptotically}. Motivated by its success, this paper starts an early trial to study AUC optimization for PLSS. The primary concern is to study its effectiveness for PLSS from a theoretical perspective. The key challenge is that the standard techniques for generalization analysis~\cite{mohri2018foundations, bartlett2002rademacher, cucker2002mathematical} require the loss function to be expressed as a sum of independent terms.  Unfortunately, the proposed loss function does not satisfy this assumption due to the dual effect of structured inner-image dependency and pairwise inter-image dependency. This complicated structure poses a big challenge to understanding its generalization behavior. To address this, we decompose the loss function into inner-image and inter-image terms. On top of this reformulation, we deploy the dependency graph \cite{zhang2024generalization}  to decouple the interdependency. Finally, we reach a bound of $\widetilde{\mathcal{O}}\left(\tau \sqrt{\log\left(\tau N k\right)/N}\right)$, where $\tau$ behaves like an indicator for imbalance degree, and $k$ denotes the number of pixels in each image. This suggests optimizing AUC loss could ensure a promising performance under PLSS.

Back to the practical AUC learning process, we realize that the stochastic gradient optimization (SGD) for structured pixel-level tasks imposes a greater computational burden compared to instance-level long-tail problems. Specifically, the SGD algorithm of AUC requires \textbf{at least one sample from each class in each mini-batch}	~\cite{yuan2021compositional,yang2022auc}. In light of this, the primary choice is to adopt the so-called stratified sampling on all images~\cite{singh1996stratified,meng2013scalable,yang2021learning} for mini-batch generation (See \Cref{equ: l_auc}). Unfortunately, as shown in \Cref{fig: overview_2}(a) and (b), this is hard to implement under PLSS because \textbf{pixel-level labels are densely coupled in each image}. Meanwhile, as shown in Proposition \ref{prop: N}, we also argue that directly using random sampling to include all classes would require an extremely large batch size. This leads to unaffordable GPU memory demands for optimization, as described in the experiments in \Cref{subsec: Spatial Resource Consumption}.

To alleviate this, a novel \textit{Tail-class Memory Bank (T-Memory Bank)} is carefully designed. The main idea is to identify those missing pixel-level classes in each randomly generated mini-batch and then complete these absences using stored historical class information from the T-Memory Bank. This enables efficient optimization of AUCSeg with a light memory usage, enhancing the scalability of our proposed method, as shown in \Cref{fig: intro}. Finally, comprehensive empirical studies consistently speak to the efficacy of our proposed AUCSeg.

Our main contributions are summarized as follows:
\begin{itemize}[leftmargin=*]
    \item This paper starts the first attempt to explore the potential of AUC optimization in pixel-level long-tail problems. 
    \item We theoretically demonstrate the generalization performance of AUCSeg in semantic segmentation. To our knowledge, this area remains underexplored in the machine-learning community.
    \item We introduce a Tail-class Memory Bank to reduce the optimization burden for pixel-level AUC learning.
\end{itemize}

\section{Related Work}
\label{sec: relatedwork}

\subsection{Semantic Segmentation}
\label{subsec: semantic_segmentation}
Semantic segmentation is a subtask of computer vision, which has seen significant development since the inception of FCN~\cite{long2015fully}. The most common framework for semantic segmentation networks is the encoder-decoder. For the encoder, researchers typically use general models such as ResNet~\cite{he2016deep} and ResNeXt~\cite{xie2017aggregated}. As the segmentation tasks become more challenging, some specialized networks have emerged, such as HRNet~\cite{wang2020deep}, ICNet~\cite{zhao2018icnet}, and multimodal networks~\cite{wang2022multimodal,huareconboost}. For the decoder, a series of studies focus on strengthening edge features~\cite{ding2019boundary,zhen2020joint}, capturing global context~\cite{huang2019ccnet, guo2022beyond,jin2024idrnet}, and enhancing the receptive field~\cite{yu2015multi,ronneberger2015u,chen2017deeplab,chen2017rethinking}. Recently, the transformer has shown immense potential, surpassing previous methods. A series of methods related to Vision Transformer~\cite{zheng2021rethinking,xie2021segformer,ranftl2021vision,cheng2022masked,wang2024allspark} are proposed. SegNeXt~\cite{guo2022segnext}, which is the current \textit{state-of-the-art (SOTA)} method, possesses the same powerful feature extraction capabilities as the Vision Transformer and the same low computational requirements as CNN. Apart from improving the network, some research~\cite{kampffmeyer2016semantic,lin2017focal,dong2019denseu,chan2019application,cao2019learning,rezaei2020recurrent,hossain2021dual} is directed toward addressing the issue of class imbalance in semantic segmentation. However, the effectiveness of these methods is not significant. In this paper, we aim to improve the performance of long-tailed semantic segmentation from an AUC optimization perspective.

\subsection{AUC Optimization}
The development of AUC Optimization can be divided into two periods: the machine learning era and the deep learning era. As a pioneering study, \cite{cortes2003auc} ushers in the era of AUC in machine learning. It studies the necessity of AUC research, which points out that AUC maximization and error rate minimization are inconsistent. After that, AUC gains significant attention in linear fields such as Logistic Regression~\cite{herschtal2004optimising} and SVM~\cite{joachims2005support, joachims2006training}. Then researchers begin to explore the online~\cite{zhao2011online,gao2013one} and stochastic~\cite{ying2016stochastic,natole2018stochastic} optimization extensions of the AUC maximization problem. Research from the perspectives of generalization analysis~\cite{agarwal2005generalization, usunier2005data, clemenccon2008ranking} and consistency analysis~\cite{agarwal2014surrogate, gao2012consistency} provides theoretical support for AUC optimization algorithms. \cite{liu2020stochastic} is the first to extend AUC optimization to deep neural networks, ushering in the era of AUC in deep learning. Meanwhile, a series of AUC variants~\cite{yang2021all,yang2021learning,shao2022asymptotically,yang2022optimizing,shao2024weighted,yang2020task} emerge, gradually enriching AUC optimization algorithms. Furthermore, in practice, AUC optimization demonstrates its effectiveness in various class-imbalanced tasks, such as recommendation systems~\cite{bao2022rethinking,bao2022minority,bao2019collaborative,bao2024improved}, disease prediction~\cite{westcott2019chronic,gola2020polygenic}, domain adaptation~\cite{yang2023auc}, and adversarial training~\cite{hou2022adauc,yang2023revisiting}.

Despite significant progress, existing studies of AUC optimization mainly pay attention to the instance-level imbalanced classification tasks. This paper starts an early trial to introduce AUC optimization to semantic segmentation. However, due to the high complexity of pixel-level multi-class AUC optimization, such a goal cannot be attained by simply using the current techniques in the AUC community.

\section{Preliminaries}
\label{sec: preliminaries}

In this section, we briefly introduce the semantic segmentation task and the AUC optimization problem.

\subsection{Semantic Segmentation Training Framework}
\label{subsec: Semantic_Segmentation_Training_Framework}
Let $\mathcal{D}=\{(\textbf{X}^i,\textbf{Y}^i)_{i=1}^n|\textbf{X}^i \in \mathbb{R}^{H \times W \times 3},\textbf{Y}^i\ \in \mathbb{R}^{H \times W \times K}\}$ be the training dataset, where $H$ and $W$ represent the height and width of the images, and $K$ denotes the total number of classes. Let $f_{\theta}$ be a semantic segmentation model ($\theta$ is the model parameters), which commonly follows an encoder-decoder backbone~\cite{wang2020deep,zhao2018icnet,zheng2021rethinking,xie2021segformer,guo2022segnext}. Let $\hat{\textbf{Y}}^i=f_{\theta}(\textbf{X}^i)\in \mathbb{R}^{H \times W \times K}$ be the dense pixel-level prediction, \ie,
\begin{equation}
\hat{\textbf{Y}}^i=f_{\theta}(\textbf{X}^i)=f_{\theta}^d(f_{\theta}^e(\textbf{X}^i)),
\end{equation}
where the encoder $f_{\theta}^e$ extracts features from the image $\textbf{X}^i$, and then the decoder $f_{\theta}^d$ predicts each pixel based on extracted features and outputs a dense segmentation map with the same size as $\textbf{Y}^i$. 

Furthermore, let $\textbf{Y}_{u,v}^i$ and $\hat{\textbf{Y}}_{u,v}^i$ represent the ground truth and prediction of the $(u,v)$-th pixel of the $i$-th image, respectively. To train the model $f_{\theta}$, most current studies~\cite{deeplabv3plus2018,wu2019fastfcn,xie2021segformer,ranftl2021vision} usually adopt the \textit{cross-entropy (CE)} loss: 
\begin{equation}
\ell_{ce}:=\frac{1}{n}\sum\limits_{i=1}^{n}\sum\limits_{u=0}^{H-1}\sum\limits_{v=0}^{W-1}\left[-\sum\limits_{c=1}^K \textbf{Y}_{u,v}^{ic} \log(\hat{\textbf{Y}}_{u,v}^{ic}) \right],
\label{equ: l_ce}
\end{equation}
where $\textbf{Y}_{u,v}^{ic}$ and $\hat{\textbf{Y}}_{u,v}^{ic}$ are the one-hot encoding of ground truth and the prediction of pixel $(\textbf{X}_{u,v}^i,\textbf{Y}_{u,v}^i)$ in class $c$, respectively.

\subsection{AUC Optimization} 
\label{subsec: AUC_Optimization_Problem}
\textit{Area under the Receiver Operating Characteristic Curve (AUC)} is a well-known ranking performance metric for \textbf{binary classification} task, which measures the probability that a positive instance has a higher score than a negative one~\cite{hanley1982meaning}:
\begin{equation}
AUC(f_\theta) = \mathbb{P} \left(f_\theta(\textbf{X}^+)> f_\theta(\textbf{X}^-)|y^+=1,y^-=0\right),
\end{equation}
where $(\textbf{X}^+,y^+)$ and $(\textbf{X}^-,y^-)$ represent positive and negative samples, respectively. When $AUC \to 1$, it indicates that the classifier can perfectly separate positive and negative samples.

According to ~\cite{yang2021all,yang2021learning,yang2022optimizing}, given finite datasets, maximizing $AUC(h_\theta)$ is usually realized by maximizing its unbiased empirical estimation:
\begin{equation}
\hat{AUC}(h_\theta) =1-\frac{1}{n^+n^-}\sum\limits_{i=1}^{n^+}\sum\limits_{j=1}^{n^-}\ell(h_\theta(\textbf{X}^+)-h_\theta(\textbf{X}^-)),
\end{equation}
where $\ell$ is a differentiable surrogate loss~\cite{yang2021learning} measuring the ranking error between two samples, $n^+$ and $n^-$ denote the number of positive and negative samples, respectively. 

Moreover, we can directly optimize the following problem for AUC maximization:
\begin{equation}
\min\limits_{\theta} \frac{1}{n^+n^-}\sum\limits_{i=1}^{n^+}\sum\limits_{j=1}^{n^-}\ell(h_\theta(\textbf{X}^+)-h_\theta(\textbf{X}^-)).
\end{equation}
 Note that, AUC has achieved significant progress in long-tailed classification~\cite{yuan2021large,yang2022optimizing,shao2022asymptotically}. Due to the limitations of space, we refer interested readers to the literature~\cite{yang2021learning, yuan2021compositional} for more introductions to AUC. However, most existing studies merely focus on the \textbf{instance-level or image-level} problems. Inspired by its distribution-insensitive property~\cite{fawcett2006introduction}, this paper starts an early trial to introduce AUC to PLSS.

\begin{figure*}[t]
  \centering
   \includegraphics[width=\linewidth]{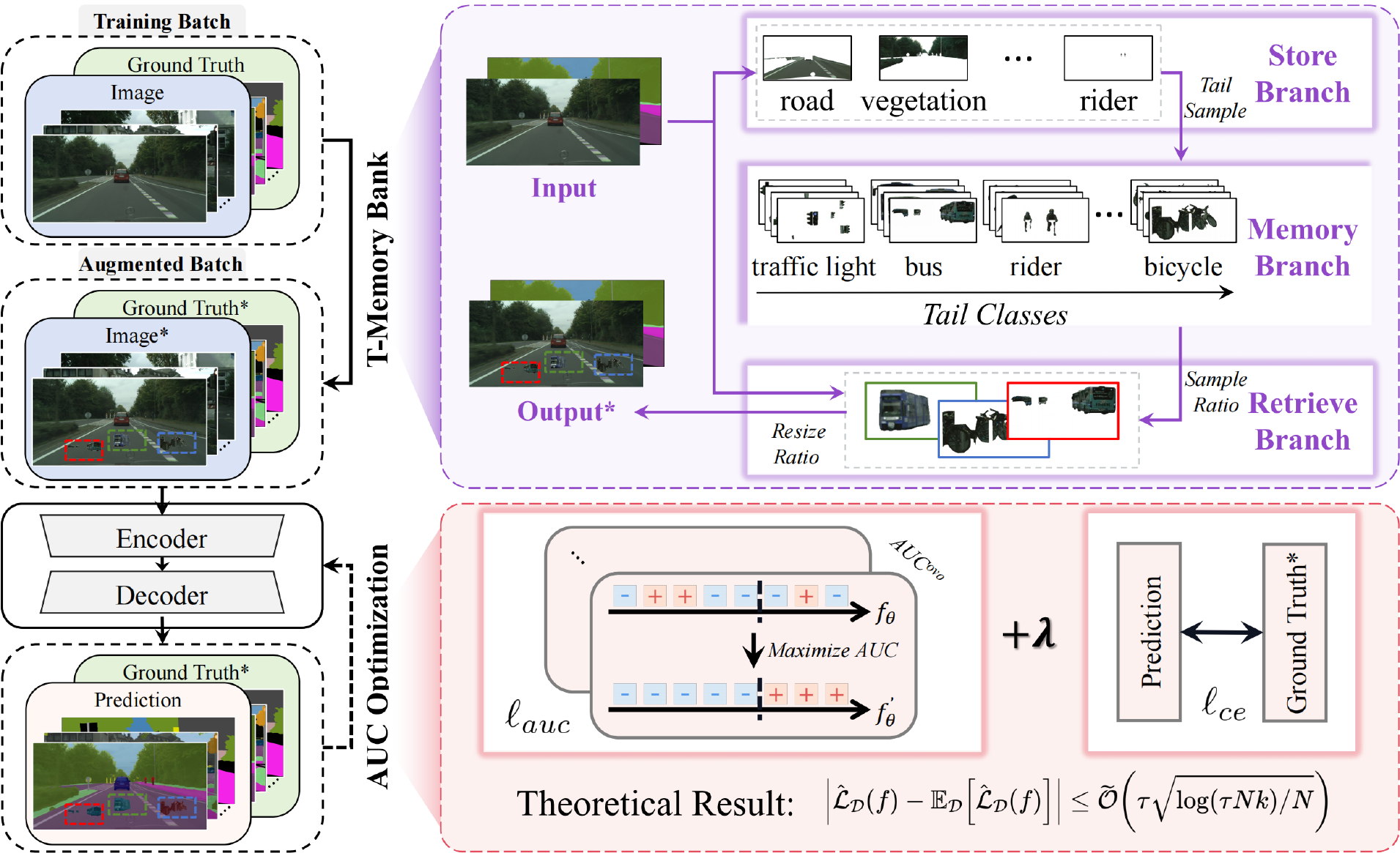}
   \caption{An overview of AUCSeg.}
   \label{fig: overview}
\end{figure*} 

\section{AUC-Oriented Semantic Segmentation}
\label{sec: methodology}
In this section, we introduce our proposed AUCSeg method for semantic segmentation. A brief overview is provided in \Cref{fig: overview}. AUCSeg is a generic optimization method that can be directly applied to any SOTA backbone for semantic segmentation. Specifically, AUCSeg includes two crucial components: \textbf{(1) AUC optimization} where a theoretically grounded loss function is explored for PLSS and \textbf{(2) Tail-class Memory Bank}, an effective augmentation scheme to ensure efficient optimization of the proposed AUC loss. In what follows, we will go into more detail about them. For clarity, we include a table of symbol definitions in \Cref{sec: Symbol Definitions}.

\subsection{Pixel-level AUC Optimization}
\label{subsec: Multiclass_AUC}

Semantic segmentation is a multi-class classification task. Therefore, to apply AUC, we follow a popular multi-class AUC manner, \ie, the One vs. One (ovo) strategy~\cite{provost2003tree,hand2001simple,yang2021learning}, which is an average of binary AUC score introduced in \Cref{subsec: AUC_Optimization_Problem}. Specifically, on top of the notation of \Cref{subsec: Semantic_Segmentation_Training_Framework}, we further denote $\mathcal{D}^p = \{(\textbf{X}_{u,v}^i,\textbf{Y}_{u,v}^i)| i \in [1,n],u \in [0, H-1],v \in [0, W-1]\}$ as the set of all pixels; the $j$-th element ($j \in [1,n \times (H-1) \times (W-1)]$) in $\mathcal{D}^p$ is abbreviated as $(\textbf{X}_j^p,\textbf{Y}_j^p)$ for convenience. Given the model prediction $f_{\theta}=(f_{\theta}^{(1)}, \dots, f_{\theta}^{(K)})$, $\forall c \in [K]$, $f_{\theta}^{(c)} \in [0,1]$, where $f_{\theta}^{(c)}$ serves as a continuous score function supporting class $c$, $AUC_{seg}^{ovo}$ calculates the average of binary AUC scores for every class pair:
\begin{equation}
    AUC_{seg}^{ovo} = \frac{1}{K(K-1)} \sum\limits_{c=1}^K \sum\limits_{c \neq c'} AUC_{cc'}(f_\theta),
\end{equation}
\begin{equation}\nonumber
    AUC_{cc'}(f_\theta) = \mathbb{P} ( f_{\theta}^{(c)}(\textbf{X}_m^p)> f_{\theta}^{(c)}(\textbf{X}_n^p)|\textbf{Y}_m^p=c,\textbf{Y}_n^p=c').
\end{equation}

To this end, as introduced in \Cref{subsec: AUC_Optimization_Problem}, the goal is to minimize the following unbiased empirical risk:
\begin{equation}
\ell_{auc}:= \sum\limits_{c=1}^{K}\sum\limits_{c' \neq c}\sum\limits_{\textbf{X}_{m}^p \in \mathcal{N}_{c}}\sum\limits_{\textbf{X}_{n}^p \in \mathcal{N}_{c'}}\frac{1}{|\mathcal{N}_{c}||\mathcal{N}_{c'}|}\ell_{sq}^{c,c',m,n},
\label{equ: l_auc}
\end{equation}
where we adopt the widely used square loss $\ell_{sq}(x)=(1-x)^2$ as the surrogate loss \cite{gao2012consistency}; $\ell_{sq}^{c,c',m,n}:= \ell_{sq}(f_{\theta}^{(c)}(\textbf{X}_{m}^p)-f_{\theta}^{(c)}(\textbf{X}_{n}^p))$; $\mathcal{N}_{c}=\{\textbf{X}_{k}^p|\textbf{Y}_{k}^p=c \}$ represents the set of pixels with label $c$ in the set $\mathcal{D}^p$, and $|\mathcal{N}_{c}|$ denotes the size of the set.

\subsection{Generalization Bound}
\label{subsec: Generalization_Bound}
In this section, we explore the theoretical guarantees of the AUC loss function in semantic segmentation tasks and demonstrate that AUCSeg can generalize well to unseen data.

A key challenge is that standard techniques for generalization analysis~\cite{mohri2018foundations, bartlett2002rademacher, cucker2002mathematical} require the loss function to be expressed as a sum of independent terms. Unfortunately, the proposed loss function does not satisfy this assumption because there are \textbf{two layers of interdependency among the loss terms}. On one hand, semantic segmentation can be considered a structured prediction problem~\cite{ciliberto2020general}, where couplings between output substructures within a given image create the first layer of interdependency. On the other hand, the AUC loss creates a pairwise coupling between positive and negative pixels, so any pixel pairs sharing the same positive/negative instance are interdependent, resulting in the second layer of interdependency.

We present our main result in the following theorem and the proof is deferred to \Cref{sec: Generalization Bounds and Its Proofs}.
\begin{thm}[Generalization Bound for AUCSeg]
Let $\mathbb{E}_{\mathcal{D}}\left[\hat{\mathcal{L}}_{\mathcal{D}}\left(f\right)\right]$ be the population risk of $\hat{\mathcal{L}}_{\mathcal{D}}\left(f\right)$. Assume $\mathcal{F} \subseteq \{ f:\mathcal{X} \to \mathbb{R}^{H \times W \times K} \}$, where $H$ and $W$ represent the height and width of the image, and $K$ represents the number of categories,  $\hat{\mathcal{L}}^{(i)}$ is the risk over $i$-th sample, and is $\mu$-Lipschitz with respect to the $l_{\infty}$ norm, (\ie $\Vert \hat{\mathcal{L}}(x)-\hat{\mathcal{L}}(y)\Vert_\infty \le \mu\cdot \Vert x-y \Vert_{\infty}$). There exists three constants $A>0$, $B>0$ and $C>0$, the following generalization bound holds with probability at least $1-\delta$ over a random draw of i.i.d training data (at the image-level):
\begin{equation*}
\begin{aligned}
\left|\hat{\mathcal{L}}_{\mathcal{D}}\left(f\right)-\mathbb{E}_{\mathcal{D}}\left[\hat{\mathcal{L}}_{\mathcal{D}}\left(f\right)\right]\right| \leq \frac{8}{N}+&\frac{\eta_{\text{inner}}+\eta_{\text{inter}}}{\sqrt{N}}\sqrt{A\log\left(2B\mu\tau Nk+C\right)} \\
&+3\left(\sqrt{\frac{1}{2N}}+K\sqrt{1-\frac{1}{N}}\right)\sqrt{\log \left(\frac{4K(K-1)}{\delta}\right)},
\end{aligned}
\end{equation*}
where
\begin{equation*}
\eta_{\text{inner}}=\frac{48\mu\tau\ln N}{N}, \quad \eta_{\text{inter}}=2\sqrt{2}\tau,\quad \tau=\left(\max\limits_{c\in[K]}\frac{n_{\max}^{(c)}}{n_{\text{mean}}^{(c)}}\right)^2,
\end{equation*}
$n_{\max}^{(c)} =  \max_{\mathbf{X}} n(\mathbf{X}^{(c)})$, $n_{mean}^{(c)} =  \sum_{i=1}^{N}n(\mathbf{X}_i^{(c)})$, $N=\left|\mathcal{D}\right|$, $k=H\times W$ and $\mathbf{X}^{(c)}$ represents the pixel of class $c$ in image $\mathbf{X}$.
\label{thm: Generalization Bound for AUCSeg}
\end{thm}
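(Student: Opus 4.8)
The plan is to derive this uniform-convergence bound in three stages: a union bound that reduces the One-vs-One risk to single class-pair risks, an image-level concentration argument that copes with the two layers of dependency via a dependency graph, and a covering-number bound on the function class that turns per-$f$ concentration into uniform convergence. I would begin by exploiting the One-vs-One structure: since $AUC_{seg}^{ovo}$ averages over the $K(K-1)$ ordered class pairs, I would analyze each pairwise risk $AUC_{cc'}$ in isolation and reassemble the total deviation by a union bound over all pairs. This is the source of the $\log\left(4K(K-1)/\delta\right)$ term and the accompanying $K$ multiplier in the last summand, which reflect the worst-case aggregation of the per-pair (and per-class) concentration residuals.

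Next, for a fixed pair $(c,c')$, I would split the reweighted pairwise sum $\sum_{m\in\mathcal{N}_c}\sum_{n\in\mathcal{N}_{c'}} \frac{1}{|\mathcal{N}_c||\mathcal{N}_{c'}|}\ell_{sq}^{c,c',m,n}$ into an \emph{inner-image} part, where the positive and negative pixels come from the same image, and an \emph{inter-image} part, where they come from distinct images. This decomposition is forced on us by the hypotheses: the i.i.d. assumption holds only at the image level, so an inner-image term is a deterministic function of a single image while an inter-image term couples two images. The two parts will be controlled by the two separate coefficients $\eta_{\text{inner}}$ and $\eta_{\text{inter}}$.

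The main obstacle is the concentration step for the dependent summands, and here I would invoke the dependency-graph machinery of \cite{zhang2024generalization} (in the spirit of the fractional-cover / chromatic-number approach to pairwise losses used in \cite{usunier2005data}). Concretely, I would place an edge between two loss terms whenever they share a common source image, then partition the terms into color classes of mutually independent summands, apply a McDiarmid-type bounded-difference inequality within each class, and recombine through the fractional chromatic number of the graph. The inner-image and inter-image graphs have very different densities — the sparser inner-image dependency admits the favorable $\ln N/N$ rate, giving $\eta_{\text{inner}}=48\mu\tau\ln N/N$, whereas the denser inter-image coupling yields only the order-one $\eta_{\text{inter}}=2\sqrt{2}\tau$. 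The delicate bookkeeping is to bound the reweighting factors $1/(|\mathcal{N}_c||\mathcal{N}_{c'}|)$ uniformly in terms of the empirical class counts; doing so produces the imbalance indicator $\tau=\left(\max_{c\in[K]} n_{\max}^{(c)}/n_{\text{mean}}^{(c)}\right)^2$ as a common multiplier and is what I expect to be the most error-prone part of the argument.

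Finally, to lift concentration of a single $f$ to uniform control over $\mathcal{F}$, I would bound the associated Rademacher complexity through a Dudley entropy-integral / covering-number argument. The $\mu$-Lipschitz property with respect to the $\ell_\infty$ norm lets me transfer an $\ell_\infty$-cover of $\mathcal{F}$ to a cover of the induced loss, and the output dimension — of order $k=H\times W$ per image across $N$ images — enters the metric entropy, producing the logarithmic factor $\sqrt{A\log\left(2B\mu\tau Nk+C\right)}$ for some absolute constants $A,B,C$. Combining the single-pair concentration, this Rademacher term, and the union bound over the $K(K-1)$ pairs by the triangle inequality gives the stated three-term estimate, with the lower-order remainder of the bounded-difference step absorbed into the residual $8/N$.
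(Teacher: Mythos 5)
Your overall architecture coincides with the paper's: the same Jensen/union-bound reduction over the $K(K-1)$ class pairs (which is where $\log\left(4K(K-1)/\delta\right)$ enters), the same inner-image/inter-image decomposition of each pairwise risk, an $\ell_\infty$-covering-number entropy integral to get the $\sqrt{A\log\left(2B\mu\tau Nk+C\right)}$ factor, and the fractional-chromatic-number machinery for the pairwise-coupled part, with $\tau$ arising exactly as you predict from bounding the weights $\rho_{\max}=\max_{\mathbf{X}}\left|\mathcal{D}\right|n(\mathbf{X}^+)n(\mathbf{X}^-)/\left(\left|\mathcal{N}_+\right|\left|\mathcal{N}_-\right|\right)$ by $\tau/\left|\mathcal{D}\right|$. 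So the plan would get you to the stated bound.

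However, two of your mechanism attributions disagree with how the terms actually arise. First, the inner-image part needs no dependency graph at all: after the decomposition, each inner term is a function of a single image, and images are i.i.d.\ by hypothesis, so the paper controls Part 1 with the plain Rademacher bound plus Dudley's entropy integral. The $\ln N$ in $\eta_{\text{inner}}$ comes from truncating the entropy integral at $\alpha=1/N$ (giving $\frac{8}{N}$ as the residual you correctly flagged), and the additional $1/N$ comes from $\rho_{\max}\le\tau/N$ — i.e., from the normalization weights, not from any ``sparser inner-image dependency'' rate. Second, the $K$ multiplier in the last summand is not an artifact of the union bound over pairs (that only contributes inside the logarithm); it is the range constant $M=K$ of the loss in the fractional-Rademacher concentration lemma of Usunier et al.\ applied to the inter-image part, paired with the paper's bespoke computation $\chi_f(G)=2(2m-1)$ for the graph whose independent sets are perfect matchings of ordered positive--negative pairs — which makes $\chi_f(G)/2m\to 1$ and explains why that term does not vanish as $N\to\infty$. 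Your sketch gestures at ``recombining through the fractional chromatic number'' but omits this combinatorial computation, which is a genuine required ingredient rather than bookkeeping: without the exact value (or a matching upper bound) of $\chi_f(G)$ you cannot recover the constants $2\sqrt{2}$ in $\eta_{\text{inter}}$ or the factor $\sqrt{1-\frac{1}{N}}$. None of this invalidates your route — applying the dependency-graph argument to the (edgeless) inner graph degenerates gracefully to the standard i.i.d.\ bound — but executed as written it would misplace where the $\ln N/N$ versus order-one behaviors come from.
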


\begin{rem}
We achieve a bound of $\widetilde{\mathcal{O}}\left(\tau \sqrt{\log\left(\tau N k\right)/N}\right)$, indicating reliable generalization with a large training set. Here, $\tau$ represents the degree of pixel-level imbalance. More interestingly, even though we have $k$ classifiers for every single image, the generalization bound only has an algorithm dependent on $k$, suggesting that pixel-level prediction doesn't hurt generalization too much.
\end{rem}

\subsection{Tail-class Memory Bank}
\label{subsec: t_memory_bank}
\textbf{Motivation.} Although we have examined the effectiveness of AUC for PLSS from the theoretical point of view, there is \textbf{a practical challenge} when conducting AUC optimization for semantic segmentation, as discussed in \Cref{sec: introduction}. Specifically, the stochastic AUC optimization, as defined in \Cref{equ: l_auc}, requires \textbf{at least one sample from each class} in a mini-batch. In \textbf{instance-level AUC optimization}, recent studies~\cite{yang2021all, yang2021learning} often use a stratified sampling technique that generates batches consistent with the original class distribution, as shown in \Cref{fig: overview_2}(a). Such a strategy will work well when each image belongs to a unique category (say, `Banana', `Apple', or `Lemon') in traditional classifications. Yet it cannot apply to pixel-level cases because each sample involves multiple and coupled labels, making it hard to split them for stratified sampling, as illustrated in \Cref{fig: overview_2}(b). Meanwhile, we also provide a bound (Proposition \ref{prop: N}) to show that simply adopting random sampling will suffer from an overlarge batch size $B$, making an unaffordable GPU memory burden.

\begin{prop} \label{prop: N}
Consider a dataset $\mathcal{D}$ that includes images with $K$ different pixel categories. Let $p_i$ represent the probability of observing a pixel with label $i$ in a given image. Randomly select $B$ images from $\mathcal{D}$ as training data, where
\begin{equation*}
    B = \Omega\left(\frac{\log(\delta / K)}{\log(1-\min\limits_{i}p_{i})}\right).
\end{equation*}
Then with probability at least $1- \delta$, for any $c \in [K]$, there exists $\mathbf{X}$ in the training data that contains pixels of label $c$.
\end{prop}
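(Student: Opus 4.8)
The plan is to treat this as a coverage (coupon-collector-style) argument combined with a union bound over the $K$ classes. First I would fix an arbitrary class $c \in [K]$ and analyze the probability that class $c$ is entirely absent from the sampled batch. Since the probability that a single randomly drawn image contains at least one pixel of label $c$ is $p_c$, the complementary probability that a single image contains no pixel of label $c$ is $1-p_c$. Because the $B$ images are drawn i.i.d., the event that \emph{none} of the $B$ images contains a pixel of class $c$ factorizes across draws, giving probability exactly $(1-p_c)^B$.

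Next I would aggregate over classes. Let $\mathcal{E}_c$ denote the ``class $c$ is missing'' event above, so that the overall failure event---that some class is unrepresented in the batch---is $\bigcup_{c=1}^K \mathcal{E}_c$. Applying the union bound and then replacing each $p_c$ by the worst-case presence probability $\min_i p_i$ yields
\[
\mathbb{P}\!\left[\bigcup_{c=1}^K \mathcal{E}_c\right] \le \sum_{c=1}^K (1-p_c)^B \le K\bigl(1-\min_i p_i\bigr)^B.
\]

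Finally, I would impose that this failure probability be at most $\delta$, i.e. $K\bigl(1-\min_i p_i\bigr)^B \le \delta$, and solve for $B$. Taking logarithms gives $B\log\bigl(1-\min_i p_i\bigr) \le \log(\delta/K)$, and isolating $B$ recovers the stated threshold; the complementary event then guarantees that every one of the $K$ classes appears in at least one sampled image with probability at least $1-\delta$.

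I do not expect a genuine obstacle here beyond careful bookkeeping: the one delicate point is the sign of $\log\bigl(1-\min_i p_i\bigr)$, which is strictly negative whenever $\min_i p_i>0$, so dividing through reverses the inequality and produces the lower bound $B \ge \log(\delta/K)/\log\bigl(1-\min_i p_i\bigr)$ rather than an upper bound. The proof also implicitly relies on the images being sampled independently (so the per-draw absence probabilities multiply into $(1-p_c)^B$) and on $\min_i p_i > 0$ (so the logarithm is well defined and the bound is finite); both are consistent with the setting of the proposition.
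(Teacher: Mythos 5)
Your proposal is correct and follows essentially the same route as the paper's proof in Appendix C: per-class absence probability $(1-p_c)^B$ via i.i.d.\ draws, a union bound over the $K$ classes with $p_c$ replaced by $\min_i p_i$, and solving $K\left(1-\min_i p_i\right)^B \le \delta$ for $B$. Your remark about the sign of $\log\left(1-\min_i p_i\right)$ flipping the inequality is a careful touch the paper leaves implicit, but it introduces no new idea beyond the paper's argument.
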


\begin{rem}
The proof is deferred to \Cref{sec: Proof for Propositions of Tail-class Memory Bank}. Proposition \ref{prop: N} suggests that the value of $B$ is inversely proportional to $\min_i p_i$. Note that $p_i$ will be smaller as the long-tail degree becomes more severe, leading to a larger $B$. For example, in terms of the Cityscapes dataset with $K=19$ classes, assuming $\delta=0.01$ and $\min_{i}p_{i} = 1\%$, $B$ should be at least $759$ to guarantee that each class of pixels appears at least once with a high probability. This results in a significant strain on GPU memory. 
\end{rem}

To address this, considering that the tail-class samples generally have less opportunity to be included in a mini-batch and are often more crucial for final performance, we thus develop a novel Tail-class Memory Bank (T-Memory Bank) to efficiently optimize \Cref{equ: l_auc} and manage GPU usage effectively. As depicted in \Cref{fig: overview_2}(c), the high-level ideas of the T-Memory Bank are as follows: 1) identify missing tail classes of all images involved in a mini-batch and 2) randomly replace some pixels in the image with missing classes based on stored historical class information in T-Memory Bank. In this sense, we can obtain an approximated batch-version of \Cref{equ: l_auc}, \ie,
\begin{equation}\label{equ: l_auc_new}
    \tilde{\ell}_{auc}:=\sum\limits_{c=1}^{K}\sum\limits_{\substack{c' \neq c\\n_c n_{c'}\neq0}}\sum\limits_{\substack{\textbf{X}_{m}^p \in \mathcal{N}_{c} \cup \mathcal{T}_{c}\\ \textbf{X}_{n}^p \in \mathcal{N}_{c'} \cup \mathcal{T}_{c'}}}\frac{1}{|\mathcal{N}_{c}||\mathcal{N}_{c'}|}\tilde{\ell}_{sq}^{c,c',m,n}
\end{equation}
where $\mathcal{N}_{c}$ and $\mathcal{T}_{c}$ represent the set of pixels with label $c$ in the original image and those pixels stored in the T-Memory Bank, respectively; $\tilde{\ell}_{sq}^{c,c',m,n}:= \ell_{sq}(f_{\theta}^{(c)}(\tilde{\textbf{X}}_{m}^p)-f_{\theta}^{(c)}(\tilde{\textbf{X}}_{n}^p))$; $\tilde{\textbf{X}}^p$ represents the sample after replacing some pixels with tail classes pixels from the T-Memory Bank.

\begin{figure*}[t]
  \centering
   \includegraphics[width=\linewidth]{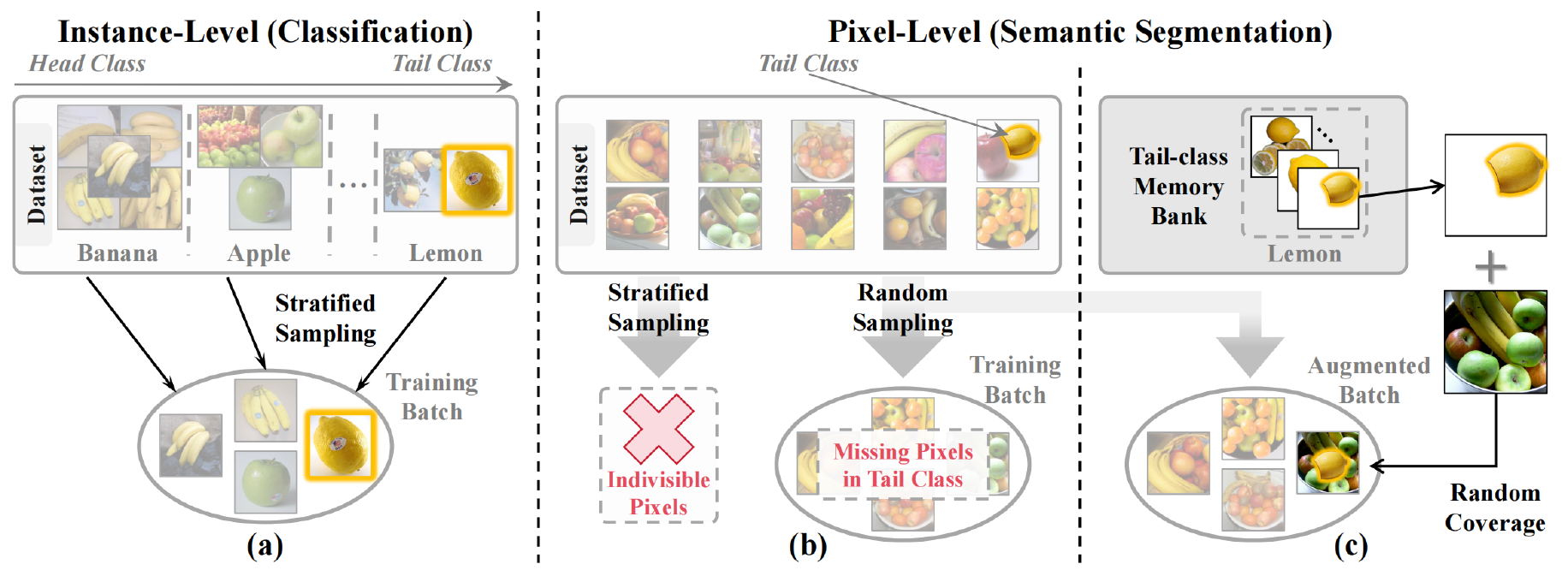}
   \caption{Instance-level and pixel-level task sampling.}
   \label{fig: overview_2}
\end{figure*}

\textbf{Detailed Components.} As shown in \Cref{fig: overview}, T-Memory Bank comprises three main parts: \textbf{(1) Memory Branch} stores a set with $S_{M}$ (the Memory Size) images for each tail class. We define the set as $\mathcal{M}=\{\mathcal{M}_{c_1}, \dots, \mathcal{M}_{c_{n_t}}\}$, where $\mathcal{C}_t=\{c_i\}_{i=1}^{n_t}$ denotes the labels of tail classes, and $n_t$ is the total number of selected tail classes; \textbf{(2) Retrieve Branch} selects pixels from the Memory Branch to supplement the missing tail classes and \textbf{(3) Store Branch} updates the Memory Branch whenever a new image arrives. Algorithm \ref{algo: aucseg} summarizes a short version of AUCSeg equipped with T-Memory Bank. \textbf{Please refer to the detailed version in \Cref{sec: Details of T-Memory Bank Algorithm}.} Note that we introduce CE loss as a regularization term for our proposed AUCSeg, which is widely used in the AUC community \cite{yuan2021compositional} to pursue robust feature learning. Experiments demonstrate that the performance is insensitive to the regularization weight $\lambda$, as shown in \Cref{fig: lambda}.

At the start of training, the Memory Branch is empty. In this case, we only calculate the loss function $\ell$ for the classes present in the mini-batch, while the Retrieve Branch will not take any action. Meanwhile, the Store Branch will continuously append pixel data of tail classes to the Memory Branch. As the Memory Branch reaches its maximum capacity $S_{M}$, we adopt a random replacement strategy to update the Store Branch (Lines $5$ to $6$ in Algorithm \ref{algo: aucseg} or Lines $6$ to $11$ in Algorithm \ref{algo: aucseg2}).

As the training process progresses, if the Memory Branch is not empty, the Retrieve Branch kicks in to count the missing classes in each image of the mini-batch, denoted as $\mathcal{C}_{miss}$. It then calculates the number of classes needed to be added for optimization, $n_{sample}=\lceil \left|\mathcal{C}_{miss}\right| \times R_S \rceil$. Here, we introduce a tunable sample ratio $R_S$ to strike a trade-off between the original and missing tail-class semantic information. Finally, it uniformly retrieves the corresponding pixels of $n_{sample}$ missing classes from the Memory Branch, resizes them by the resize ratio $R_R$, and randomly selects positions to overwrite (Lines $7$ to $8$ in Algorithm \ref{algo: aucseg} or Lines $13$ to $18$ in Algorithm \ref{algo: aucseg2}).

\begin{algorithm}[t!]
\caption{AUCSeg Algorithm (Short Version)}
\label{algo: aucseg}
\LinesNumbered
\KwIn{Training data $\mathcal{D}$, number of tail classes $n_t$, labels of tail classes $\mathcal{C}_t=\{c_i\}_{i=1}^{n_t}$, Memory Branch $\mathcal{M}=\{\mathcal{M}_{c_1}, \dots \mathcal{M}_{c_{n_t}}\}$, memory size $S_{M}$, sample ratio $R_S$, resize ratio $R_R$, max iteration $T_{max}$, batch size $N_b$}
\KwOut{model parameters $\theta$}
\For{$iter=1$ to $T_{max}$}{
    $\mathcal{D}_B=\{(\textbf{X}_i,\textbf{Y}_i)\}_{i=1}^{N_b}$ $\leftarrow$ $SampleBatch(\mathcal{D},N_b)$\;
    $\mathcal{C}_{miss} \subseteq \mathcal{C}_t$ $\leftarrow$ $MissingTailClasses(\mathcal{D}_B)$\;
    $\overline{\mathcal{C}_{miss}}=\mathcal{C}_{t}-\mathcal{C}_{miss}$\;
    
    $\textcolor{orange}{\rhd}$\For{$\overline{c_m}$ in $\overline{\mathcal{C}_{miss}}$}{
        Extract pixels of class $\overline{c_m}$ from $\mathcal{D}_B$ and save them to $\mathcal{M}_{\overline{c_m}}$ $\leftarrow$ $RandomReplace(S_M)$\;
    }

    $\textcolor{orange}{\rhd}$\For{$i=1$ to $\lceil \left|\mathcal{C}_{miss}\right| \times R_S \rceil$}{
        Randomly choose $c_m$ from $\mathcal{C}_{miss}$ and sample pixels from $\mathcal{M}_{c_m}$ to paste into $\mathcal{D}_B$ $\leftarrow$ $SizeScale(S_R)$\;
    }

    $\textcolor{orange}{\rhd}$ Calculate $\hat{\textbf{Y}_i} = f_{\theta}(\textbf{X}_i)$ and $\ell = \tilde{\ell}_{auc} + \lambda \ell_{ce}$\;
    Backpropagation updates $\theta$.
}

\end{algorithm}

\textbf{Discussions.} We recognize that Memory Bank~\cite{wu2018unsupervised, he2020momentum} has achieved great success in deep learning. However, our T-Memory Bank behaves differently compared to earlier studies. \textbf{The key difference is that the Memory Bank and T-Memory Bank are designed for different tasks.} The goal of the previous Memory Bank is to facilitate the traditional classifications by storing \underline{instance-level or image-level features}, while our T-Memory Bank specifically stores the \underline{original pixels} for each object. This strategy is particularly beneficial for our PLSS task. Additionally, the T-Memory Bank enables AUCSeg without substantially increasing GPU workload as well as the number of samples per mini-batch by selectively replacing non-essential pixels. Our experiments, detailed in \Cref{subsec: Spatial Resource Consumption}, include a comparison of GPU overhead. For more discussion on the T-Memory Bank, please refer to \Cref{sec: More Discussions about T-Memory Bank}.

\section{Experiments}
\label{sec: experiments}
In this section, we describe some details of the experiments and present our results.
\textbf{Due to space limitations, please refer to \Cref{sec: Additional Experiment Settings}, \Cref{sec: Additional Experiment Results} and \Cref{sec: More Discussions About AUCSeg} for an extended version.}

\subsection{Experimental Setups}
\label{subsec: Setup}

The experiment includes three benchmark datasets: Cityscapes~\cite{cordts2016cityscapes}, ADE20K~\cite{zhou2017scene}, and COCO-Stuff 164K~\cite{caesar2018coco}. We use SegNeXt \cite{guo2022segnext} as the backbone for our model and the \textit{mean of Intersection over Union (mIoU)} as the evaluation metric. We compare our method with \textbf{$13$ recent advancements and $6$ long-tail approaches} in semantic segmentation. 
All the long-tail methods also use SegNeXt as the backbone. To ensure fairness, we re-implement the listed methods using their publicly shared code and test them on the same hardware. Detailed introductions are deferred to \Cref{sec: Additional Experiment Settings}.

\begin{table}[t!]
  \centering
  \renewcommand\arraystretch{1.1}
  \caption{Quantitative results on {\bf Cityscapes}, {\bf ADE20K} and {\bf COCO-Stuff 164K} val set in terms of mIoU (\%). The champion and the runner-up are highlighted in \textbf{bold} and \underline{underline}.}
  \resizebox{\linewidth}{!}{
    \begin{tabular}{c|cccc|cccc|cccc}
    \toprule
    \multirow{2}[2]{*}{\textbf{Method}} & \multicolumn{4}{c|}{\textbf{ADE20K}} & \multicolumn{4}{c|}{\textbf{Cityscapes}} & \multicolumn{4}{c}{\textbf{COCO-Stuff 164K}} \\
          & \textbf{Overall} & \textbf{Head} & \textbf{Middle} & \textbf{Tail} & \textbf{Overall} & \textbf{Head} & \textbf{Middle} & \textbf{Tail} & \textbf{Overall} & \textbf{Head} & \textbf{Middle} & \textbf{Tail} \\
    \midrule
    DeepLabV3+~\cite{deeplabv3plus2018} & \cellcolor[rgb]{ .973,  .867,  .773} 31.95 & \cellcolor[rgb]{ .961,  .816,  .686} 75.88 & \cellcolor[rgb]{ .961,  .824,  .698} 51.96 & \cellcolor[rgb]{ .965,  .839,  .722} 26.01 & \cellcolor[rgb]{ .914,  .937,  .851} 66.53 & \cellcolor[rgb]{ .98,  .984,  .965} 90.11 & \cellcolor[rgb]{ .953,  .965,  .918} 57.16 & \cellcolor[rgb]{ .894,  .922,  .816} 54.36 & \cellcolor[rgb]{ .784,  .882,  .918} 29.11 & \cellcolor[rgb]{ .686,  .831,  .882} 51.11 & \cellcolor[rgb]{ .741,  .863,  .902} 32.93 & \cellcolor[rgb]{ .796,  .89,  .922} 24.82 \\
    EncNet~\cite{Zhang_2018_CVPR} & \cellcolor[rgb]{ .973,  .867,  .773} 32.12 & \cellcolor[rgb]{ .961,  .824,  .698} 75.34 & \cellcolor[rgb]{ .961,  .827,  .706} 51.60 & \cellcolor[rgb]{ .973,  .871,  .78} 26.32 & \cellcolor[rgb]{ .875,  .91,  .788} 71.34 & \cellcolor[rgb]{ .906,  .929,  .839} 91.62 & \cellcolor[rgb]{ .914,  .937,  .851} 60.76 & \cellcolor[rgb]{ .859,  .894,  .757} 63.03 & \cellcolor[rgb]{ .804,  .894,  .925} 27.31 & \cellcolor[rgb]{ .702,  .839,  .886} 49.89 & \cellcolor[rgb]{ .773,  .878,  .914} 30.41 & \cellcolor[rgb]{ .816,  .902,  .929} 23.09 \\
    FastFCN~\cite{wu2019fastfcn} & \cellcolor[rgb]{ .973,  .878,  .796} 29.78 & \cellcolor[rgb]{ .965,  .835,  .722} 74.20 & \cellcolor[rgb]{ .965,  .843,  .733} 49.44 & \cellcolor[rgb]{ .976,  .886,  .804} 23.86 & \cellcolor[rgb]{ .933,  .953,  .886} 63.97 & \cellcolor[rgb]{ .969,  .976,  .941} 90.37 & 52.43 & \cellcolor[rgb]{ .906,  .929,  .835} 51.22 & \cellcolor[rgb]{ .792,  .89,  .922} 28.37 & \cellcolor[rgb]{ .694,  .835,  .882} 50.60 & \cellcolor[rgb]{ .745,  .863,  .902} 32.52 & \cellcolor[rgb]{ .808,  .898,  .925} 23.96 \\
    EMANet~\cite{li2019expectation} & \cellcolor[rgb]{ .969,  .863,  .765} 32.83 & \cellcolor[rgb]{ .961,  .816,  .69} 75.77 & \cellcolor[rgb]{ .965,  .839,  .725} 50.03 & \cellcolor[rgb]{ .973,  .867,  .773} 27.36 & \cellcolor[rgb]{ .878,  .91,  .792} 70.93 & \cellcolor[rgb]{ .902,  .929,  .831} 91.69 & \cellcolor[rgb]{ .914,  .937,  .855} 60.61 & \cellcolor[rgb]{ .863,  .898,  .765} 61.97 & \cellcolor[rgb]{ .792,  .886,  .922} 28.48 & \cellcolor[rgb]{ .702,  .839,  .886} 49.73 & \cellcolor[rgb]{ .78,  .882,  .918} 29.97 & \cellcolor[rgb]{ .796,  .89,  .922} 24.85 \\
    DANet~\cite{fu2019dual} & \cellcolor[rgb]{ .969,  .855,  .753} 33.83 & \cellcolor[rgb]{ .965,  .831,  .714} 74.62 & \cellcolor[rgb]{ .965,  .831,  .714} 51.01 & \cellcolor[rgb]{ .969,  .859,  .761} 28.52 & \cellcolor[rgb]{ .922,  .941,  .863} 65.77 & 89.66 & \cellcolor[rgb]{ .973,  .98,  .949} 55.30 & \cellcolor[rgb]{ .894,  .922,  .816} 54.26 & \cellcolor[rgb]{ .812,  .898,  .929} 26.83 & \cellcolor[rgb]{ .702,  .839,  .886} 49.60 & \cellcolor[rgb]{ .765,  .875,  .91} 31.14 & \cellcolor[rgb]{ .824,  .906,  .933} 22.29 \\
    HRNet~\cite{SunXLW19} & \cellcolor[rgb]{ .973,  .867,  .773} 31.83 & \cellcolor[rgb]{ .961,  .824,  .698} 75.35 & \cellcolor[rgb]{ .965,  .839,  .725} 49.98 & \cellcolor[rgb]{ .973,  .871,  .78} 26.19 & \cellcolor[rgb]{ .859,  .894,  .761} 73.40 & \cellcolor[rgb]{ .89,  .918,  .808} 91.98 & \cellcolor[rgb]{ .859,  .898,  .761} 65.79 & \cellcolor[rgb]{ .855,  .894,  .753} 64.00 & \cellcolor[rgb]{ .788,  .886,  .922} 28.65 & \cellcolor[rgb]{ .722,  .851,  .894} 48.00 & \cellcolor[rgb]{ .769,  .875,  .914} 30.74 & \cellcolor[rgb]{ .792,  .89,  .922} 25.16 \\
    OCRNet~\cite{yuan2020object} & \cellcolor[rgb]{ .976,  .882,  .796} 29.64 & \cellcolor[rgb]{ .965,  .839,  .725} 74.00 & \cellcolor[rgb]{ .965,  .843,  .733} 49.40 & \cellcolor[rgb]{ .976,  .886,  .808} 23.72 & \cellcolor[rgb]{ .91,  .933,  .847} 66.95 & \cellcolor[rgb]{ .973,  .98,  .953} 90.24 & \cellcolor[rgb]{ .886,  .918,  .808} 63.18 & \cellcolor[rgb]{ .91,  .933,  .843} 50.21 & \cellcolor[rgb]{ .788,  .886,  .922} 28.67 & \cellcolor[rgb]{ .69,  .831,  .882} 51.04 & \cellcolor[rgb]{ .749,  .863,  .906} 32.41 & \cellcolor[rgb]{ .804,  .894,  .925} 24.33 \\
    DNLNet~\cite{yin2020disentangled} & \cellcolor[rgb]{ .969,  .859,  .761} 33.24 & \cellcolor[rgb]{ .961,  .816,  .686} 75.90 & \cellcolor[rgb]{ .965,  .831,  .71} 51.16 & \cellcolor[rgb]{ .973,  .863,  .769} 27.69 & \cellcolor[rgb]{ .882,  .914,  .796} 70.68 & \cellcolor[rgb]{ .89,  .918,  .808} 91.98 & \cellcolor[rgb]{ .922,  .941,  .867} 59.90 & \cellcolor[rgb]{ .863,  .898,  .769} 61.66 & \cellcolor[rgb]{ .773,  .878,  .914} 30.23 & \cellcolor[rgb]{ .69,  .835,  .882} 50.71 & \cellcolor[rgb]{ .741,  .859,  .902} 33.05 & \cellcolor[rgb]{ .78,  .882,  .918} 26.41 \\
    PointRend~\cite{kirillov2020pointrend} & \cellcolor[rgb]{ .992,  .957,  .922} 17.77 & \cellcolor[rgb]{ .984,  .922,  .863} 67.18 & \cellcolor[rgb]{ .988,  .933,  .886} 37.60 & \cellcolor[rgb]{ .992,  .957,  .929} 11.46 & \cellcolor[rgb]{ .961,  .973,  .929} 60.67 & \cellcolor[rgb]{ .996,  .996,  .992} 89.79 & \cellcolor[rgb]{ .984,  .992,  .976} 53.92 & \cellcolor[rgb]{ .941,  .957,  .902} 41.49 & \cellcolor[rgb]{ .992,  .996,  1} 11.17 & 21.17 & \cellcolor[rgb]{ .988,  .996,  .996} 13.64 & \cellcolor[rgb]{ .969,  .984,  .988} 9.04 \\
    BiSeNetV2~\cite{yu2021bisenet} & 10.26 & 60.38 & 28.72 & 4.10 & \cellcolor[rgb]{ .863,  .898,  .765} 73.04 & \cellcolor[rgb]{ .886,  .918,  .808} 92.00 & \cellcolor[rgb]{ .886,  .914,  .8} 63.52 & \cellcolor[rgb]{ .851,  .89,  .745} 64.93 & 10.30 & \cellcolor[rgb]{ .859,  .925,  .945} 34.96 & 12.71 & 5.92 \\
    ISANet~\cite{yuan2021ocnet} & \cellcolor[rgb]{ .976,  .882,  .8} 29.53 & \cellcolor[rgb]{ .965,  .835,  .718} 74.34 & \cellcolor[rgb]{ .969,  .847,  .741} 48.77 & \cellcolor[rgb]{ .976,  .886,  .808} 23.64 & \cellcolor[rgb]{ .882,  .914,  .796} 70.63 & \cellcolor[rgb]{ .906,  .929,  .835} 91.67 & \cellcolor[rgb]{ .906,  .929,  .839} 61.50 & \cellcolor[rgb]{ .871,  .902,  .776} 60.43 & \cellcolor[rgb]{ .816,  .902,  .929} 26.37 & \cellcolor[rgb]{ .71,  .843,  .89} 48.87 & \cellcolor[rgb]{ .769,  .875,  .914} 30.78 & \cellcolor[rgb]{ .831,  .91,  .937} 21.86 \\
    STDC~\cite{fan2021rethinking}  & \cellcolor[rgb]{ .973,  .878,  .792} 30.17 & \cellcolor[rgb]{ .969,  .847,  .737} 73.36 & \cellcolor[rgb]{ .969,  .855,  .749} 48.02 & \cellcolor[rgb]{ .976,  .882,  .8} 24.58 & \cellcolor[rgb]{ .835,  .878,  .722} 76.30 & \cellcolor[rgb]{ .859,  .894,  .757} 92.58 & \cellcolor[rgb]{ .867,  .902,  .773} 65.09 & \cellcolor[rgb]{ .824,  .871,  .698} 71.94 & \cellcolor[rgb]{ .776,  .878,  .914} 29.83 & \cellcolor[rgb]{ .682,  .827,  .878} 51.74 & \cellcolor[rgb]{ .737,  .859,  .898} 33.40 & \cellcolor[rgb]{ .788,  .886,  .922} 25.61 \\
    SegNeXt~\cite{guo2022segnext} & \cellcolor[rgb]{ .949,  .769,  .608} 47.45 & \cellcolor[rgb]{ .949,  .761,  .592} \underline{80.54} & \cellcolor[rgb]{ .945,  .757,  .588} \textbf{60.35} & \cellcolor[rgb]{ .949,  .773,  .612} 43.28 & \cellcolor[rgb]{ .788,  .843,  .639} 82.41 & \cellcolor[rgb]{ .784,  .839,  .631} \textbf{94.08} & \cellcolor[rgb]{ .788,  .843,  .639} 72.46 & \cellcolor[rgb]{ .788,  .843,  .639} \underline{80.92} & \cellcolor[rgb]{ .627,  .8,  .859} \underline{42.42} & \cellcolor[rgb]{ .624,  .796,  .855} \textbf{57.05} & \cellcolor[rgb]{ .627,  .8,  .859} \underline{41.71} & \cellcolor[rgb]{ .631,  .8,  .859} 40.33 \\
    \midrule
    VS~\cite{kini2021label}    & \cellcolor[rgb]{ .98,  .91,  .851} 24.72 & \cellcolor[rgb]{ .961,  .824,  .698} 75.30 & \cellcolor[rgb]{ .969,  .855,  .749} 48.02 & \cellcolor[rgb]{ .984,  .922,  .867} 17.86 & 55.40 & \cellcolor[rgb]{ .878,  .91,  .792} 92.16 & 52.52 & 26.36 & \cellcolor[rgb]{ .839,  .914,  .941} 24.27 & \cellcolor[rgb]{ .722,  .851,  .894} 47.80 & \cellcolor[rgb]{ .773,  .878,  .914} 30.38 & \cellcolor[rgb]{ .859,  .925,  .945} 19.19 \\
    LA~\cite{menon2020long}    & \cellcolor[rgb]{ .973,  .871,  .78} 31.16 & \cellcolor[rgb]{ .957,  .8,  .663} 77.07 & \cellcolor[rgb]{ .961,  .812,  .678} 53.43 & \cellcolor[rgb]{ .976,  .882,  .796} 24.77 & \cellcolor[rgb]{ .945,  .957,  .902} 62.75 & \cellcolor[rgb]{ .839,  .882,  .725} 92.98 & \cellcolor[rgb]{ .871,  .906,  .776} 64.79 & \cellcolor[rgb]{ .969,  .976,  .945} 35.09 & \cellcolor[rgb]{ .788,  .886,  .922} 28.56 & \cellcolor[rgb]{ .702,  .839,  .886} 49.67 & \cellcolor[rgb]{ .737,  .859,  .902} 33.16 & \cellcolor[rgb]{ .804,  .894,  .925} 24.21 \\
    LDAM~\cite{cao2019learning}  & \cellcolor[rgb]{ .969,  .859,  .761} 33.11 & \cellcolor[rgb]{ .965,  .839,  .722} 74.06 & \cellcolor[rgb]{ .965,  .827,  .71} 51.26 & \cellcolor[rgb]{ .973,  .863,  .769} 27.65 & \cellcolor[rgb]{ .918,  .941,  .859} 65.95 & \cellcolor[rgb]{ .851,  .89,  .745} 92.72 & \cellcolor[rgb]{ .824,  .871,  .698} 69.27 & \cellcolor[rgb]{ .949,  .961,  .91} 40.17 & \cellcolor[rgb]{ .631,  .8,  .859} 42.39 & \cellcolor[rgb]{ .627,  .8,  .859} 56.85 & \cellcolor[rgb]{ .631,  .8,  .859} 41.59 & \cellcolor[rgb]{ .631,  .8,  .859} \underline{40.34} \\
    Focal Loss~\cite{lin2017focal} & \cellcolor[rgb]{ .949,  .769,  .608} 47.68 & \cellcolor[rgb]{ .949,  .761,  .592} \underline{80.54} & \cellcolor[rgb]{ .949,  .769,  .608} 59.04 & \cellcolor[rgb]{ .949,  .769,  .608} 43.73 & \cellcolor[rgb]{ .788,  .843,  .635} \underline{82.44} & \cellcolor[rgb]{ .796,  .847,  .647} 93.90 & \cellcolor[rgb]{ .784,  .839,  .631} \textbf{72.79} & \cellcolor[rgb]{ .788,  .843,  .639} 80.89 & \cellcolor[rgb]{ .635,  .804,  .859} 41.98 & \cellcolor[rgb]{ .627,  .8,  .859} 56.87 & \cellcolor[rgb]{ .631,  .8,  .859} 41.51 & \cellcolor[rgb]{ .635,  .804,  .859} 39.79 \\
    DisAlign~\cite{zhang2021distribution} & \cellcolor[rgb]{ .949,  .765,  .6} \underline{48.15} & \cellcolor[rgb]{ .949,  .761,  .596} 80.33 & \cellcolor[rgb]{ .949,  .769,  .608} 59.14 & \cellcolor[rgb]{ .949,  .765,  .604} \underline{44.31} & \cellcolor[rgb]{ .792,  .847,  .643} 81.94 & \cellcolor[rgb]{ .808,  .859,  .671} 93.61 & \cellcolor[rgb]{ .792,  .847,  .647} 72.12 & \cellcolor[rgb]{ .792,  .843,  .643} 80.36 & \cellcolor[rgb]{ .631,  .804,  .859} 42.10 & \cellcolor[rgb]{ .643,  .808,  .863} 55.20 & \cellcolor[rgb]{ .635,  .804,  .859} 41.24 & \cellcolor[rgb]{ .631,  .8,  .859} 40.28 \\
    BLV~\cite{wang2023balancing}   & \cellcolor[rgb]{ .949,  .773,  .616} 46.76 & \cellcolor[rgb]{ .949,  .765,  .604} 79.96 & \cellcolor[rgb]{ .949,  .769,  .608} 58.96 & \cellcolor[rgb]{ .949,  .776,  .62} 42.67 & \cellcolor[rgb]{ .792,  .847,  .647} 81.81 & \cellcolor[rgb]{ .796,  .851,  .655} 93.84 & \cellcolor[rgb]{ .796,  .847,  .651} 71.83 & \cellcolor[rgb]{ .792,  .847,  .643} 80.05 & \cellcolor[rgb]{ .631,  .8,  .859} 42.17 & \cellcolor[rgb]{ .627,  .8,  .859} 56.83 & \cellcolor[rgb]{ .631,  .8,  .859} 41.52 & \cellcolor[rgb]{ .631,  .8,  .859} 40.06 \\
    \midrule
    AUCSeg (Ours) & \cellcolor[rgb]{ .945,  .757,  .588} \textbf{49.20} & \cellcolor[rgb]{ .945,  .757,  .588} \textbf{80.59} & \cellcolor[rgb]{ .949,  .765,  .6} \underline{59.45} & \cellcolor[rgb]{ .945,  .757,  .588} \textbf{45.52} & \cellcolor[rgb]{ .784,  .839,  .631} \textbf{82.71} & \cellcolor[rgb]{ .796,  .847,  .647} \underline{93.91} & \cellcolor[rgb]{ .788,  .843,  .635} \underline{72.72} & \cellcolor[rgb]{ .784,  .839,  .631} \textbf{81.67} & \cellcolor[rgb]{ .624,  .796,  .855} \textbf{42.73} & \cellcolor[rgb]{ .627,  .8,  .859} \underline{56.95} & \cellcolor[rgb]{ .624,  .796,  .855} \textbf{41.93} & \cellcolor[rgb]{ .624,  .796,  .855} \textbf{40.72} \\
    \bottomrule
    \end{tabular}%
    }
  \label{tab: results_three_datasets}%
\end{table}%

\subsection{Overall Performance}
\Cref{tab: results_three_datasets} shows the quantitative performance comparisons. We draw the following conclusions: First, most current algorithms perform poorly in long-tail scenarios. Specifically, performance drops sharply from head to tail classes. For instance, the performance gap for PointRend and OCRNet on the Cityscapes reaches up to $40\%$. Second, models using long-tail approaches generally achieve better results than those that do not. However, they still fail to produce satisfactory outcomes. One possible reason is that these long-tail approaches focus on reweighting, giving too much attention to the tail classes and leading to overfitting. Additionally, our proposed AUCSeg method surpasses all competitors in most metrics. This success is due to the appealing properties of AUC. Our method consistently outperforms the runner-up by $+1.21\%$, $+0.75\%$, and $+0.38\%$ in tail classes mIoU across the datasets. Overall mIoU also improves by $+1.05\%$, $+0.27\%$, and $+0.31\%$. In some Head/Middle metrics, AUCSeg does not achieve the best performance. Even in these cases, AUCSeg still secures the runner-up status. We analyze \underline{the performance trade-off between head and tail classes} in \Cref{sec: More Discussions About AUCSeg}. These experimental results underscore the effectiveness of our proposed method. We further present the results for each tail class in \Cref{subsec: Per-tail-class Results}, and analyze the reasons for the varying performance improvements of tail classes across the three datasets in \Cref{subsec: Performance Differences Across Different Datasets}.

\Cref{fig: Cityscapes_results} displays the qualitative results on the Cityscapes validation set. Benefiting from our proposed AUC and T-Memory Bank techniques, AUCSeg segments objects in tail classes more accurately. It correctly distinguishes between bicycles and motorcycles and successfully identifies distant traffic lights, which other methods overlook. More qualitative results can be found in \Cref{subsec: More Qualitative Results}.

\subsection{Backbone Extension}

\begin{wraptable}{r}{0.5\columnwidth}
    \centering
    \caption{Results of AUCSeg using different backbones in terms of mIoU (\%).}
    \begin{tabular}{c|c|cc}
    \toprule
    \textbf{Backbone} & \textbf{AUCSeg} & \textbf{Overall} & \textbf{Tail} \\
    \midrule
    \multirow{2}[2]{*}{DeepLabV3+} & \ding{53} & 31.95 & 26.01 \\
          & \checkmark & \textbf{36.13} & \textbf{31.10} \\
    \midrule
    \multirow{2}[2]{*}{EMANet} & \ding{53} & 32.83 & 27.36 \\
          & \checkmark & \textbf{36.32} & \textbf{31.39} \\
    \midrule
    \multirow{2}[2]{*}{OCRNet} & \ding{53} & 29.64 & 23.72 \\
          & \checkmark & \textbf{34.82} & \textbf{29.75} \\
    \midrule
    \multirow{2}[2]{*}{ISANet} & \ding{53} & 29.53 & 23.64 \\
          & \checkmark & \textbf{35.07} & \textbf{30.13} \\
    \bottomrule
    \end{tabular}%
  \label{tab: Backbone_Extension}%
\end{wraptable}

In \Cref{subsec: Setup}, we select the current SOTA SegNeXt as the backbone. Nevertheless, AUCSeg can also adapt to other backbones, consistently delivering effective results. \Cref{tab: Backbone_Extension} presents the experimental results of AUCSeg when using DeepLabV3+, EMANet, OCRNet, and ISANet as backbones. These results reveal significant improvements in both overall mIoU and tail classes mIoU with AUCSeg. Notably, on ISANet, the increases are $5.54\%$ and $6.49\%$. Moreover, AUCSeg enhances performance across various model sizes and different pixel-level long-tail problems, as detailed in \Cref{subsec: Backbone Extension of Different Model Sizes} and \Cref{subsec: Backbone Extension of Different Model Sizes}. \textbf{This demonstrates the superiority of our proposed AUCSeg for long-tailed semantic segmentation.}

\begin{figure*}[t!]
\centering
\begin{subfigure}{0.19\textwidth}
  \includegraphics[width=\linewidth]{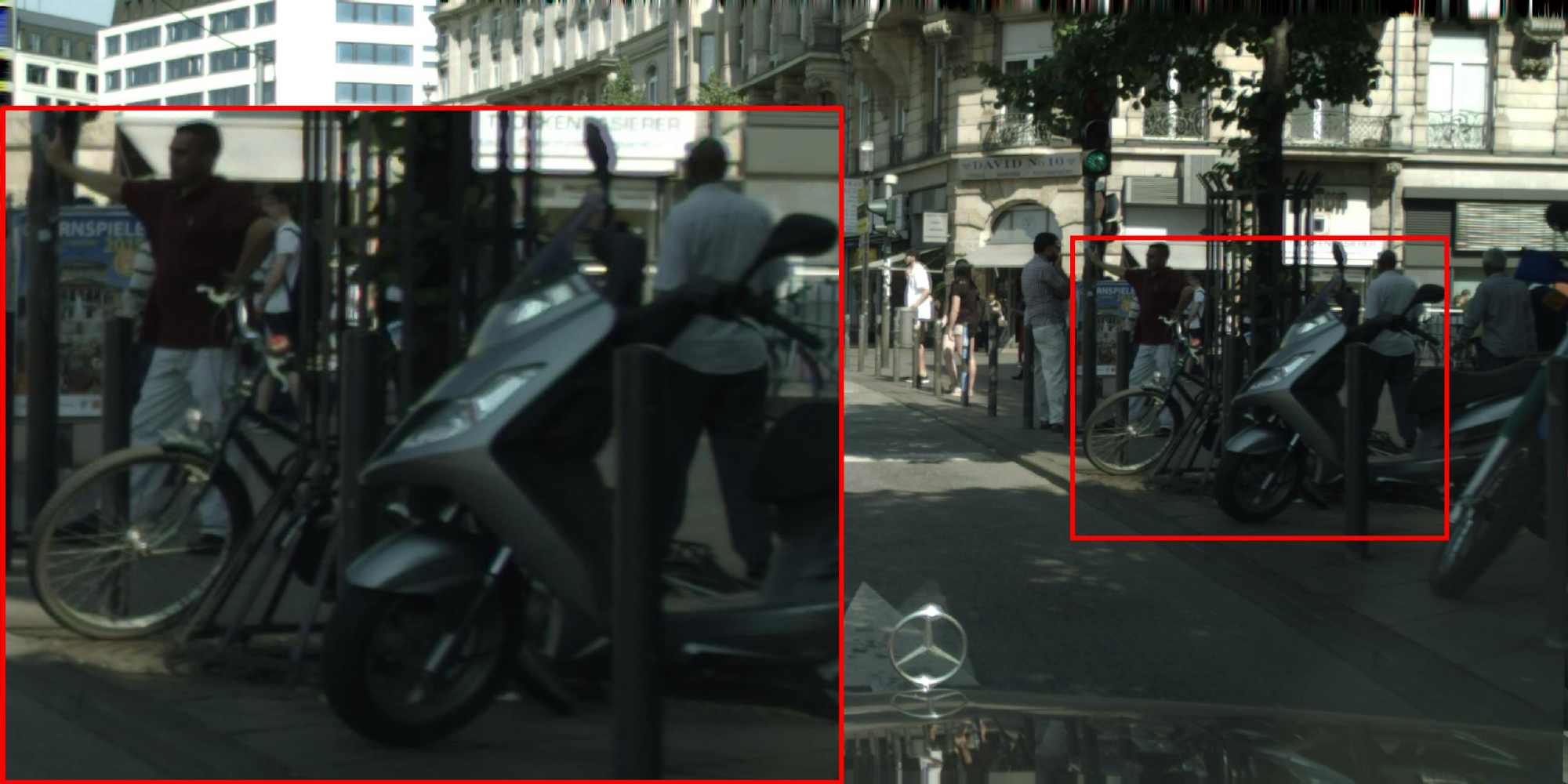}
\end{subfigure}
\begin{subfigure}{0.19\textwidth}
  \includegraphics[width=\linewidth]{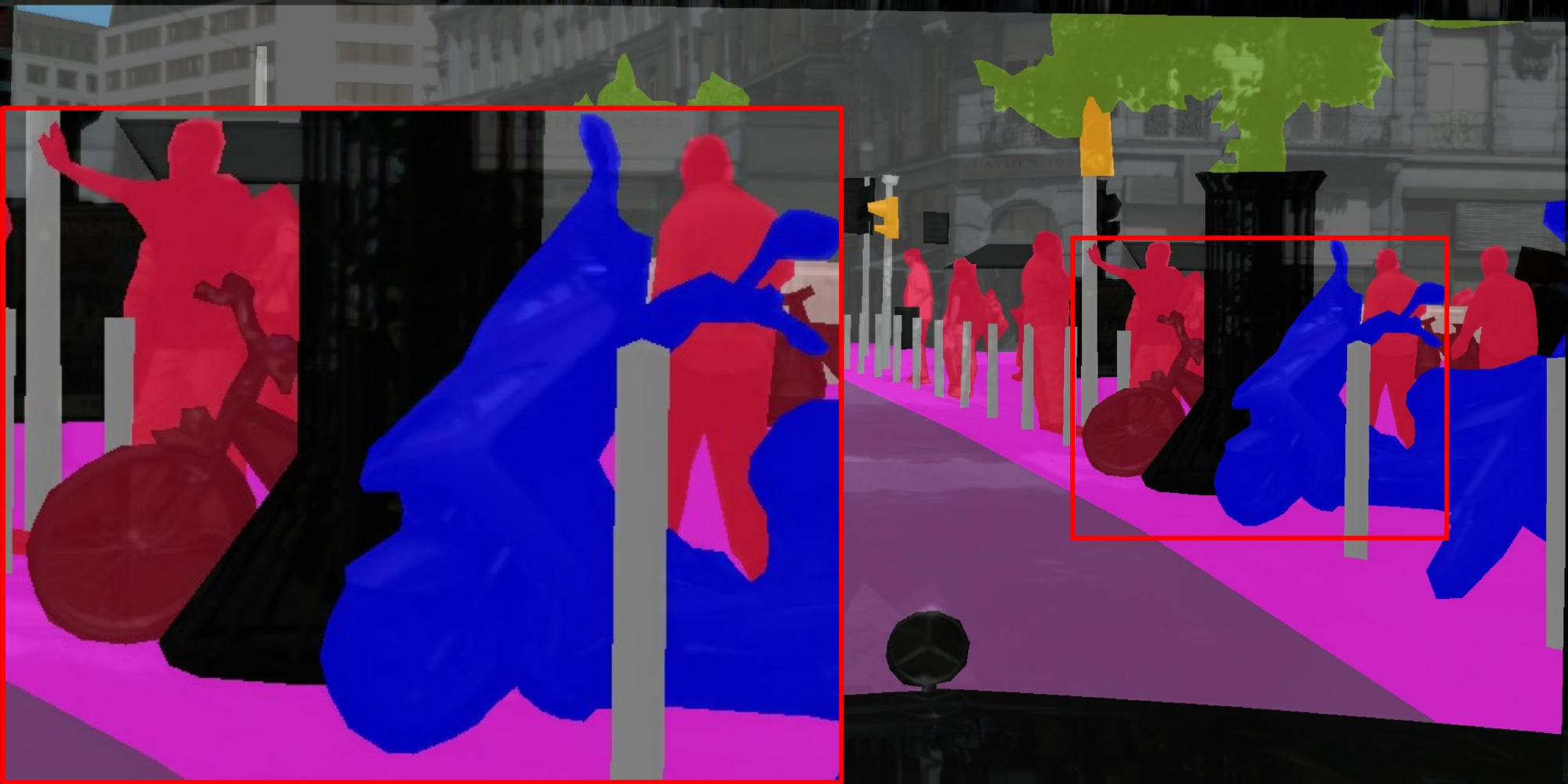}
\end{subfigure}
\begin{subfigure}{0.19\textwidth}
  \includegraphics[width=\linewidth]{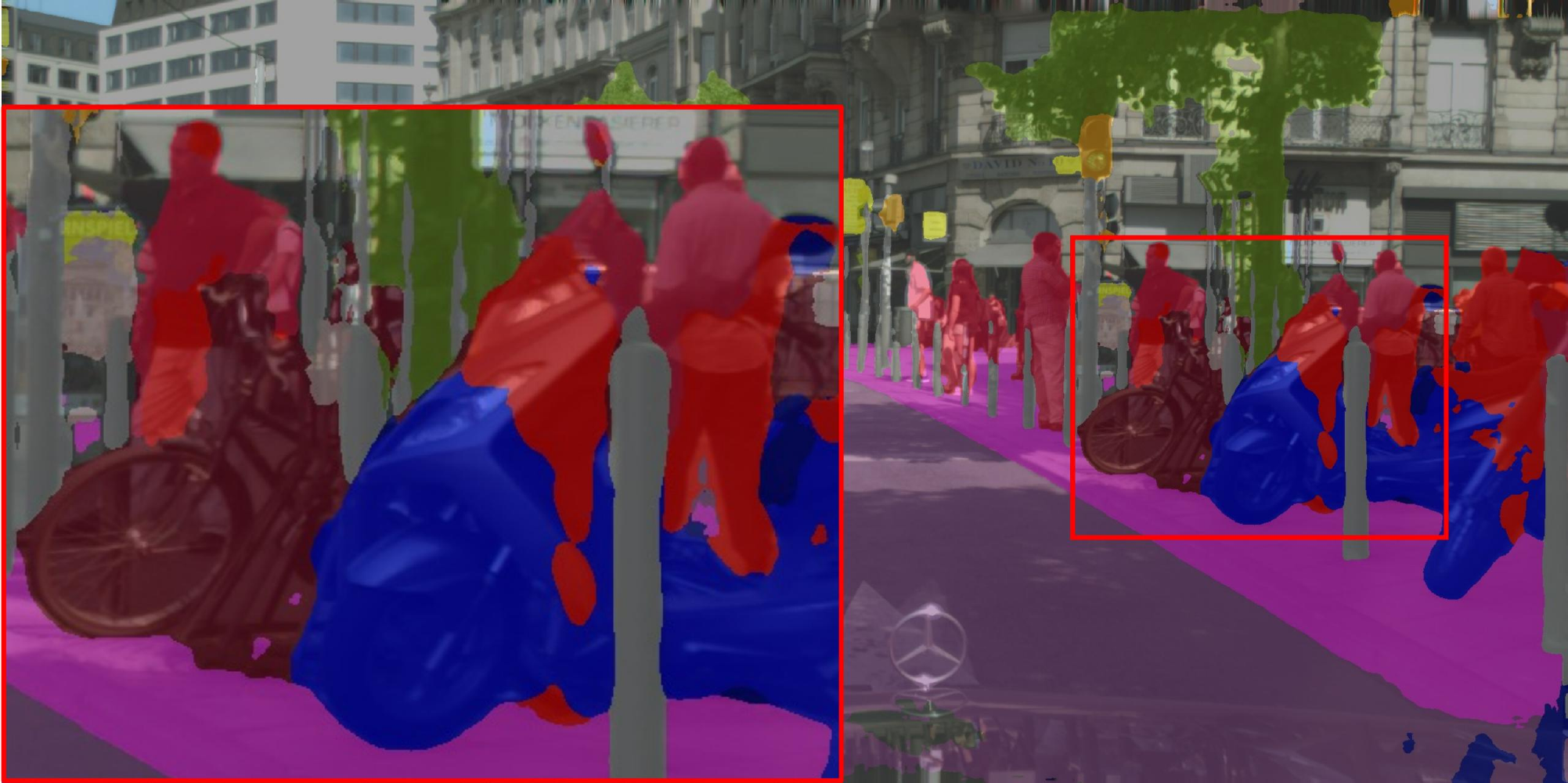}
\end{subfigure}
\begin{subfigure}{0.19\textwidth}
  \includegraphics[width=\linewidth]{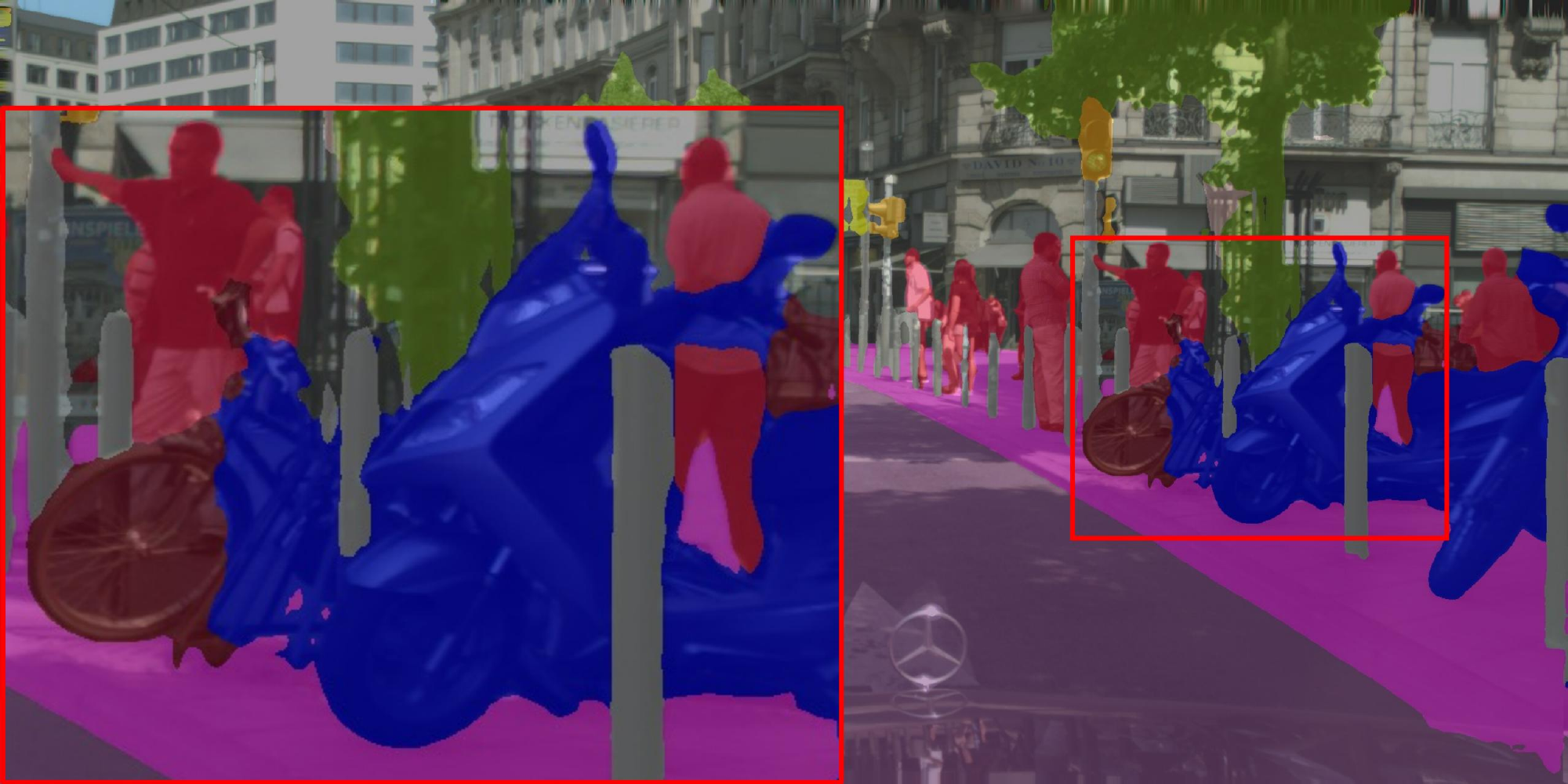}

\end{subfigure}
\begin{subfigure}{0.19\textwidth}
  \includegraphics[width=\linewidth]{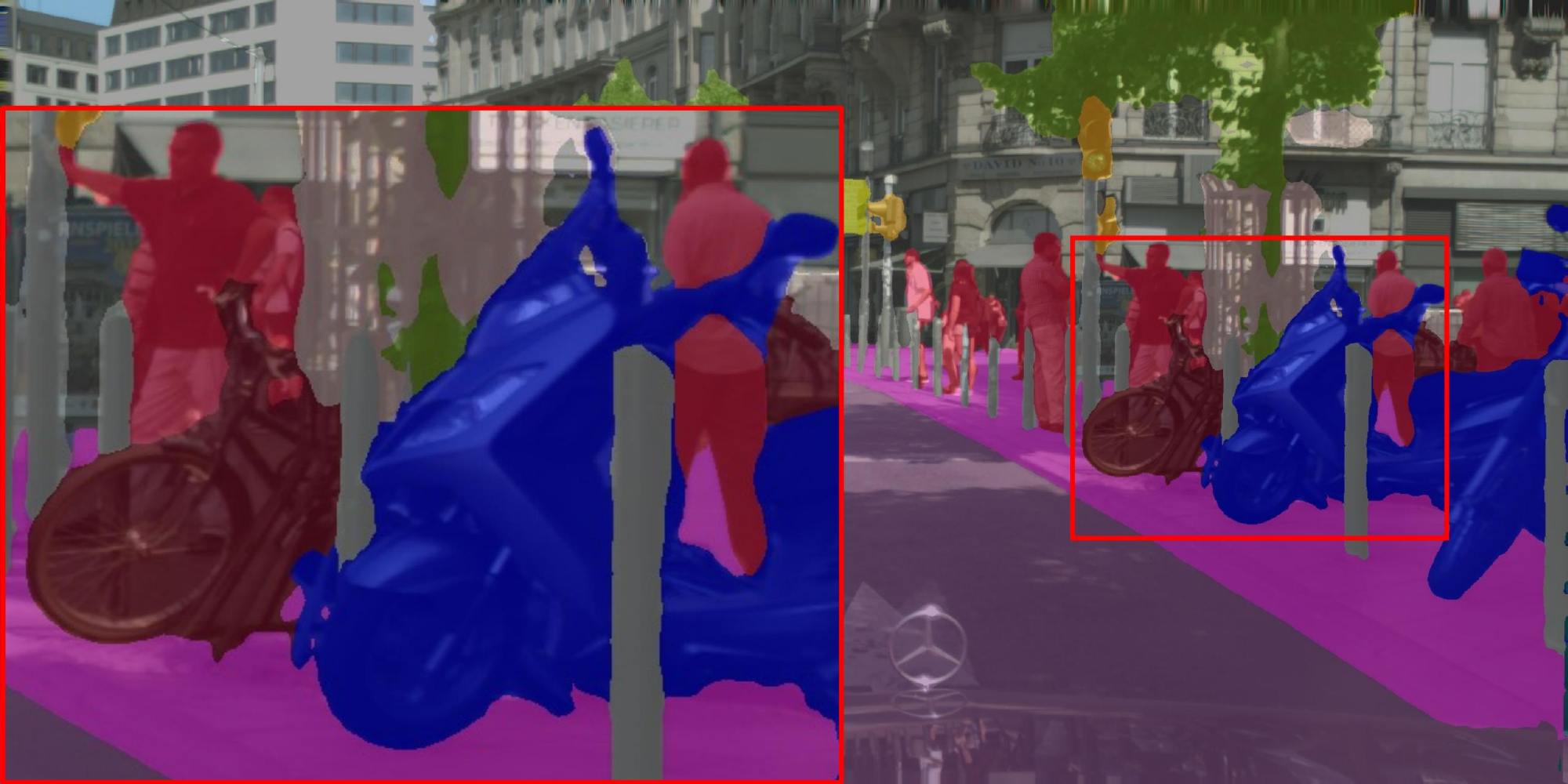}
\end{subfigure}

\begin{subfigure}{0.19\textwidth}
  \includegraphics[width=\linewidth]{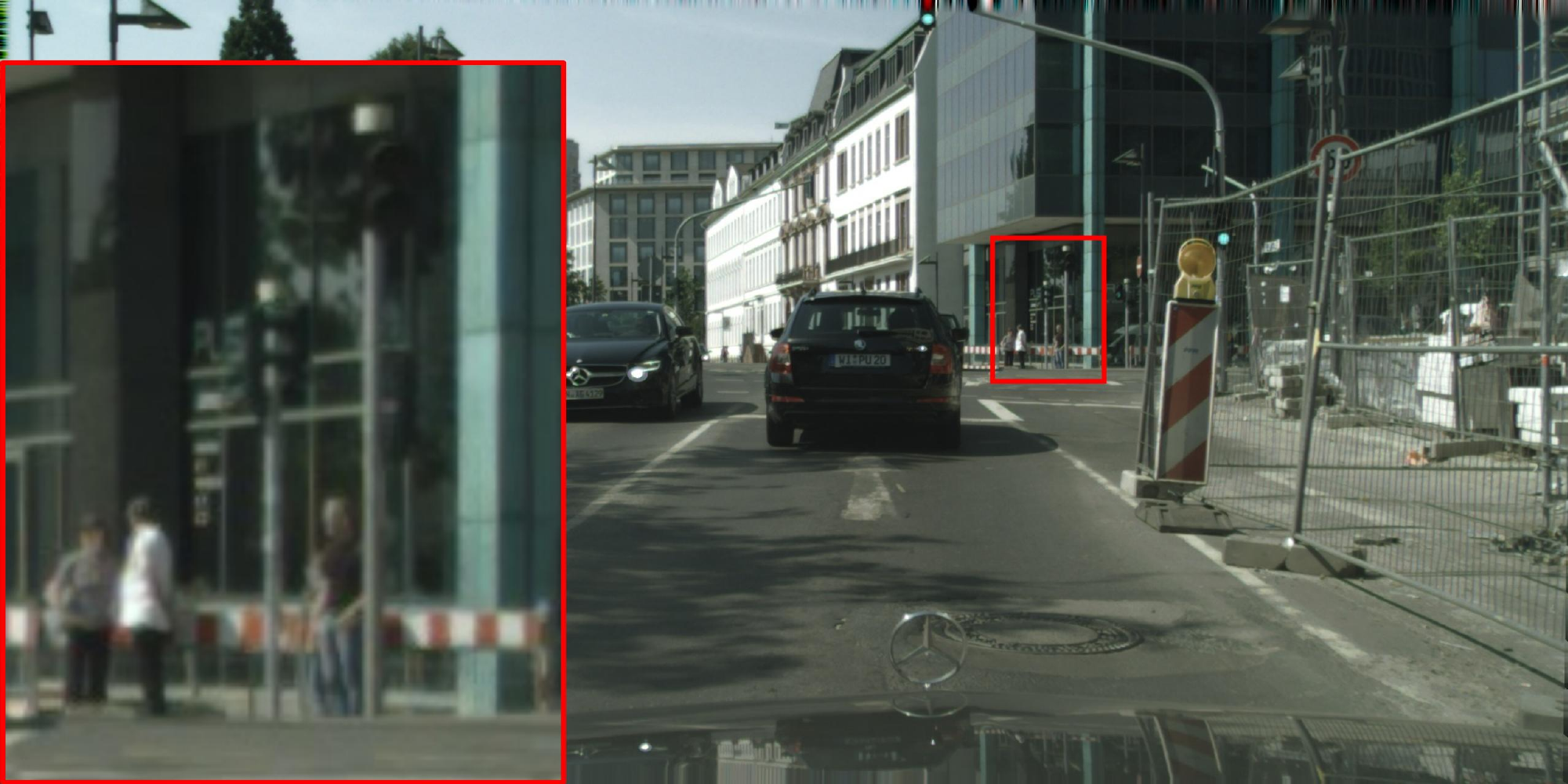}
  \caption{Input}
\end{subfigure}
\begin{subfigure}{0.19\textwidth}
  \includegraphics[width=\linewidth]{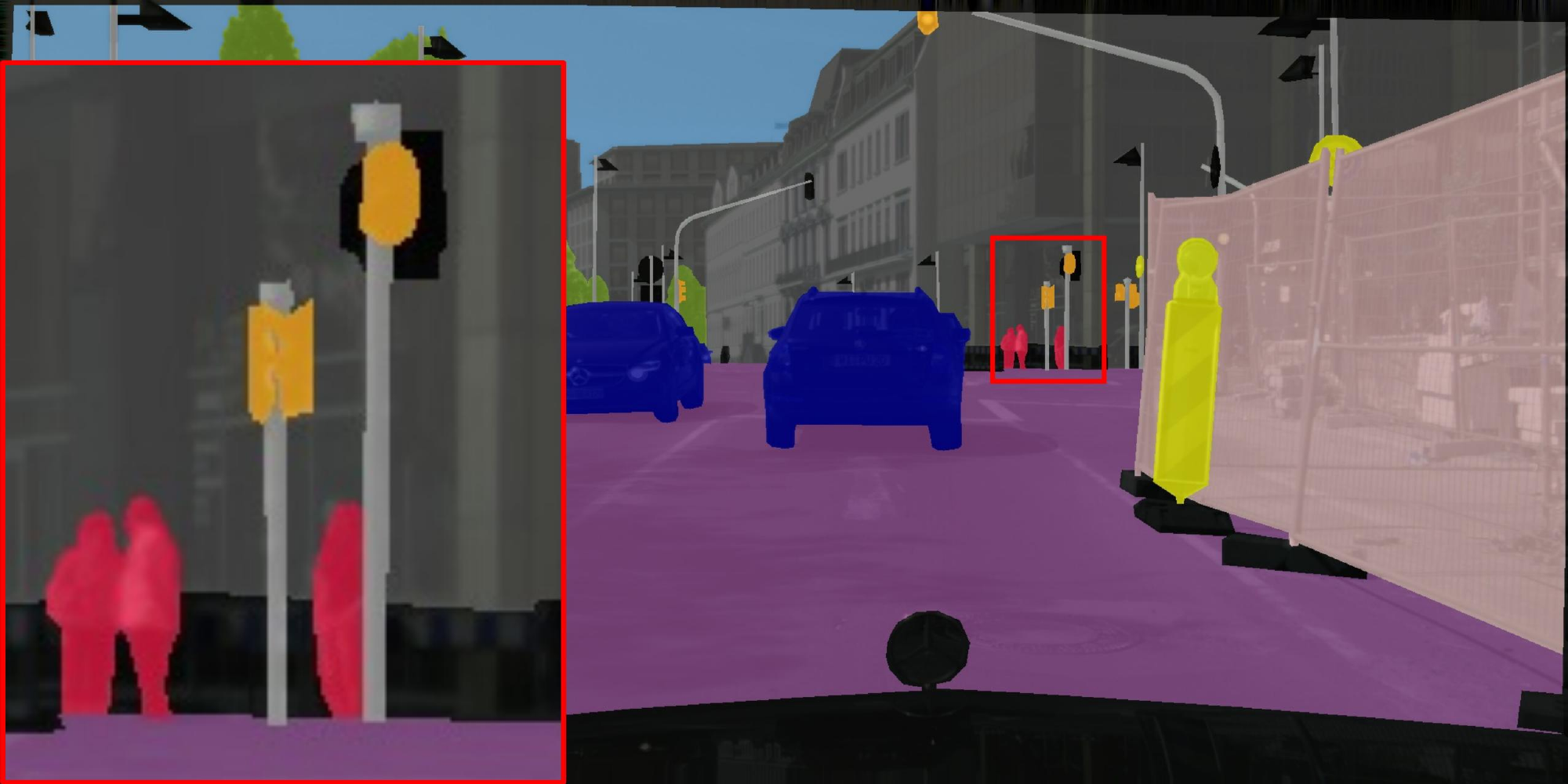}
  \caption{Ground Truth}
\end{subfigure}
\begin{subfigure}{0.19\textwidth}
  \includegraphics[width=\linewidth]{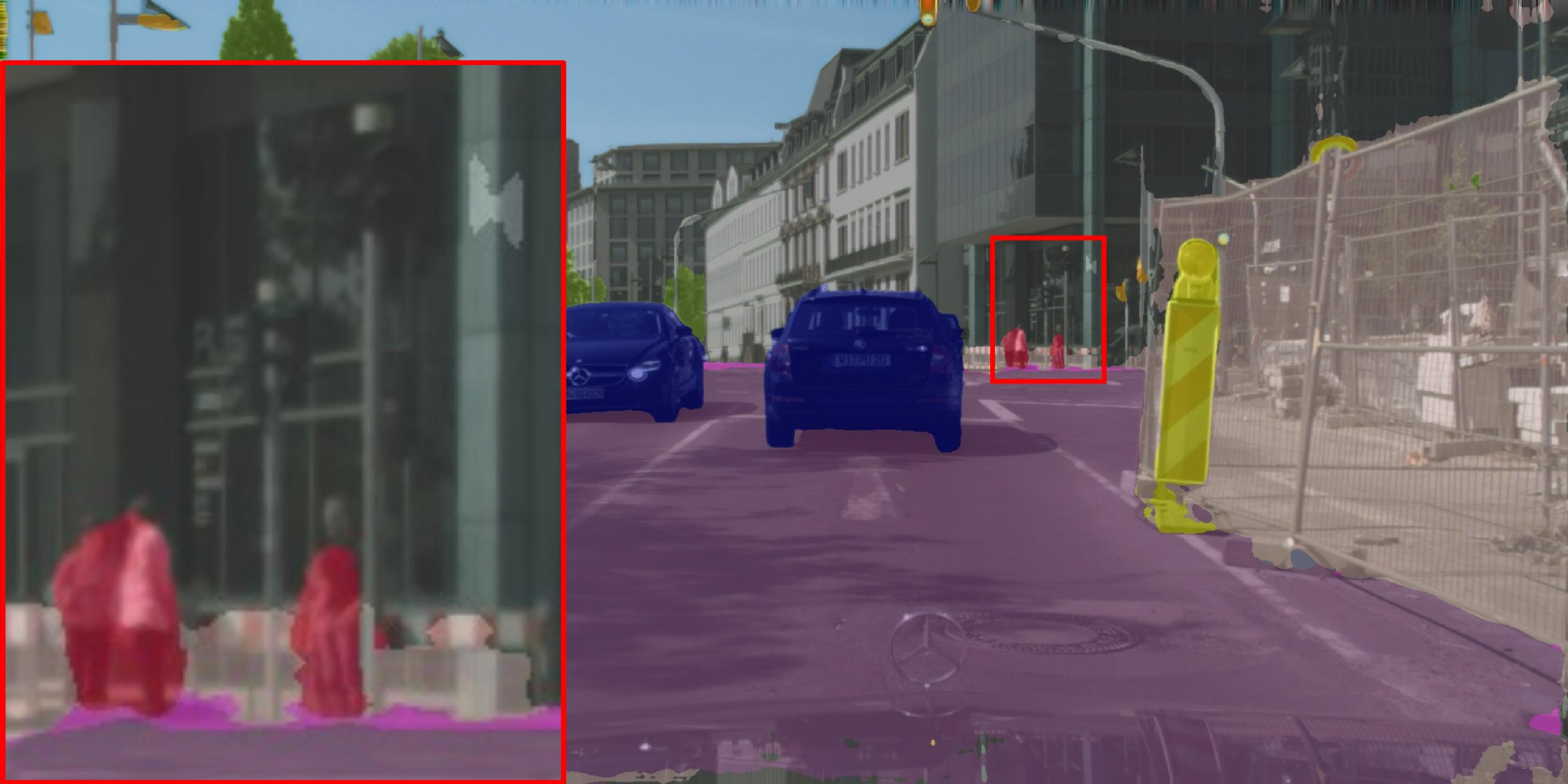}
  \caption{DeepLabV3+}
\end{subfigure}
\begin{subfigure}{0.19\textwidth}
  \includegraphics[width=\linewidth]{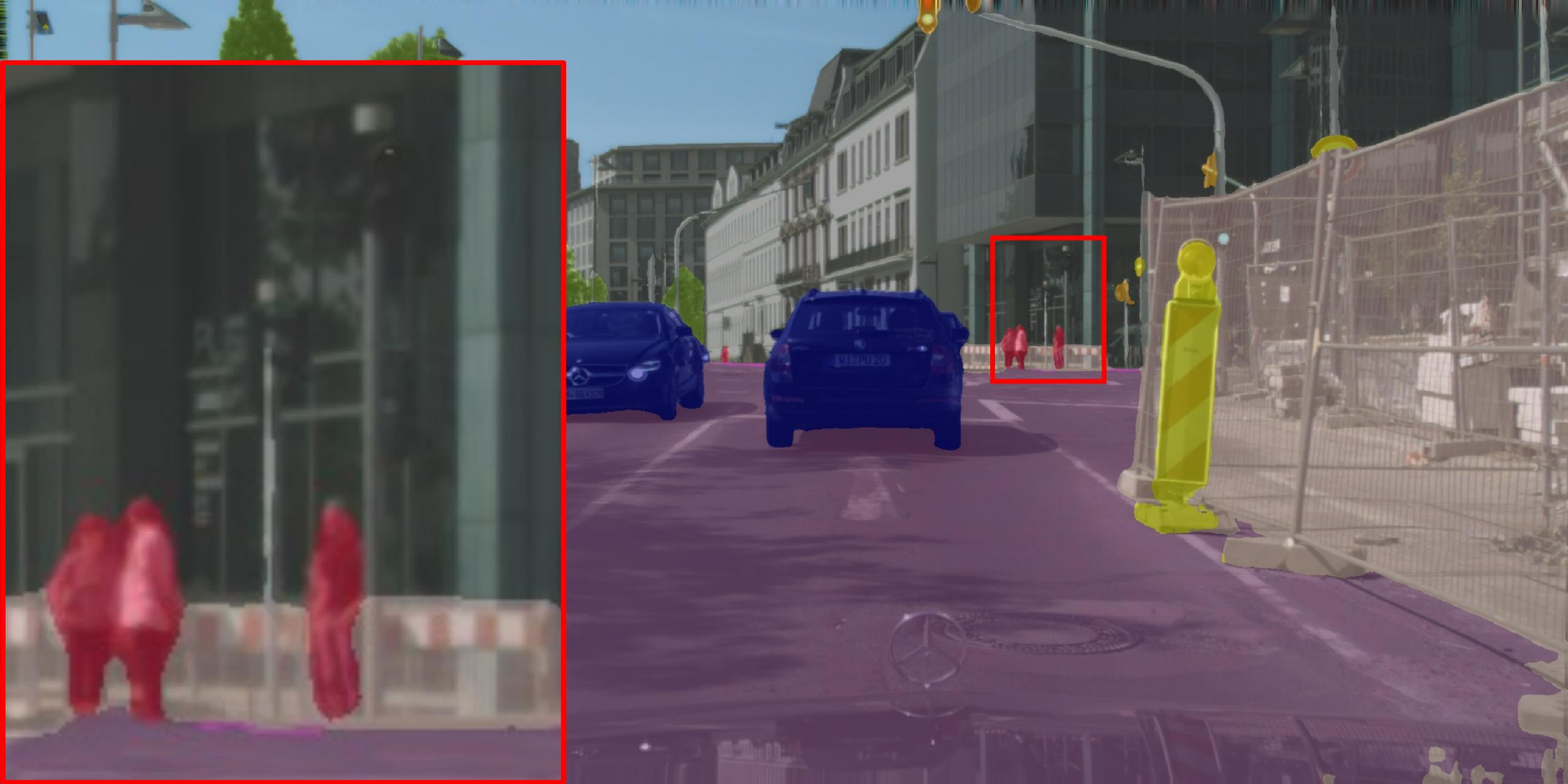}
  \caption{SegNeXt}
\end{subfigure}
\begin{subfigure}{0.19\textwidth}
  \includegraphics[width=\linewidth]{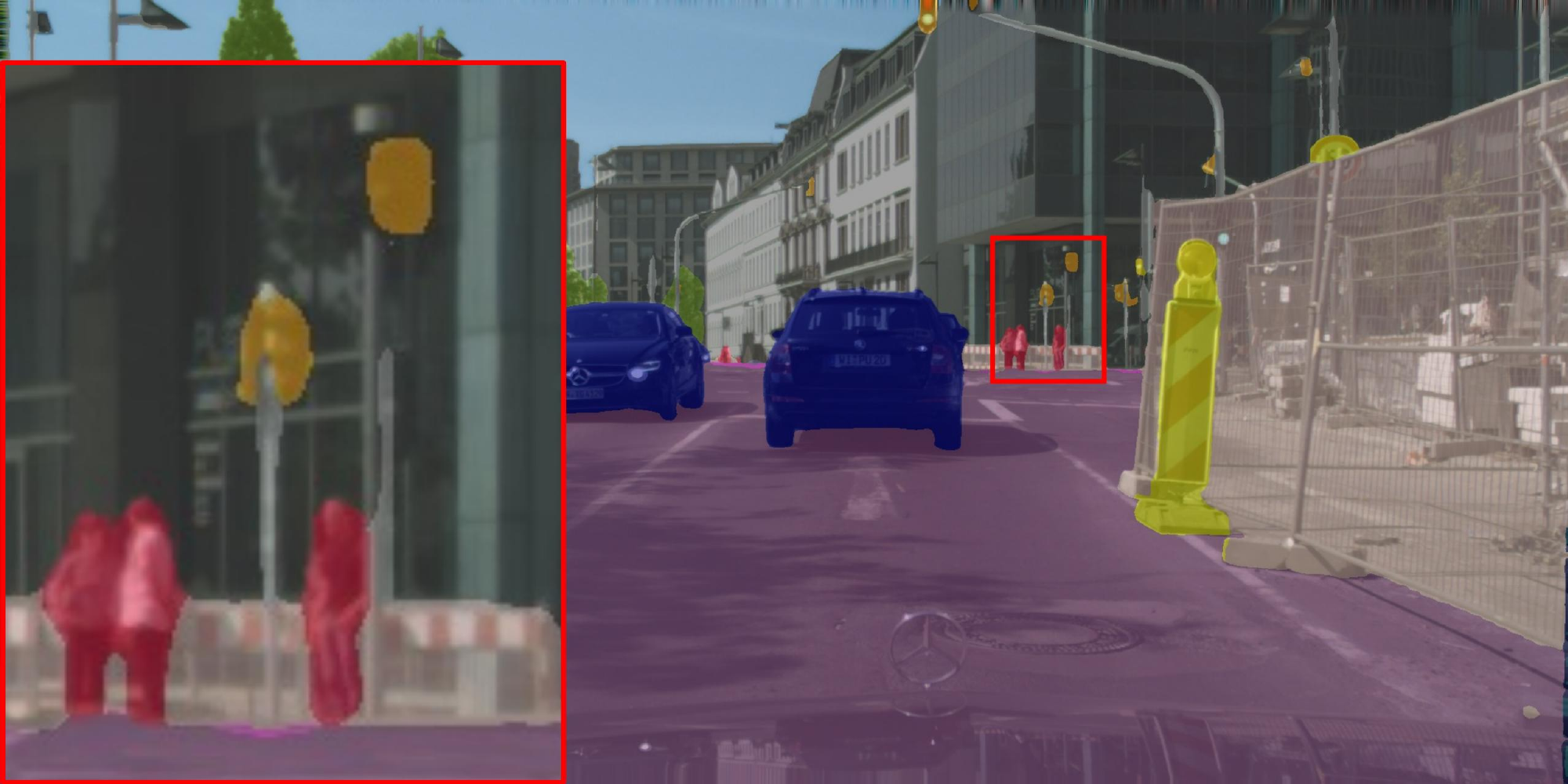}
  \caption{AUCSeg (Ours)}
\end{subfigure}

\caption{Qualitative results on the {\bf Cityscapes} val set. Red rectangles highlight and magnify the image details in the lower left corner.}
\label{fig: Cityscapes_results}
\end{figure*}

\begin{figure}
  \centering
  \begin{subfigure}{0.23\linewidth}
    \includegraphics[width=\linewidth]{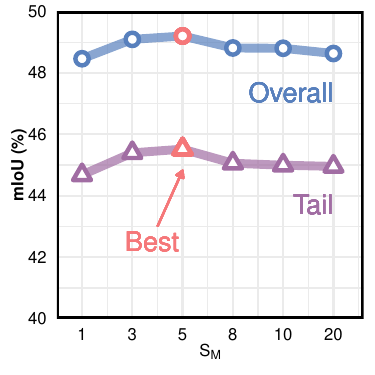}
    \caption{Memory Size ($S_{M}$).}
    \label{fig: TMBS}
  \end{subfigure}
  \hfill
  \begin{subfigure}{0.23\linewidth}
    \includegraphics[width=\linewidth]{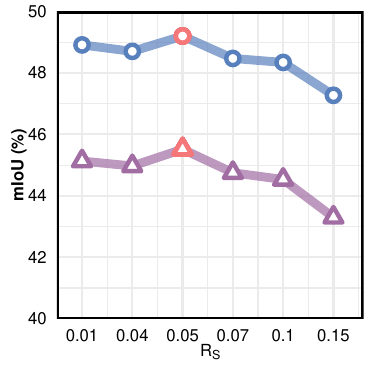}
    \caption{Sample Ratio ($R_S$).}
    \label{fig: SR}
  \end{subfigure}
  \hfill
  \begin{subfigure}{0.23\linewidth}
    \includegraphics[width=\linewidth]{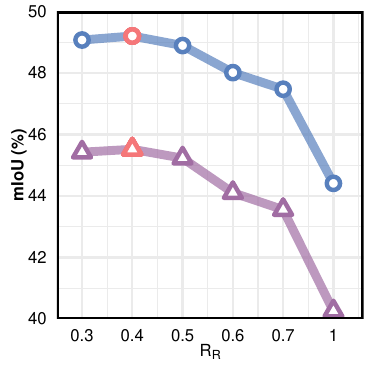}
    \caption{Resize Ratio ($R_R$).}
    \label{fig: RR}
  \end{subfigure}
  \hfill
  \begin{subfigure}{0.23\linewidth}
    \includegraphics[width=\linewidth]{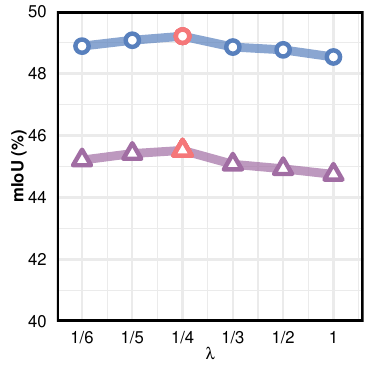}
    \caption{The weight $\lambda$.}
    \label{fig: lambda}
  \end{subfigure}
  \caption{Ablation Study on Hyper-Parameters.}
  \label{fig: Ablation_study}
\end{figure}

\subsection{Ablation studies}

We perform several ablation studies to test the effectiveness of different modules and hyperparameters. All experiments are conducted on the \textbf{ADE20K} validation set.

\newpage

\begin{wraptable}{r}{0.5\textwidth}
  \centering
  \renewcommand\arraystretch{1}
  \caption{Ablation study on the effectiveness of AUC Optimization and T-Memory Bank (TMB) in terms of mIoU (\%).}
  \resizebox{\linewidth}{!}{
  \begin{tabular}{c|cc|cc}
  \toprule
  \textbf{Model} & \textbf{AUC} & \textbf{TMB} & \textbf{Overall} & \textbf{Tail} \\
  \midrule
  SegNeXt &  &  & 47.45 & 43.28 \\
  SegNeXt+AUC & \checkmark &  & 48.46 & 44.70 \\
  SegNeXt+TMB &  & \checkmark & 47.86 & 43.86 \\
  AUCSeg & \checkmark & \checkmark & \textbf{49.20} & \textbf{45.52} \\
  \bottomrule
  \end{tabular}%
  }
  \label{tab: effectiveness_AUC_T-Memory_Bank}
\end{wraptable}

\textbf{The Effectiveness of AUC Optimization and T-Memory Bank.} \Cref{tab: effectiveness_AUC_T-Memory_Bank} details our step-by-step ablation study on the AUC and T-Memory Bank components of AUCSeg. Compared to the baseline SegNeXt, AUC enhances performance by $+1.01\%$ and $+1.42\%$ in overall and tail classes. The T-Memory Bank further addresses the imbalance issue, yielding improvements of $+1.75\%$ overall and $+2.24\%$ in tail classes. Our results also show that employing T-Memory Bank without AUC yields no significant improvements, underscoring the necessity of AUC Loss. More ablation experiments on the effectiveness of AUC optimization and TMB are deferred to \Cref{subsec: Spatial Resource Consumption}, \ref{subsec: Results of Different AUC Surrogate Losses and Calculation Methods}, \ref{subsec: Results of the Comparison Between PMB and TMB}, and \ref{subsec: Results of Different Memory Bank Update Strategies}.

\textbf{Ablation Study on Hyper-Parameters.} \Cref{fig: TMBS} ablates the maximum number of images stored per class in the Memory Branch, referred to as Memory Size ($S_{M}$). For the ADE20K dataset, optimal performance occurs when $S_{M} = 5$, and performance shows little sensitivity to changes in $S_{M}$. \Cref{fig: SR} ablates the Sample Ratio ($R_S$), the fraction of classes sampled from the Memory Branch relative to the total number of missing tail classes. \Cref{fig: RR} ablates the Resize Ratio ($R_R$), the scaling factor for the sampled pixels. For ADE20K, the best result is obtained when $R_S = 0.05$ and $R_R = 0.4$. A potential reason for their small value is that the original image is overwritten when $R_S$ and $R_R$ are too large, resulting in poor training performance. \Cref{fig: lambda} ablates the weight $\lambda$ for $\ell_{ce}$ and $\ell_{auc}$, with $\lambda = \frac{1}{4}$ providing slightly better results. The influence of $\lambda$ on performance is minimal. Detailed results from this hyper-parameter ablation study are available in \Cref{subsec: detailed_results_of_the_ablation_study_on_hyper-parameters} and \ref{subsec: Results of the Ablation Study on the Impact of Batch Size}.

\section{Conclusion}
\label{sec: conclusion}
This paper explores AUC optimization in the context of PLSS tasks. To begin with, we theoretically study the generalization performance of AUC-oriented PLSS by overcoming the two-layer coupling issue across the loss terms of AUCSeg therein. The corresponding results show that applying AUC optimization to PLSS could also enjoy a promising performance. Subsequently, we propose a novel T-Memory Bank to reduce the significant memory demand for the mini-batch optimization of AUCSeg. Finally, comprehensive experiments suggest the effectiveness of our proposed AUCSeg.

\section*{Acknowledgments}
\label{sec: Acknowledgments}
This work was supported in part by the National Key R\&D Program of China under Grant 2018AAA0102000, in part by National Natural Science Foundation of China: 62236008, U21B2038, U23B2051, 61931008, 62122075, 92370102, 62406305, 62471013 and 62476068, in part by Youth Innovation Promotion Association CAS, in part by the Strategic Priority Research Program of the Chinese Academy of Sciences, Grant No. XDB0680000, in part by the Innovation Funding of ICT, CAS under Grant No.E000000, in part by the Postdoctoral Fellowship Program of CPSF under Grant GZB20240729 and GZB20230732, and in part by the China Postdoctoral Science Foundation under Grant No.2023M743441.

{\small
\balance
\bibliographystyle{plain}
\bibliography{neurips_2024}
}

\newpage
\appendix

\section*{\textcolor{blue}{\Large{Contents}}}
\startcontents[appendices]
\printcontents[appendices]{l}{1}{\setcounter{tocdepth}{3}}
\newpage

\section{Symbol Definitions}
\label{sec: Symbol Definitions}

In this section, \Cref{tab:symbols} includes a summary of key notations and descriptions in this work.

\begin{table}[htbp]
  \centering
  \renewcommand\arraystretch{1.0}
  \caption{A summary of key notations and descriptions in this work.}
  \resizebox{\linewidth}{!}{
\begin{tabular}{ll}
\toprule
\textbf{Notations} & \textbf{Descriptions} \\ \midrule
$\mathcal{D}$ & Training dataset. \\ 
$H/W$ & The height/width of the images in dataset $\mathcal{D}$. \\ 
$N$ & The number of image samples in dataset $\mathcal{D}$. \\ 
$K$ & The total number of classes in dataset $\mathcal{D}$. \\ 
$(\textbf{X}^i,\textbf{Y}^i)$ & The $i$-th sample and its ground-truth in dataset $\mathcal{D}$, where $i \in [1,n]$, $\textbf{X}^i \in \mathbb{R}^{H \times W \times 3}$, $\textbf{Y}^i \in \mathbb{R}^{H \times W \times K}$. \\ 
$f_{\theta}$ & The semantic segmentation model, where $\theta$ is its parameter. \\ 
$f_{\theta}^e/f_{\theta}^d$ & The encoder/decoder in the semantic segmentation model $f_{\theta}$. \\ 
$\hat{\textbf{Y}}^i$ & The prediction of model $f_{\theta}$, where $\hat{\textbf{Y}}^i=f_{\theta}(\textbf{X}^i)\in \mathbb{R}^{H \times W \times K}$. \\ 
$(\textbf{X}_{u,v}^i,\textbf{Y}_{u,v}^i/\hat{\textbf{Y}}_{u,v}^i)$ & The sample and its ground-truth/prediction of the $(u,v)$-th pixel of the $i$-th image. \\ 
$\textbf{Y}_{u,v}^{ic}/\hat{\textbf{Y}}_{u,v}^{ic}$ & The ground-truth one-hot encoding/prediction of pixel $(\textbf{X}_{u,v}^i,\textbf{Y}_{u,v}^i)$ in class $c$. \\ 
$\mathcal{D}^p$ & The set of all pixels in dataset $\mathcal{D}$. \\ 
$(\textbf{X}_j^p,\textbf{Y}_j^p)$ & The $j$-th element in $\mathcal{D}^p$, where $j \in [1,n \times (H-1) \times (W-1)]$. \\ 
$f_{\theta}^{(c)}$ & The continuous score function supporting class $c$. \\ 
$\mathcal{N}_{c}$ & The set of pixels with label $c$ in the set $\mathcal{D}^p$, where $\mathcal{N}_{c}=\{\textbf{X}_{k}^p|\textbf{Y}_{k}^p=c \}$. \\ 
$|\mathcal{A}|$ & The number of elements in set $\mathcal{A}$. \\ 
$\mu$ & The Lipschitz constant. \\ 
$\Omega(\cdot)$ & The lower bound. \\ 
$\mathcal{N}_{c}/\mathcal{T}_{c}$ & The set of pixels with label $c$ in the original image/those pixels stored in the T-Memory Bank. \\ 
$\tilde{\textbf{X}}^p$ & The sample after replacing some pixels with tail classes pixels from the T-Memory Bank. \\ 
$n_t$ & The number of tail classes. \\ 
$\mathcal{C}_t$ & The labels of tail classes, where $\mathcal{C}_t=\{c_i\}_{i=1}^{n_t}$. \\ 
$\mathcal{M}$ & The Memory Branch, where $\mathcal{M}=\{\mathcal{M}_{c_1}, \dots \mathcal{M}_{c_{n_t}}\}$. \\ 
$S_{M}$ & The memory size. \\ 
$R_S$ & The sample ratio. \\ 
$R_R$ & The resize ratio. \\ 
$T_{max}$ & The max iteration. \\ 
$N_b$ & The batch size. \\ \bottomrule
\end{tabular}
}
\label{tab:symbols}
\end{table}

\section{Generalization Bounds and Its Proofs}
\label{sec: Generalization Bounds and Its Proofs}
\subsection{Preliminary Lemmas}
\label{subsec: Preliminary Lemmas}

\begin{lem}[Jensen's Inequality]
If $X$ is a random variable and $\varphi$ is a convex function, then
\begin{equation}
\varphi\left(\mathbb{E}\left[X\right]\right)\leq\mathbb{E}\left[\varphi\left(X\right)\right].
\end{equation}
\label{lem: Jensen's Inequality}
\end{lem}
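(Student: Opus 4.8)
The plan is to prove Jensen's inequality via the supporting-line (subgradient) characterization of convexity. First I would set $\mu = \mathbb{E}[X]$ and invoke the fundamental property that a convex function $\varphi$ admits a supporting line at every interior point of its domain: there exists a constant $g$ (a subgradient of $\varphi$ at $\mu$) such that
\begin{equation*}
\varphi(x) \geq \varphi(\mu) + g\,(x - \mu) \qquad \text{for all } x \text{ in the domain.}
\end{equation*}
This inequality is just the analytic expression of convexity, namely that the graph of $\varphi$ lies on or above each of its supporting lines.

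The second step is to substitute the random variable $X$ in place of the deterministic argument $x$ and take expectations of both sides. Since expectation is monotone and linear, this gives
\begin{equation*}
\mathbb{E}\left[\varphi(X)\right] \geq \varphi(\mu) + g\,\mathbb{E}\left[X - \mu\right] = \varphi(\mu) + g\cdot 0 = \varphi\left(\mathbb{E}[X]\right),
\end{equation*}
where the linear term vanishes precisely because $\mathbb{E}\left[X - \mu\right] = \mathbb{E}[X] - \mu = 0$ by the definition of $\mu$. Rearranging yields the claimed bound $\varphi(\mathbb{E}[X]) \leq \mathbb{E}[\varphi(X)]$, completing the argument.

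The main obstacle is rigorously justifying the existence of the supporting line at $\mu$. For a differentiable convex $\varphi$ one may simply take $g = \varphi'(\mu)$ and the inequality is the standard first-order convexity condition; for a general, possibly non-differentiable, convex function one must instead appeal to the existence of a subgradient at interior points of the domain, which requires $\mu$ to lie in the interior of $\mathrm{dom}(\varphi)$. I would therefore phrase the hypotheses so that $\varphi$ is convex on an interval containing the support of $X$, guaranteeing $\mu \in \mathrm{int}(\mathrm{dom}(\varphi))$ and hence a valid subgradient. A secondary technical point is ensuring the relevant expectations are well-defined, i.e. the integrability of $X$ and of $\varphi(X)$, which I would fold into the standing assumptions rather than belabor. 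Since the paper only uses this lemma as an auxiliary tool, I expect the short subgradient argument above to suffice.
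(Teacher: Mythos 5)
Your subgradient argument is correct and is the standard textbook proof of Jensen's inequality; the paper itself states this lemma without proof, as a classical preliminary fact, so there is nothing in the paper to diverge from. Your handling of the technical caveats --- requiring $\mathbb{E}[X]$ to lie in the interior of $\mathrm{dom}(\varphi)$ so a subgradient exists, and assuming integrability of $X$ and $\varphi(X)$ --- is exactly the right level of care for an auxiliary lemma used only once (in the proof of Theorem 1, to pull the supremum inside the sum over class pairs).
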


\begin{assmp}
Assume that $\ell$ is $\mu$-Lipschitz continuous, that is
\begin{equation}
|\ell\left(t\right)-\ell\left(s\right)|\leq\mu|t-s|.
\end{equation}
\label{assmp: Lipschitz continuous}
\end{assmp}
Assumption \ref{assmp: Lipschitz continuous} is a pretty mild assumption. The square loss $\ell_{sq}\left(x\right)=\left(1-x\right)^2$ satisfies Assumption \ref{assmp: Lipschitz continuous}.

\begin{lem}
The empirical Rademacher complexity of function $g$ with respect to the predictor $f$ is defined as:
\begin{equation}
    \hat{\mathfrak{R}}_{\mathcal F}(g)=\mathbb{E}_{\sigma}[\sup _{f \in \mathcal F}\frac{1}{N} \sum_{i=1}^{N} \sigma_ig(f^{(i)})].
\end{equation}
where $\mathcal{F} \subseteq \{f: \mathcal{X} \to \mathbb{R}^K\}$ is a family of predictors, and $N$ refers to the size of the dataset, and $\sigma_i$s are independent uniform random variables taking values in $\{-1, +1\}$. The random variables $\sigma_i$ are called Rademacher variables.
\label{lem: Rademacher}
\end{lem}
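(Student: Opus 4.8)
The statement labeled here as a lemma is in fact a \emph{definition}: it introduces the empirical Rademacher complexity $\hat{\mathfrak{R}}_{\mathcal F}(g)$ and fixes the associated notation (the sign variables $\sigma_i$, the sample size $N$, and the vector-valued function class $\mathcal F \subseteq \{f:\mathcal X\to\mathbb R^K\}$). Consequently there is no assertion carrying a proof obligation, and the honest plan is not to ``derive'' an identity but to record that the right-hand side is a well-posed object and to flag the two standard regularity points one would normally check before invoking it downstream. I would therefore present the quantity verbatim and, at most, append a one-line remark on well-posedness rather than manufacture a spurious derivation.

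Concretely, the two things worth verifying are measurability and finiteness. For measurability, the inner supremum $\sup_{f\in\mathcal F}\frac1N\sum_{i=1}^N \sigma_i g(f^{(i)})$ must be measurable in $\sigma$ so that the outer expectation $\mathbb E_\sigma[\cdot]$ is meaningful; this is the usual caveat for empirical-process quantities and holds whenever $\mathcal F$ is separable or the supremum is taken over a countable dense subclass, which is the case for the parametric score classes considered here. For finiteness, the paper's scores satisfy $f_\theta^{(c)}\in[0,1]$ and the surrogate $\ell_{sq}$ is $\mu$-Lipschitz (Assumption \ref{assmp: Lipschitz continuous}), so the composed map $g(f^{(i)})$ is uniformly bounded; each summand then lies in a bounded interval and both the supremum and its expectation over the finite average are finite. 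These are routine and require no real work.

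The substantive role of the definition lies entirely in what comes \emph{after} it: $\hat{\mathfrak R}_{\mathcal F}(g)$ is introduced so that the uniform deviation $|\hat{\mathcal L}_{\mathcal D}(f)-\mathbb E_{\mathcal D}[\hat{\mathcal L}_{\mathcal D}(f)]|$ can later be controlled by a symmetrization argument that replaces the deviation with a term of order $\mathbb E[\hat{\mathfrak R}_{\mathcal F}(g)]$, followed by a contraction (Talagrand) step that peels off the $\mu$-Lipschitz loss. The genuinely hard analysis — bounding $\hat{\mathfrak R}_{\mathcal F}(g)$ in the presence of the inner-image and inter-image dependency structure via the dependency graph — belongs to the later lemmas, not to this definition. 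Thus the only ``obstacle'' here is expository: the correct move is to label the object as a definition (or leave it as stated while noting boundedness and measurability), since there is nothing to prove.
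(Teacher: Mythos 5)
You are correct: the paper itself provides no proof for this statement, since despite the \emph{lemma} label it is simply the definition of the empirical Rademacher complexity (cf.\ standard references such as \cite{bartlett2002rademacher}), and the paper uses it purely as notation feeding into the later symmetrization and covering-number arguments. Your reading matches the paper's treatment exactly, and your added remarks on measurability and boundedness are routine but sound.
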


\newpage

\begin{lem}
Let $\mathbb{E}\left[g\right]$ and $\hat{\mathbb{E}}\left[g\right]$ represent the expected risk and empirical risk, and $\mathcal{F} \subseteq \left\{ f:\mathcal{X} \to \mathbb{R}^K \right\}$. Then with probability at least $1-\delta$ over the draw of an i.i.d. sample S of size $n$, the generalization bound holds:
\begin{equation}
\sup_{f \in \mathcal{F}}\left(\mathbb{E}\left[g\left(f\right)\right]-\hat{\mathbb{E}}\left[g\left(f\right)\right] \right) \le 2\hat{\mathfrak{R}}_{\mathcal F}\left(g\right)+3\sqrt{\frac{\log \frac{2}{\delta}}{2n}}.
\end{equation}
\label{lem: generalization bound}
\end{lem}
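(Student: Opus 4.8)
The plan is to follow the classical two-stage route for Rademacher-based uniform convergence: first concentrate the uniform deviation around its mean via a bounded-differences argument, then control that mean by symmetrization, and finally pass from the expected to the empirical Rademacher complexity with a second concentration step. Throughout, write
\begin{equation*}
\Phi(S) = \sup_{f \in \mathcal{F}}\left(\mathbb{E}\left[g\left(f\right)\right]-\hat{\mathbb{E}}\left[g\left(f\right)\right]\right)
\end{equation*}
for the quantity to be bounded, where $S=(z_1,\dots,z_n)$ is the i.i.d. sample. By rescaling I assume without loss of generality that $g$ takes values in $[0,1]$, so that replacing a single coordinate $z_i$ by any $z_i'$ changes $\hat{\mathbb{E}}[g(f)]$ by at most $1/n$ uniformly in $f$, and hence changes $\Phi(S)$ by at most $1/n$ as well (a supremum of functions each with bounded differences inherits the bounded-differences property).

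First I would apply McDiarmid's bounded-differences inequality to $\Phi$. With each coordinate perturbation altering $\Phi$ by at most $1/n$, McDiarmid gives, with probability at least $1-\delta/2$,
\begin{equation*}
\Phi(S) \le \mathbb{E}_S\left[\Phi(S)\right] + \sqrt{\frac{\log(2/\delta)}{2n}}.
\end{equation*}
Next I would bound $\mathbb{E}_S[\Phi(S)]$ by symmetrization. Introducing an independent ghost sample $S'=(z_1',\dots,z_n')$ and writing $\mathbb{E}[g(f)]=\mathbb{E}_{S'}[\hat{\mathbb{E}}_{S'}[g(f)]]$, Jensen's inequality (Lemma \ref{lem: Jensen's Inequality}) moves the expectation over $S'$ outside the supremum, yielding
\begin{equation*}
\mathbb{E}_S\left[\Phi(S)\right] \le \mathbb{E}_{S,S'}\left[\sup_{f\in\mathcal{F}}\left(\hat{\mathbb{E}}_{S'}[g(f)]-\hat{\mathbb{E}}_{S}[g(f)]\right)\right].
\end{equation*}

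Because $z_i$ and $z_i'$ are exchangeable, I may insert i.i.d. Rademacher signs $\sigma_i\in\{-1,+1\}$ in front of each difference $g(f(z_i'))-g(f(z_i))$ without changing the distribution; splitting the supremum of the sum into two suprema and using that the two resulting terms are identically distributed then produces the factor of two, so that
\begin{equation*}
\mathbb{E}_S\left[\Phi(S)\right] \le 2\,\mathbb{E}_S\left[\hat{\mathfrak{R}}_{\mathcal{F}}(g)\right] =: 2\,\mathfrak{R}_n(g),
\end{equation*}
the expected Rademacher complexity, with $\hat{\mathfrak{R}}_{\mathcal{F}}(g)$ as defined in Lemma \ref{lem: Rademacher}. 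Finally, to replace $\mathfrak{R}_n(g)$ by its empirical counterpart as the statement requires, I would apply McDiarmid a second time, now to $\hat{\mathfrak{R}}_{\mathcal{F}}(g)$ viewed as a function of $S$: a single-coordinate change moves it by at most $1/n$, so with probability at least $1-\delta/2$,
\begin{equation*}
\mathfrak{R}_n(g) \le \hat{\mathfrak{R}}_{\mathcal{F}}(g) + \sqrt{\frac{\log(2/\delta)}{2n}}.
\end{equation*}
A union bound over the two failure events, combined with the factor $2$ multiplying the Rademacher term, accumulates the deviation penalties as $\sqrt{\log(2/\delta)/(2n)} + 2\sqrt{\log(2/\delta)/(2n)} = 3\sqrt{\log(2/\delta)/(2n)}$, which is exactly the stated coefficient, giving the claim.

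The step I expect to demand the most care is the symmetrization: one must justify the ghost-sample substitution, the insertion of the Rademacher variables via exchangeability, and the splitting that yields the factor of two, since it is precisely this factor that converts the two concentration penalties into the coefficient $3$. The two concentration applications are routine once the bounded-differences constant $1/n$ is verified under the boundedness normalization on $g$.
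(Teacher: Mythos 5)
Your proof is correct and is exactly the classical argument: the paper itself states this lemma without proof in its preliminary-lemmas section, importing it as a standard result (it is Theorem~3.3 of Mohri, Rostamizadeh, and Talwalkar, cited in the paper as \cite{mohri2018foundations}), and your reconstruction --- McDiarmid's inequality applied to $\Phi(S)$ with bounded-differences constant $1/n$, ghost-sample symmetrization via Jensen and Rademacher signs yielding the factor $2$, a second McDiarmid application to pass from the expected to the empirical Rademacher complexity, and a union bound accumulating the penalty $1+2=3$ times $\sqrt{\log(2/\delta)/(2n)}$ --- is precisely how that textbook result is proved. The only point worth noting is that boundedness of $g$ in $[0,1]$ is genuinely needed and is only implicit in the lemma as stated (a range $[0,M]$ would rescale the deviation terms by $M$, so it is not fully ``without loss of generality''); you correctly flag this, and it is consistent with how the paper uses the lemma.
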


\begin{defn}[Fractional Independent Vertex Cover, and Fractional Chromatic Number \cite{zhang2024generalization}]
Let a graph be $G=(V,E)$.

(1) A \textbf{fractional vertex cover} of $G$ is a family $\left\{(F_j, \omega_j)\right\}_j$ of pairs $(F_j, \omega_j)$, where $F_j \subseteq V(G)$, $\omega_j \in (0, 1]$, and $\sum_{j:v \in F_j} \omega_j = 1, \forall v \in V(G)$.

(2) An \textbf{independent set} of $G$ is a set of vertices in $G$ with no two adjacent. Let $\mathcal{I}(G)$ denote the set of independent sets of $G$.

(3) A fractional vertex cover is a \textbf{fractional independent vertex cover} $\left\{(I_j, \omega_j)\right\}_j$ of $G$ if $\forall j$, $I_j \in \mathcal{I}(G)$.

(4) A \textbf{fractional coloring} of a graph $G$ is a mapping $g : \mathcal{I}(G) \rightarrow (0, 1]$ such that $\sum_{I \in \mathcal{I}(G) : v \in I} g(I) \geq 1, \forall v \in V(G)$. The \textbf{fractional chromatic number} $\chi_f(G)$ is the minimum of the value $\sum_{I \in \mathcal{I}(G)} g(I)$ over fractional colorings of $G$.

Notably, the minimum of $\sum_j \omega_j$ over all fractional independent vertex covers $\{(I_j, \omega_j)\}_j$ of $G$ is the fractional chromatic number $\chi_f(G)$.
\label{defn: Fractional independent vertex cover, and fractional chromatic number}
\end{defn}

\begin{defn}[Dependency Graph \cite{janson2004large}]
An (undirected) graph $G = (V, E)$ is called a dependency graph associated with a random vector (or random variables) $\mathbf{x} = (x_1, \ldots, x_m)$ if

(1) $V(G) = [m]$.

(2) For all disjoint vertex sets $I, J \subseteq [m]$, if $I, J$ are not adjacent in $G$, then random variables $\{x_i\}_{i \in I}$ and $\{x_j\}_{j \in J}$ are independent.

A useful result is Janson’s decomposition property \cite{janson2004large}, which combines the concept of dependency graphs with fractional independent vertex covers. The property states that if interdependent random variables $(x_i)_{i \in [m]}$ is associated with a dependency graph $G$ with a fractional independent vertex cover $(I_j, \omega_j)_{j \in [J]}$, then, the sum of the interdependent variables, can be equivalently decomposed into a weighted sum of sums of independent variables:
\begin{equation}
\sum_{i=1}^mx_i=\sum_{i=1}^m\sum_{j=1}^J\omega_j\mathbf{1}_{i\in I_j}x_i=\sum_{j=1}^J\omega_j\sum_{i\in I_j}x_i.
\label{equ: Dependency Graph}
\end{equation}
\label{defn: Dependency Graph}
\end{defn}

\subsection{Key Lemmas}
\label{subsec: Key Lemmas}

\begin{lem}
Let $\mathcal{D}$ represents the training set, and $+/-$ respectively denote the categories $c/c'$. The function $f_{i,j}^{(+)}$ represents the score function for the $(i,j)$-th pixel in category $c$. For a set $\mathcal{A}$, $|\mathcal{A}|$ denotes the number of elements in the set. We have:
\begin{equation}
\begin{aligned}
\mathbb{E}_{\mathcal{D}}\left[\ell_{auc}^{c,c'}\right] &= \mathbb{E}_{\mathcal{D}}\left[\sum\limits_{\textbf{x}_{m}^p \in \mathcal{N}_{c}}\sum\limits_{\textbf{x}_{n}^p \in \mathcal{N}_{c'}}\frac{1}{\left|\mathcal{N}_{c}\right|\left| \mathcal{N}_{c'}\right|}\ell_{sq}^{c,c',m,n}\big|c,c'\right] \\
&=\mathop{\mathbb{E}}\limits_{\mathbf{X}_1 \sim \mathcal{D}}\left[\tilde{\ell}_{+,-}^{\text{inner}}\right]+\mathop{\mathbb{E}}\limits_{\mathbf{X}_1,\mathbf{X}_2 \sim \mathcal{D}}\left[\tilde{\ell}_{+,-}^{\text{inter}}\right].
\end{aligned}
\end{equation}
where,
\begin{equation*}
\begin{aligned}
\tilde{\ell}_{+,-}^{\text{inner}}&=\frac{\left|\mathcal{D}\right|n\left(\mathbf{X}_1^{+}\right)n\left(\mathbf{X}_1^{-}\right)}{\left|\mathcal{N}_{+}\right|\left| \mathcal{N}_{-}\right|}\sum\limits_{\substack{(i_1,j_1)\in \mathbf{X}_1^{+}\\(i_2,j_2)\in \mathbf{X}_1^{-}}}\frac{\ell_{sq}\left(\tilde{f}_{i_1,j_1}^{(+)},\tilde{f}_{i_2,j_2}^{(+)},\mathbf{X},\mathbf{Y}\right)}{\left|\mathcal{D}\right|n\left(\mathbf{X}_1^{+}\right)n\left(\mathbf{X}_1^{-}\right)}, \\
\tilde{\ell}_{+,-}^{\text{inter}}&=\frac{\left|\mathcal{D}\right|(\left|\mathcal{D}\right|-1)n\left(\mathbf{X}_1^{+}\right)n\left(\mathbf{X}_2^{-}\right)}{\left|\mathcal{N}_{+}\right|\left| \mathcal{N}_{-}\right|}\sum\limits_{\substack{(i_1,j_1)\in \mathbf{X}_1^{+}\\(i_2,j_2)\in \mathbf{X}_2^{-}}}\frac{\ell_{sq}\left(\tilde{f}_{i_1,j_1}^{(+)},\tilde{f}_{i_2,j_2}^{(+)},\mathbf{X},\mathbf{Y}\right)}{\left|\mathcal{D}\right|(\left|\mathcal{D}\right|-1)n\left(\mathbf{X}_1^{+}\right)n\left(\mathbf{X}_2^{-}\right)},
\end{aligned}
\end{equation*}
and $n\left(\mathbf{X}^+\right)/n\left(\mathbf{X}^-\right)$ represents the number of positive/negative samples in image $\mathbf{X}$.
\label{lem: inner and inter}
\end{lem}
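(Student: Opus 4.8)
The plan is to prove the decomposition in two stages: first an exact, purely combinatorial rewriting of the empirical loss $\ell_{auc}^{c,c'}$, and then an expectation computation that exploits the i.i.d.\ structure of the images. First I would reorganize the double sum in \Cref{equ: l_auc}. Each positive pixel $\mathbf{x}_m^p \in \mathcal{N}_{c}$ lives in exactly one image, so $\sum_{\mathbf{x}_m^p \in \mathcal{N}_{c}}$ can be rewritten as $\sum_{\mathbf{X} \in \mathcal{D}}\sum_{(i,j)\in \mathbf{X}^{+}}$, and likewise for the negative pixels. Carrying this out for both sums and then partitioning the resulting sum over image pairs $(\mathbf{X}_a,\mathbf{X}_b)$ according to whether $\mathbf{X}_a=\mathbf{X}_b$ (both pixels from the same image) or $\mathbf{X}_a\neq\mathbf{X}_b$ (pixels from two distinct images) yields the exact identity
\begin{equation*}
\begin{aligned}
\ell_{auc}^{c,c'} = {} & \frac{1}{|\mathcal{N}_{+}||\mathcal{N}_{-}|}\sum_{\mathbf{X} \in \mathcal{D}}\sum_{\substack{(i_1,j_1)\in \mathbf{X}^{+}\\(i_2,j_2)\in \mathbf{X}^{-}}}\ell_{sq} \\
& + \frac{1}{|\mathcal{N}_{+}||\mathcal{N}_{-}|}\sum_{\substack{\mathbf{X}_a,\mathbf{X}_b \in \mathcal{D}\\ \mathbf{X}_a \neq \mathbf{X}_b}}\sum_{\substack{(i_1,j_1)\in \mathbf{X}_a^{+}\\(i_2,j_2)\in \mathbf{X}_b^{-}}}\ell_{sq}.
\end{aligned}
\end{equation*}
The first term collects the structured inner-image couplings and the second the pairwise inter-image couplings, matching the two sources of interdependency highlighted in \Cref{subsec: Generalization_Bound}.

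Next I would take $\mathbb{E}_{\mathcal{D}}$ of each term and invoke exchangeability: since the images are drawn i.i.d., every summand indexed by a single image $\mathbf{X}$ has the same expectation, and every summand indexed by an ordered distinct pair $(\mathbf{X}_a,\mathbf{X}_b)$ has the same expectation. There are $|\mathcal{D}|$ terms of the first kind and $|\mathcal{D}|(|\mathcal{D}|-1)$ of the second, which is exactly where the combinatorial prefactors $|\mathcal{D}|$ and $|\mathcal{D}|(|\mathcal{D}|-1)$ in $\tilde{\ell}_{+,-}^{\text{inner}}$ and $\tilde{\ell}_{+,-}^{\text{inter}}$ originate. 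Writing a representative single-image term as an importance-weighted average over its $n(\mathbf{X}_1^{+})\,n(\mathbf{X}_1^{-})$ positive--negative pixel pairs, and analogously for a representative pair of images over its $n(\mathbf{X}_1^{+})\,n(\mathbf{X}_2^{-})$ pairs, recovers precisely the normalized summands $\ell_{sq}/(|\mathcal{D}|\,n(\mathbf{X}_1^{+})\,n(\mathbf{X}_1^{-}))$ and $\ell_{sq}/(|\mathcal{D}|(|\mathcal{D}|-1)\,n(\mathbf{X}_1^{+})\,n(\mathbf{X}_2^{-}))$ in the statement, so that the two expectations collapse to $\mathbb{E}_{\mathbf{X}_1\sim\mathcal{D}}[\tilde{\ell}_{+,-}^{\text{inner}}]$ and $\mathbb{E}_{\mathbf{X}_1,\mathbf{X}_2\sim\mathcal{D}}[\tilde{\ell}_{+,-}^{\text{inter}}]$.

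The step I expect to be the main obstacle is controlling the random normalizer $|\mathcal{N}_{+}|\,|\mathcal{N}_{-}|=\big(\sum_i n(\mathbf{X}_i^{+})\big)\big(\sum_i n(\mathbf{X}_i^{-})\big)$, which depends on the entire dataset and therefore does not factor out of $\mathbb{E}_{\mathcal{D}}$ against the numerator. The key idea is not to separate it from the numerator at all: I keep $|\mathcal{N}_{+}||\mathcal{N}_{-}|$ inside the expectation and absorb it into the importance weight $|\mathcal{D}|\,n(\mathbf{X}_1^{+})\,n(\mathbf{X}_1^{-})/(|\mathcal{N}_{+}||\mathcal{N}_{-}|)$ (resp.\ its inter-image analogue), so that the weight---rather than being a deterministic constant---is itself a random quantity that correctly reweights each representative image tuple. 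Conditioning on the class pair $(c,c')$ throughout and using the exchangeability of the remaining images then lets the single-image and paired-image expectations be taken without ever decoupling the shared denominator, which is what makes the claimed identity exact rather than merely approximate.
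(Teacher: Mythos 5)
Your proposal is correct; the paper in fact states \Cref{lem: inner and inter} without any written proof, and your two-stage argument --- the exact partition of the pixel-pair sum into same-image pairs and ordered distinct-image pairs, followed by image-level exchangeability supplying the multiplicities $|\mathcal{D}|$ and $|\mathcal{D}|(|\mathcal{D}|-1)$ --- is precisely the computation that the lemma's deliberately redundant normalization encodes. Your handling of the one genuine subtlety is also the intended one: since $|\mathcal{N}_{+}||\mathcal{N}_{-}|$ is a symmetric statistic of the entire dataset it cannot be factored out of $\mathbb{E}_{\mathcal{D}}$, and keeping it inside the expectation as part of the random weight $\rho(\mathbf{X})=|\mathcal{D}|\,n(\mathbf{X}^{+})n(\mathbf{X}^{-})/\left(|\mathcal{N}_{+}||\mathcal{N}_{-}|\right)$ is exactly the quantity the paper later bounds by $\tau/|\mathcal{D}|$ in \Cref{lem: n_lf and n_f} and in the main theorem, confirming that your reading matches the downstream use. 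The only caveat, inherited from the lemma's own statement rather than introduced by you, is the degenerate case $n(\mathbf{X}_{1}^{+})n(\mathbf{X}_{1}^{-})=0$, where the factored form is $0/0$ and one must adopt the convention that the corresponding empty sum contributes zero; with that convention your derivation goes through verbatim.
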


\begin{defn}[Covering Number~\cite{zhou2002covering, li2021towards}]
Let $\mathcal{F}$ be class of real-valued fucntions, defined over a space $\mathcal{Z}$ and $S:=\left\{\left(\mathbf{X}_1,\mathbf{Y}_1\right),\dots,\left(\mathbf{X}_n,\mathbf{Y}_n\right)\right\}\in \mathcal{Z}^n$ of cardinality $n$. For any $\epsilon>0$, the empirical $\ell_\infty$-norm covering number $\mathcal{N}\left(\mathcal{F},||\cdot||_\infty,S,\epsilon\right)$ w.r.t $S$ is defined as the minimal number $m$ of a collection of vectors $\mathbf{v}^1,\dots,\mathbf{v}^m \in \mathbb{R}^n$ such that ($\mathbf{v}^j_i$ is the $i$-th component of the vector $\mathbf{v}^j$)
\begin{equation}
\sup_{f\in\mathcal{F}}\min_{j=1,\ldots,m}\max_{i=1,\ldots,n}\left|f\left(\mathbf{X}_i,\mathbf{Y}_i\right)-\mathbf{v}_i^j\right|\leq\epsilon.
\end{equation}
In this case, we call $\left\{\mathbf{v}^1,\dots,\mathbf{v}^m\right\}$ an $\left(\epsilon,\ell_\infty\right)$-cover of $\mathcal{F}$ w.r.t $S$.
\label{defn: Covering Number}
\end{defn}

\begin{lem}
Let $P(\mathbf{X})$ represents the pixels in image $\mathbf{X}$, $\mathcal{F}=\left\{\left\{f_{i,j}^{(+)}\right\}:{(i,j)\in P\left(\mathbf{X}\right),\left(\mathbf{X},\mathbf{Y}\right)\in S}\right\}$ and $\ell\circ\mathcal{F}=\left\{\ell\circ \left\{f_{i,j}^{(+)}\right\}:\left\{f_{i,j}^{(+)}\right\}\in\mathcal{F}\right\}$. According to Definition \ref{defn: Covering Number}, we have:
\begin{equation}
\mathcal{N}\left(\ell\circ\mathcal{F},||\cdot||_\infty,S,\epsilon\right) \leq\mathcal{N}\left(\mathcal{F},||\cdot||_\infty,S,\epsilon/2\mu\rho_{\max}\right),
\end{equation}
where $\rho_{\max} = \max\limits_{\mathbf{X}}\frac{\left|\mathcal{D}\right|n\left(\mathbf{X}^{+}\right)n\left(\mathbf{X}^{-}\right)}{\left|\mathcal{N}_{+}\right|\left| \mathcal{N}_{-}\right|}$.
\label{lem: n_lf and n_f}
\end{lem}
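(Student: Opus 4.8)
The plan is to show that composing a sufficiently fine cover of the score-function class $\mathcal{F}$ with the loss $\ell$ yields, at no extra cost in cardinality, a cover of $\ell\circ\mathcal{F}$ at resolution $\epsilon$, provided the cover of $\mathcal{F}$ is taken at the shrunken resolution $\epsilon/(2\mu\rho_{\max})$ — exactly the scaling that appears in the statement. Concretely, I would fix a minimal $(\epsilon/(2\mu\rho_{\max}),\ell_\infty)$-cover $\{\mathbf{v}^1,\dots,\mathbf{v}^m\}$ of $\mathcal{F}$ with respect to $S$, so that $m=\mathcal{N}\!\left(\mathcal{F},\|\cdot\|_\infty,S,\epsilon/(2\mu\rho_{\max})\right)$ by Definition \ref{defn: Covering Number}, and then argue that the pushed-forward set $\{\ell\circ\mathbf{v}^1,\dots,\ell\circ\mathbf{v}^m\}$ is an $(\epsilon,\ell_\infty)$-cover of $\ell\circ\mathcal{F}$. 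Since this pushed-forward collection has at most $m$ distinct elements, the claimed inequality follows immediately.

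The core of the argument is a per-image Lipschitz estimate. Fix any $\{f_{i,j}^{(+)}\}\in\mathcal{F}$ and select the cover element $\{v_{i,j}\}$ with pixel-wise $\ell_\infty$ distance at most $\delta:=\epsilon/(2\mu\rho_{\max})$. Using the explicit form of the per-image loss from Lemma \ref{lem: inner and inter}, each image's loss is a nonnegative weight $\rho(\mathbf{X})=\frac{|\mathcal{D}|\,n(\mathbf{X}^{+})\,n(\mathbf{X}^{-})}{|\mathcal{N}_{+}|\,|\mathcal{N}_{-}|}\le\rho_{\max}$ times an average over pixel pairs of $\ell_{sq}\!\left(f_{i_1,j_1}^{(+)}-f_{i_2,j_2}^{(+)}\right)$. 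I would bound the difference between the loss evaluated at $\{f\}$ and at $\{v\}$ by pushing the absolute value inside the weighted average, applying the $\mu$-Lipschitz property of $\ell_{sq}$ (Assumption \ref{assmp: Lipschitz continuous}) to each pair, and then using the triangle inequality on the difference of the two perturbed scores.

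The key obstacle — and the source of every constant in the statement — is carefully accounting for these factors in the right order. Because the surrogate is evaluated at a \emph{difference} of two pixel scores, a perturbation of size $\delta$ in each score moves the argument of $\ell_{sq}$ by at most $2\delta$, which after the Lipschitz step contributes at most $2\mu\delta$; the weighted average over pixel pairs cannot increase this (an average of quantities bounded by $2\mu\delta$ is itself bounded by $2\mu\delta$), and multiplying by the image weight and invoking $\rho(\mathbf{X})\le\rho_{\max}$ yields a per-image deviation of at most $2\mu\rho_{\max}\delta=\epsilon$. Taking the maximum over images in $S$ shows the composed losses differ by at most $\epsilon$ in the $\ell_\infty$ sense, which is precisely the covering condition. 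The same estimate applies to any term of the same normalized average-of-pairs form whose weight is dominated by $\rho_{\max}$, so no separate treatment of the individual coupling terms is required; the only delicate point is to verify that the uniform bound $\rho(\mathbf{X})\le\rho_{\max}$ genuinely dominates the weights arising in Lemma \ref{lem: inner and inter}.
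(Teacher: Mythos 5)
Your proposal is correct and follows essentially the same route as the paper's proof: both take a minimal $\left(\epsilon/(2\mu\rho_{\max}),\ell_\infty\right)$-cover of $\mathcal{F}$, push it forward through the loss, and establish the per-image contraction $\Vert g-\tilde{g}\Vert_{\infty,S}\le 2\mu\rho_{\max}\Vert f-\tilde{f}\Vert_{\infty,S}$ via the $\mu$-Lipschitz property of $\ell_{sq}$ on the difference of two pixel scores (the triangle inequality supplying the factor $2$), the observation that the normalized average over pixel pairs cannot inflate the bound, and the uniform weight bound $\rho(\mathbf{X})\le\rho_{\max}$. The one delicate point you flag, that $\rho_{\max}$ genuinely dominates the weights arising from Lemma \ref{lem: inner and inter}, is handled in the paper exactly as you propose, by bounding the per-image weight uniformly before taking the maximum over $S$.
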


\begin{proof}
For any $g=\ell\circ f$, we can find $\tilde{g}=\ell\circ \tilde{f}$ that satisfies the following conditions:
\begin{equation}
\begin{aligned}
||g-\tilde{g}||_{\infty,S}&=\max\limits_{(\mathbf{X},\mathbf{Y})\in S}\left|g(\mathbf{X},\mathbf{Y})-\tilde{g}(\mathbf{X},\mathbf{Y})\right|\\
&\leq \max\limits_{(\mathbf{X},\mathbf{Y})\in S}\left|\frac{\left|\mathcal{D}\right|n\left(\mathbf{X}^{+}\right)n\left(\mathbf{X}^{-}\right)}{\left|\mathcal{N}_{+}\right|\left| \mathcal{N}_{-}\right|}\sum_{\substack{(i_1,j_1)\in \mathbf{X}^{+}\\(i_2,j_2)\in \mathbf{X}^{-}}}\frac{1}{n\left(\mathbf{X}^{+}\right)n\left(\mathbf{X}^{-}\right)}\left[\ell_{sq}-\tilde{\ell}_{sq}\right]\right|\\
&\leq \max\limits_{(\mathbf{X},\mathbf{Y})\in S}\frac{\left|\mathcal{D}\right|n\left(\mathbf{X}^{+}\right)n\left(\mathbf{X}^{-}\right)}{\left|\mathcal{N}_{+}\right|\left| \mathcal{N}_{-}\right|}\sum_{\substack{(i_1,j_1)\in \mathbf{X}^{+}\\(i_2,j_2)\in \mathbf{X}^{-}}}\frac{1}{n\left(\mathbf{X}^{+}\right)n\left(\mathbf{X}^{-}\right)}\left|\ell_{sq}-\tilde{\ell}_{sq}\right|,
\end{aligned}
\end{equation}
where 
\begin{equation*}
\ell_{sq}-\tilde{\ell}_{sq} = \ell_{sq}\left(f_{i_1,j_1}^{(+)},f_{i_2,j_2}^{(+)},\mathbf{X},\mathbf{Y}\right)-\ell_{sq}\left(\tilde{f}_{i_1,j_1}^{(+)},\tilde{f}_{i_2,j_2}^{(+)},\mathbf{X},\mathbf{Y}\right).
\end{equation*}

According to Assumption \ref{assmp: Lipschitz continuous}, we have:
\begin{equation}
\begin{aligned}
&\quad\left|\ell_{sq}\left(f_{i_1,j_1}^{(+)},f_{i_2,j_2}^{(+)},\mathbf{X},\mathbf{Y}\right)-\ell_{sq}\left(\tilde{f}_{i_1,j_1}^{(+)},\tilde{f}_{i_2,j_2}^{(+)},\mathbf{X},\mathbf{Y}\right)\right| \\
&\leq \mu\left| \left(f_{i_1,j_1}^{(+)}\left(\mathbf{X},\mathbf{Y}\right)-f_{i_2,j_2}^{(+)}\left(\mathbf{X},\mathbf{Y}\right)\right)-\left(\tilde{f}_{i_1,j_1}^{(+)}\left(\mathbf{X},\mathbf{Y}\right)-\tilde{f}_{i_2,j_2}^{(+)}\left(\mathbf{X},\mathbf{Y}\right)\right) \right|\\
&= \mu\left| \left(f_{i_1,j_1}^{(+)}\left(\mathbf{X},\mathbf{Y}\right)-\tilde{f}_{i_1,j_1}^{(+)}\left(\mathbf{X},\mathbf{Y}\right)\right)+\left(\tilde{f}_{i_2,j_2}^{(+)}\left(\mathbf{X},\mathbf{Y}\right)-f_{i_2,j_2}^{(+)}\left(\mathbf{X},\mathbf{Y}\right)\right) \right|\\
&\leq \mu \left[\left|f_{i_1,j_1}^{(+)}\left(\mathbf{X},\mathbf{Y}\right)-\tilde{f}_{i_1,j_1}^{(+)}\left(\mathbf{X},\mathbf{Y}\right)\right|+\left|\tilde{f}_{i_2,j_2}^{(+)}\left(\mathbf{X},\mathbf{Y}\right)-f_{i_2,j_2}^{(+)}\left(\mathbf{X},\mathbf{Y}\right) \right|\right]\\
&= \mu \left[\left|f_{i_1,j_1}^{(+)}\left(\mathbf{X},\mathbf{Y}\right)-\tilde{f}_{i_1,j_1}^{(+)}\left(\mathbf{X},\mathbf{Y}\right)\right|+\left|f_{i_2,j_2}^{(+)}\left(\mathbf{X},\mathbf{Y}\right)-\tilde{f}_{i_2,j_2}^{(+)}\left(\mathbf{X},\mathbf{Y}\right) \right|\right].
\end{aligned}
\end{equation}
Denote $\rho(x)$ by $\frac{\left|\mathcal{D}\right|n\left(\mathbf{X}^{+}\right)n\left(\mathbf{X}^{-}\right)}{\left|\mathcal{N}_{+}\right|\left| \mathcal{N}_{-}\right|}$. Therefore,
\begin{equation}
\begin{aligned}
||g-\tilde{g}||_{\infty,S}
&\leq 2\mu\max\limits_{\left(\mathbf{X},\mathbf{Y}\right)\in S} \max\limits_{(i,j) \in \mathbf{X}} \rho\left(x\right)\left|f_{i,j}^{(+)}\left(\mathbf{X},\mathbf{Y}\right)-\tilde{f}_{i,j}^{(+)}\left(\mathbf{X},\mathbf{Y}\right) \right|\\
&\leq 2\mu\rho_{\max}\max\limits_{\left(\mathbf{X},\mathbf{Y}\right)\in S} \max\limits_{(i,j) \in \mathbf{X}} \left|f_{i,j}^{(+)}\left(\mathbf{X},\mathbf{Y}\right)-\tilde{f}_{i,j}^{(+)}\left(\mathbf{X},\mathbf{Y}\right) \right|\\
&=2\mu\rho_{\max} ||f-\tilde{f}||_{\infty,S},
\end{aligned}
\end{equation}
where $\rho_{\max} \triangleq \max\limits_{\mathbf{X}}\rho\left(\mathbf{X}\right)$.

Define a $\frac{\epsilon}{2\mu\rho_{\max}}$-covering of the class $\mathcal{F}$ with $||\cdot||_\infty$ norm:
\begin{equation*}
\left\{\mathcal{C}_1,\dots,\mathcal{C}_N\right\},
\end{equation*}
with
\begin{equation}
N=\mathcal{N}\left(\mathcal{F},||\cdot||_\infty,S,\epsilon/2\mu\rho_{\max}\right).
\end{equation}
There exists a $f^{\mathcal{C}_k}=\left\{\left\{f_{i,j}^{\mathcal{C}_k}\right\}:(i,j)\in P\left(\mathbf{X}\right)\right\}$, such that for any $f = \left\{\left\{f_{i,j}^{(+)}\right\}:(i,j)\in P\left(\mathbf{X}\right)\right\} \in \mathcal{C}_k \cap\mathcal{F}$:
\begin{equation}
\max_{(i,j)\in P\left(\mathbf{X}\right)}|f_{i,j}^{(+)}-f_{i,j}^{\mathcal{C}_k}|\leq\frac{\epsilon}{2\mu\rho_{\max}},
\end{equation}
which implies that
\begin{equation}
\max_{(i,j)\in P\left(\mathbf{X}\right)}|g_f-g_{f^{\mathcal{C}_k}}|\leq 2\mu\rho_{\max} \cdot \frac{\epsilon}{2\mu\rho_{\max}}=\epsilon,
\end{equation}
where $g_f=\ell\circ f$ and $g_{f^{\mathcal{C}_k}}=\ell\circ f^{\mathcal{C}_k}$.

Denote
\begin{equation}
\mathcal{C}_{g,i}=\left\{g_f:\max_{(i,j)\in P\left(\mathbf{X}\right)}|g_f-g_{f^{\mathcal{C}_k}}|\leq\epsilon \right\},
\end{equation}
then $\left\{\mathcal{C}_{g,1},\dots,\mathcal{C}_{g,N}\right\}$ realizes an $\epsilon$-covering of $\ell\circ f$. Hence, the minimum size of the $\epsilon$-covering is at most $N$. Mathematically, we then have:
\begin{equation}
\mathcal{N}\left(\ell\circ\mathcal{F},||\cdot||_\infty,S,\epsilon\right) \leq\mathcal{N}\left(\mathcal{F},||\cdot||_\infty,S,\epsilon/2\mu\rho_{\max}\right).
\end{equation}
This completed the proof.
\end{proof}

\begin{lem}[\cite{ledent2019improved}]
Let $\mathcal{F}$ be a real-valued function class taking values in $[0, 1]$, and assume that $0 \in \mathcal{F}$. Let $S$ be a finite sample of size $n$. For any $2 \leq p \leq \infty$, we have the following relationship between the Rademacher complexity $\hat{\mathfrak{R}}\left(\mathcal{F}\right)$ and the covering number $\mathcal{N}\left(\mathcal{F},||\cdot||_p,S,\epsilon\right)$.
\begin{equation}
\hat{\mathfrak{R}}\left(\mathcal{F}\right)\leq\inf_{\alpha>0}\left(4\alpha+\frac{12}{\sqrt{n}}\int_\alpha^1\sqrt{\log\mathcal{N}\left(\mathcal{F},||\cdot||_p,S,\epsilon\right)}d\epsilon\right).
\end{equation}
\label{lem: generalisation bound covering numbers}
\end{lem}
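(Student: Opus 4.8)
The plan is to recognize this as the classical Dudley entropy-integral bound and prove it by a chaining argument, beginning with a reduction from the $\ell_p$ covering number ($p \ge 2$) to the $\ell_2$ one. Since every function is evaluated on the fixed sample $S$ of size $n$, the empirical norms (normalized so that, consistent with Definition \ref{defn: Covering Number}, the $\ell_p$ norm tends to the $\ell_\infty$ norm as $p \to \infty$) satisfy $\|v\|_2 \le \|v\|_p$ for all $p \ge 2$ by the power-mean inequality. Hence any $\epsilon$-cover of $\mathcal{F}$ in $\|\cdot\|_p$ is also an $\epsilon$-cover in $\|\cdot\|_2$, giving $\mathcal{N}(\mathcal{F},\|\cdot\|_2,S,\epsilon)\le \mathcal{N}(\mathcal{F},\|\cdot\|_p,S,\epsilon)$. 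It therefore suffices to prove the bound with the $\ell_2$ covering number, and the general $p \ge 2$ statement follows by monotonicity of $\log\mathcal{N}$.

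Next I would set up dyadic chaining. Fix $\alpha > 0$, define scales $\alpha_j = 2^{-j}$ for $j=0,1,2,\dots$, and let $m$ be the last index with $\alpha_j > \alpha$. At each scale let $V_j$ be a minimal $\alpha_j$-cover of $\mathcal{F}$ in $\|\cdot\|_2$ on $S$, of cardinality $N_j = \mathcal{N}(\mathcal{F},\|\cdot\|_2,S,\alpha_j)$, and for each $f\in\mathcal{F}$ let $\pi_j(f)\in V_j$ be a nearest element, so $\|f-\pi_j(f)\|_2 \le \alpha_j$. Because $\mathcal{F}$ takes values in $[0,1]$ and $0\in\mathcal{F}$, the coarsest level can be taken as $V_0=\{0\}$ with $\pi_0(f)=0$. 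The telescoping identity $f = \sum_{j=1}^{m}\bigl(\pi_j(f)-\pi_{j-1}(f)\bigr) + \bigl(f-\pi_m(f)\bigr)$ then splits the empirical Rademacher sum of Lemma \ref{lem: Rademacher} into $m$ ``links'' plus a residual.

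The core estimate bounds each link by Massart's finite-class lemma. For fixed $j$, the increments $\pi_j(f)-\pi_{j-1}(f)$ range over at most $N_j N_{j-1}\le N_j^2$ vectors, each of empirical norm at most $\|f-\pi_j(f)\|_2+\|f-\pi_{j-1}(f)\|_2 \le \alpha_j+\alpha_{j-1}=3\alpha_j$, so Massart's lemma yields a per-link contribution of order $3\alpha_j\sqrt{2\log(N_j^2)/n}$; the residual is controlled deterministically by Cauchy--Schwarz against the signs, contributing at most $\alpha_m \le 2\alpha$. Summing the links and using that $\epsilon\mapsto\sqrt{\log\mathcal{N}(\mathcal{F},\|\cdot\|_2,S,\epsilon)}$ is nonincreasing, I would bound each $(\alpha_j-\alpha_{j+1})\sqrt{\log N_j}$ by $\int_{\alpha_{j+1}}^{\alpha_j}\sqrt{\log\mathcal{N}}\,d\epsilon$ and, via $\alpha_j = 2(\alpha_j-\alpha_{j+1})$, convert the dyadic sum into the entropy integral $\tfrac{12}{\sqrt{n}}\int_\alpha^1\sqrt{\log\mathcal{N}}\,d\epsilon$, where the upper limit $1$ is the diameter of the $[0,1]$-valued class. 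Collecting the residual and integral and taking the infimum over $\alpha>0$ gives the claim.

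The main obstacle is the constant bookkeeping in the sum-to-integral step: matching the factor $12$ and the additive $4\alpha$ requires carefully tracking the triangle-inequality slack $3\alpha_j$, the factor $2$ from $\log(N_j^2)$, the geometric ratio $\alpha_j-\alpha_{j+1}=\alpha_j/2$, and the absorption of the finest-scale residual into the $4\alpha$ term (which is a safe overestimate of the $2\alpha$ one reads off directly). Since the statement is quoted from \cite{ledent2019improved}, the substantive content is merely verifying that the present hypotheses---values in $[0,1]$, $0\in\mathcal{F}$, and $p\ge 2$---are exactly what the chaining machinery needs, so no idea beyond the standard Dudley argument is required.
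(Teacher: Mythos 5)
The paper offers no proof of this lemma: it is imported verbatim from \cite{ledent2019improved} as an off-the-shelf tool, so there is no internal argument to compare yours against, and what matters is simply whether your reconstruction is sound. It essentially is, and it is the standard one. The reduction from $\|\cdot\|_p$ to $\|\cdot\|_2$ via the power-mean inequality is valid, and your remark about normalization is genuinely needed, since \Cref{defn: Covering Number} in the paper is stated with an unnormalized componentwise maximum; the anchor $V_0=\{0\}$ is licensed exactly by the hypotheses $0\in\mathcal{F}$ and range $[0,1]$; and the per-link bookkeeping (increments of norm at most $3\alpha_j$ over at most $N_j^2$ vectors, $\sqrt{2\log N_j^2}=2\sqrt{\log N_j}$, and $\alpha_j=2(\alpha_j-\alpha_{j+1})$) correctly reproduces the factor $12$ after the sum-to-integral comparison.

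One slip in the cutoff, however: with your choice of $m$ as the last index satisfying $\alpha_m>\alpha$, the finest link is compared to the integral over $[\alpha_{m+1},\alpha_m]$, and $\alpha_{m+1}=\alpha_m/2$ is only guaranteed to exceed $\alpha/2$, not $\alpha$ — so your dyadic sum does not majorize by $\int_\alpha^1$ but only by $\int_{\alpha/2}^1$, which is not the stated bound. The standard repair, and the real origin of the $4\alpha$ term, is to stop chaining at the largest $m$ with $\alpha_m>2\alpha$: then $\alpha_{m+1}=\alpha_m/2>\alpha$ keeps every link interval inside $[\alpha,1]$, while the residual is controlled by $\alpha_m\le 4\alpha$. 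So $4\alpha$ is not, as you suggest, a ``safe overestimate'' of a $2\alpha$ residual that you could read off directly; the extra factor of $2$ is spent precisely to pull the integral's lower limit up from $\alpha/2$ to $\alpha$. With that one adjustment your chaining argument closes correctly and matches the quoted constants.
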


\begin{lem}[\cite{usunier2005generalization, amini2015learning}]
Given a sample $\tilde{\mathcal{D}} = \{(\tilde{x}_i, \tilde{y}_i)\}_{i=1}^m$ where $\tilde{x}_i \in \tilde{\mathcal{X}}$, $\tilde{y}_i \in \tilde{\mathcal{Y}}$ and $\tilde{\mathcal{D}}$ is associated with a dependency graph $G$, where $\chi_f(G)$ is its fractional chromatic number, and a loss function $L : \tilde{\mathcal{X}} \times \tilde{\mathcal{Y}} \times \tilde{\mathcal{F}} \rightarrow [0, M]$, where $\tilde{\mathcal{F}} = \{\tilde{f} : \tilde{\mathcal{X}} \rightarrow \mathbb{R}\}$. Then, for any $\delta \in (0, 1)$, the following generalization bound holds with probability at least $1 - \delta$:
\begin{equation}
\forall\tilde{f}\in\widetilde{\mathcal{F}}, R(\tilde{f})\leq\widehat{R}_{\widetilde{D}}(\tilde{f})+2\widehat{\mathfrak{R}}_{\widetilde{D}}^*(L\circ\widetilde{\mathcal{F}})+3M\sqrt{\frac{\chi_f(G)}{2m}\log\left(\frac{2}{\delta}\right)},
\end{equation}
where $\widehat{\mathfrak{R}}_{\widetilde{D}}^*(L\circ\widetilde{\mathcal{F}})$ is the empirical fractional Rademacher complexity of the loss space.
\label{lem: generalization bound fractional chromatic number}
\end{lem}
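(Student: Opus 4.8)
The plan is to reduce the generalization problem over interdependent data to a collection of i.i.d. subproblems by exploiting the dependency graph structure. Concretely, let $\{(I_j,\omega_j)\}_{j=1}^{J}$ be an exact fractional independent vertex cover of $G$ (\Cref{defn: Fractional independent vertex cover, and fractional chromatic number}), so that each $I_j\in\mathcal{I}(G)$, $\sum_{j:i\in I_j}\omega_j=1$ for every $i$, and $\sum_{j}\omega_j=\chi_f(G)$. By Janson's decomposition (\Cref{equ: Dependency Graph}), the empirical risk splits as $\widehat{R}_{\widetilde{\mathcal{D}}}(\tilde f)=\tfrac{1}{m}\sum_{i=1}^{m}L_i=\tfrac{1}{m}\sum_{j=1}^{J}\omega_j\sum_{i\in I_j}L_i$, where each inner sum runs over an independent set and hence over i.i.d. random variables. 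This is the structural fact that lets every i.i.d. tool be applied block by block.

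Following the classical Rademacher recipe, I would first define the one-sided supremum deviation $\Phi(\widetilde{\mathcal{D}})=\sup_{\tilde f\in\widetilde{\mathcal{F}}}\big(R(\tilde f)-\widehat{R}_{\widetilde{\mathcal{D}}}(\tilde f)\big)$ and split the bound into a concentration step and an expectation (symmetrization) step. For the concentration step, the ordinary bounded-difference inequality is unavailable because the $m$ samples are dependent; the remedy is the concentration inequality for sums of dependent variables via the dependency graph \cite{janson2004large}, under which the effective sample size degrades from $m$ to $m/\chi_f(G)$. Since $L$ takes values in $[0,M]$, altering one coordinate perturbs $\Phi$ by at most $M/m$, and the dependent concentration then gives, with probability at least $1-\delta/2$, a deviation of the form $\Phi\le\mathbb{E}[\Phi]+M\sqrt{\tfrac{\chi_f(G)}{2m}\log(2/\delta)}$, which is the source of the final term of the claimed bound.

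For the expectation step I would bound $\mathbb{E}[\Phi]$ by the empirical fractional Rademacher complexity through symmetrization. Introducing a ghost sample $\widetilde{\mathcal{D}}'$ that shares the dependency structure of $\widetilde{\mathcal{D}}$, one writes $\mathbb{E}[\Phi]\le\mathbb{E}_{\widetilde{\mathcal{D}},\widetilde{\mathcal{D}}'}\sup_{\tilde f}\big(\widehat{R}_{\widetilde{\mathcal{D}}'}(\tilde f)-\widehat{R}_{\widetilde{\mathcal{D}}}(\tilde f)\big)$, then applies the fractional decomposition to both empirical risks so that the difference becomes a weighted average, over the independent sets $I_j$, of differences of sums of i.i.d. terms. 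Within each $I_j$ the pairs $(L_i,L_i')$ are exchangeable, so Rademacher signs $\sigma_i$ may be inserted without changing the distribution; resumming over $j$ with the weights $\omega_j$ yields exactly $2\,\widehat{\mathfrak{R}}_{\widetilde{\mathcal{D}}}^*(L\circ\widetilde{\mathcal{F}})$. A further concentration argument (again via \cite{janson2004large}) replaces the expected fractional Rademacher term by its empirical counterpart, which together with the two concentration steps explains the constant $3$ in front of the last summand and the $2$ in front of $\widehat{\mathfrak{R}}^*$.

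Combining the two steps with a union bound over two events of probability $\delta/2$ each, and substituting the symmetrization bound for $\mathbb{E}[\Phi]$, produces the stated inequality. I expect the \textbf{main obstacle} to be the dependent-data concentration step: one must verify that the fractional chromatic number is exactly the right ``effective sample size'' factor and that the $[0,M]$ range propagates with the correct constants through Janson's inequality, and one must handle the fact that the fractional Rademacher complexity is itself a data-dependent random quantity requiring its own concentration. The symmetrization and decomposition steps, by contrast, are essentially bookkeeping once the block structure supplied by the fractional cover is in place.
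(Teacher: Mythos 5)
Your proposal is correct and follows essentially the same route as the source of this result: the paper itself states Lemma~\ref{lem: generalization bound fractional chromatic number} as an imported result, citing \cite{usunier2005generalization, amini2015learning} without reproving it, and your sketch (Janson's fractional decomposition into i.i.d.\ blocks, a dependency-graph variant of the bounded-difference concentration with effective sample size $m/\chi_f(G)$ and range $[0,M]$, blockwise symmetrization yielding $2\widehat{\mathfrak{R}}^*$, and a second concentration to pass from the expected to the empirical fractional Rademacher complexity, giving the constant $3$) is precisely the argument in those references. No gaps noted.
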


\begin{lem}
Given a sample $\tilde{\mathcal{D}} = \{(\tilde{x}_i, \tilde{y}_i)\}_{i=1}^{2m}$ where $\tilde{x}_i \in \tilde{\mathcal{X}}$, $\tilde{y}_i \in \tilde{\mathcal{Y}}$ and $\tilde{\mathcal{D}}$ is associated with a dependency graph $G$, where $\chi_f(G)$ is its fractional chromatic number, and each fractional independent vertex cover contains two independent samples, one positive and one negative. Under these conditions, $\chi_f(G)$ satisfies:
\begin{equation}
\chi_f(G) = 2\left(2m-1\right)
\end{equation}
\label{lem: my fractional chromatic number}
\end{lem}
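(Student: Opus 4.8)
The plan is to pin down $\chi_f(G)$ by trapping it between a lower bound coming from a simple counting identity and a matching upper bound coming from an explicit decomposition, exploiting the characterization in Definition~\ref{defn: Fractional independent vertex cover, and fractional chromatic number} of $\chi_f(G)$ as the minimum of $\sum_j \omega_j$ over all fractional independent vertex covers $\{(I_j,\omega_j)\}_j$. The whole argument is therefore combinatorial, and the structure imposed by the hypothesis (each independent set couples one positive and one negative independent sample) is exactly what fixes the two quantities I need: the number of vertices and the size of a maximum independent set.

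First I would make the dependency graph explicit. The vertices are the pairwise loss terms, i.e.\ the ordered pairs $(i,j)$ of the $2m$ samples in which one index plays the positive and the other the negative role, and two vertices are adjacent precisely when they share an underlying sample. This gives $|V(G)| = 2m(2m-1)$. An independent set is then a collection of mutually sample-disjoint pairs, equivalently a matching in which every edge joins one positive to one negative independent sample; since the largest such matching exhausts all $2m$ samples, the independence number is $\alpha(G)=m$.

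Next I would derive the lower bound. Summing the covering constraint $\sum_{j:\,v\in I_j}\omega_j = 1$ over all $v\in V(G)$ and exchanging the order of summation gives the identity $\sum_j \omega_j\,|I_j| = |V(G)|$. Using $|I_j|\le \alpha(G)=m$ forces $\sum_j \omega_j \ge |V(G)|/m = 2(2m-1)$, hence $\chi_f(G)\ge 2(2m-1)$. For the upper bound I would build an explicit cover attaining this value: take a $1$-factorization of the complete graph $K_{2m}$ into $2m-1$ perfect matchings, orient each factor in its two directions to obtain $2(2m-1)$ oriented perfect matchings, each an independent set of size $m$, and observe that together they cover every ordered pair exactly once. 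Assigning weight $1$ to each yields a valid (indeed integral) fractional independent vertex cover with $\sum_j\omega_j = 2(2m-1)$, so $\chi_f(G)\le 2(2m-1)$, and combining the two inequalities gives the claim.

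The hard part will be the upper-bound construction, namely exhibiting a decomposition of all ordered sample pairs into exactly $2(2m-1)$ matchings that tiles the vertex set without overlap; this rests on the classical existence of a $1$-factorization of $K_{2m}$ (which needs $2m$ to be even) and on verifying that the two orientations of each factor partition the ordered pairs. By contrast, the lower-bound counting step and the identification of $\alpha(G)=m$ are routine once the graph is set up correctly, so I would invest most of the care in the factorization and its bookkeeping.
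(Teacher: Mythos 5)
Your proposal is correct, and its upper-bound construction is at heart the same decomposition the paper uses: the paper's ``groups'' of $m$ sample-disjoint ordered pairs are exactly your oriented perfect matchings (its $2m=4$ example lists precisely the six oriented $1$-factors of $K_4$), and both arguments land on $2m(2m-1)/m = 2(2m-1)$ independent sets of weight one. Where you genuinely diverge is in rigor and completeness, on two counts. First, the paper's existence argument for the partition is an informal greedy one --- after removing each group ``the frequency of each sample appearing in the remaining pairs remains equal'' --- which by itself does not guarantee that a perfect matching of the remaining ordered pairs can be extracted at every step; you replace this with the classical $1$-factorization of $K_{2m}$ (available since $2m$ is even) and the observation that the two orientations of each factor tile the ordered pairs exactly once, which makes the construction airtight. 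Second, and more importantly, the paper simply asserts that $\chi_f(G)$ ``can be transformed'' into the number of such groups, i.e.\ it only establishes $\chi_f(G) \le 2(2m-1)$ and silently identifies the partition count with the minimum; you supply the missing lower bound by summing the covering constraint of Definition~\ref{defn: Fractional independent vertex cover, and fractional chromatic number} over vertices to get $\sum_j \omega_j |I_j| = |V(G)| = 2m(2m-1)$ and combining it with $\alpha(G) = m$, forcing $\sum_j \omega_j \ge 2(2m-1)$ for every fractional independent vertex cover. Strictly speaking, only the upper bound is consumed downstream (Lemma~\ref{lem: generalization bound fractional chromatic number} uses $\chi_f(G)$ inside an increasing function), but the equality is what Lemma~\ref{lem: my fractional chromatic number} claims, so your two-sided argument proves the stated result where the paper's proof, read literally, does not; the price is the appeal to the $1$-factorization theorem, against the paper's shorter but gappier symmetry argument.
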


\begin{proof}
The calculation of the fractional chromatic number can be transformed into finding how many groups can be formed where each group contains $m$ ordered pairs of positive and negative samples. In each group, each sample appears only once, and there are no duplicate ordered pairs of positive and negative samples across all groups.

For example, in a dataset where $2m = 4$, there exist $6$ groups:
\begin{equation*}
\begin{aligned}
&\text{Group 1: } (1,2), (3,4) \qquad \text{Group 2: } (2,1), (4,3)\\
&\text{Group 3: } (1,3), (2,4) \qquad \text{Group 4: } (3,1), (4,2)\\
&\text{Group 5: } (1,4), (2,3) \qquad \text{Group 6: } (4,1), (3,2)\\
\end{aligned}
\end{equation*}
Therefore, $\chi_f(G) = 6$.

For $2m$ samples, we can extract $A_{2m}^2$ ordered pairs of positive and negative samples. Since these samples have an equal status in the dataset, they appear the same number of times among these $A_{2m}^2$ pairs.

After selecting the first group from these $A_{2m}^2$ pairs, $A_{2m}^2 - m$ pairs of positive and negative samples remain. The frequency of each sample appearing in these remaining pairs remains equal.

We continue to select the second group, and so on, until the last group. The frequency of each sample in the remaining pairs still remains equal.

Thus, we can find $\frac{A_{2m}^2}{m}$ groups, \ie, $\chi_f(G) = 2(2m-1)$.

This completed the proof.
\end{proof}

\subsection{Proof of the Main Result}
\label{subsec: Proof of the Main Result}
\begin{rthm1}[Generalization Bound for AUCSeg]
Let $\mathbb{E}_{\mathcal{D}}\left[\hat{\mathcal{L}}_{\mathcal{D}}\left(f\right)\right]$ be the population risk of $\hat{\mathcal{L}}_{\mathcal{D}}\left(f\right)$. Assume $\mathcal{F} \subseteq \{ f:\mathcal{X} \to \mathbb{R}^{H \times W \times K} \}$, where $H$ and $W$ represent the height and width of the image, and $K$ represents the number of categories,  $\hat{\mathcal{L}}^{(i)}$ is the risk over $i$-th sample, and is $\mu$-Lipschitz with respect to the $l_{\infty}$ norm, (\ie $\Vert \hat{\mathcal{L}}(x)-\hat{\mathcal{L}}(y)\Vert_\infty \le \mu\cdot \Vert x-y \Vert_{\infty}$). There exists three constants $A>0$, $B>0$ and $C>0$, the following generalization bound holds with probability at least $1-\delta$ over a random draw of i.i.d training data (at the image-level):
\begin{equation*}
\begin{aligned}
\left|\hat{\mathcal{L}}_{\mathcal{D}}\left(f\right)-\mathbb{E}_{\mathcal{D}}\left[\hat{\mathcal{L}}_{\mathcal{D}}\left(f\right)\right]\right| \leq \frac{8}{N}+&\frac{\eta_{\text{inner}}+\eta_{\text{inter}}}{\sqrt{N}}\sqrt{A\log\left(2B\mu\tau Nk+C\right)} \\
&+3\left(\sqrt{\frac{1}{2N}}+K\sqrt{1-\frac{1}{N}}\right)\sqrt{\log \left(\frac{4K(K-1)}{\delta}\right)},
\end{aligned}
\end{equation*}
where
\begin{equation*}
\eta_{\text{inner}}=\frac{48\mu\tau\ln N}{N}, \quad \eta_{\text{inter}}=2\sqrt{2}\tau,
\end{equation*}
\begin{equation*}
\tau=\left(\max\limits_{c\in[K]}\frac{n_{\max}^{(c)}}{n_{\text{mean}}^{(c)}}\right)^2,
\end{equation*}
$n_{\max}^{(c)} =  \max_{\mathbf{X}} n(\mathbf{X}^{(c)})$, $n_{mean}^{(c)} =  \sum_{i=1}^{N}n(\mathbf{X}_i^{(c)})$, $N=\left|\mathcal{D}\right|$, $k=H\times W$ and $\mathbf{X}^{(c)}$ represents the pixel of class $c$ in image $\mathbf{X}$.
\end{rthm1}

\begin{proof}
First, we find that the calculation of pair-wise AUC requires both positive and negative samples. These two samples can come from the same image or from two different images. Therefore, we transform the original problem into two sub-problems:
\begin{equation}
\begin{aligned}
&\quad \left|\hat{\mathcal{L}}_{\mathcal{D}}\left(f\right)-\mathbb{E}_{\mathcal{D}}\left[\hat{\mathcal{L}}_{\mathcal{D}}\left(f\right)\right]\right| \\
&= \sup_{f \in \mathcal{F}}\left[\frac{1}{K(K-1)}\sum_{c,c'}\ell_{auc}^{c,c'}-\frac{1}{K(K-1)}\sum_{c,c'}\mathbb{E}_{\mathcal{D}}\left[\ell_{auc}^{c,c'}\right]\right] \\
&\stackrel{(Lem. \ref{lem: Jensen's Inequality})}{\leq} \frac{1}{K(K-1)}\sum_{c,c'}\left[\sup_{f \in \mathcal{F}}\left(\ell_{auc}^{c,c'}-\mathbb{E}_{\mathcal{D}}\left[\ell_{auc}^{c,c'}\right]\right)\right] \\
&\stackrel{(Lem. \ref{lem: inner and inter})}{=}\frac{1}{K(K-1)}\sum_{c,c'}\left[\underbrace{\sup_{f \in \mathcal{F}}\left(\ell_{+,-}^{\text{inner}}-\mathbb{E}_{\mathcal{D}}\left[\tilde{\ell}_{+,-}^{\text{inner}}\right]\right)}_{\color{blue}\text{Part 1}} + \underbrace{\sup_{f \in \mathcal{F}}\left(\ell_{+,-}^{\text{inter}}-\mathbb{E}_{\mathcal{D}}\left[\tilde{\ell}_{+,-}^{\text{inter}}\right]\right)}_{\color{red}\text{Part 2}}\right],
\end{aligned}
\label{equ: main_proof_1}
\end{equation}
where $+/-$ respectively denote the categories $c/c'$.

For $\color{blue}\text{Part 1}$, we use the complexity measure technique of the covering number.

Assuming $\log\mathcal{N}\left(\mathcal{F},||\cdot||_\infty,S,\epsilon\right)\leq \frac{A}{\epsilon^2}\log\left[\frac{B}{\epsilon}\cdot \left|\mathcal{D}\right|\cdot k + C\right]$, where $k=H \times W$ is the pixel count in an image. $A$, $B$ and $C$ represent constants. Based on Lemma \ref{lem: n_lf and n_f}, we have:
\begin{equation}
\begin{aligned}
\log\mathcal{N}\left(\ell\circ\mathcal{F},||\cdot||_\infty,S,\epsilon\right) &\leq\log\mathcal{N}\left(\mathcal{F},||\cdot||_\infty,S,\epsilon/2\mu\rho_{\max}\right)\\
&\leq\frac{4A\mu^2\rho_{\max}^2}{\epsilon^2}\log\left[\frac{2B\mu\rho_{\max}}{\epsilon}\cdot \left|\mathcal{D}\right| \cdot k + C\right],
\end{aligned}
\end{equation}

\begin{equation*}
\begin{aligned}
\rho_{\max} &= \max\limits_{\mathbf{X}}\frac{\left|\mathcal{D}\right|n\left(\mathbf{X}^{+}\right)n\left(\mathbf{X}^{-}\right)}{\left|\mathcal{N}_{+}\right|\left| \mathcal{N}_{-}\right|} \\
&=\frac{1}{\left|\mathcal{D}\right|}\cdot\frac{n^+_{\max}\cdot n^-_{\max}}{n^+_{\text{mean}}\cdot n^-_{\text{mean}}}\\
&\leq \frac{1}{\left|\mathcal{D}\right|}\cdot \tau
\end{aligned}
\end{equation*}
where:
\begin{equation*}
\begin{aligned}
&n^+_{\max} =  \max_{\mathbf{X} \in \mathcal{X}} n(\mathbf{X}^+),\\
&n^-_{\max} =  \max_{\mathbf{X} \in \mathcal{X}} n(\mathbf{X}^-),\\
&n^+_{mean} =  \sum_{i=1}^{|D|}n(\mathbf{X}_i^+),\\
&n^-_{mean} =  \sum_{i=1}^{|D|}n(\mathbf{X}_i^-),
\end{aligned}
\end{equation*}
and $\tau=\left(\max\limits_{c\in[K]}\frac{n_{\max}^{(c)}}{n_{\text{mean}}^{(c)}}\right)^2$.

Denoted by $a:=4A\mu^2\rho_{\max}^2$, $b:=2B\mu\rho_{\max}\left|\mathcal{D}\right|k$ and  $c:=C$. Based on Lemma \ref{lem: generalisation bound covering numbers} and Lemma A.3 in \cite{li2021towards}, we have:
\begin{equation}
\begin{aligned}
\hat{\mathfrak{R}}\left(\ell\circ\mathcal{F}\right)&\leq\inf_{\alpha>0}\left(4\alpha+\frac{12}{\sqrt{\left|\mathcal{D}\right|}}\int_{\alpha}^{1}\sqrt{\log\mathcal{N}\left(\ell\circ\mathcal{F},||\cdot||_\infty,S,\epsilon\right)}d\epsilon\right)\\
&\leq\inf_{\alpha>0}\left(4\alpha+\frac{12}{\sqrt{\left|\mathcal{D}\right|}}\int_\alpha^1\frac{\sqrt{a\log\left(b/\epsilon+c\right)}}\epsilon d\epsilon\right) \\
&\leq\frac{4}{\left|\mathcal{D}\right|}+\frac{12}{\sqrt{\left|\mathcal{D}\right|}}\int_{1/\left|\mathcal{D}\right|}^1\frac{\sqrt{a\log\left(b\left|\mathcal{D}\right|+c\right)}}{\epsilon}d\epsilon \\
&=\frac{4}{\left|\mathcal{D}\right|}+\frac{12\ln \left|\mathcal{D}\right|}{\sqrt{\left|\mathcal{D}\right|}}\sqrt{a\log\left(b\left|\mathcal{D}\right|+c\right)}.
\end{aligned}
\end{equation}
Substituting this result into Lemma \ref{lem: generalization bound}, with probability at least $1-\delta$, we have
\begin{equation}
\begin{aligned}
\sup_{f \in \mathcal{F}}&\left(\ell_{+,-}^{\text{inner}}-\mathbb{E}_{\mathcal{D}}\left[\tilde{\ell}_{+,-}^{\text{inner}}\right]\right) \\
&\leq \frac{8}{\left|\mathcal{D}\right|}+\frac{48\mu \tau \ln \left|\mathcal{D}\right|}{{\left|\mathcal{D}\right|}^{1.5}}\sqrt{A\log\left(2B\mu\tau\left|\mathcal{D}\right|k+C\right)}+3\sqrt{\frac{\log \left(\frac{4K(K-1)}{\delta}\right)}{2\left|\mathcal{D}\right|}}.
\end{aligned}
\label{equ: main_proof_2}
\end{equation}

For $\color{red}\text{Part 2}$, we use the complexity measure technique of the fractional chromatic number and covering number.

According to Lemma \ref{lem: generalization bound fractional chromatic number}, calculating the generalization bound only requires knowing the chromatic complexity. Based on \Cref{equ: Dependency Graph}, we have
\begin{equation}
\begin{aligned}
\hat{\mathfrak{R}}\left(\ell\circ\mathcal{F}\right)=\frac{1}{\left|\mathcal{D}\right|(\left|\mathcal{D}\right|-1)}\mathbb{E}_{\sigma}\left[\sum\limits_{j\in[J]}w_j\left(\sup_{f\in \mathcal{F}}\sum\limits_{\substack{(i_1,j_1)\in I_j\\(i_2,j_2)\in I_j}}\sigma_{i,j}^{1,2}\rho(x)\ell_{sq}\left(\tilde{f}_{i_1,j_1}^{(+)},\tilde{f}_{i_2,j_2}^{(+)},\mathbf{X},\mathbf{Y}\right)\right)\right],
\end{aligned}
\end{equation}
where $\rho(x) = \frac{\left|\mathcal{D}\right|(\left|\mathcal{D}\right|-1)n\left(\mathbf{X}^{+}\right)n\left(\mathbf{X}^{-}\right)}{\left|\mathcal{N}_{+}\right|\left| \mathcal{N}_{-}\right|}$, and $\omega_j$ denotes the weight assigned to the subset in the fractional vertex cover.

Similar to Part 1, using the covering number we can get
\begin{equation}
\begin{aligned}
\mathfrak{R}_j = \sup_{f\in \mathcal{F}}\sum\limits_{\substack{(i_1,j_1)\in I_j\\(i_2,j_2)\in I_j}}\sigma_{i,j}^{1,2}\rho(x)\ell_{sq}\left(\tilde{f}_{i_1,j_1}^{(+)},\tilde{f}_{i_2,j_2}^{(+)},\mathbf{X},\mathbf{Y}\right) \lesssim \sqrt{c_j\cdot m_j},
\end{aligned}
\end{equation}
where we define $c_j$ as $\ln^2\left|\mathcal{D}\right|\cdot A\cdot \mu^2\cdot \rho_{\max}^2\cdot\log\left(2B\cdot\mu\cdot\rho_{\max}\cdot \left|\mathcal{D}\right|^2\cdot k+C\right)$ as $c_j$, and define $m_j$ as $\left|I_j\right|$. 

Assume that the number of images in the dataset is even, \ie, $ |D| \equiv 0 \pmod{2} $. We have:
\begin{equation}
\begin{aligned}
\hat{\mathfrak{R}}\left(\ell\circ\mathcal{F}\right) 
&= \frac{1}{\left|\mathcal{D}\right|(\left|\mathcal{D}\right|-1)} \sum\limits_{j\in[J]} w_j \mathfrak{R}_j \\
&\lesssim \frac{1}{\left|\mathcal{D}\right|(\left|\mathcal{D}\right|-1)} \sum\limits_{j\in[J]} w_j \sqrt{c_j\cdot m_j} \\
&=\frac{\chi_f(G)}{\left|\mathcal{D}\right|(\left|\mathcal{D}\right|-1)}\sum\limits_{j\in[J]}\frac{w_j}{\chi_f(G)}\sqrt{c_jm_j} \\
&\leq \frac{\sqrt{\chi_f(G)}}{\left|\mathcal{D}\right|(\left|\mathcal{D}\right|-1)}\sqrt{\sum\limits_{j\in[J]}w_jm_jc_j} \\
&= \frac{\sqrt{\chi_f(G)}}{\sqrt{\left|\mathcal{D}\right|(\left|\mathcal{D}\right|-1)}}\sqrt{\frac{\sum\limits_{j\in[J]}w_jm_jc_j}{\left|\mathcal{D}\right|(\left|\mathcal{D}\right|-1)}} \\
&\stackrel{(*)}{\lesssim} \frac{1}{\sqrt{\left|\mathcal{D}\right|/2}}\sqrt{\frac{\sum\limits_{j\in[J]}w_jc_j}{2(\left|\mathcal{D}\right|-1)}},
\label{equ: R_lf_1}
\end{aligned}
\end{equation}
where $(*)$ follows the Lemma \ref{lem: my fractional chromatic number}.

The second-order inequality is based on the fact that:
\begin{equation}
\begin{aligned}
\mathfrak{R}_j &\lesssim \sqrt{c_j\cdot m_j},
\end{aligned}
\end{equation}
define $\rho_{\max}^j$ as the largest $\rho$ in the $j$-th cluster, and that:
\begin{equation}
\begin{aligned}
\sqrt{c_j} &=  \rho_{\max}^j \cdot  \sqrt{A\log(2B\mu\rho_{\max}\left|\mathcal{D}\right|^2 k+C)}\\
&\leq \frac{\left|\mathcal{D}\right|(\left|\mathcal{D}\right|-1)n^+_{\max}n^-_{\max}}{\left|\mathcal{N}_{+}\right|\left| \mathcal{N}_{-}\right|}\sqrt{A\log(2B\mu\rho_{\max}\left|\mathcal{D}\right|^2 k+C)}\\
&\approx \frac{n^+_{\max}\cdot n^-_{\max}}{n^+_{\text{mean}}\cdot n^-_{\text{mean}}}\sqrt{A\log(2B\mu\rho_{\max}\left|\mathcal{D}\right|^2 k+C)}\\
&\leq \tau\sqrt{A\log(2B\mu\tau\left|\mathcal{D}\right| k+C)}
\label{equ: R_lf_2}
\end{aligned}
\end{equation}
where:
\begin{equation*}
\begin{aligned}
&n^+_{\max} =  \max_{\mathbf{X} \in \mathcal{X}} n(\mathbf{X}^+),\\
&n^-_{\max} =  \max_{\mathbf{X} \in \mathcal{X}} n(\mathbf{X}^-),\\
&n^+_{mean} =  \sum_{i=1}^{|D|}n(\mathbf{X}_i^+),\\
&n^-_{mean} =  \sum_{i=1}^{|D|}n(\mathbf{X}_i^-).
\end{aligned}
\end{equation*}
and $\tau=\left(\max\limits_{c\in[K]}\frac{n_{\max}^{(c)}}{n_{\text{mean}}^{(c)}}\right)^2$.

Combining \Cref{equ: R_lf_1} and \Cref{equ: R_lf_2}, we obtain:
\begin{equation}
\begin{aligned}
\hat{\mathfrak{R}}\left(\ell\circ\mathcal{F}\right) \leq \frac{1}{\sqrt{\left|\mathcal{D}\right|/2}} \cdot \tau \sqrt{A\log(2B\mu\tau\left|\mathcal{D}\right| k+C)}.
\end{aligned}
\end{equation}

Based on Lemma \ref{lem: generalization bound fractional chromatic number} and Lemma \ref{lem: my fractional chromatic number}, it comes:
\begin{equation}
\begin{aligned}
&\quad \sup_{f \in \mathcal{F}}\left(\ell_{+,-}^{\text{inter}}-\mathbb{E}_{\mathcal{D}}\left[\tilde{\ell}_{+,-}^{\text{inter}}\right]\right) \\
&\leq 2\hat{\mathfrak{R}}\left(\ell\circ\mathcal{F}\right)+3M\sqrt{\frac{\chi_f(G)}{2m}\log\left(\frac{4K(K-1)}{\delta}\right)}\\
&\leq \frac{2\sqrt{2}}{\sqrt{\left|\mathcal{D}\right|}} \cdot \tau \sqrt{A\log\left(2B\mu\tau\left|\mathcal{D}\right| k+C\right)}+3K\sqrt{\frac{\left|\mathcal{D}\right|-1}{\left|\mathcal{D}\right|}\log\left(\frac{4K(K-1)}{\delta}\right)}.
\end{aligned}
\label{equ: main_proof_3}
\end{equation}

Therefore, by combining \Cref{equ: main_proof_1},\Cref{equ: main_proof_2} and \Cref{equ: main_proof_3}, we can obtain:
\begin{equation}
\begin{aligned}
\left|\hat{\mathcal{L}}_{\mathcal{D}}\left(f\right)-\mathbb{E}_{\mathcal{D}}\left[\hat{\mathcal{L}}_{\mathcal{D}}\left(f\right)\right]\right| \leq \frac{8}{N}+&\frac{\eta_{\text{inner}}+\eta_{\text{inter}}}{\sqrt{N}}\sqrt{A\log\left(2B\mu\tau Nk+C\right)} \\
&+3\left(\sqrt{\frac{1}{2N}}+K\sqrt{1-\frac{1}{N}}\right)\sqrt{\log \left(\frac{4K(K-1)}{\delta}\right)},
\end{aligned}
\end{equation}
where $\eta_{\text{inner}}=\frac{48\mu\tau\ln N}{N}$, $\eta_{\text{inter}}=2\sqrt{2}\tau$, $N=\left|\mathcal{D}\right|$ and $k=H\times W$.

This completed the proof.
\end{proof}

\section{Proof for Propositions of Tail-class Memory Bank}
\label{sec: Proof for Propositions of Tail-class Memory Bank}
\begin{rprop2}
Consider a dataset $\mathcal{D}$ that includes images with $K$ different pixel categories. Let $p_i$ represent the probability of observing a pixel with label $i$ in a given image. Randomly select $B$ images from $\mathcal{D}$ as training data, where
\begin{equation*}
    B = \Omega\left(\frac{\log(\delta / K)}{\log(1-\min\limits_{i}p_{i})}\right).
\end{equation*}
Then with probability at least $1- \delta$, for any $c \in [K]$, there exists $\mathbf{X}$ in the training data that contains pixels of label $c$.
\end{rprop2}

\begin{proof}
Define event $A_j$ as the extraction of $N$ images where the pixels of class $j$ appear at least once.

Let $p_{m}=\min \{p_1, p_2, \ldots, p_K\}$, where $m\in[K]$.

The probability that each category appears at least once when randomly selecting $N$ images:
\begin{equation}
\begin{aligned}
\mathbb{P}\left(\bigcap\limits_{i=1}^{K}A_i\right)&=1-\mathbb{P}\left(\bigcup\limits_{i=1}^{K}\overline{A_i}\right)\\
&\geq1-\sum\limits_{i=1}^{K}\mathbb{P}\left(\overline{A_i}\right)\\
&=1-\sum\limits_{i=1}^{K}(1-p_i)^B\\
&\geq 1-K(1-p_m)^B
\end{aligned}
\end{equation}

When $\mathbb{P}\left(\bigcap\limits_{i=1}^{K}A_i\right) \geq 1- \delta$,
\begin{equation}
\delta \geq K(1-p_m)^B
\end{equation}

Therefore,
\begin{equation}
B = \Omega\left(\frac{\log(\delta / K)}{\log(1-p_m)}\right) = \Omega\left(\frac{\log(\delta / K)}{\log(1-\min\limits_{i}p_{i})}\right),
\end{equation}

This completed the proof.
\end{proof}

\section{Details of T-Memory Bank Algorithm}
\label{sec: Details of T-Memory Bank Algorithm}
In this section, Algorithm \ref{algo: aucseg2} provides a full version of Algorithm \ref{algo: aucseg}.

\begin{algorithm}
\caption{AUCSeg Algorithm (Full Version)}
\label{algo: aucseg2}
\LinesNumbered
\KwIn{Training data $\mathcal{D}$, number of tail classes $n_t$, labels of tail classes $\mathcal{C}_t=\{c_i\}_{i=1}^{n_t}$, Memory Branch $\mathcal{M}=\{\mathcal{M}_{c_1}, \dots \mathcal{M}_{c_{n_t}}\}$, memory size $S_{M}$, sample ratio $R_S$, resize ratio $R_R$, max iteration $T_{max}$, batch size $N_b$}
\KwOut{model parameters $\theta$}
\For{$iter=1$ to $T_{max}$}{
    $\mathcal{D}_B=\{(\textbf{X}_i,\textbf{Y}_i)\}_{i=1}^{N_b}$ $\leftarrow$ $SampleBatch(\mathcal{D},N_b)$\;
    $\mathcal{C}_{miss} \subseteq \mathcal{C}_t$ $\leftarrow$ $MissingTailClasses(\mathcal{D}_B)$\;
    $\overline{\mathcal{C}_{miss}}=\mathcal{C}_{t}-\mathcal{C}_{miss}$\;
    $\textcolor{orange}{\rhd}$ \textcolor{orange}{Store Branch}
    
    \For{$\overline{c_m}$ in $\overline{\mathcal{C}_{miss}}$}{
        Divide a picture containing only the $\overline{c_m}$-th class pixels from $\mathcal{D}_B$ and name it $\mathcal{P}$\;
        \eIf{$\left|\mathcal{M}_{\overline{c_m}} \right| < S_{M}$}{
            Add the divided image $\mathcal{P}$ to $\mathcal{M}_{\overline{c_m}}$\;
        }{
            Randomly replace an image in $\mathcal{M}_{\overline{c_m}}$ with the divided image $\mathcal{P}$\;
        }
    }
    $\textcolor{orange}{\rhd}$ \textcolor{orange}{Retrieve Branch}

    $n_{sample}=\lceil \left|\mathcal{C}_{miss}\right| \times R_S \rceil$\;
    \For{$i=1$ to $n_{sample}$}{
        Randomly choose $c_m$ from $\mathcal{C}_{miss}$\;
        Remove $c_m$ in $\mathcal{C}_{miss}$\;
        \If{$\left| \mathcal{M}_{c_m} \right| \neq 0$}{
            Sample from $\mathcal{M}_{c_m}$, scale according to $R_R$, paste randomly into $\mathcal{D}_B$\;
        }
    }
    $\textcolor{orange}{\rhd}$ \textcolor{orange}{Semantic Segmentation}

    $\hat{\textbf{Y}_i}$ $\leftarrow$ $f_{\theta}(\textbf{X}_i)$\;
    Calculate $\ell = \tilde{\ell}_{auc} + \lambda \ell_{ce}$ with \Cref{equ: l_ce} and \Cref{equ: l_auc_new}\;
    Backpropagation updates $\theta$.
}
\end{algorithm}

\section{More Discussions about T-Memory Bank}
\label{sec: More Discussions about T-Memory Bank}

\subsection{Discussion on the Improved Version of Stratified Sampling}
\label{subsec: Discussion on the Improved Version of Stratified Sampling}
In this section, we introduce the definition of the improved version of stratified sampling and explain why it is not applicable to the PLSS task.

\textbf{The definition of the improved version of stratified sampling.} The improved version of stratified sampling starts by grouping images according to their categories. Due to the multi-label nature, different categories may contain the same images. After that, stratified sampling is applied to each category, making sure that even if an image is sampled more than once (as both head and tail), each mini-batch still includes at least one sample from every category.

\textbf{Reasons for inapplicability to PLSS task.} Conventional stratified sampling can hardly cover all the involved classes with a small batch size. To ensure coverage, one has to employ a much larger batch size, which results in a significantly higher computational burden. Although the improved version of stratified sampling can cover all classes, images from tail classes may appear repeatedly, leading to overfitting and consequently a degradation in performance. In contrast, our Tail-class Memory Bank only involves pasting a portion of one image onto another, effectively functioning as an implicit data augmentation. This approach mitigates the sampling problem without compromising generalization ability. The following empirical results support our assertion.

\newpage

First, we counted the number of images containing pixels from each class in the Cityscapes, ADE20K, and COCO-Stuff 164K datasets. The results are shown in \Cref{tab: num_image_cityscapes}, \Cref{tab: num_image_ADE20K}, and \Cref{tab: num_image_coco}.

\begin{table}[ht]
  \centering
  \renewcommand\arraystretch{1.0}
  \caption{The number of images containing pixels from each class in the \textbf{Cityscapes}. Class ID represents the class number in the original dataset.}
  \resizebox{\linewidth}{!}{
    \begin{tabular}{cccccccccccccccccccc}
    \toprule
    \textcolor[rgb]{ 0,  .439,  .753}{\textbf{Class ID}} & \textbf{5} & \textbf{0} & \textbf{2} & \textbf{8} & \textbf{13} & \textbf{1} & \textbf{7} & \textbf{10} & \textbf{11} & \textbf{6} & \textbf{9} & \textbf{18} & \textbf{4} & \textbf{12} & \textbf{3} & \textbf{17} & \textbf{14} & \textbf{15} & \textbf{16} \\
    \textcolor[rgb]{ 0,  .439,  .753}{\textbf{Num}} & 2949  & 2934  & 2934  & 2891  & 2832  & 2811  & 2808  & 2686  & 2343  & 1658  & 1654  & 1646  & 1296  & 1023  & 970   & 513   & 359   & 274   & 142 \\
    \bottomrule
    \end{tabular}%
    }
  \label{tab: num_image_cityscapes}%
\end{table}%

\begin{table}[ht]
  \centering
  \renewcommand\arraystretch{1.0}
  \caption{The number of images containing pixels from each class in the \textbf{ADE20K}. Class ID represents the class number in the original dataset.}
  \resizebox{\linewidth}{!}{
    \begin{tabular}{ccccccccccccccccccrr}
    \toprule
    \textcolor[rgb]{ 0,  .439,  .753}{\textbf{Class ID}} & \textbf{1} & \textbf{4} & \textbf{3} & \textbf{5} & \textbf{6} & \textbf{2} & \textbf{13} & \textbf{9} & \textbf{16} & \textbf{18} & \textbf{7} & \textbf{23} & \textbf{15} & \textbf{20} & \textbf{21} & \textbf{37} & \textbf{12} & \multicolumn{1}{c}{\textbf{11}} & \multicolumn{1}{c}{\textbf{44}} \\
    \textcolor[rgb]{ 0,  .439,  .753}{\textbf{Num}} & 11588 & 9314  & 8240  & 6674  & 6579  & 6042  & 5069  & 4687  & 4266  & 3995  & 3990  & 3295  & 3276  & 3258  & 3161  & 3083  & 3061  & \multicolumn{1}{c}{2851} & \multicolumn{1}{c}{2646} \\
    \midrule
    \textcolor[rgb]{ 0,  .439,  .753}{\textbf{Class ID}} & \textbf{83} & \textbf{10} & \textbf{19} & \textbf{88} & \textbf{8} & \textbf{14} & \textbf{17} & \textbf{40} & \textbf{42} & \textbf{25} & \textbf{33} & \textbf{67} & \textbf{136} & \textbf{28} & \textbf{24} & \textbf{126} & \textbf{48} & \multicolumn{1}{c}{\textbf{31}} & \multicolumn{1}{c}{\textbf{29}} \\
    \textcolor[rgb]{ 0,  .439,  .753}{\textbf{Num}} & 2508  & 2421  & 2148  & 1987  & 1825  & 1791  & 1690  & 1447  & 1437  & 1404  & 1385  & 1308  & 1281  & 1229  & 1191  & 1191  & 1181  & \multicolumn{1}{c}{1172} & \multicolumn{1}{c}{1132} \\
    \midrule
    \textcolor[rgb]{ 0,  .439,  .753}{\textbf{Class ID}} & \textbf{68} & \textbf{135} & \textbf{94} & \textbf{99} & \textbf{58} & \textbf{54} & \textbf{39} & \textbf{43} & \textbf{65} & \textbf{35} & \textbf{149} & \textbf{22} & \textbf{34} & \textbf{139} & \textbf{26} & \textbf{70} & \textbf{27} & \multicolumn{1}{c}{\textbf{113}} & \multicolumn{1}{c}{\textbf{90}} \\
    \textcolor[rgb]{ 0,  .439,  .753}{\textbf{Num}} & 1112  & 1020  & 992   & 965   & 930   & 880   & 803   & 799   & 792   & 781   & 773   & 702   & 698   & 671   & 667   & 658   & 650   & \multicolumn{1}{c}{622} & \multicolumn{1}{c}{618} \\
    \midrule
    \textcolor[rgb]{ 0,  .439,  .753}{\textbf{Class ID}} & \textbf{86} & \textbf{60} & \textbf{53} & \textbf{103} & \textbf{143} & \textbf{45} & \textbf{87} & \textbf{72} & \textbf{116} & \textbf{137} & \textbf{32} & \textbf{148} & \textbf{82} & \textbf{30} & \textbf{50} & \textbf{111} & \textbf{138} & \multicolumn{1}{c}{\textbf{96}} & \multicolumn{1}{c}{\textbf{84}} \\
    \textcolor[rgb]{ 0,  .439,  .753}{\textbf{Num}} & 583   & 564   & 561   & 556   & 556   & 549   & 532   & 531   & 530   & 528   & 521   & 504   & 492   & 479   & 468   & 465   & 452   & \multicolumn{1}{c}{451} & \multicolumn{1}{c}{440} \\
    \midrule
    \textcolor[rgb]{ 0,  .439,  .753}{\textbf{Class ID}} & \textbf{150} & \textbf{124} & \textbf{41} & \textbf{38} & \textbf{51} & \textbf{140} & \textbf{66} & \textbf{36} & \textbf{73} & \textbf{46} & \textbf{101} & \textbf{128} & \textbf{109} & \textbf{64} & \textbf{71} & \textbf{76} & \textbf{61} & \multicolumn{1}{c}{\textbf{125}} & \multicolumn{1}{c}{\textbf{47}} \\
    \textcolor[rgb]{ 0,  .439,  .753}{\textbf{Num}} & 421   & 417   & 411   & 404   & 402   & 397   & 395   & 378   & 369   & 367   & 354   & 347   & 340   & 335   & 330   & 324   & 320   & \multicolumn{1}{c}{319} & \multicolumn{1}{c}{310} \\
    \midrule
    \textcolor[rgb]{ 0,  .439,  .753}{\textbf{Class ID}} & \textbf{98} & \textbf{77} & \textbf{49} & \textbf{133} & \textbf{117} & \textbf{63} & \textbf{134} & \textbf{69} & \textbf{122} & \textbf{75} & \textbf{62} & \textbf{81} & \textbf{130} & \textbf{142} & \textbf{144} & \textbf{119} & \textbf{145} & \multicolumn{1}{c}{\textbf{132}} & \multicolumn{1}{c}{\textbf{57}} \\
    \textcolor[rgb]{ 0,  .439,  .753}{\textbf{Num}} & 307   & 304   & 287   & 284   & 282   & 275   & 268   & 266   & 266   & 265   & 261   & 247   & 246   & 228   & 217   & 213   & 206   & \multicolumn{1}{c}{201} & \multicolumn{1}{c}{198} \\
    \midrule
    \textcolor[rgb]{ 0,  .439,  .753}{\textbf{Class ID}} & \textbf{95} & \textbf{93} & \textbf{147} & \textbf{56} & \textbf{78} & \textbf{74} & \textbf{59} & \textbf{120} & \textbf{85} & \textbf{91} & \textbf{52} & \textbf{146} & \textbf{100} & \textbf{121} & \textbf{102} & \textbf{131} & \textbf{105} & \multicolumn{1}{c}{\textbf{127}} & \multicolumn{1}{c}{\textbf{141}} \\
    \textcolor[rgb]{ 0,  .439,  .753}{\textbf{Num}} & 181   & 178   & 178   & 172   & 170   & 144   & 139   & 136   & 135   & 133   & 130   & 126   & 117   & 116   & 108   & 108   & 99    & \multicolumn{1}{c}{97} & \multicolumn{1}{c}{92} \\
    \midrule
    \textcolor[rgb]{ 0,  .439,  .753}{\textbf{Class ID}} & \textbf{55} & \textbf{92} & \textbf{114} & \textbf{108} & \textbf{118} & \textbf{89} & \textbf{79} & \textbf{107} & \textbf{110} & \textbf{80} & \textbf{115} & \textbf{123} & \textbf{106} & \textbf{104} & \textbf{129} & \textbf{112} & \textbf{97} &       &  \\
    \textcolor[rgb]{ 0,  .439,  .753}{\textbf{Num}} & 84    & 83    & 80    & 77    & 73    & 71    & 68    & 66    & 66    & 65    & 59    & 58    & 57    & 52    & 52    & 50    & 41    &       &  \\
    \bottomrule
    \end{tabular}%
    }
  \label{tab: num_image_ADE20K}%
\end{table}%

\begin{table}[ht]
  \centering
  \renewcommand\arraystretch{1.0}
  \caption{The number of images containing pixels from each class in the \textbf{COCO-Stuff 164K}. Class ID represents the class number in the original dataset.}
  \resizebox{\linewidth}{!}{
    \begin{tabular}{cccccccccccccccccccc}
    \toprule
    \textcolor[rgb]{ 0,  .439,  .753}{\textbf{Class ID}} & \textbf{0} & \textbf{157} & \textbf{145} & \textbf{160} & \textbf{93} & \textbf{84} & \textbf{112} & \textbf{120} & \textbf{161} & \textbf{128} & \textbf{111} & \textbf{153} & \textbf{137} & \textbf{169} & \textbf{155} & \textbf{56} & \textbf{2} & \textbf{60} & \textbf{118} \\
    \textcolor[rgb]{ 0,  .439,  .753}{\textbf{Num}} & 63965 & 36466 & 31808 & 31481 & 27657 & 23021 & 22575 & 22526 & 19095 & 18311 & 17882 & 16282 & 15402 & 14209 & 13052 & 12757 & 12238 & 11834 & 11772 \\
    \midrule
    \textcolor[rgb]{ 0,  .439,  .753}{\textbf{Class ID}} & \textbf{101} & \textbf{131} & \textbf{90} & \textbf{99} & \textbf{94} & \textbf{85} & \textbf{130} & \textbf{127} & \textbf{100} & \textbf{41} & \textbf{103} & \textbf{39} & \textbf{86} & \textbf{45} & \textbf{26} & \textbf{109} & \textbf{165} & \textbf{105} & \textbf{143} \\
    \textcolor[rgb]{ 0,  .439,  .753}{\textbf{Num}} & 11303 & 11137 & 10546 & 10163 & 9886  & 9849  & 9522  & 9521  & 9475  & 8910  & 8893  & 8261  & 7176  & 6782  & 6744  & 6672  & 6642  & 6618  & 6598 \\
    \midrule
    \textcolor[rgb]{ 0,  .439,  .753}{\textbf{Class ID}} & \textbf{116} & \textbf{106} & \textbf{114} & \textbf{7} & \textbf{13} & \textbf{24} & \textbf{164} & \textbf{73} & \textbf{133} & \textbf{159} & \textbf{147} & \textbf{97} & \textbf{170} & \textbf{123} & \textbf{89} & \textbf{142} & \textbf{71} & \textbf{95} & \textbf{144} \\
    \textcolor[rgb]{ 0,  .439,  .753}{\textbf{Num}} & 6549  & 6324  & 6252  & 6122  & 5568  & 5464  & 5290  & 5268  & 5251  & 5246  & 5114  & 5101  & 5053  & 4887  & 4858  & 4688  & 4674  & 4589  & 4589 \\
    \midrule
    \textcolor[rgb]{ 0,  .439,  .753}{\textbf{Class ID}} & \textbf{74} & \textbf{62} & \textbf{139} & \textbf{58} & \textbf{57} & \textbf{16} & \textbf{9} & \textbf{80} & \textbf{43} & \textbf{25} & \textbf{5} & \textbf{32} & \textbf{15} & \textbf{88} & \textbf{59} & \textbf{67} & \textbf{121} & \textbf{6} & \textbf{75} \\
    \textcolor[rgb]{ 0,  .439,  .753}{\textbf{Num}} & 4575  & 4550  & 4490  & 4450  & 4420  & 4153  & 4138  & 4135  & 3996  & 3959  & 3950  & 3879  & 3848  & 3787  & 3680  & 3677  & 3622  & 3587  & 3567 \\
    \midrule
    \textcolor[rgb]{ 0,  .439,  .753}{\textbf{Class ID}} & \textbf{3} & \textbf{63} & \textbf{37} & \textbf{36} & \textbf{138} & \textbf{38} & \textbf{61} & \textbf{107} & \textbf{1} & \textbf{44} & \textbf{14} & \textbf{42} & \textbf{117} & \textbf{92} & \textbf{30} & \textbf{8} & \textbf{152} & \textbf{65} & \textbf{4} \\
    \textcolor[rgb]{ 0,  .439,  .753}{\textbf{Num}} & 3500  & 3498  & 3485  & 3476  & 3397  & 3384  & 3353  & 3259  & 3241  & 3217  & 3200  & 3173  & 3169  & 3129  & 3082  & 3023  & 3016  & 3007  & 2982 \\
    \midrule
    \textcolor[rgb]{ 0,  .439,  .753}{\textbf{Class ID}} & \textbf{17} & \textbf{53} & \textbf{87} & \textbf{98} & \textbf{69} & \textbf{82} & \textbf{55} & \textbf{135} & \textbf{140} & \textbf{149} & \textbf{108} & \textbf{113} & \textbf{81} & \textbf{35} & \textbf{156} & \textbf{23} & \textbf{115} & \textbf{34} & \textbf{40} \\
    \textcolor[rgb]{ 0,  .439,  .753}{\textbf{Num}} & 2931  & 2925  & 2911  & 2909  & 2877  & 2813  & 2741  & 2720  & 2703  & 2667  & 2659  & 2613  & 2598  & 2585  & 2558  & 2544  & 2498  & 2494  & 2478 \\
    \midrule
    \textcolor[rgb]{ 0,  .439,  .753}{\textbf{Class ID}} & \textbf{166} & \textbf{27} & \textbf{28} & \textbf{72} & \textbf{162} & \textbf{136} & \textbf{168} & \textbf{33} & \textbf{29} & \textbf{20} & \textbf{46} & \textbf{110} & \textbf{66} & \textbf{77} & \textbf{134} & \textbf{48} & \textbf{163} & \textbf{158} & \textbf{132} \\
    \textcolor[rgb]{ 0,  .439,  .753}{\textbf{Num}} & 2453  & 2401  & 2387  & 2360  & 2357  & 2313  & 2297  & 2260  & 2162  & 2139  & 2130  & 2112  & 2100  & 2087  & 2068  & 2064  & 2020  & 2016  & 2009 \\
    \midrule
    \textcolor[rgb]{ 0,  .439,  .753}{\textbf{Class ID}} & \textbf{146} & \textbf{129} & \textbf{19} & \textbf{22} & \textbf{150} & \textbf{64} & \textbf{11} & \textbf{50} & \textbf{10} & \textbf{83} & \textbf{31} & \textbf{49} & \textbf{68} & \textbf{18} & \textbf{51} & \textbf{47} & \textbf{154} & \textbf{54} & \textbf{125} \\
    \textcolor[rgb]{ 0,  .439,  .753}{\textbf{Num}} & 1998  & 1986  & 1962  & 1916  & 1828  & 1822  & 1732  & 1725  & 1711  & 1676  & 1652  & 1597  & 1536  & 1522  & 1509  & 1487  & 1486  & 1411  & 1405 \\
    \midrule
    \textcolor[rgb]{ 0,  .439,  .753}{\textbf{Class ID}} & \textbf{151} & \textbf{126} & \textbf{104} & \textbf{52} & \textbf{96} & \textbf{102} & \textbf{21} & \textbf{76} & \textbf{79} & \textbf{148} & \textbf{12} & \textbf{124} & \textbf{119} & \textbf{141} & \textbf{91} & \textbf{122} & \textbf{70} & \textbf{78} & \textbf{167} \\
    \textcolor[rgb]{ 0,  .439,  .753}{\textbf{Num}} & 1385  & 1362  & 1259  & 1134  & 1004  & 1002  & 959   & 919   & 846   & 749   & 703   & 659   & 559   & 477   & 351   & 256   & 217   & 188   & 121 \\
    \bottomrule
    \end{tabular}%
    }
  \label{tab: num_image_coco}%
\end{table}%

The results indicate that images containing tail class pixels are very limited. Specifically, in the ADE20K dataset, there are only $41$ images in the training set of $20210$ images that contain pixels from the tail class with ID $97$. As a result, tail class images are repeatedly sampled when using stratified sampling, leading to overfitting on such repeated images.

Next, we trained on the ADE20K dataset using the improved version of the stratified sampling method. The results show that, compared to using the Tail-class Memory Bank, the performance on tail classes dropped by over $3\%$ due to heavy sample repetition in the batch. However, our T-Memory Bank, with its random pasting technique, diversifies the backgrounds of the tail classes, enabling the model to better learn the features of these tail classes.

\subsection{Discussion on Why the T-Memory Bank Works}
\label{subsec: Discussion on Why the T-Memory Bank Works}

In this section, we discuss why the primary function of the T-Memory Bank is not to enhance the diversity of tail samples.

As shown in \Cref{tab: effectiveness_AUC_T-Memory_Bank}, using a memory bank does indeed increase the diversity of tail classes as an implicit form of augmentation (comparing the rows for SegNeXt and SegNeXt+TMB in the table). However, we cannot rely solely on the bank to fully address the long-tail issue, as the bank's capacity is always limited. This is why we also need to consider the problem from the perspective of the loss function. We find that the AUC loss focuses only on the ranking loss between positive and negative samples and is not sensitive to data distribution, fundamentally avoiding the risk of underfitting caused by insufficient training samples. \textbf{We believe that the use of the T-Memory Bank is intended to both facilitate the effectiveness of the AUC loss and enhance the diversity of tail samples.}

\subsection{Discussion on AUC and Contrastive Learning from the Perspective of the Loss Function}
\label{subsec: Discussion on AUC and Contrastive Learning from the Perspective of the Loss Function}

In this section, we compare AUC and contrastive learning from the perspective of loss functions.

\begin{thm}[Comparison between AUC Loss and Contrastive Loss]
Minimizing the weighted contrastive loss approximately corresponds to minimizing an upper bound of the logistic AUC loss:
\begin{equation*}
\sum_{i}w_i\left[-\log\frac{e^{f(x^i)}}{e^{f(x^i)}+\sum_{j\neq i}w_je^{f(x^j)}}\right]\geq\sum_{i}\sum_{j \neq i}\frac{1}{n_in_j}\left[-\log\left(\frac{1}{1+e^{f(x^j)-f(x^i)}}\right)\right],
\end{equation*}
where, $w_j=\frac{1/n_j}{\sum_{k\neq i}1/n_k}$ and $w_i=\frac{\sum_{k\neq i}1/n_k}{n_i}$.
\label{thm: Comparison between AUC Loss and Contrastive Loss}
\end{thm}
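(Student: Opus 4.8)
The plan is to prove the inequality \emph{termwise} in the anchor index $i$, reducing the whole statement to a single application of Jensen's inequality. First I would simplify the left-hand side. Writing $s_i := f(x^i)$ and abbreviating $Z_i := \sum_{k \neq i} 1/n_k$, the per-anchor contrastive term rearranges as
\[
-\log\frac{e^{s_i}}{e^{s_i}+\sum_{j\neq i}w_j e^{s_j}} = \log\left(1 + \sum_{j\neq i} w_j\, e^{s_j - s_i}\right),
\]
where $w_j = (1/n_j)/Z_i$ for $j \neq i$. The key structural observation is that these weights satisfy $\sum_{j \neq i} w_j = 1$, so $(w_j)_{j\neq i}$ is a probability distribution over the non-anchor indices. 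Multiplying by $w_i = Z_i / n_i$ and rewriting the right-hand side via $1/n_j = Z_i w_j$, the claimed global inequality splits into a sum over $i$ of the per-anchor statement
\[
\frac{Z_i}{n_i}\log\left(1 + \sum_{j\neq i} w_j\, e^{s_j - s_i}\right) \geq \frac{Z_i}{n_i}\sum_{j\neq i} w_j \log\left(1 + e^{s_j - s_i}\right).
\]

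Second, after cancelling the common positive factor $Z_i/n_i$, it suffices to establish
\[
\log\left(1 + \sum_{j\neq i} w_j\, t_j\right) \geq \sum_{j\neq i} w_j \log\left(1 + t_j\right), \qquad t_j := e^{s_j - s_i} \geq 0.
\]
Since $t \mapsto -\log(1+t)$ is convex on $[0,\infty)$ (its second derivative is $(1+t)^{-2} > 0$), this is exactly Jensen's inequality (Lemma \ref{lem: Jensen's Inequality}) applied to the convex function $\varphi(t) = -\log(1+t)$ and the random variable taking value $t_j$ with probability $w_j$, after negating both sides. Summing the per-anchor inequalities over all $i$ then recovers the theorem, and the ``approximately'' in the statement reflects that the gap between the two losses is precisely this Jensen gap.

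The calculation is essentially routine; the only point requiring care is bookkeeping the $i$-dependence of the normalizer $Z_i$, so that the weights $w_j$ genuinely form a probability distribution and the factor $1/n_j$ on the right-hand side decomposes cleanly as $Z_i w_j$, which is what makes the $Z_i/n_i$ factors match and cancel. The conceptual content is recognizing that the LogSumExp structure of the contrastive denominator, once normalized by the choice of $w_i$ and $w_j$, turns the difference between the two losses into a Jensen gap for $\log(1+\cdot)$; beyond this identification I expect no genuine obstacle.
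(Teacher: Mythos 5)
Your proof is correct and follows essentially the same route as the paper's: both rest on the weight factorization $\frac{1}{n_i n_j}=w_i w_j$ together with $\sum_{j\neq i}w_j=1$, followed by a single application of Jensen's inequality to the logarithm. Your per-anchor normalization by $e^{s_i}$ (convexity of $-\log(1+t)$ applied to $t_j=e^{s_j-s_i}$) is just a rescaled form of the paper's step bounding $\sum_{j\neq i}w_j\log\left(e^{s_i}+e^{s_j}\right)\leq\log\left(e^{s_i}+\sum_{j\neq i}w_je^{s_j}\right)$, so the two arguments coincide.
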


\begin{proof}
For the AUC loss under the logistic surrogate loss function:
\begin{equation*}
\begin{aligned}
\ell_{auc}^{logistic}&=\ell_{logistic}\left(f(x^+)-f(x^-)\right)\\
&=\sum_{i}\sum_{j \neq i}\frac{1}{n_in_j}\left[-\log\left(\frac{1}{1+e^{f(x^j)-f(x^i)}}\right)\right]\\
&=\sum_{i}\sum_{j \neq i}w_iw_j\left[-\log\left(\frac{1}{1+e^{f(x^j)-f(x^i)}}\right)\right]\\
&=\sum_{i}w_i\sum_{j\neq i}w_j\left[-\log(e^{f(x^i)})+\log(e^{f(x^i)}+e^{f(x^j)})\right]\\
&=\sum_{i}w_i\left[-\log(e^{f(x^i)})+\sum_{j\neq i}w_j\log(e^{f(x^i)}+e^{f(x^j)})\right]\\
&\leq \sum_{i}w_i\left[-\log(e^{f(x^i)})+\log\left(\sum_{j\neq i}w_j(e^{f(x^i)}+e^{f(x^j)})\right)\right]\\
&=\sum_{i}w_i\left[-\log(e^{f(x^i)})+\log\left(e^{f(x^i)}+\sum_{j\neq i}w_je^{f(x^j)}\right)\right]\\
&=\sum_{i}w_i\left[-\log\frac{e^{f(x^i)}}{e^{f(x^i)}+\sum_{j\neq i}w_je^{f(x^j)}}\right]
\end{aligned}
\end{equation*}
where, $w_j=\frac{1/n_in_j}{\sum_{k\neq i}1/n_in_k}=\frac{1/n_j}{\sum_{k\neq i}1/n_k}$ and $w_i=\frac{\sum_{k\neq i}1/n_k}{n_i}$.

This completed the proof.
\end{proof}

The theorem indicates that minimizing a \textbf{weighted version} of contrastive loss can implicitly optimize the ovo logistic AUC loss. This paper adopts a more general form of AUC loss, in which various surrogate loss functions are explored.

\newpage
\section{Additional Experimental Settings}
\label{sec: Additional Experiment Settings}
In this section, we make a supplementation to \Cref{subsec: Setup}.

\subsection{Datasets}
\label{subsec: datasets}
We use three datasets in our experiments: Cityscapes, ADE20K, and COCO-Stuff 164K.

{\bf Cityscapes}~\cite{cordts2016cityscapes} is a dataset of urban road traffic scenes, with each image sized at $1024 \times 2048$ pixels. It consists of $5000$ images with pixel-level annotations across $19$ classes. The training, validation, and testing set numbers are $2975$, $500$, and $1525$, respectively.

{\bf ADE20K}~\cite{zhou2017scene} is a benchmark for scene parsing with $150$ class labels. It includes over $25000$ images, with $20210$, $2000$, and $3352$ images used for training, validation, and testing, respectively.

{\bf COCO-Stuff 164K}~\cite{caesar2018coco} is a large-scale dataset with $164K$ images. It is finely annotated across $171$ classes.

We split the three datasets into head class, middle class, and tail class based on the proportion of pixels for each class in the training set. \Cref{tab: Cityscapes_Dataset_Partition}, \Cref{tab: ADE20K_Dataset_Partition}, and \Cref{tab: COCO-Stuff_164K_Dataset_Partition} provide the details of these partitions.

\subsection{Implementation Details}
\label{subsec: Implementation Details}

{\bf Network Architecture.} We perform all experiments using mmsegmentation~\cite{mmseg2020} on an NVIDIA 3090 GPU. For our model, we use SegNeXt~\cite{guo2022segnext} as the backbone and pretrain all encoders on the ImageNet-1K~\cite{deng2009imagenet} dataset.

{\bf Data Augmentation.} For Cityscapes, we resize the images to $1024 \times 2048$, randomly crop them to $1024 \times 1024$, and then apply random horizontal flips. For ADE20K and COCO-Stuff 164K, the resizing is set to $512 \times 2048$, random cropping is done at $512 \times 512$, and random horizontal flips are applied as well.

{\bf Training Strategy.} We use \textit{Adam with Weight Decay (AdamW)}~\cite{loshchilov2018fixing} optimizer with an initial learning rate of $6e\text{-}5$ and a weight decay of $0.01$. We adopt the `poly' learning rate policy, where the initial learning rate is multiplied by $1-\frac{iter}{max\_iter}$. Moreover, a `linear' warmup strategy is employed at the beginning of training, allowing the learning rate to increase from $1e\text{-}6$ to the initial learning rate within $1500$ iterations. The batch size is set to $2$ for the Cityscapes dataset and $4$ for all the other datasets. The total number of iterations is $160000$ on Cityscapes and ADE20K and $80000$ on COCO-Stuff 164K.

{\bf Evaluation Metrics.} Following the setup outlined by SegNeXt~\cite{guo2022segnext}, we conduct experiments using the \textit{mean of Intersection over Union (mIoU)} as the evaluation metric on the validation set.

\subsection{Competitors}
\label{subsec: competitors}

Here we give a more detailed summary of the competitors mentioned in the experiments.

We compared our method with $13$ recent advancements and $6$ long-tail approaches in semantic segmentation. The recent advancements include DeepLabV3+, EncNet, FastFCN, EMANet, DANet, HRNet, OCRNet, DNLNet, PointRend, BiSeNetV2, ISANet, STDC, and SegNeXt. The long-tail methods are VS, LA, LDAM, Focal Loss, DisAlign, and BLV, all based on SegNeXt. \textbf{To ensure fairness, we re-implement the listed methods using their publicly shared code and test them on the same hardware.}

For the semantic segmentation methods:

\textbf{DeepLabV3+}~\cite{deeplabv3plus2018} combines the advantages of the spatial pyramid pooling module and the encoder-decoder structure. It explores the Xception model and applies depthwise separable convolution to both the atrous spatial pyramid pooling and decoder modules, resulting in a faster and more robust encoder-decoder network.

\textbf{EncNet}~\cite{Zhang_2018_CVPR} enhances semantic segmentation by utilizing a context encoding module that captures global contextual information to aid in the accurate segmentation of complex scenes.

\begin{table}[htbp]
  \centering
    \caption{\textbf{Cityscapes} Dataset Partition Status. The first column represents the head class, the second column represents the middle class, and the third column represents the tail class.}
  \resizebox{\linewidth}{!}{
    \begin{tabular}{cc|cc|cc}
    \toprule
    {\color[HTML]{333333} \textbf{Label}} & {\color[HTML]{333333} \textbf{Pixel Ratio}} & {\color[HTML]{333333} \textbf{Label}} & {\color[HTML]{333333} \textbf{Pixel Ratio}} & {\color[HTML]{333333} \textbf{Label}} & {\color[HTML]{333333} \textbf{Pixel Ratio}} \\
    \midrule
    \cellcolor[HTML]{F1C196}{\color[HTML]{333333} road} & \cellcolor[HTML]{F1C196}{\color[HTML]{333333} 46.27\%} & \cellcolor[HTML]{9FCBDA}{\color[HTML]{333333} fence} & \cellcolor[HTML]{9FCBDA}{\color[HTML]{333333} 0.85\%} & \cellcolor[HTML]{C8D6A1}{\color[HTML]{333333} truck} & \cellcolor[HTML]{C8D6A1}{\color[HTML]{333333} 0.24\%} \\
    \cellcolor[HTML]{F3C9A4}{\color[HTML]{333333} building} & \cellcolor[HTML]{F3C9A4}{\color[HTML]{333333} 20.62\%} & \cellcolor[HTML]{ABD1DE}{\color[HTML]{333333} person} & \cellcolor[HTML]{ABD1DE}{\color[HTML]{333333} 0.83\%} & \cellcolor[HTML]{CEDBAC}{\color[HTML]{333333} bicycle} & \cellcolor[HTML]{CEDBAC}{\color[HTML]{333333} 0.24\%} \\
    \cellcolor[HTML]{F5D2B2}{\color[HTML]{333333} vegetation} & \cellcolor[HTML]{F5D2B2}{\color[HTML]{333333} 13.40\%} & \cellcolor[HTML]{B8D8E3}{\color[HTML]{333333} terrain} & \cellcolor[HTML]{B8D8E3}{\color[HTML]{333333} 0.61\%} & \cellcolor[HTML]{D5E0B7}{\color[HTML]{333333} bus} & \cellcolor[HTML]{D5E0B7}{\color[HTML]{333333} 0.24\%} \\
    \cellcolor[HTML]{F7DAC0}{\color[HTML]{333333} car} & \cellcolor[HTML]{F7DAC0}{\color[HTML]{333333} 6.62\%} & \cellcolor[HTML]{C4DFE8}{\color[HTML]{333333} pole} & \cellcolor[HTML]{C4DFE8}{\color[HTML]{333333} 0.60\%} & \cellcolor[HTML]{DCE5C2}{\color[HTML]{333333} train} & \cellcolor[HTML]{DCE5C2}{\color[HTML]{333333} 0.21\%} \\
    \cellcolor[HTML]{F9E3CE}{\color[HTML]{333333} sidewalk} & \cellcolor[HTML]{F9E3CE}{\color[HTML]{333333} 4.42\%} & \cellcolor[HTML]{D1E6ED}{\color[HTML]{333333} wall} & \cellcolor[HTML]{D1E6ED}{\color[HTML]{333333} 0.57\%} & \cellcolor[HTML]{E2EACD}{\color[HTML]{333333} traffic light} & \cellcolor[HTML]{E2EACD}{\color[HTML]{333333} 0.11\%} \\
    \cellcolor[HTML]{FBECDD}{\color[HTML]{333333} sky} & \cellcolor[HTML]{FBECDD}{\color[HTML]{333333} 3.63\%} & \cellcolor[HTML]{DEEDF2}{\color[HTML]{333333} traffic sign} & \cellcolor[HTML]{DEEDF2}{\color[HTML]{333333} 0.40\%} & \cellcolor[HTML]{E9EFD8}{\color[HTML]{333333} rider} & \cellcolor[HTML]{E9EFD8}{\color[HTML]{333333} 0.08\%} \\
     &  &  &  & \cellcolor[HTML]{F0F4E4}{\color[HTML]{333333} motorcycle} & \cellcolor[HTML]{F0F4E4}{\color[HTML]{333333} 0.06\%}\\
     \bottomrule
    \end{tabular}
    }
  \label{tab: Cityscapes_Dataset_Partition}%
\end{table}

\begin{table*}[htbp]
  \centering
    \caption{\textbf{ADE20K} Dataset Partition Status. The first column represents the head class, the second column represents the middle class, and the third to sixth columns represent the tail class.}
  \resizebox{\linewidth}{!}{
    \begin{tabular}{cc|cc|cc|cc|cc|cc}
    \toprule
    {\color[HTML]{333333} \textbf{Label}} & {\color[HTML]{333333} \textbf{Pixel Ratio}} & {\color[HTML]{333333} \textbf{Label}} & {\color[HTML]{333333} \textbf{Pixel Ratio}} & {\color[HTML]{333333} \textbf{Label}} & {\color[HTML]{333333} \textbf{Pixel Ratio}} & {\color[HTML]{333333} \textbf{Label}} & {\color[HTML]{333333} \textbf{Pixel Ratio}} & {\color[HTML]{333333} \textbf{Label}} & {\color[HTML]{333333} \textbf{Pixel Ratio}} & {\color[HTML]{333333} \textbf{Label}} & {\color[HTML]{333333} \textbf{Pixel Ratio}} \\
    \midrule
    \cellcolor[HTML]{F1C196}{\color[HTML]{1F2328} wall} & \cellcolor[HTML]{F1C196}16.93\% & \cellcolor[HTML]{9FCBDA}{\color[HTML]{1F2328} grass} & \cellcolor[HTML]{9FCBDA}1.95\% & \cellcolor[HTML]{C8D6A1}{\color[HTML]{1F2328} sea} & \cellcolor[HTML]{C8D6A1}0.59\% & \cellcolor[HTML]{D2DDB1}{\color[HTML]{1F2328} runway} & \cellcolor[HTML]{D2DDB1}0.17\% & \cellcolor[HTML]{DCE5C2}{\color[HTML]{1F2328} streetlight} & \cellcolor[HTML]{DCE5C2}0.08\% & \cellcolor[HTML]{E6ECD3}{\color[HTML]{1F2328} bag} & \cellcolor[HTML]{E6ECD3}0.05\% \\
    \cellcolor[HTML]{F2C69E}{\color[HTML]{1F2328} building} & \cellcolor[HTML]{F2C69E}11.56\% & \cellcolor[HTML]{A2CDDB}{\color[HTML]{1F2328} cabinet} & \cellcolor[HTML]{A2CDDB}1.95\% & \cellcolor[HTML]{C8D6A1}{\color[HTML]{1F2328} mirror} & \cellcolor[HTML]{C8D6A1}0.57\% & \cellcolor[HTML]{D2DDB2}{\color[HTML]{1F2328} stairway} & \cellcolor[HTML]{D2DDB2}0.17\% & \cellcolor[HTML]{DCE5C3}{\color[HTML]{1F2328} airplane} & \cellcolor[HTML]{DCE5C3}0.08\% & \cellcolor[HTML]{E6ECD4}{\color[HTML]{1F2328} step} & \cellcolor[HTML]{E6ECD4}0.05\% \\
    \cellcolor[HTML]{F3CBA7}{\color[HTML]{1F2328} sky} & \cellcolor[HTML]{F3CBA7}9.52\% & \cellcolor[HTML]{A6CFDD}{\color[HTML]{1F2328} sidewalk} & \cellcolor[HTML]{A6CFDD}1.80\% & \cellcolor[HTML]{C8D6A2}{\color[HTML]{1F2328} seat} & \cellcolor[HTML]{C8D6A2}0.49\% & \cellcolor[HTML]{D2DEB2}{\color[HTML]{1F2328} river} & \cellcolor[HTML]{D2DEB2}0.17\% & \cellcolor[HTML]{DCE5C3}{\color[HTML]{1F2328} dirt} & \cellcolor[HTML]{DCE5C3}0.08\% & \cellcolor[HTML]{E6EDD4}{\color[HTML]{1F2328} bicycle} & \cellcolor[HTML]{E6EDD4}0.04\% \\
    \cellcolor[HTML]{F4D1B0}{\color[HTML]{1F2328} floor} & \cellcolor[HTML]{F4D1B0}6.66\% & \cellcolor[HTML]{AAD1DE}{\color[HTML]{1F2328} person} & \cellcolor[HTML]{AAD1DE}1.71\% & \cellcolor[HTML]{C8D6A2}{\color[HTML]{1F2328} rug} & \cellcolor[HTML]{C8D6A2}0.49\% & \cellcolor[HTML]{D3DEB3}{\color[HTML]{1F2328} screen} & \cellcolor[HTML]{D3DEB3}0.17\% & \cellcolor[HTML]{DDE5C4}{\color[HTML]{1F2328} television} & \cellcolor[HTML]{DDE5C4}0.08\% & \cellcolor[HTML]{E7EDD5}{\color[HTML]{1F2328} food} & \cellcolor[HTML]{E7EDD5}0.04\% \\
    \cellcolor[HTML]{F6D6B9}{\color[HTML]{1F2328} tree} & \cellcolor[HTML]{F6D6B9}5.21\% & \cellcolor[HTML]{AED3E0}{\color[HTML]{1F2328} earth} & \cellcolor[HTML]{AED3E0}1.61\% & \cellcolor[HTML]{C9D6A3}{\color[HTML]{1F2328} field} & \cellcolor[HTML]{C9D6A3}0.48\% & \cellcolor[HTML]{D3DEB4}{\color[HTML]{1F2328} bridge} & \cellcolor[HTML]{D3DEB4}0.16\% & \cellcolor[HTML]{DDE6C4}{\color[HTML]{1F2328} apparel} & \cellcolor[HTML]{DDE6C4}0.07\% & \cellcolor[HTML]{E7EDD5}{\color[HTML]{1F2328} trade} & \cellcolor[HTML]{E7EDD5}0.04\% \\
    \cellcolor[HTML]{F7DBC2}{\color[HTML]{1F2328} ceiling} & \cellcolor[HTML]{F7DBC2}4.86\% & \cellcolor[HTML]{B2D5E1}{\color[HTML]{1F2328} door} & \cellcolor[HTML]{B2D5E1}1.27\% & \cellcolor[HTML]{C9D7A3}{\color[HTML]{1F2328} armchair} & \cellcolor[HTML]{C9D7A3}0.48\% & \cellcolor[HTML]{D3DEB4}{\color[HTML]{1F2328} bookcase} & \cellcolor[HTML]{D3DEB4}0.16\% & \cellcolor[HTML]{DDE6C5}{\color[HTML]{1F2328} land} & \cellcolor[HTML]{DDE6C5}0.07\% & \cellcolor[HTML]{E7EDD6}{\color[HTML]{1F2328} dishwasher} & \cellcolor[HTML]{E7EDD6}0.04\% \\
    \cellcolor[HTML]{F8E1CB}{\color[HTML]{1F2328} road} & \cellcolor[HTML]{F8E1CB}4.29\% & \cellcolor[HTML]{B6D7E3}{\color[HTML]{1F2328} table} & \cellcolor[HTML]{B6D7E3}1.19\% & \cellcolor[HTML]{C9D7A4}{\color[HTML]{1F2328} fence} & \cellcolor[HTML]{C9D7A4}0.35\% & \cellcolor[HTML]{D4DFB5}{\color[HTML]{1F2328} flower} & \cellcolor[HTML]{D4DFB5}0.16\% & \cellcolor[HTML]{DEE6C6}{\color[HTML]{1F2328} bannister} & \cellcolor[HTML]{DEE6C6}0.07\% & \cellcolor[HTML]{E8EED6}{\color[HTML]{1F2328} tank} & \cellcolor[HTML]{E8EED6}0.04\% \\
    \cellcolor[HTML]{F9E6D4}{\color[HTML]{1F2328} bed} & \cellcolor[HTML]{F9E6D4}2.47\% & \cellcolor[HTML]{BAD9E4}{\color[HTML]{1F2328} mountain} & \cellcolor[HTML]{BAD9E4}1.17\% & \cellcolor[HTML]{CAD7A4}{\color[HTML]{1F2328} desk} & \cellcolor[HTML]{CAD7A4}0.34\% & \cellcolor[HTML]{D4DFB5}{\color[HTML]{1F2328} coffee} & \cellcolor[HTML]{D4DFB5}0.15\% & \cellcolor[HTML]{DEE6C6}{\color[HTML]{1F2328} pole} & \cellcolor[HTML]{DEE6C6}0.07\% & \cellcolor[HTML]{E8EED7}{\color[HTML]{1F2328} pot} & \cellcolor[HTML]{E8EED7}0.04\% \\
    \cellcolor[HTML]{FBECDD}{\color[HTML]{1F2328} windowpane} & \cellcolor[HTML]{FBECDD}2.15\% & \cellcolor[HTML]{BEDCE6}{\color[HTML]{1F2328} curtain} & \cellcolor[HTML]{BEDCE6}1.12\% & \cellcolor[HTML]{CAD7A5}{\color[HTML]{1F2328} wardrobe} & \cellcolor[HTML]{CAD7A5}0.32\% & \cellcolor[HTML]{D4DFB6}{\color[HTML]{1F2328} toilet} & \cellcolor[HTML]{D4DFB6}0.15\% & \cellcolor[HTML]{DEE7C7}{\color[HTML]{1F2328} bottle} & \cellcolor[HTML]{DEE7C7}0.07\% & \cellcolor[HTML]{E8EED8}{\color[HTML]{1F2328} sculpture} & \cellcolor[HTML]{E8EED8}0.04\% \\
     &  & \cellcolor[HTML]{C2DEE7}{\color[HTML]{1F2328} chair} & \cellcolor[HTML]{C2DEE7}1.11\% & \cellcolor[HTML]{CAD8A5}{\color[HTML]{1F2328} rock} & \cellcolor[HTML]{CAD8A5}0.32\% & \cellcolor[HTML]{D5DFB6}{\color[HTML]{1F2328} hill} & \cellcolor[HTML]{D5DFB6}0.14\% & \cellcolor[HTML]{DFE7C7}{\color[HTML]{1F2328} stage} & \cellcolor[HTML]{DFE7C7}0.07\% & \cellcolor[HTML]{E9EED8}{\color[HTML]{1F2328} hood} & \cellcolor[HTML]{E9EED8}0.04\% \\
     &  & \cellcolor[HTML]{C6E0E9}{\color[HTML]{1F2328} plant} & \cellcolor[HTML]{C6E0E9}1.09\% & \cellcolor[HTML]{CBD8A6}{\color[HTML]{1F2328} lamp} & \cellcolor[HTML]{CBD8A6}0.28\% & \cellcolor[HTML]{D5E0B7}{\color[HTML]{1F2328} book} & \cellcolor[HTML]{D5E0B7}0.14\% & \cellcolor[HTML]{DFE7C8}{\color[HTML]{1F2328} ottoman} & \cellcolor[HTML]{DFE7C8}0.07\% & \cellcolor[HTML]{E9EFD9}{\color[HTML]{1F2328} vase} & \cellcolor[HTML]{E9EFD9}0.04\% \\
     &  & \cellcolor[HTML]{CAE2EA}{\color[HTML]{1F2328} car} & \cellcolor[HTML]{CAE2EA}1.07\% & \cellcolor[HTML]{CBD8A6}{\color[HTML]{1F2328} bathtub} & \cellcolor[HTML]{CBD8A6}0.26\% & \cellcolor[HTML]{D5E0B7}{\color[HTML]{1F2328} blind} & \cellcolor[HTML]{D5E0B7}0.14\% & \cellcolor[HTML]{DFE7C8}{\color[HTML]{1F2328} escalator} & \cellcolor[HTML]{DFE7C8}0.07\% & \cellcolor[HTML]{E9EFD9}{\color[HTML]{1F2328} lake} & \cellcolor[HTML]{E9EFD9}0.04\% \\
     &  & \cellcolor[HTML]{CEE4EC}{\color[HTML]{1F2328} water} & \cellcolor[HTML]{CEE4EC}0.79\% & \cellcolor[HTML]{CBD8A7}{\color[HTML]{1F2328} railing} & \cellcolor[HTML]{CBD8A7}0.25\% & \cellcolor[HTML]{D5E0B8}{\color[HTML]{1F2328} bench} & \cellcolor[HTML]{D5E0B8}0.14\% & \cellcolor[HTML]{E0E8C9}{\color[HTML]{1F2328} van} & \cellcolor[HTML]{E0E8C9}0.07\% & \cellcolor[HTML]{EAEFDA}{\color[HTML]{1F2328} screen} & \cellcolor[HTML]{EAEFDA}0.04\% \\
     &  & \cellcolor[HTML]{D2E6ED}{\color[HTML]{1F2328} painting} & \cellcolor[HTML]{D2E6ED}0.73\% & \cellcolor[HTML]{CCD9A8}{\color[HTML]{1F2328} base} & \cellcolor[HTML]{CCD9A8}0.25\% & \cellcolor[HTML]{D6E0B8}{\color[HTML]{1F2328} palm} & \cellcolor[HTML]{D6E0B8}0.13\% & \cellcolor[HTML]{E0E8C9}{\color[HTML]{1F2328} poster} & \cellcolor[HTML]{E0E8C9}0.07\% & \cellcolor[HTML]{EAEFDA}{\color[HTML]{1F2328} microwave} & \cellcolor[HTML]{EAEFDA}0.04\% \\
     &  & \cellcolor[HTML]{D6E8EF}{\color[HTML]{1F2328} sofa} & \cellcolor[HTML]{D6E8EF}0.71\% & \cellcolor[HTML]{CCD9A8}{\color[HTML]{1F2328} cushion} & \cellcolor[HTML]{CCD9A8}0.25\% & \cellcolor[HTML]{D6E0B9}{\color[HTML]{1F2328} countertop} & \cellcolor[HTML]{D6E0B9}0.13\% & \cellcolor[HTML]{E0E8CA}{\color[HTML]{1F2328} buffet} & \cellcolor[HTML]{E0E8CA}0.06\% & \cellcolor[HTML]{EAF0DB}{\color[HTML]{1F2328} sconce} & \cellcolor[HTML]{EAF0DB}0.04\% \\
     &  & \cellcolor[HTML]{DAEAF0}{\color[HTML]{1F2328} shelf} & \cellcolor[HTML]{DAEAF0}0.67\% & \cellcolor[HTML]{CCD9A9}{\color[HTML]{1F2328} box} & \cellcolor[HTML]{CCD9A9}0.23\% & \cellcolor[HTML]{D6E1BA}{\color[HTML]{1F2328} kitchen} & \cellcolor[HTML]{D6E1BA}0.13\% & \cellcolor[HTML]{E1E8CA}{\color[HTML]{1F2328} ship} & \cellcolor[HTML]{E1E8CA}0.06\% & \cellcolor[HTML]{EBF0DB}{\color[HTML]{1F2328} animal} & \cellcolor[HTML]{EBF0DB}0.04\% \\
     &  & \cellcolor[HTML]{DEEDF2}{\color[HTML]{1F2328} house} & \cellcolor[HTML]{DEEDF2}0.65\% & \cellcolor[HTML]{CDD9A9}{\color[HTML]{1F2328} column} & \cellcolor[HTML]{CDD9A9}0.23\% & \cellcolor[HTML]{D7E1BA}{\color[HTML]{1F2328} stove} & \cellcolor[HTML]{D7E1BA}0.13\% & \cellcolor[HTML]{E1E9CB}{\color[HTML]{1F2328} plaything} & \cellcolor[HTML]{E1E9CB}0.06\% & \cellcolor[HTML]{EBF0DC}{\color[HTML]{1F2328} tray} & \cellcolor[HTML]{EBF0DC}0.04\% \\
     &  &  &  & \cellcolor[HTML]{CDDAAA}{\color[HTML]{1F2328} signboard} & \cellcolor[HTML]{CDDAAA}0.22\% & \cellcolor[HTML]{D7E1BB}{\color[HTML]{1F2328} swivel} & \cellcolor[HTML]{D7E1BB}0.11\% & \cellcolor[HTML]{E1E9CC}{\color[HTML]{1F2328} barrel} & \cellcolor[HTML]{E1E9CC}0.06\% & \cellcolor[HTML]{EBF0DC}{\color[HTML]{1F2328} blanket} & \cellcolor[HTML]{EBF0DC}0.04\% \\
     &  &  &  & \cellcolor[HTML]{CDDAAA}{\color[HTML]{1F2328} chest} & \cellcolor[HTML]{CDDAAA}0.21\% & \cellcolor[HTML]{D7E1BB}{\color[HTML]{1F2328} computer} & \cellcolor[HTML]{D7E1BB}0.11\% & \cellcolor[HTML]{E2E9CC}{\color[HTML]{1F2328} conveyer} & \cellcolor[HTML]{E2E9CC}0.06\% & \cellcolor[HTML]{ECF1DD}{\color[HTML]{1F2328} traffic} & \cellcolor[HTML]{ECF1DD}0.04\% \\
     &  &  &  & \cellcolor[HTML]{CEDAAB}{\color[HTML]{1F2328} counter} & \cellcolor[HTML]{CEDAAB}0.20\% & \cellcolor[HTML]{D8E2BC}{\color[HTML]{1F2328} boat} & \cellcolor[HTML]{D8E2BC}0.10\% & \cellcolor[HTML]{E2E9CD}{\color[HTML]{1F2328} fountain} & \cellcolor[HTML]{E2E9CD}0.06\% & \cellcolor[HTML]{ECF1DE}{\color[HTML]{1F2328} pier} & \cellcolor[HTML]{ECF1DE}0.04\% \\
     &  &  &  & \cellcolor[HTML]{CEDAAB}{\color[HTML]{1F2328} grandstand} & \cellcolor[HTML]{CEDAAB}0.20\% & \cellcolor[HTML]{D8E2BC}{\color[HTML]{1F2328} arcade} & \cellcolor[HTML]{D8E2BC}0.10\% & \cellcolor[HTML]{E2EACD}{\color[HTML]{1F2328} swimming} & \cellcolor[HTML]{E2EACD}0.06\% & \cellcolor[HTML]{ECF1DE}{\color[HTML]{1F2328} shower} & \cellcolor[HTML]{ECF1DE}0.03\% \\
     &  &  &  & \cellcolor[HTML]{CEDBAC}{\color[HTML]{1F2328} sink} & \cellcolor[HTML]{CEDBAC}0.20\% & \cellcolor[HTML]{D8E2BD}{\color[HTML]{1F2328} hovel} & \cellcolor[HTML]{D8E2BD}0.09\% & \cellcolor[HTML]{E2EACE}{\color[HTML]{1F2328} stool} & \cellcolor[HTML]{E2EACE}0.06\% & \cellcolor[HTML]{EDF1DF}{\color[HTML]{1F2328} crt} & \cellcolor[HTML]{EDF1DF}0.03\% \\
     &  &  &  & \cellcolor[HTML]{CFDBAC}{\color[HTML]{1F2328} sand} & \cellcolor[HTML]{CFDBAC}0.20\% & \cellcolor[HTML]{D9E2BD}{\color[HTML]{1F2328} bus} & \cellcolor[HTML]{D9E2BD}0.09\% & \cellcolor[HTML]{E3EACE}{\color[HTML]{1F2328} canopy} & \cellcolor[HTML]{E3EACE}0.06\% & \cellcolor[HTML]{EDF2DF}{\color[HTML]{1F2328} fan} & \cellcolor[HTML]{EDF2DF}0.03\% \\
     &  &  &  & \cellcolor[HTML]{CFDBAD}{\color[HTML]{1F2328} fireplace} & \cellcolor[HTML]{CFDBAD}0.19\% & \cellcolor[HTML]{D9E3BE}{\color[HTML]{1F2328} bar} & \cellcolor[HTML]{D9E3BE}0.09\% & \cellcolor[HTML]{E3EACF}{\color[HTML]{1F2328} ball} & \cellcolor[HTML]{E3EACF}0.05\% & \cellcolor[HTML]{EDF2E0}{\color[HTML]{1F2328} ashcan} & \cellcolor[HTML]{EDF2E0}0.03\% \\
     &  &  &  & \cellcolor[HTML]{CFDBAE}{\color[HTML]{1F2328} refrigerator} & \cellcolor[HTML]{CFDBAE}0.19\% & \cellcolor[HTML]{D9E3BE}{\color[HTML]{1F2328} towel} & \cellcolor[HTML]{D9E3BE}0.09\% & \cellcolor[HTML]{E3EACF}{\color[HTML]{1F2328} waterfall} & \cellcolor[HTML]{E3EACF}0.05\% & \cellcolor[HTML]{EEF2E0}{\color[HTML]{1F2328} bulletin} & \cellcolor[HTML]{EEF2E0}0.03\% \\
     &  &  &  & \cellcolor[HTML]{D0DCAE}{\color[HTML]{1F2328} skyscraper} & \cellcolor[HTML]{D0DCAE}0.19\% & \cellcolor[HTML]{DAE3BF}{\color[HTML]{1F2328} truck} & \cellcolor[HTML]{DAE3BF}0.09\% & \cellcolor[HTML]{E4EBD0}{\color[HTML]{1F2328} washer} & \cellcolor[HTML]{E4EBD0}0.05\% & \cellcolor[HTML]{EEF2E1}{\color[HTML]{1F2328} plate} & \cellcolor[HTML]{EEF2E1}0.03\% \\
     &  &  &  & \cellcolor[HTML]{D0DCAF}{\color[HTML]{1F2328} path} & \cellcolor[HTML]{D0DCAF}0.19\% & \cellcolor[HTML]{DAE3C0}{\color[HTML]{1F2328} light} & \cellcolor[HTML]{DAE3C0}0.09\% & \cellcolor[HTML]{E4EBD0}{\color[HTML]{1F2328} oven} & \cellcolor[HTML]{E4EBD0}0.05\% & \cellcolor[HTML]{EEF3E1}{\color[HTML]{1F2328} monitor} & \cellcolor[HTML]{EEF3E1}0.03\% \\
     &  &  &  & \cellcolor[HTML]{D0DCAF}{\color[HTML]{1F2328} case} & \cellcolor[HTML]{D0DCAF}0.18\% & \cellcolor[HTML]{DAE4C0}{\color[HTML]{1F2328} tower} & \cellcolor[HTML]{DAE4C0}0.08\% & \cellcolor[HTML]{E4EBD1}{\color[HTML]{1F2328} minibike} & \cellcolor[HTML]{E4EBD1}0.05\% & \cellcolor[HTML]{EFF3E2}{\color[HTML]{1F2328} radiator} & \cellcolor[HTML]{EFF3E2}0.03\% \\
     &  &  &  & \cellcolor[HTML]{D1DCB0}{\color[HTML]{1F2328} pool} & \cellcolor[HTML]{D1DCB0}0.18\% & \cellcolor[HTML]{DBE4C1}{\color[HTML]{1F2328} awning} & \cellcolor[HTML]{DBE4C1}0.08\% & \cellcolor[HTML]{E5EBD2}{\color[HTML]{1F2328} basket} & \cellcolor[HTML]{E5EBD2}0.05\% & \cellcolor[HTML]{EFF3E2}{\color[HTML]{1F2328} clock} & \cellcolor[HTML]{EFF3E2}0.02\% \\
     &  &  &  & \cellcolor[HTML]{D1DDB0}{\color[HTML]{1F2328} stairs} & \cellcolor[HTML]{D1DDB0}0.18\% & \cellcolor[HTML]{DBE4C1}{\color[HTML]{1F2328} chandelier} & \cellcolor[HTML]{DBE4C1}0.08\% & \cellcolor[HTML]{E5ECD2}{\color[HTML]{1F2328} tent} & \cellcolor[HTML]{E5ECD2}0.05\% & \cellcolor[HTML]{EFF3E3}{\color[HTML]{1F2328} glass} & \cellcolor[HTML]{EFF3E3}0.02\% \\
     &  &  &  & \cellcolor[HTML]{D1DDB1}{\color[HTML]{1F2328} pillow} & \cellcolor[HTML]{D1DDB1}0.18\% & \cellcolor[HTML]{DBE4C2}{\color[HTML]{1F2328} booth} & \cellcolor[HTML]{DBE4C2}0.08\% & \cellcolor[HTML]{E5ECD3}{\color[HTML]{1F2328} cradle} & \cellcolor[HTML]{E5ECD3}0.05\% & \cellcolor[HTML]{F0F4E4}{\color[HTML]{1F2328} flag} & \cellcolor[HTML]{F0F4E4}0.02\%\\
     \bottomrule
    \end{tabular}
    }
  \label{tab: ADE20K_Dataset_Partition}%
\end{table*}

\begin{table*}[htbp]
  \centering
    \caption{\textbf{COCO-Stuff 164K} Dataset Partition Status. The first column represents the head class, the second column represents the middle class, and the third to sixth columns represent the tail class.}
  \resizebox{\linewidth}{!}{
    \begin{tabular}{cc|cc|cc|cc|cc|cc}
    \toprule
    {\color[HTML]{333333} \textbf{Label}} & {\color[HTML]{333333} \textbf{Pixel Ratio}} & {\color[HTML]{333333} \textbf{Label}} & {\color[HTML]{333333} \textbf{Pixel Ratio}} & {\color[HTML]{333333} \textbf{Label}} & {\color[HTML]{333333} \textbf{Pixel Ratio}} & {\color[HTML]{333333} \textbf{Label}} & {\color[HTML]{333333} \textbf{Pixel Ratio}} & {\color[HTML]{333333} \textbf{Label}} & {\color[HTML]{333333} \textbf{Pixel Ratio}} & {\color[HTML]{333333} \textbf{Label}} & {\color[HTML]{333333} \textbf{Pixel Ratio}} \\
    \midrule
    \cellcolor[HTML]{F1C196}{\color[HTML]{1F2328} person} & \cellcolor[HTML]{F1C196}9.01\% & \cellcolor[HTML]{9FCBDA}{\color[HTML]{1F2328} fence} & \cellcolor[HTML]{9FCBDA}0.94\% & \cellcolor[HTML]{C8D6A1}{\color[HTML]{1F2328} motorcycle} & \cellcolor[HTML]{C8D6A1}0.49\% & \cellcolor[HTML]{D2DDB2}{\color[HTML]{1F2328} shelf} & \cellcolor[HTML]{D2DDB2}0.28\% & \cellcolor[HTML]{DCE5C3}{\color[HTML]{1F2328} floor-stone} & \cellcolor[HTML]{DCE5C3}0.16\% & \cellcolor[HTML]{E6ECD4}{\color[HTML]{1F2328} carrot} & \cellcolor[HTML]{E6ECD4}0.07\% \\
    \cellcolor[HTML]{F1C399}{\color[HTML]{1F2328} sky-other} & \cellcolor[HTML]{F1C399}6.24\% & \cellcolor[HTML]{A1CCDA}{\color[HTML]{1F2328} ceiling-other} & \cellcolor[HTML]{A1CCDA}0.93\% & \cellcolor[HTML]{C8D6A1}{\color[HTML]{1F2328} elephant} & \cellcolor[HTML]{C8D6A1}0.49\% & \cellcolor[HTML]{D2DDB2}{\color[HTML]{1F2328} leaves} & \cellcolor[HTML]{D2DDB2}0.28\% & \cellcolor[HTML]{DCE5C3}{\color[HTML]{1F2328} bird} & \cellcolor[HTML]{DCE5C3}0.16\% & \cellcolor[HTML]{E6EDD4}{\color[HTML]{1F2328} parking meter} & \cellcolor[HTML]{E6EDD4}0.07\% \\
    \cellcolor[HTML]{F2C59D}{\color[HTML]{1F2328} tree} & \cellcolor[HTML]{F2C59D}5.78\% & \cellcolor[HTML]{A3CDDB}{\color[HTML]{1F2328} wall-tile} & \cellcolor[HTML]{A3CDDB}0.90\% & \cellcolor[HTML]{C8D6A2}{\color[HTML]{1F2328} curtain} & \cellcolor[HTML]{C8D6A2}0.49\% & \cellcolor[HTML]{D2DEB3}{\color[HTML]{1F2328} gravel} & \cellcolor[HTML]{D2DEB3}0.28\% & \cellcolor[HTML]{DCE5C4}{\color[HTML]{1F2328} bottle} & \cellcolor[HTML]{DCE5C4}0.16\% & \cellcolor[HTML]{E7EDD5}{\color[HTML]{1F2328} traffic light} & \cellcolor[HTML]{E7EDD5}0.07\% \\
    \cellcolor[HTML]{F2C8A1}{\color[HTML]{1F2328} wall-concrete} & \cellcolor[HTML]{F2C8A1}4.44\% & \cellcolor[HTML]{A5CEDC}{\color[HTML]{1F2328} furniture-other} & \cellcolor[HTML]{A5CEDC}0.89\% & \cellcolor[HTML]{C8D6A2}{\color[HTML]{1F2328} carpet} & \cellcolor[HTML]{C8D6A2}0.48\% & \cellcolor[HTML]{D3DEB3}{\color[HTML]{1F2328} wall-panel} & \cellcolor[HTML]{D3DEB3}0.27\% & \cellcolor[HTML]{DDE5C4}{\color[HTML]{1F2328} bicycle} & \cellcolor[HTML]{DDE5C4}0.16\% & \cellcolor[HTML]{E7EDD5}{\color[HTML]{1F2328} kite} & \cellcolor[HTML]{E7EDD5}0.07\% \\
    \cellcolor[HTML]{F3CAA5}{\color[HTML]{1F2328} grass} & \cellcolor[HTML]{F3CAA5}4.23\% & \cellcolor[HTML]{A8CFDD}{\color[HTML]{1F2328} metal} & \cellcolor[HTML]{A8CFDD}0.88\% & \cellcolor[HTML]{C9D6A3}{\color[HTML]{1F2328} cage} & \cellcolor[HTML]{C9D6A3}0.45\% & \cellcolor[HTML]{D3DEB4}{\color[HTML]{1F2328} cow} & \cellcolor[HTML]{D3DEB4}0.27\% & \cellcolor[HTML]{DDE6C5}{\color[HTML]{1F2328} roof} & \cellcolor[HTML]{DDE6C5}0.15\% & \cellcolor[HTML]{E7EDD6}{\color[HTML]{1F2328} napkin} & \cellcolor[HTML]{E7EDD6}0.07\% \\
    \cellcolor[HTML]{F3CCA9}{\color[HTML]{1F2328} dining   table} & \cellcolor[HTML]{F3CCA9}3.11\% & \cellcolor[HTML]{AAD1DE}{\color[HTML]{1F2328} plant-other} & \cellcolor[HTML]{AAD1DE}0.80\% & \cellcolor[HTML]{C9D7A3}{\color[HTML]{1F2328} water-other} & \cellcolor[HTML]{C9D7A3}0.44\% & \cellcolor[HTML]{D3DEB4}{\color[HTML]{1F2328} boat} & \cellcolor[HTML]{D3DEB4}0.27\% & \cellcolor[HTML]{DDE6C5}{\color[HTML]{1F2328} stone} & \cellcolor[HTML]{DDE6C5}0.15\% & \cellcolor[HTML]{E8EED6}{\color[HTML]{1F2328} skateboard} & \cellcolor[HTML]{E8EED6}0.06\% \\
    \cellcolor[HTML]{F4CFAD}{\color[HTML]{1F2328} building-other} & \cellcolor[HTML]{F4CFAD}2.98\% & \cellcolor[HTML]{ACD2DF}{\color[HTML]{1F2328} cabinet} & \cellcolor[HTML]{ACD2DF}0.79\% & \cellcolor[HTML]{C9D7A4}{\color[HTML]{1F2328} dog} & \cellcolor[HTML]{C9D7A4}0.44\% & \cellcolor[HTML]{D4DFB5}{\color[HTML]{1F2328} skyscraper} & \cellcolor[HTML]{D4DFB5}0.26\% & \cellcolor[HTML]{DEE6C6}{\color[HTML]{1F2328} keyboard} & \cellcolor[HTML]{DEE6C6}0.14\% & \cellcolor[HTML]{E8EED7}{\color[HTML]{1F2328} tennis racket} & \cellcolor[HTML]{E8EED7}0.06\% \\
    \cellcolor[HTML]{F4D1B1}{\color[HTML]{1F2328} road} & \cellcolor[HTML]{F4D1B1}2.43\% & \cellcolor[HTML]{AED3E0}{\color[HTML]{1F2328} train} & \cellcolor[HTML]{AED3E0}0.77\% & \cellcolor[HTML]{CAD7A4}{\color[HTML]{1F2328} house} & \cellcolor[HTML]{CAD7A4}0.44\% & \cellcolor[HTML]{D4DFB5}{\color[HTML]{1F2328} wood} & \cellcolor[HTML]{D4DFB5}0.26\% & \cellcolor[HTML]{DEE6C6}{\color[HTML]{1F2328} light} & \cellcolor[HTML]{DEE6C6}0.14\% & \cellcolor[HTML]{E8EED7}{\color[HTML]{1F2328} pillow} & \cellcolor[HTML]{E8EED7}0.06\% \\
    \cellcolor[HTML]{F5D4B5}{\color[HTML]{1F2328} clouds} & \cellcolor[HTML]{F5D4B5}2.39\% & \cellcolor[HTML]{B1D4E0}{\color[HTML]{1F2328} bus} & \cellcolor[HTML]{B1D4E0}0.77\% & \cellcolor[HTML]{CAD7A5}{\color[HTML]{1F2328} paper} & \cellcolor[HTML]{CAD7A5}0.42\% & \cellcolor[HTML]{D4DFB6}{\color[HTML]{1F2328} cup} & \cellcolor[HTML]{D4DFB6}0.25\% & \cellcolor[HTML]{DEE7C7}{\color[HTML]{1F2328} orange} & \cellcolor[HTML]{DEE7C7}0.14\% & \cellcolor[HTML]{E9EED8}{\color[HTML]{1F2328} solid-other} & \cellcolor[HTML]{E9EED8}0.06\% \\
    \cellcolor[HTML]{F6D6B9}{\color[HTML]{1F2328} sea} & \cellcolor[HTML]{F6D6B9}2.35\% & \cellcolor[HTML]{B3D5E1}{\color[HTML]{1F2328} pizza} & \cellcolor[HTML]{B3D5E1}0.74\% & \cellcolor[HTML]{CAD8A5}{\color[HTML]{1F2328} refrigerator} & \cellcolor[HTML]{CAD8A5}0.38\% & \cellcolor[HTML]{D5DFB6}{\color[HTML]{1F2328} potted plant} & \cellcolor[HTML]{D5DFB6}0.23\% & \cellcolor[HTML]{DFE7C7}{\color[HTML]{1F2328} clock} & \cellcolor[HTML]{DFE7C7}0.13\% & \cellcolor[HTML]{E9EFD9}{\color[HTML]{1F2328} remote} & \cellcolor[HTML]{E9EFD9}0.05\% \\
    \cellcolor[HTML]{F6D8BD}{\color[HTML]{1F2328} pavement} & \cellcolor[HTML]{F6D8BD}2.24\% & \cellcolor[HTML]{B5D7E2}{\color[HTML]{1F2328} ground-other} & \cellcolor[HTML]{B5D7E2}0.73\% & \cellcolor[HTML]{CBD8A6}{\color[HTML]{1F2328} plastic} & \cellcolor[HTML]{CBD8A6}0.38\% & \cellcolor[HTML]{D5E0B7}{\color[HTML]{1F2328} banner} & \cellcolor[HTML]{D5E0B7}0.23\% & \cellcolor[HTML]{DFE7C8}{\color[HTML]{1F2328} fruit} & \cellcolor[HTML]{DFE7C8}0.13\% & \cellcolor[HTML]{E9EFD9}{\color[HTML]{1F2328} salad} & \cellcolor[HTML]{E9EFD9}0.05\% \\
    \cellcolor[HTML]{F7DBC1}{\color[HTML]{1F2328} wall-other} & \cellcolor[HTML]{F7DBC1}2.19\% & \cellcolor[HTML]{B7D8E3}{\color[HTML]{1F2328} floor-other} & \cellcolor[HTML]{B7D8E3}0.70\% & \cellcolor[HTML]{CBD8A7}{\color[HTML]{1F2328} clothes} & \cellcolor[HTML]{CBD8A7}0.37\% & \cellcolor[HTML]{D5E0B8}{\color[HTML]{1F2328} hill} & \cellcolor[HTML]{D5E0B8}0.23\% & \cellcolor[HTML]{DFE7C9}{\color[HTML]{1F2328} hot dog} & \cellcolor[HTML]{DFE7C9}0.13\% & \cellcolor[HTML]{EAEFDA}{\color[HTML]{1F2328} knife} & \cellcolor[HTML]{EAEFDA}0.04\% \\
    \cellcolor[HTML]{F7DDC5}{\color[HTML]{1F2328} snow} & \cellcolor[HTML]{F7DDC5}2.04\% & \cellcolor[HTML]{BAD9E4}{\color[HTML]{1F2328} door-stuff} & \cellcolor[HTML]{BAD9E4}0.67\% & \cellcolor[HTML]{CBD8A7}{\color[HTML]{1F2328} cake} & \cellcolor[HTML]{CBD8A7}0.37\% & \cellcolor[HTML]{D6E0B8}{\color[HTML]{1F2328} cardboard} & \cellcolor[HTML]{D6E0B8}0.23\% & \cellcolor[HTML]{E0E8C9}{\color[HTML]{1F2328} bridge} & \cellcolor[HTML]{E0E8C9}0.12\% & \cellcolor[HTML]{EAEFDA}{\color[HTML]{1F2328} snowboard} & \cellcolor[HTML]{EAEFDA}0.04\% \\
    \cellcolor[HTML]{F8E0C9}{\color[HTML]{1F2328} playingfield} & \cellcolor[HTML]{F8E0C9}1.81\% & \cellcolor[HTML]{BCDAE5}{\color[HTML]{1F2328} floor-tile} & \cellcolor[HTML]{BCDAE5}0.66\% & \cellcolor[HTML]{CCD9A8}{\color[HTML]{1F2328} oven} & \cellcolor[HTML]{CCD9A8}0.37\% & \cellcolor[HTML]{D6E0B9}{\color[HTML]{1F2328} platform} & \cellcolor[HTML]{D6E0B9}0.23\% & \cellcolor[HTML]{E0E8CA}{\color[HTML]{1F2328} surfboard} & \cellcolor[HTML]{E0E8CA}0.12\% & \cellcolor[HTML]{EAF0DB}{\color[HTML]{1F2328} scissors} & \cellcolor[HTML]{EAF0DB}0.04\% \\
    \cellcolor[HTML]{F8E2CD}{\color[HTML]{1F2328} dirt} & \cellcolor[HTML]{F8E2CD}1.43\% & \cellcolor[HTML]{BEDCE6}{\color[HTML]{1F2328} wall-wood} & \cellcolor[HTML]{BEDCE6}0.64\% & \cellcolor[HTML]{CCD9A8}{\color[HTML]{1F2328} laptop} & \cellcolor[HTML]{CCD9A8}0.37\% & \cellcolor[HTML]{D6E1B9}{\color[HTML]{1F2328} banana} & \cellcolor[HTML]{D6E1B9}0.23\% & \cellcolor[HTML]{E0E8CA}{\color[HTML]{1F2328} blanket} & \cellcolor[HTML]{E0E8CA}0.12\% & \cellcolor[HTML]{EBF0DB}{\color[HTML]{1F2328} frisbee} & \cellcolor[HTML]{EBF0DB}0.03\% \\
    \cellcolor[HTML]{F9E4D1}{\color[HTML]{1F2328} table} & \cellcolor[HTML]{F9E4D1}1.17\% & \cellcolor[HTML]{C0DDE6}{\color[HTML]{1F2328} chair} & \cellcolor[HTML]{C0DDE6}0.64\% & \cellcolor[HTML]{CCD9A9}{\color[HTML]{1F2328} horse} & \cellcolor[HTML]{CCD9A9}0.36\% & \cellcolor[HTML]{D7E1BA}{\color[HTML]{1F2328} wall-stone} & \cellcolor[HTML]{D7E1BA}0.23\% & \cellcolor[HTML]{E1E8CB}{\color[HTML]{1F2328} fire hydrant} & \cellcolor[HTML]{E1E8CB}0.11\% & \cellcolor[HTML]{EBF0DC}{\color[HTML]{1F2328} skis} & \cellcolor[HTML]{EBF0DC}0.03\% \\
    \cellcolor[HTML]{F9E7D5}{\color[HTML]{1F2328} bed} & \cellcolor[HTML]{F9E7D5}1.03\% & \cellcolor[HTML]{C3DEE7}{\color[HTML]{1F2328} truck} & \cellcolor[HTML]{C3DEE7}0.63\% & \cellcolor[HTML]{CDD9A9}{\color[HTML]{1F2328} bench} & \cellcolor[HTML]{CDD9A9}0.36\% & \cellcolor[HTML]{D7E1BA}{\color[HTML]{1F2328} branch} & \cellcolor[HTML]{D7E1BA}0.22\% & \cellcolor[HTML]{E1E9CB}{\color[HTML]{1F2328} stop sign} & \cellcolor[HTML]{E1E9CB}0.11\% & \cellcolor[HTML]{EBF0DC}{\color[HTML]{1F2328} ceiling-tile} & \cellcolor[HTML]{EBF0DC}0.03\% \\
    \cellcolor[HTML]{FAE9D9}{\color[HTML]{1F2328} window-other} & \cellcolor[HTML]{FAE9D9}1.02\% & \cellcolor[HTML]{C5DFE8}{\color[HTML]{1F2328} car} & \cellcolor[HTML]{C5DFE8}0.61\% & \cellcolor[HTML]{CDDAAA}{\color[HTML]{1F2328} mirror-stuff} & \cellcolor[HTML]{CDDAAA}0.36\% & \cellcolor[HTML]{D7E1BB}{\color[HTML]{1F2328} vegetable} & \cellcolor[HTML]{D7E1BB}0.21\% & \cellcolor[HTML]{E1E9CC}{\color[HTML]{1F2328} stairs} & \cellcolor[HTML]{E1E9CC}0.11\% & \cellcolor[HTML]{ECF1DD}{\color[HTML]{1F2328} tie} & \cellcolor[HTML]{ECF1DD}0.03\% \\
    \cellcolor[HTML]{FBECDD}{\color[HTML]{1F2328} sand} & \cellcolor[HTML]{FBECDD}1.02\% & \cellcolor[HTML]{C7E0E9}{\color[HTML]{1F2328} bowl} & \cellcolor[HTML]{C7E0E9}0.61\% & \cellcolor[HTML]{CDDAAA}{\color[HTML]{1F2328} tv} & \cellcolor[HTML]{CDDAAA}0.34\% & \cellcolor[HTML]{D8E2BB}{\color[HTML]{1F2328} flower} & \cellcolor[HTML]{D8E2BB}0.20\% & \cellcolor[HTML]{E2E9CC}{\color[HTML]{1F2328} apple} & \cellcolor[HTML]{E2E9CC}0.11\% & \cellcolor[HTML]{ECF1DD}{\color[HTML]{1F2328} fork} & \cellcolor[HTML]{ECF1DD}0.03\% \\
    {\color[HTML]{1F2328} } &  & \cellcolor[HTML]{C9E2EA}{\color[HTML]{1F2328} bush} & \cellcolor[HTML]{C9E2EA}0.60\% & \cellcolor[HTML]{CEDAAB}{\color[HTML]{1F2328} airplane} & \cellcolor[HTML]{CEDAAB}0.34\% & \cellcolor[HTML]{D8E2BC}{\color[HTML]{1F2328} sheep} & \cellcolor[HTML]{D8E2BC}0.19\% & \cellcolor[HTML]{E2E9CD}{\color[HTML]{1F2328} handbag} & \cellcolor[HTML]{E2E9CD}0.10\% & \cellcolor[HTML]{ECF1DE}{\color[HTML]{1F2328} mat} & \cellcolor[HTML]{ECF1DE}0.03\% \\
    {\color[HTML]{1F2328} } &  & \cellcolor[HTML]{CCE3EB}{\color[HTML]{1F2328} floor-wood} & \cellcolor[HTML]{CCE3EB}0.60\% & \cellcolor[HTML]{CEDAAB}{\color[HTML]{1F2328} giraffe} & \cellcolor[HTML]{CEDAAB}0.33\% & \cellcolor[HTML]{D8E2BD}{\color[HTML]{1F2328} straw} & \cellcolor[HTML]{D8E2BD}0.19\% & \cellcolor[HTML]{E2EACE}{\color[HTML]{1F2328} cloth} & \cellcolor[HTML]{E2EACE}0.10\% & \cellcolor[HTML]{EDF1DF}{\color[HTML]{1F2328} spoon} & \cellcolor[HTML]{EDF1DF}0.02\% \\
    {\color[HTML]{1F2328} } &  & \cellcolor[HTML]{CEE4EC}{\color[HTML]{1F2328} cat} & \cellcolor[HTML]{CEE4EC}0.60\% & \cellcolor[HTML]{CEDBAC}{\color[HTML]{1F2328} zebra} & \cellcolor[HTML]{CEDBAC}0.33\% & \cellcolor[HTML]{D9E2BD}{\color[HTML]{1F2328} bear} & \cellcolor[HTML]{D9E2BD}0.19\% & \cellcolor[HTML]{E3EACE}{\color[HTML]{1F2328} floor-marble} & \cellcolor[HTML]{E3EACE}0.10\% & \cellcolor[HTML]{EDF2DF}{\color[HTML]{1F2328} mouse} & \cellcolor[HTML]{EDF2DF}0.02\% \\
    {\color[HTML]{1F2328} } &  & \cellcolor[HTML]{D0E5EC}{\color[HTML]{1F2328} couch} & \cellcolor[HTML]{D0E5EC}0.58\% & \cellcolor[HTML]{CFDBAD}{\color[HTML]{1F2328} teddy bear} & \cellcolor[HTML]{CFDBAD}0.32\% & \cellcolor[HTML]{D9E3BE}{\color[HTML]{1F2328} net} & \cellcolor[HTML]{D9E3BE}0.19\% & \cellcolor[HTML]{E3EACF}{\color[HTML]{1F2328} cell phone} & \cellcolor[HTML]{E3EACF}0.10\% & \cellcolor[HTML]{EDF2E0}{\color[HTML]{1F2328} baseball glove} & \cellcolor[HTML]{EDF2E0}0.02\% \\
    {\color[HTML]{1F2328} } &  & \cellcolor[HTML]{D2E6ED}{\color[HTML]{1F2328} textile-other} & \cellcolor[HTML]{D2E6ED}0.57\% & \cellcolor[HTML]{CFDBAD}{\color[HTML]{1F2328} toilet} & \cellcolor[HTML]{CFDBAD}0.32\% & \cellcolor[HTML]{D9E3BE}{\color[HTML]{1F2328} broccoli} & \cellcolor[HTML]{D9E3BE}0.18\% & \cellcolor[HTML]{E3EACF}{\color[HTML]{1F2328} cupboard} & \cellcolor[HTML]{E3EACF}0.10\% & \cellcolor[HTML]{EEF2E0}{\color[HTML]{1F2328} moss} & \cellcolor[HTML]{EEF2E0}0.02\% \\
    {\color[HTML]{1F2328} } &  & \cellcolor[HTML]{D5E8EE}{\color[HTML]{1F2328} wall-brick} & \cellcolor[HTML]{D5E8EE}0.55\% & \cellcolor[HTML]{CFDBAE}{\color[HTML]{1F2328} counter} & \cellcolor[HTML]{CFDBAE}0.32\% & \cellcolor[HTML]{DAE3BF}{\color[HTML]{1F2328} donut} & \cellcolor[HTML]{DAE3BF}0.18\% & \cellcolor[HTML]{E4EBD0}{\color[HTML]{1F2328} microwave} & \cellcolor[HTML]{E4EBD0}0.09\% & \cellcolor[HTML]{EEF2E1}{\color[HTML]{1F2328} toothbrush} & \cellcolor[HTML]{EEF2E1}0.02\% \\
    {\color[HTML]{1F2328} } &  & \cellcolor[HTML]{D7E9EF}{\color[HTML]{1F2328} river} & \cellcolor[HTML]{D7E9EF}0.55\% & \cellcolor[HTML]{D0DCAE}{\color[HTML]{1F2328} rock} & \cellcolor[HTML]{D0DCAE}0.32\% & \cellcolor[HTML]{DAE3BF}{\color[HTML]{1F2328} sink} & \cellcolor[HTML]{DAE3BF}0.18\% & \cellcolor[HTML]{E4EBD0}{\color[HTML]{1F2328} backpack} & \cellcolor[HTML]{E4EBD0}0.09\% & \cellcolor[HTML]{EEF3E1}{\color[HTML]{1F2328} baseball bat} & \cellcolor[HTML]{EEF3E1}0.01\% \\
    {\color[HTML]{1F2328} } &  & \cellcolor[HTML]{D9EAF0}{\color[HTML]{1F2328} fog} & \cellcolor[HTML]{D9EAF0}0.55\% & \cellcolor[HTML]{D0DCAF}{\color[HTML]{1F2328} suitcase} & \cellcolor[HTML]{D0DCAF}0.32\% & \cellcolor[HTML]{DAE4C0}{\color[HTML]{1F2328} book} & \cellcolor[HTML]{DAE4C0}0.18\% & \cellcolor[HTML]{E4EBD1}{\color[HTML]{1F2328} wine glass} & \cellcolor[HTML]{E4EBD1}0.09\% & \cellcolor[HTML]{EFF3E2}{\color[HTML]{1F2328} sports ball} & \cellcolor[HTML]{EFF3E2}0.01\% \\
    {\color[HTML]{1F2328} } &  & \cellcolor[HTML]{DBEBF1}{\color[HTML]{1F2328} mountain} & \cellcolor[HTML]{DBEBF1}0.51\% & \cellcolor[HTML]{D0DCAF}{\color[HTML]{1F2328} desk-stuff} & \cellcolor[HTML]{D0DCAF}0.31\% & \cellcolor[HTML]{DBE4C0}{\color[HTML]{1F2328} window-blind} & \cellcolor[HTML]{DBE4C0}0.18\% & \cellcolor[HTML]{E5EBD1}{\color[HTML]{1F2328} tent} & \cellcolor[HTML]{E5EBD1}0.09\% & \cellcolor[HTML]{EFF3E2}{\color[HTML]{1F2328} waterdrops} & \cellcolor[HTML]{EFF3E2}0.01\% \\
    {\color[HTML]{1F2328} } &  & \cellcolor[HTML]{DEEDF2}{\color[HTML]{1F2328} food-other} & \cellcolor[HTML]{DEEDF2}0.50\% & \cellcolor[HTML]{D1DCB0}{\color[HTML]{1F2328} sandwich} & \cellcolor[HTML]{D1DCB0}0.30\% & \cellcolor[HTML]{DBE4C1}{\color[HTML]{1F2328} structural-other} & \cellcolor[HTML]{DBE4C1}0.17\% & \cellcolor[HTML]{E5ECD2}{\color[HTML]{1F2328} mud} & \cellcolor[HTML]{E5ECD2}0.08\% & \cellcolor[HTML]{EFF3E3}{\color[HTML]{1F2328} toaster} & \cellcolor[HTML]{EFF3E3}0.01\% \\
    {\color[HTML]{1F2328} } &  &  &  & \cellcolor[HTML]{D1DDB0}{\color[HTML]{1F2328} umbrella} & \cellcolor[HTML]{D1DDB0}0.30\% & \cellcolor[HTML]{DBE4C1}{\color[HTML]{1F2328} vase} & \cellcolor[HTML]{DBE4C1}0.17\% & \cellcolor[HTML]{E5ECD2}{\color[HTML]{1F2328} railing} & \cellcolor[HTML]{E5ECD2}0.08\% & \cellcolor[HTML]{F0F4E4}{\color[HTML]{1F2328} hair drier} & \cellcolor[HTML]{F0F4E4}0.01\% \\
    {\color[HTML]{1F2328} } &  &  &  & \cellcolor[HTML]{D1DDB1}{\color[HTML]{1F2328} railroad} & \cellcolor[HTML]{D1DDB1}0.29\% & \cellcolor[HTML]{DCE5C2}{\color[HTML]{1F2328} rug} & \cellcolor[HTML]{DCE5C2}0.17\% & \cellcolor[HTML]{E6ECD3}{\color[HTML]{1F2328} towel} & \cellcolor[HTML]{E6ECD3}0.08\% &  & \\
    \bottomrule
    \end{tabular}
    }
  \label{tab: COCO-Stuff_164K_Dataset_Partition}%
\end{table*}

\textbf{FastFCN}~\cite{wu2019fastfcn} employs a novel joint upsampling module called Joint Pyramid Upsampling, which transforms the task of extracting high-resolution feature maps into a joint upsampling challenge.

\textbf{EMANet}~\cite{li2019expectation} enhances semantic segmentation by utilizing an EM-based attention mechanism that iteratively refines feature representations for more accurate segmentation.

\textbf{DANet}~\cite{fu2019dual} improves scene segmentation by integrating both spatial and channel-wise attention mechanisms to capture rich contextual relationships across features.

\textbf{HRNet}~\cite{SunXLW19} maintains high-resolution representations through the network and progressively adds lower-resolution subnetworks to enhance the learning of spatial hierarchies, significantly improving semantic segmentation.

\textbf{OCRNet}~\cite{yuan2020object} enhances semantic segmentation by leveraging object-contextual representations, which aggregates contextual information around each pixel to improve segmentation accuracy.

\textbf{DNLNet}~\cite{yin2020disentangled} improves performance on tasks like image classification by disentangling the traditional non-local operations into two separate streams for capturing spatial and channel dependencies separately.

\textbf{PointRend}~\cite{kirillov2020pointrend} introduces a novel rendering-style algorithm that selectively refines segmentation predictions at adaptively sampled points, enhancing detail accuracy in image segmentation tasks.

\textbf{BiSeNetV2}~\cite{yu2021bisenet} utilizes an efficient architecture with inverted residuals and linear bottlenecks, enabling high-performance mobile vision applications with significantly reduced computational cost.

\textbf{ISANet}~\cite{yuan2021ocnet} improves semantic segmentation by using a novel interlaced sparse self-attention mechanism that efficiently captures long-range dependencies with fewer parameters and computational overhead.

\textbf{STDC}~\cite{fan2021rethinking} addresses real-time semantic segmentation by proposing a novel and efficient structure that removes structural redundancy, reduces dimensions of feature maps gradually, and uses their aggregation for image representation.

\textbf{SegNeXt}~\cite{guo2022segnext} rethinks convolutional attention design for semantic segmentation by introducing an advanced network architecture, enhancing the model's ability to focus on relevant features for more accurate segmentation.

For the long-tail methods:

\textbf{VS}~\cite{kini2021label} proposes to leverage both multiplicative and additive logit adjustments to address label imbalance problems.

\textbf{LA}~\cite{menon2020long} advances the conventional softmax cross-entropy by ensuring Fisher consistency in minimizing the balanced error.

\textbf{LDAM}~\cite{cao2019learning} improves the performance of tail classes by encouraging larger margins for tail classes.

\textbf{Focal Loss}~\cite{lin2017focal} is a modified cross-entropy loss designed to address class imbalance by focusing more on hard-to-classify examples, reducing the relative loss for well-classified instances and thus boosting performance on imbalanced datasets.

\textbf{DisAlign}~\cite{zhang2021distribution} introduces a unified framework for long-tail visual recognition by aligning feature distributions across different classes, using a novel distribution alignment technique that adjusts class-specific thresholds to mitigate the bias towards head classes and enhance recognition of tail classes.

\textbf{BLV}~\cite{wang2023balancing} addresses long-tailed semantic segmentation by dynamically adjusting the learning rates for the logits of different classes based on their frequency, effectively reducing the performance gap between head and tail classes.

\newpage
\section{Additional Experimental Results}
\label{sec: Additional Experiment Results}

\subsection{Per-tail-class Results}
\label{subsec: Per-tail-class Results}
In \Cref{fig: Per_tail_class_Results}, we present the results for each tail class. We select $7$ classes with the fewest training samples from the Cityscapes dataset as tail classes: truck, bicycle, bus, train, traffic light, rider, and motorcycle. Our method not only achieves the highest overall mIoU but also shows significant improvements in these tail classes. Specifically, it outperforms the current SOTA method by more than $1\%$ in several tail classes.

\begin{figure}[htbp]
  \centering
   \includegraphics[width=\linewidth]{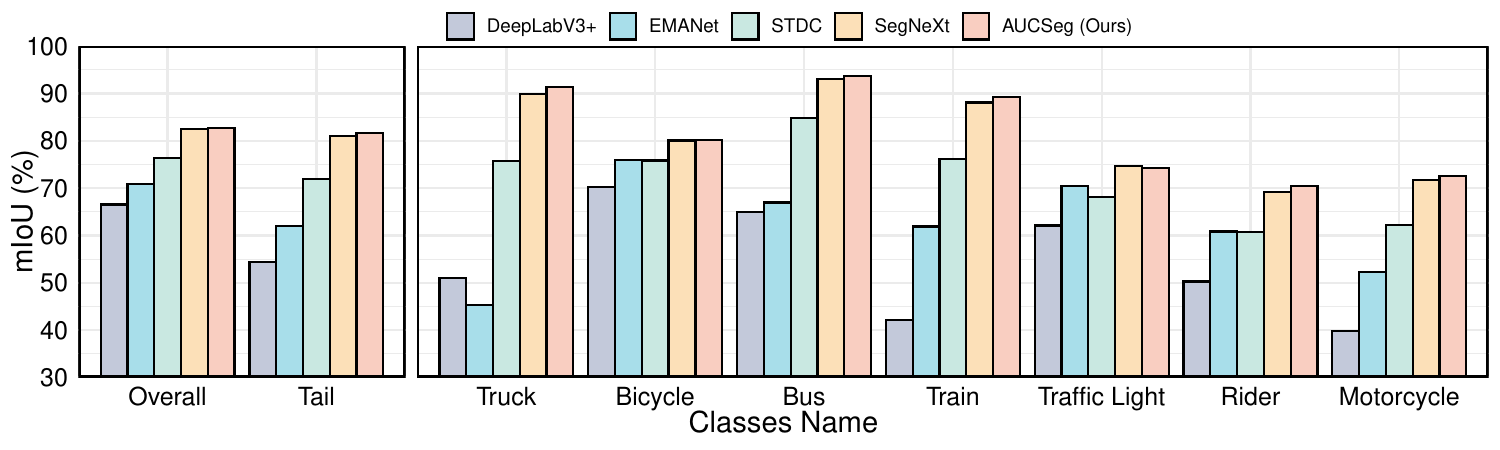}
   \caption{Per-tail-class results on {\bf Cityscapes} val set. The tail class names are listed from left to right according to the ascending number of training samples in the dataset, with `motorcycles' containing the fewest.}
   \label{fig: Per_tail_class_Results}
\end{figure}

\subsection{Performance Differences Across Different Datasets}
\label{subsec: Performance Differences Across Different Datasets}

The performance gain depends on the degree of imbalance of the underlying dataset. To see this, we show the pairwise mean imbalance ratio $r_{m}$ (average the imbalance ratio of each class pair).

\begin{equation}
r_{m}=\frac{1}{|\mathcal{C}_{h}||\overline{\mathcal{C}_{h}}|}\sum\limits_{a\in\mathcal{C}_{h}}\sum\limits_{b\in\overline{\mathcal{C}_{h}}}\left(\frac{a}{b}\right)
\end{equation}
where $\mathcal{C}_{h}$ represents the set containing the pixel counts of each head class, and $\overline{\mathcal{C}_{h}}$ denotes the set containing the pixel counts of each non-head class. The larger the $r_{m}$ value, the more imbalanced the dataset is.

In \Cref{tab: rm and tail classes performance improvements}, we compare $r_{m}$ for ADE20K, Cityscapes, and COCO-Stuff 164K, along with the tail classes performance improvements of AUCSeg compared to the runner-up method.

\begin{table}[htbp]
  \centering
  \renewcommand\arraystretch{1.0}
  \caption{The comparison of imbalance ratio and tail classes performance improvements on ADE20K, Cityscapes, and COCO-Stuff 164K.}
    \begin{tabular}{cccc}
    \toprule
    \textbf{Dataset} & \textbf{ADE20K} & \textbf{Cityscapes} & \textbf{COCO-Stuff 164K} \\
    \midrule
    $r_{m}$ & 90.43 & 80.39 & 38.17 \\
    Tail Classes Improvement & 1.21\% & 0.75\% & 0.38\% \\
    \bottomrule
    \end{tabular}%
  \label{tab: rm and tail classes performance improvements}%
\end{table}%

The results suggest that the larger the imbalance degree the larger the improvements of our method. ADE20K has the largest degree imbalance, therefore gaining the most significant improvement.

\newpage
\subsection{More Qualitative Results}
\label{subsec: More Qualitative Results}

Here we present more qualitative results on the Cityscapes, ADE20K, and COCO-Stuff 164K validation sets.

\begin{figure*}[h]
\centering
\begin{subfigure}{0.19\textwidth}
  \includegraphics[width=\linewidth]{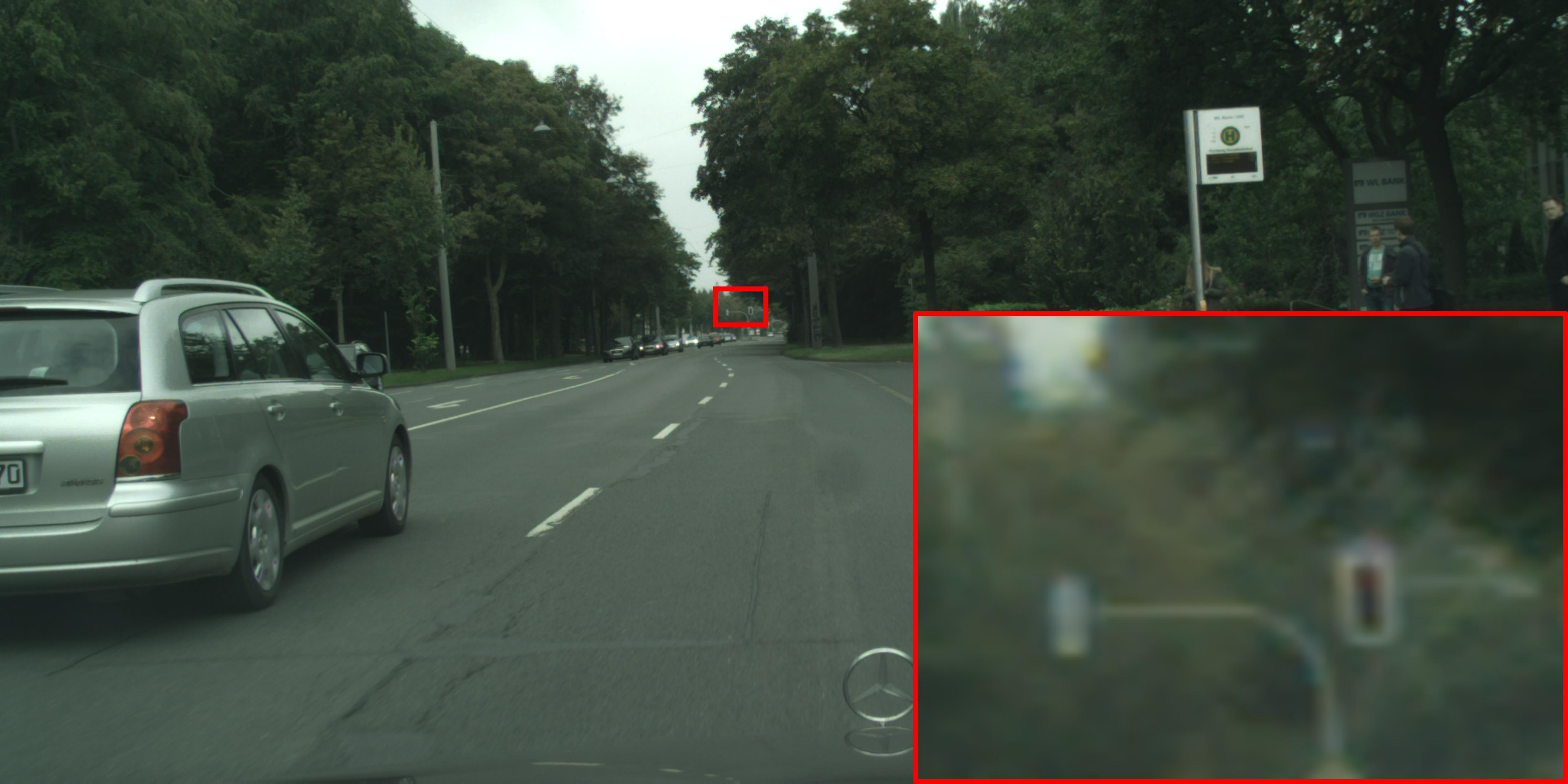}
\end{subfigure}
\begin{subfigure}{0.19\textwidth}
  \includegraphics[width=\linewidth]{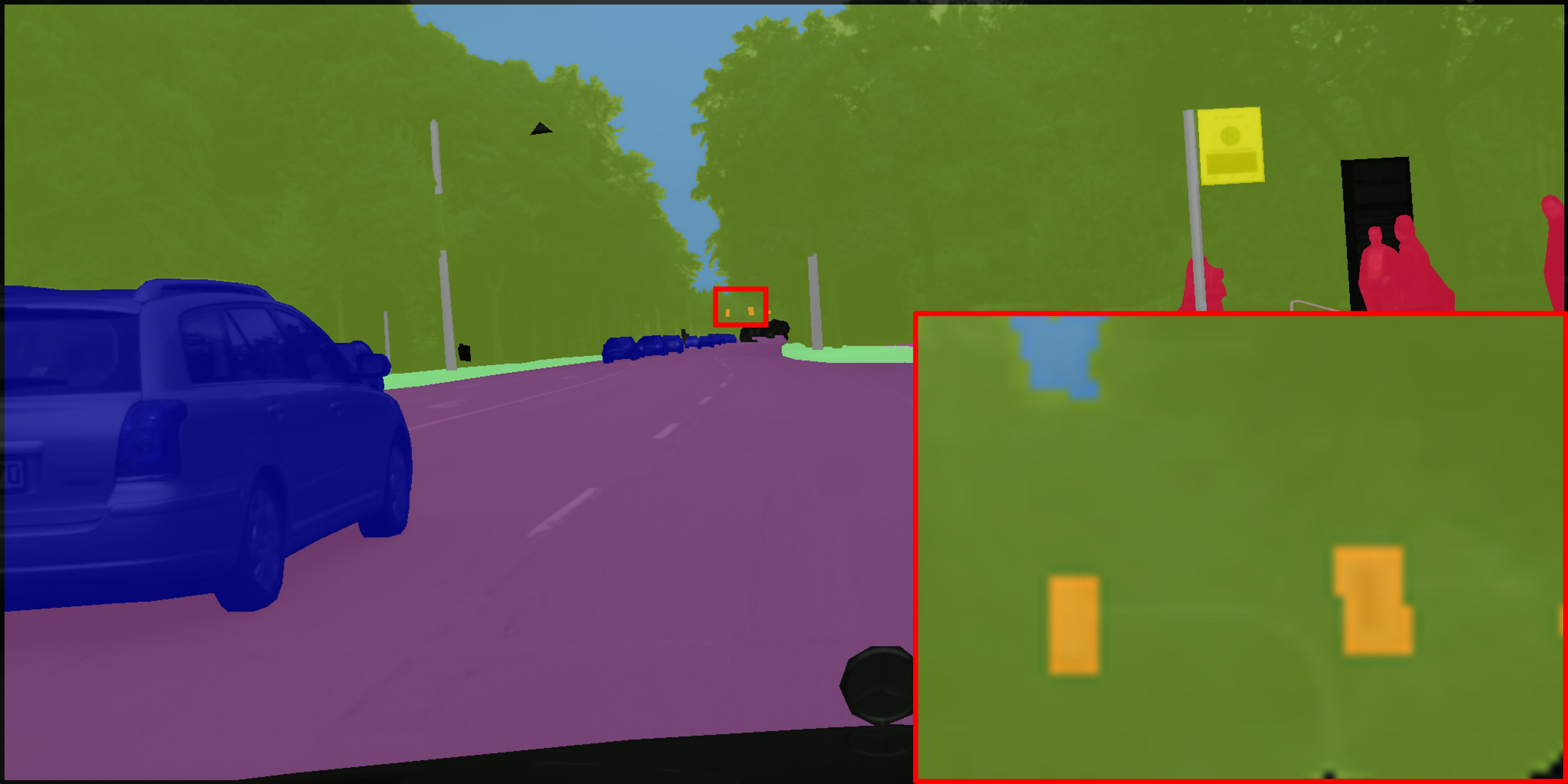}
\end{subfigure}
\begin{subfigure}{0.19\textwidth}
  \includegraphics[width=\linewidth]{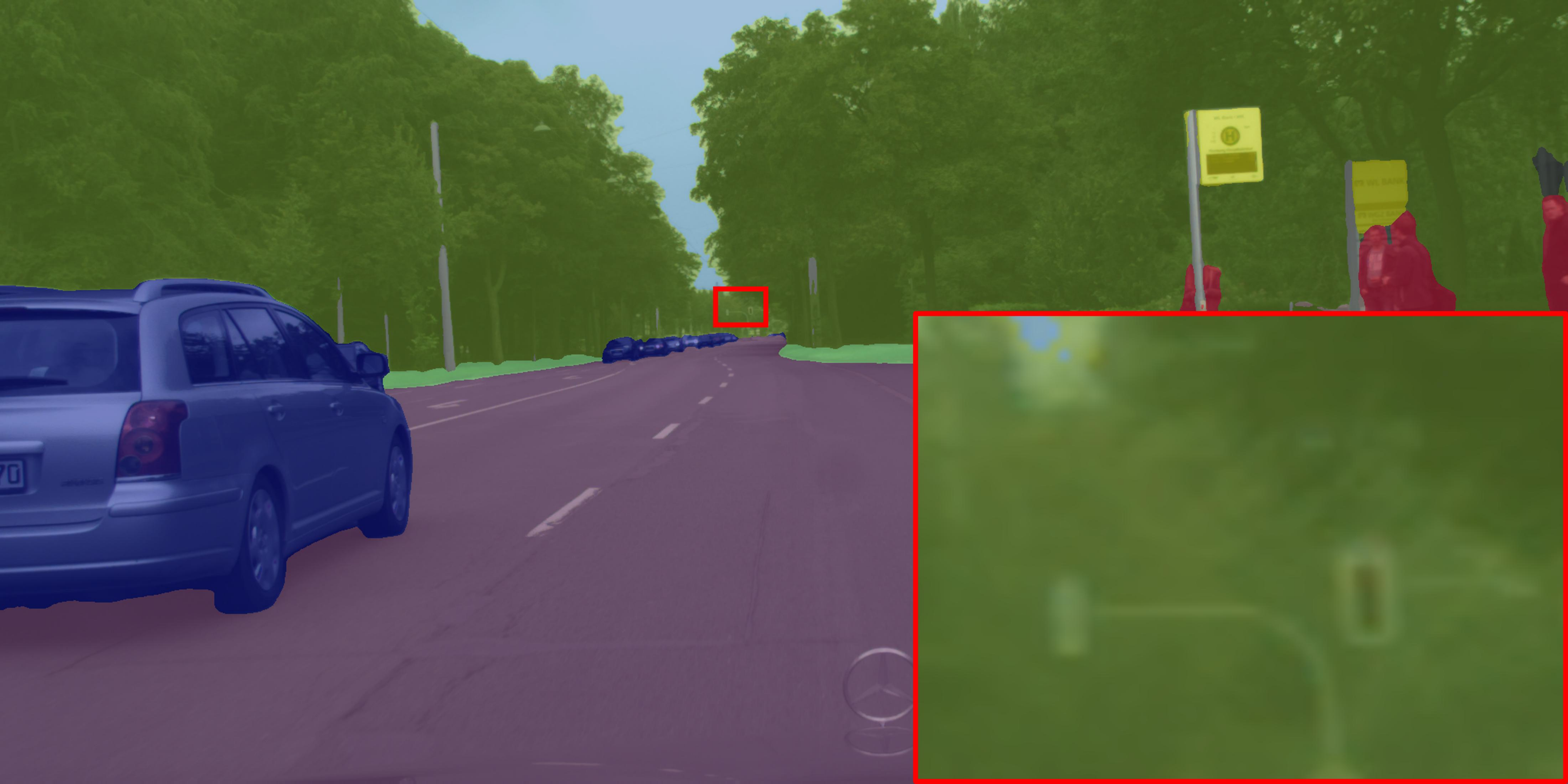}
\end{subfigure}
\begin{subfigure}{0.19\textwidth}
  \includegraphics[width=\linewidth]{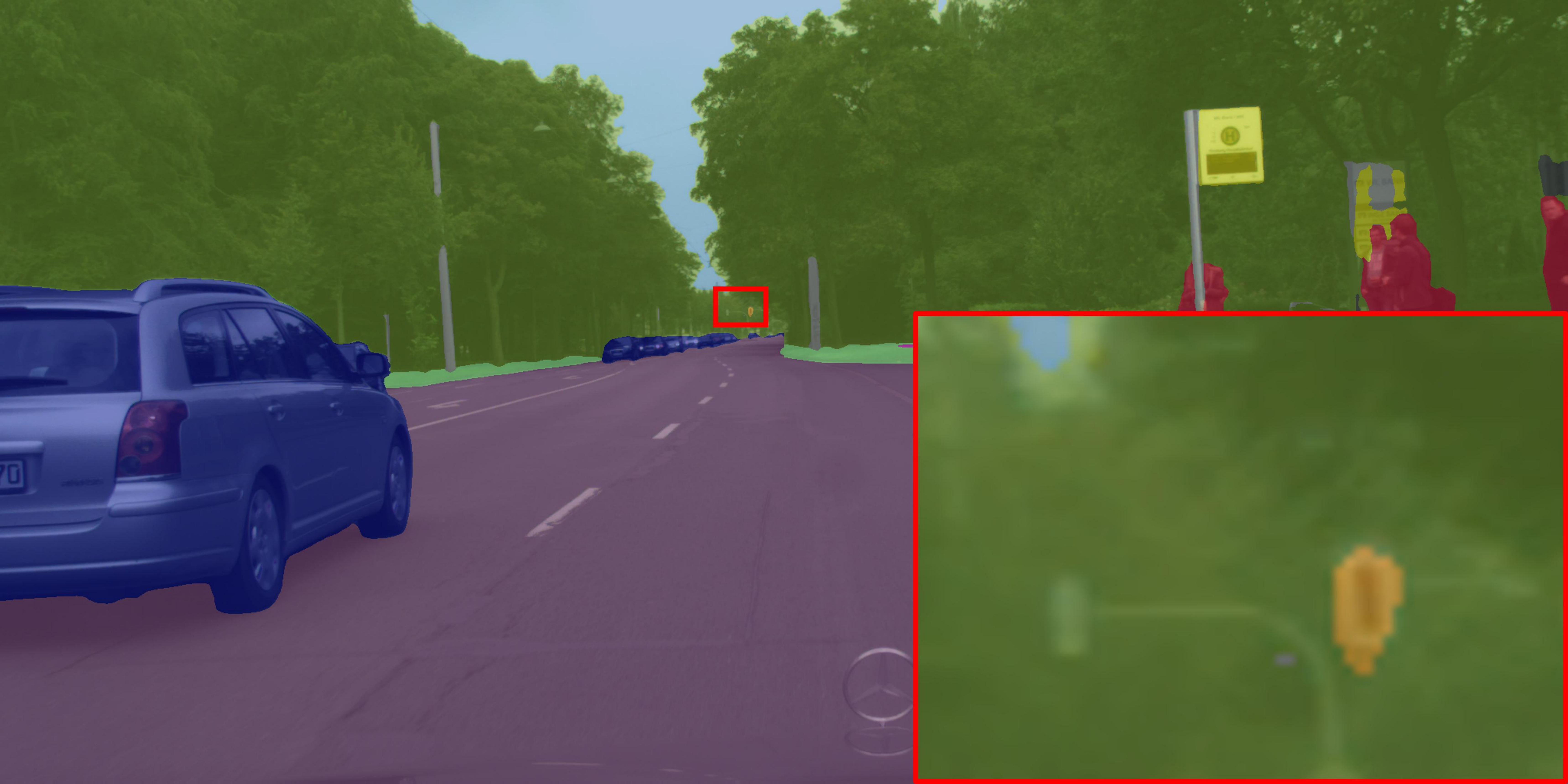}
\end{subfigure}
\begin{subfigure}{0.19\textwidth}
  \includegraphics[width=\linewidth]{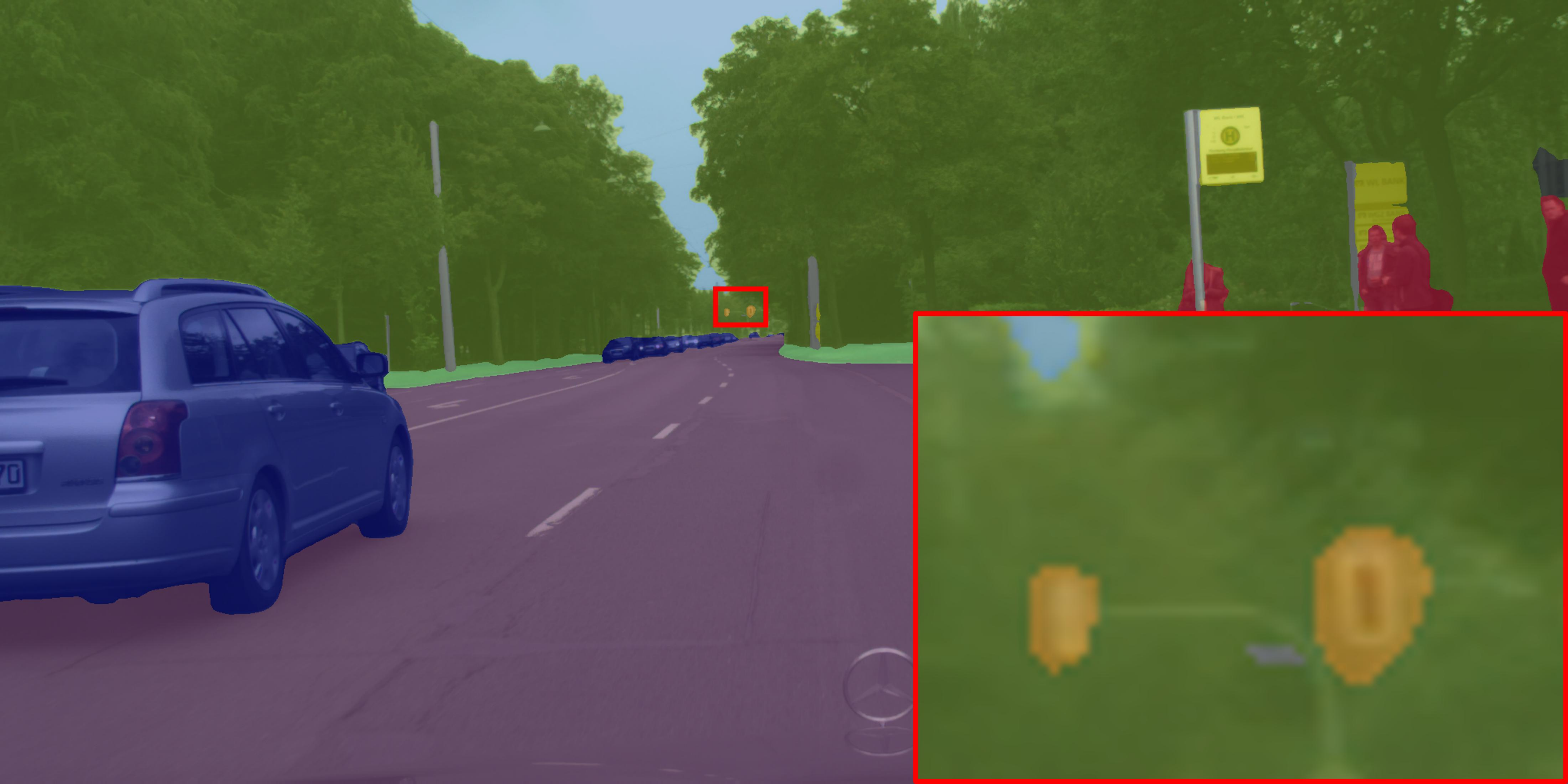}
\end{subfigure}

\begin{subfigure}{0.19\textwidth}
  \includegraphics[width=\linewidth]{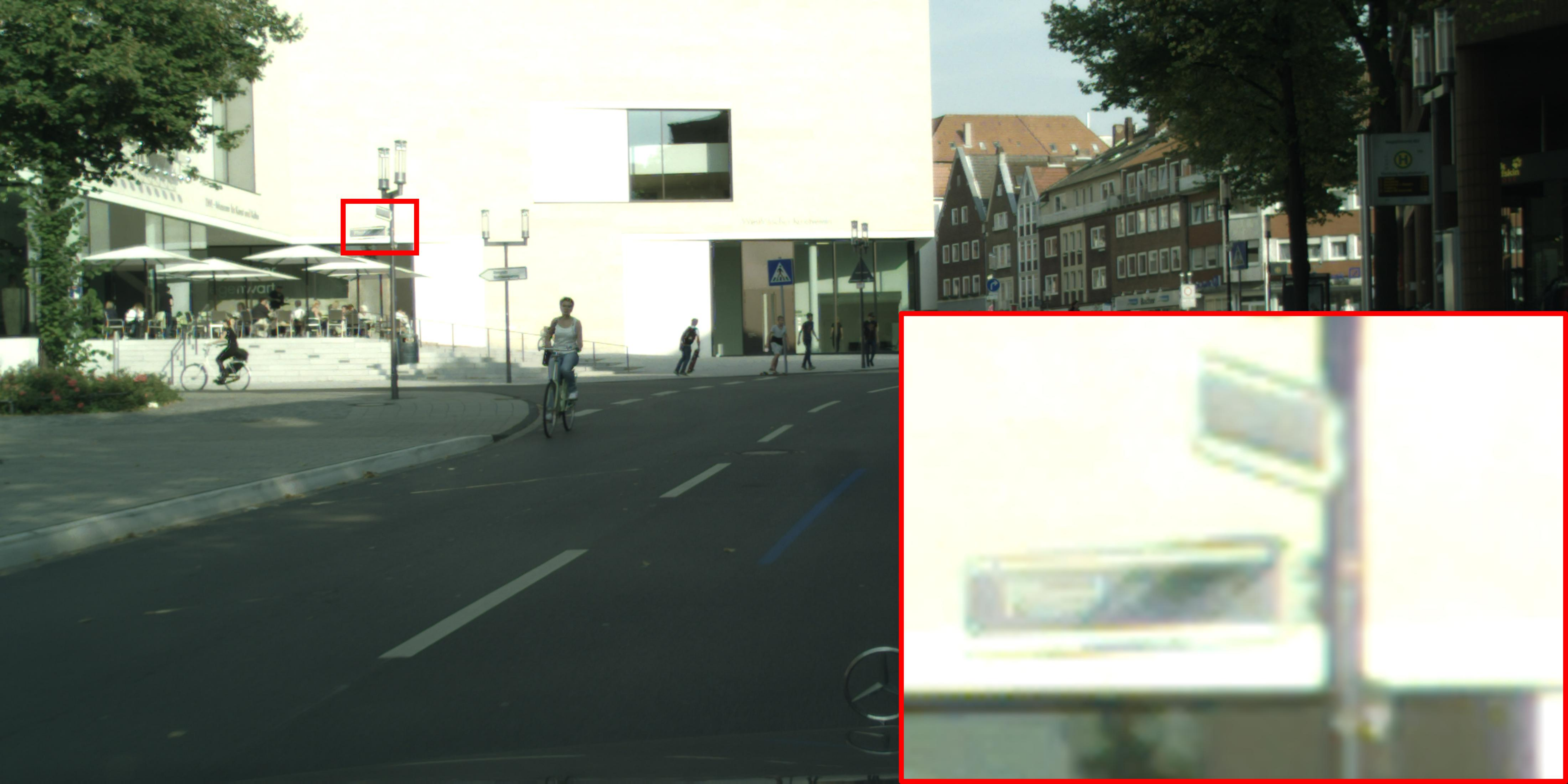}
\end{subfigure}
\begin{subfigure}{0.19\textwidth}
  \includegraphics[width=\linewidth]{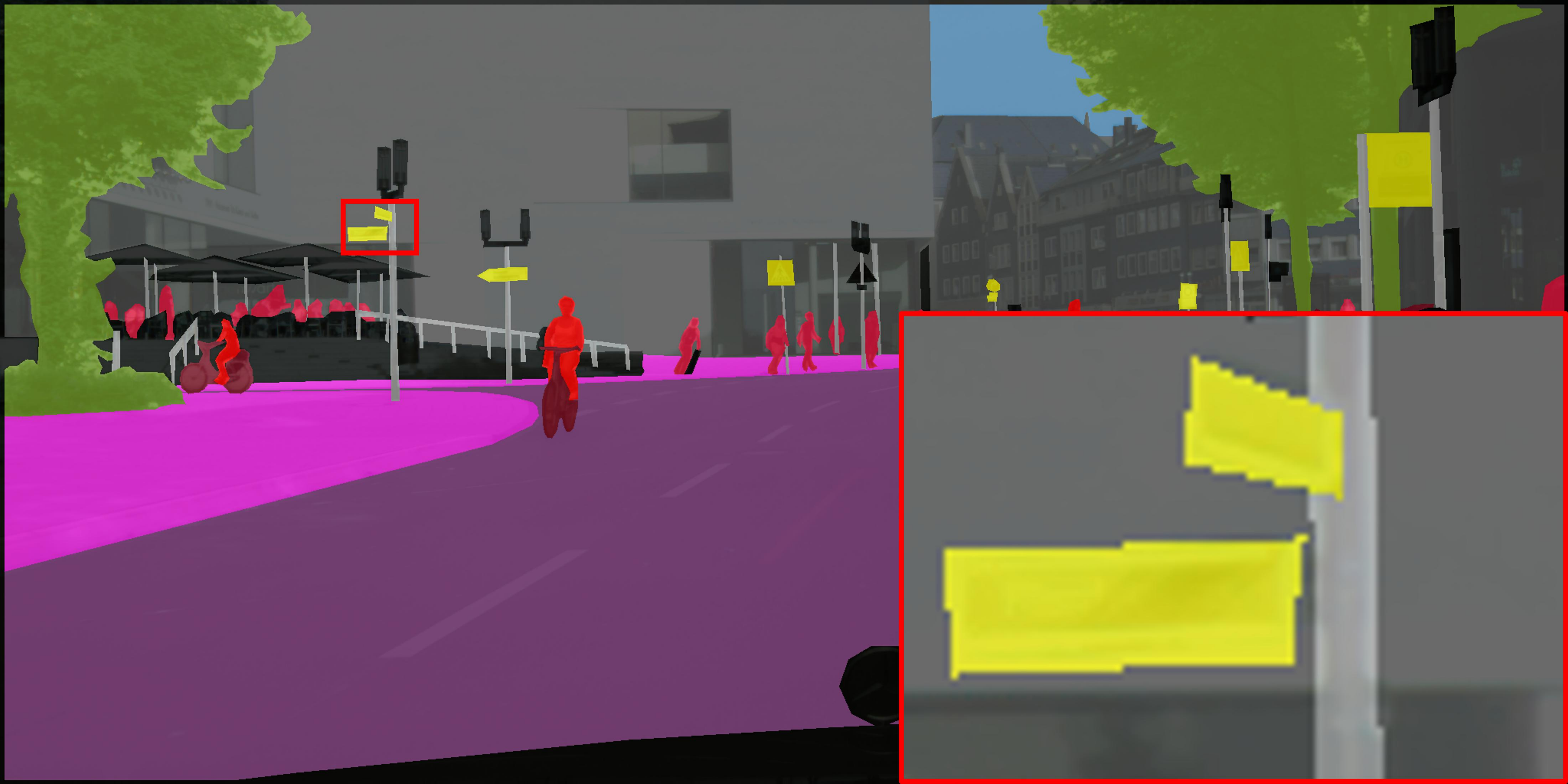}
\end{subfigure}
\begin{subfigure}{0.19\textwidth}
  \includegraphics[width=\linewidth]{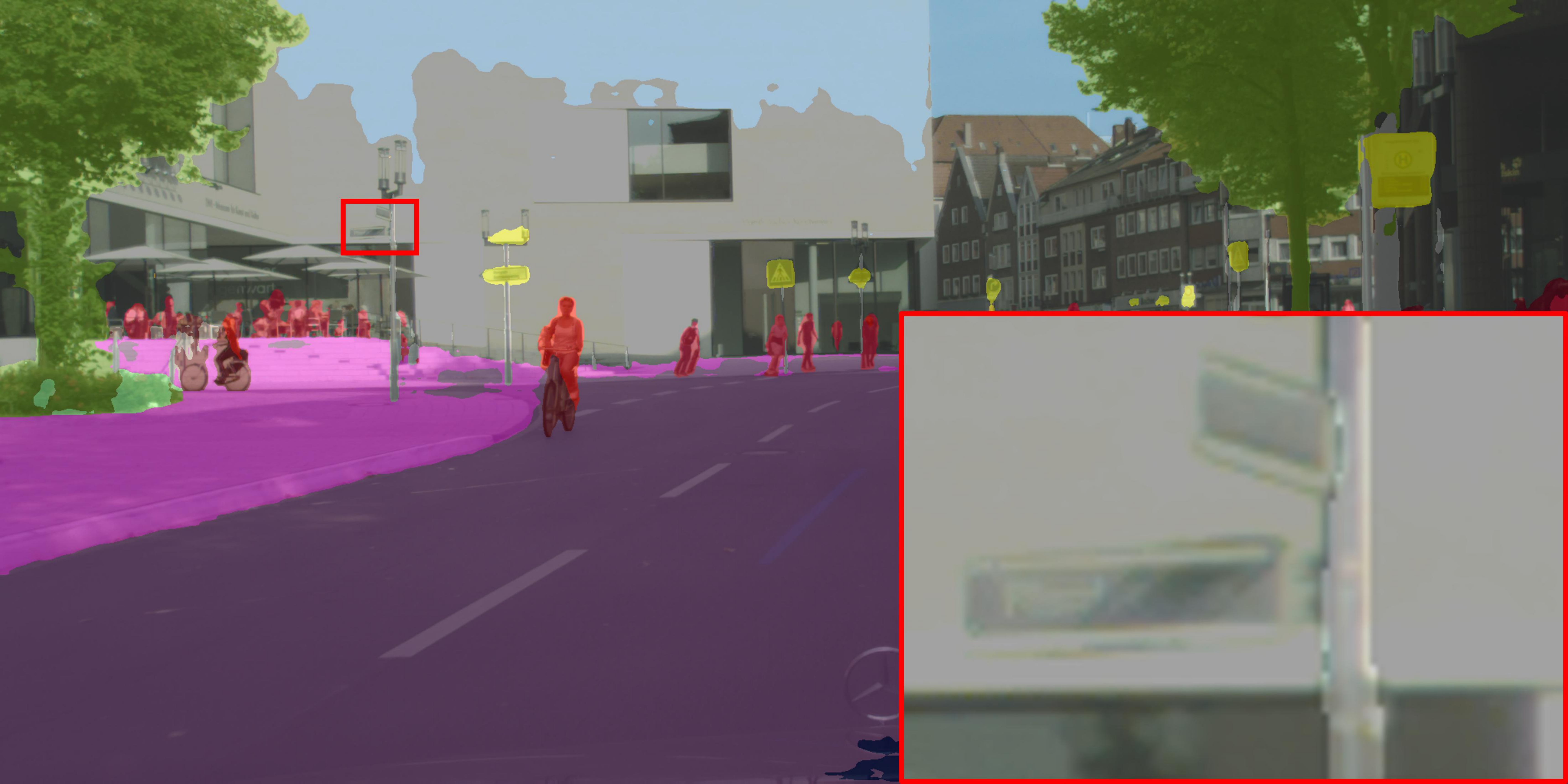}
\end{subfigure}
\begin{subfigure}{0.19\textwidth}
  \includegraphics[width=\linewidth]{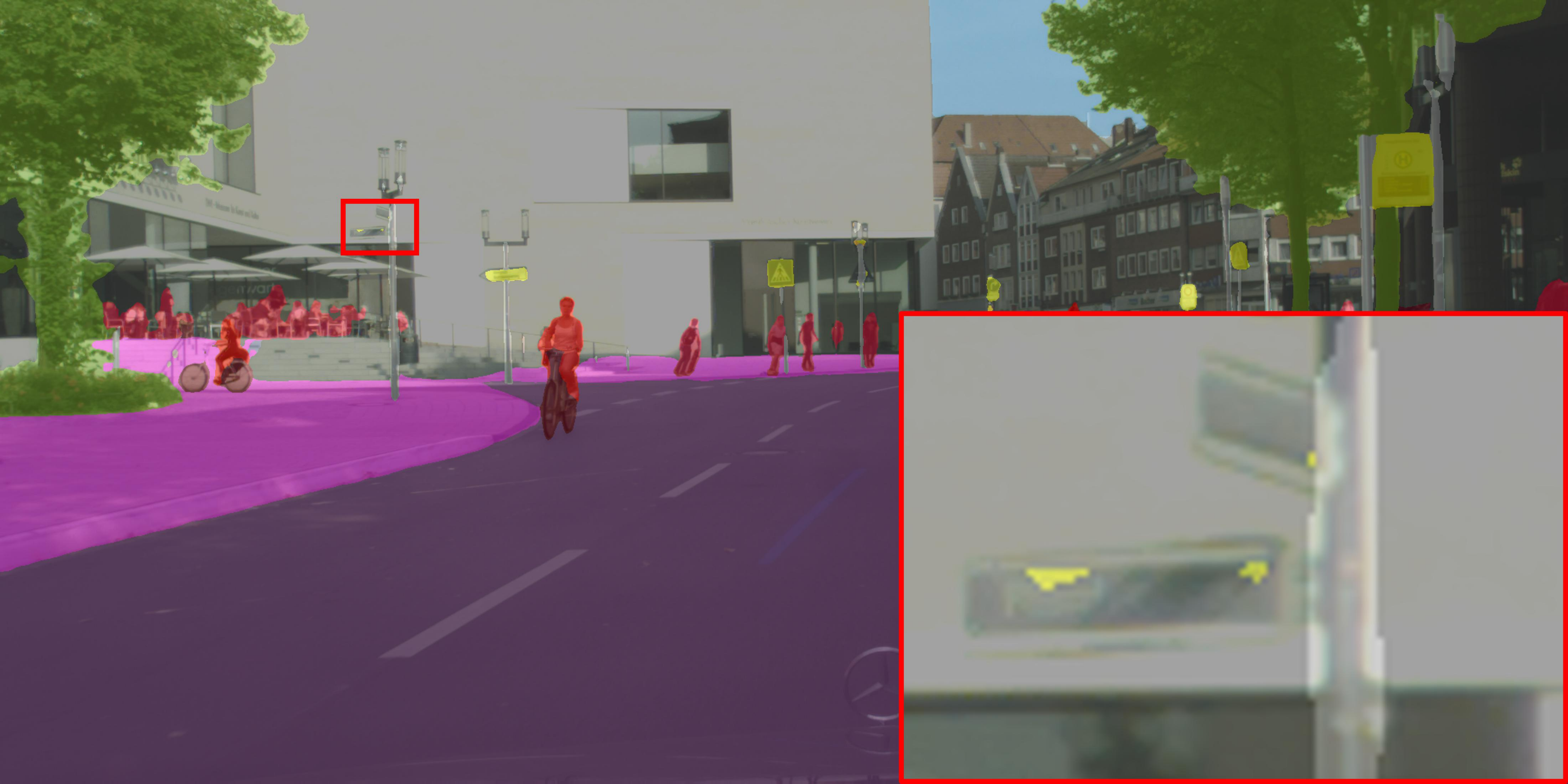}
\end{subfigure}
\begin{subfigure}{0.19\textwidth}
  \includegraphics[width=\linewidth]{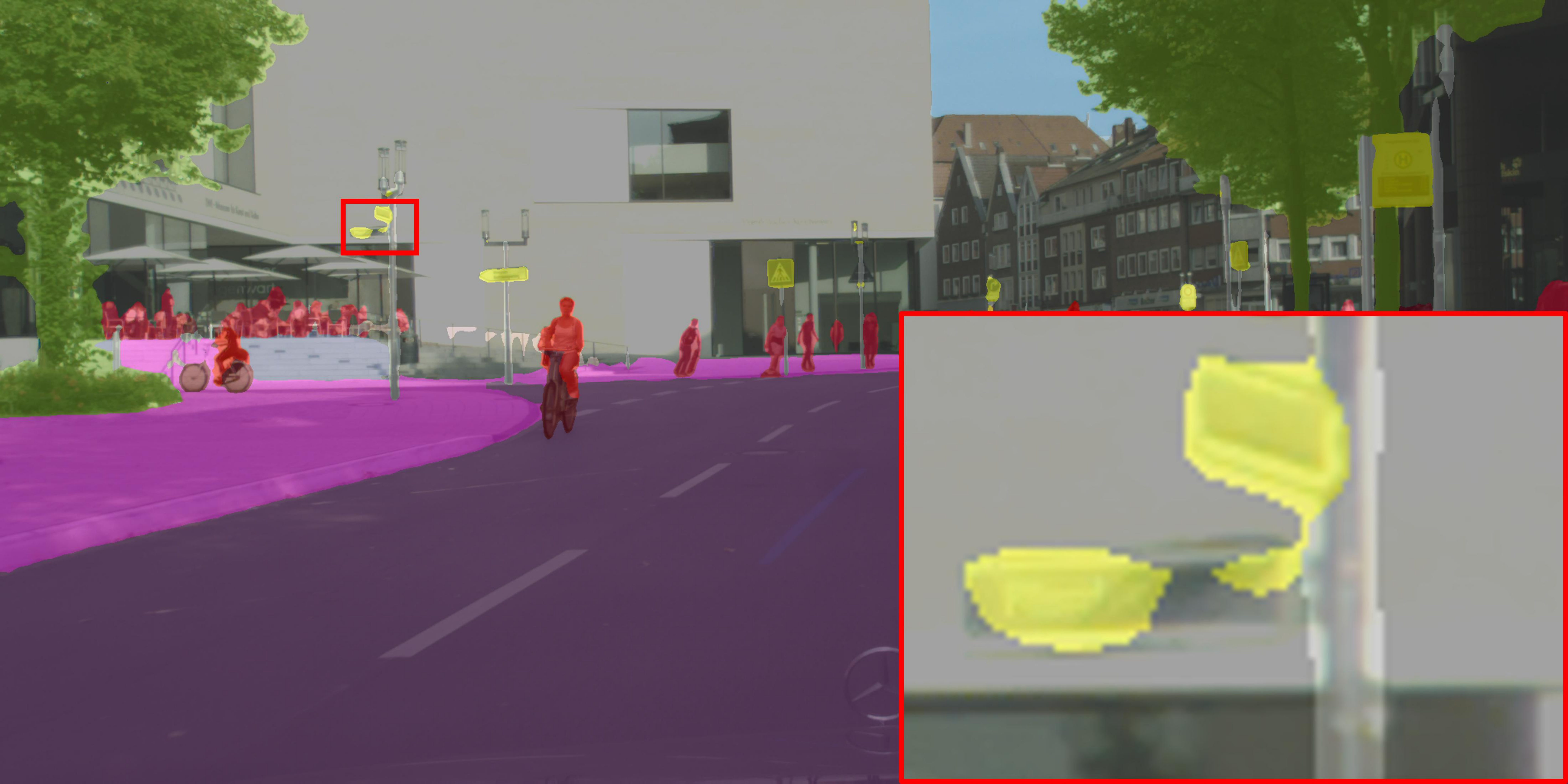}
\end{subfigure}

\begin{subfigure}{0.19\textwidth}
  \includegraphics[width=\linewidth]{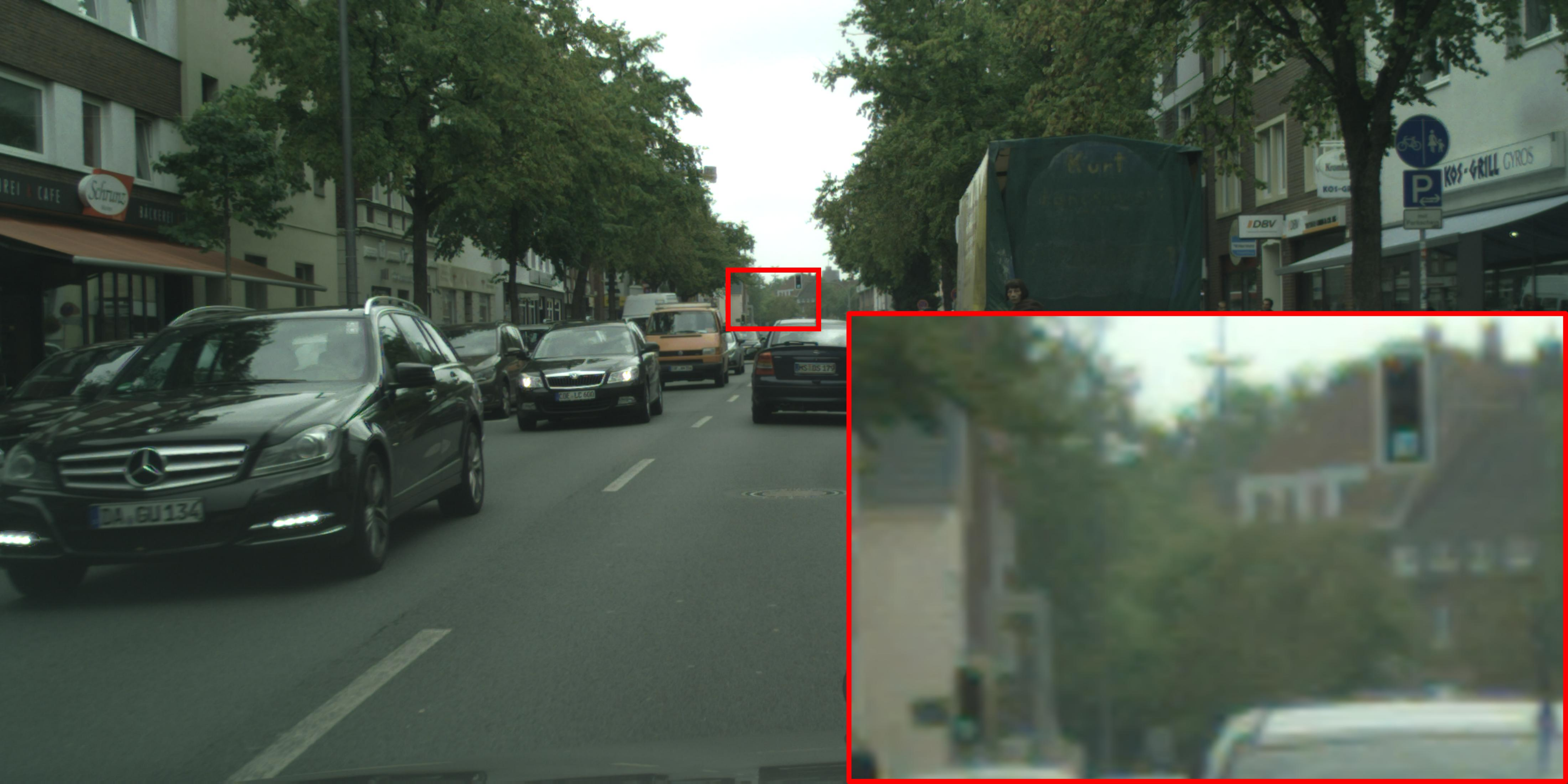}
\end{subfigure}
\begin{subfigure}{0.19\textwidth}
  \includegraphics[width=\linewidth]{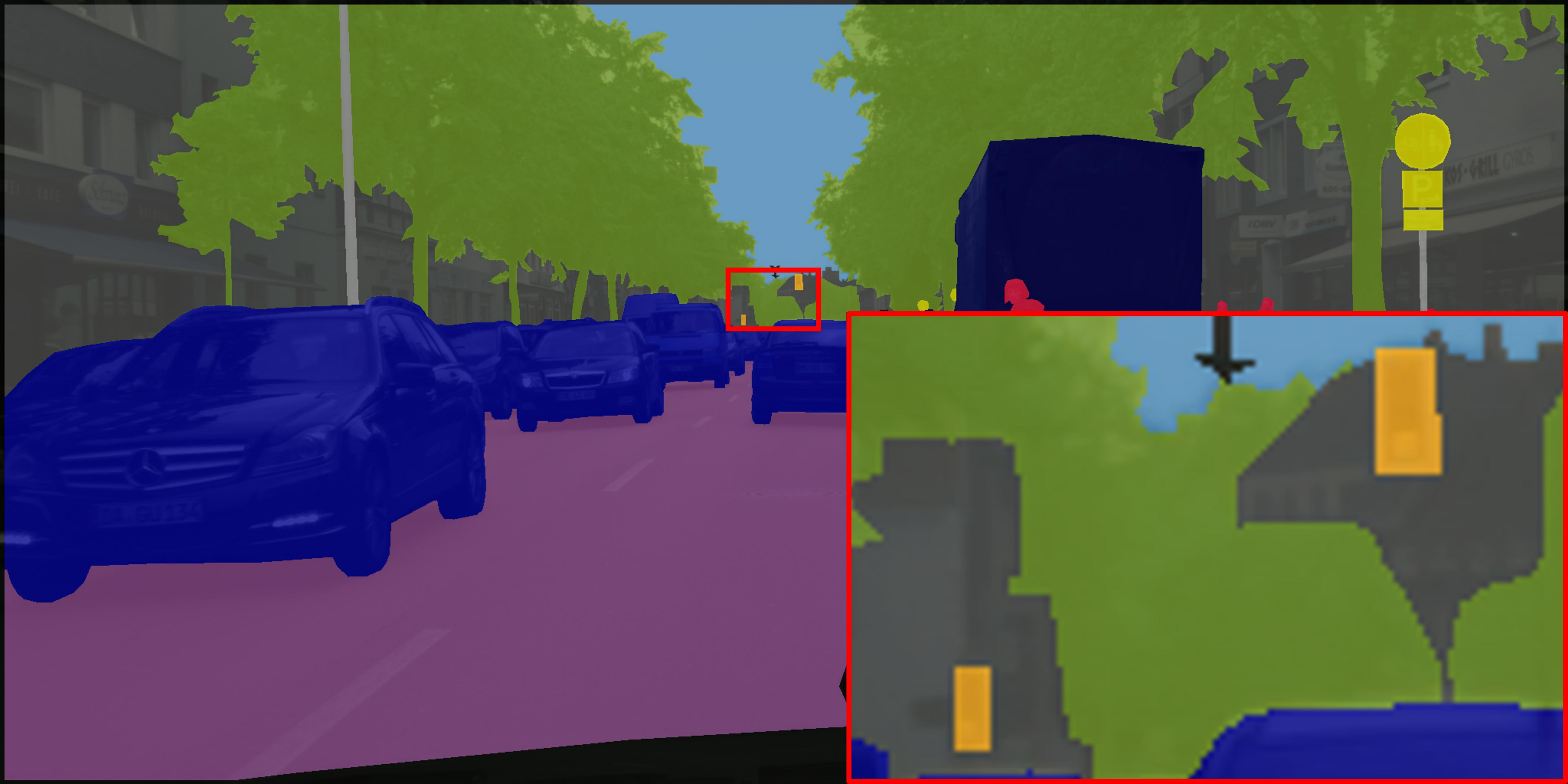}
\end{subfigure}
\begin{subfigure}{0.19\textwidth}
  \includegraphics[width=\linewidth]{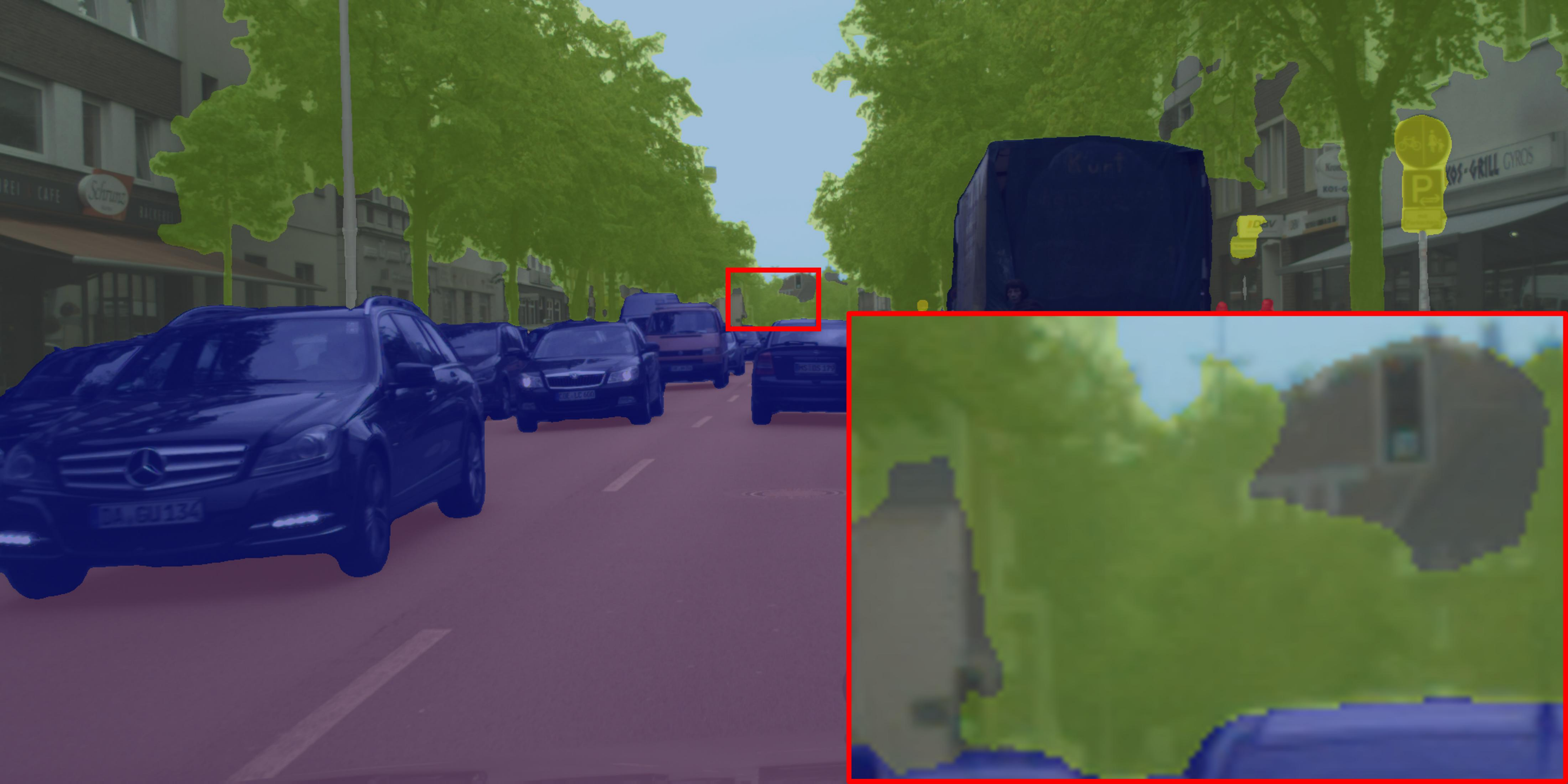}
\end{subfigure}
\begin{subfigure}{0.19\textwidth}
  \includegraphics[width=\linewidth]{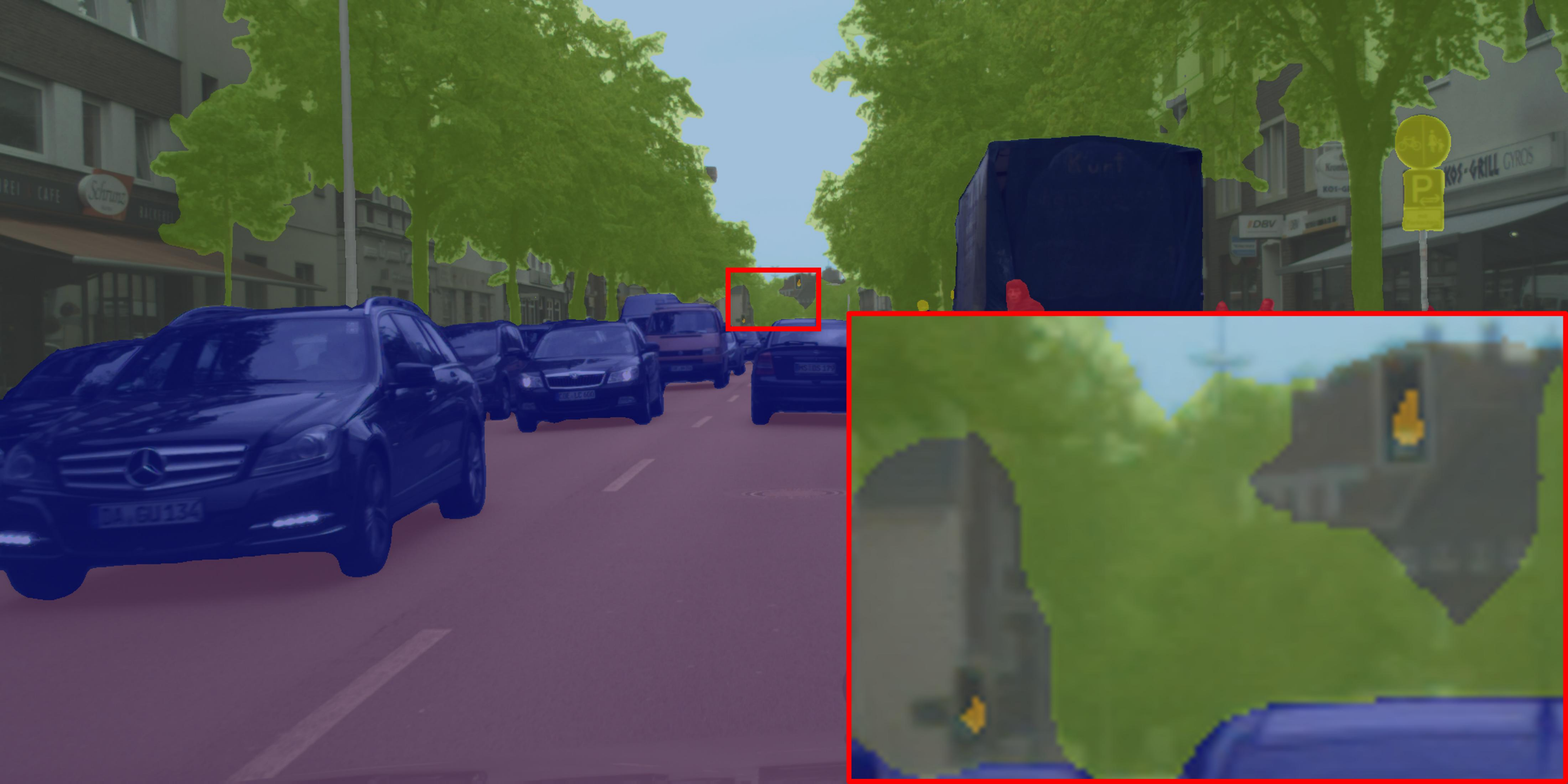}
\end{subfigure}
\begin{subfigure}{0.19\textwidth}
  \includegraphics[width=\linewidth]{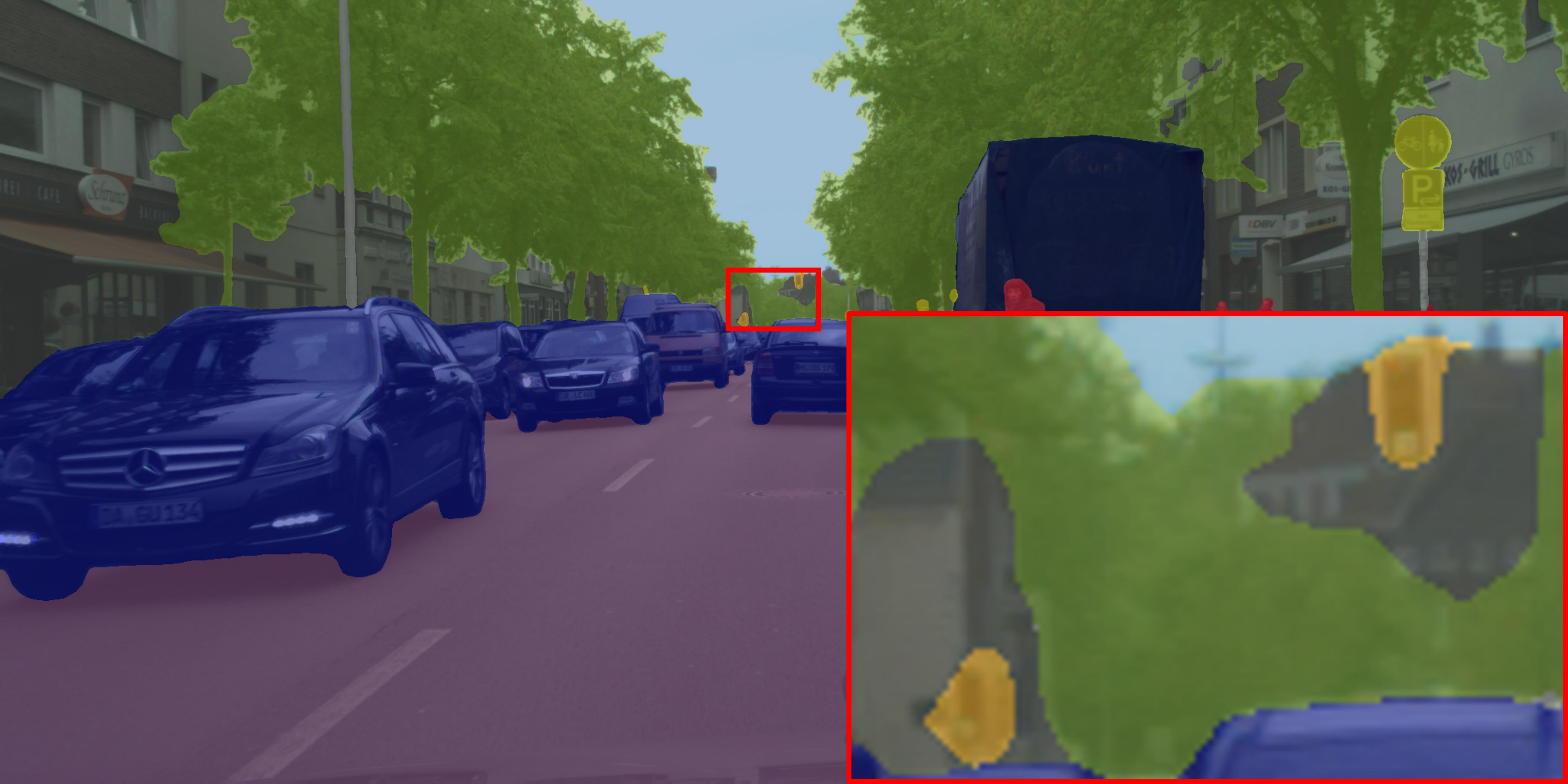}
\end{subfigure}

\begin{subfigure}{0.19\textwidth}
  \includegraphics[width=\linewidth]{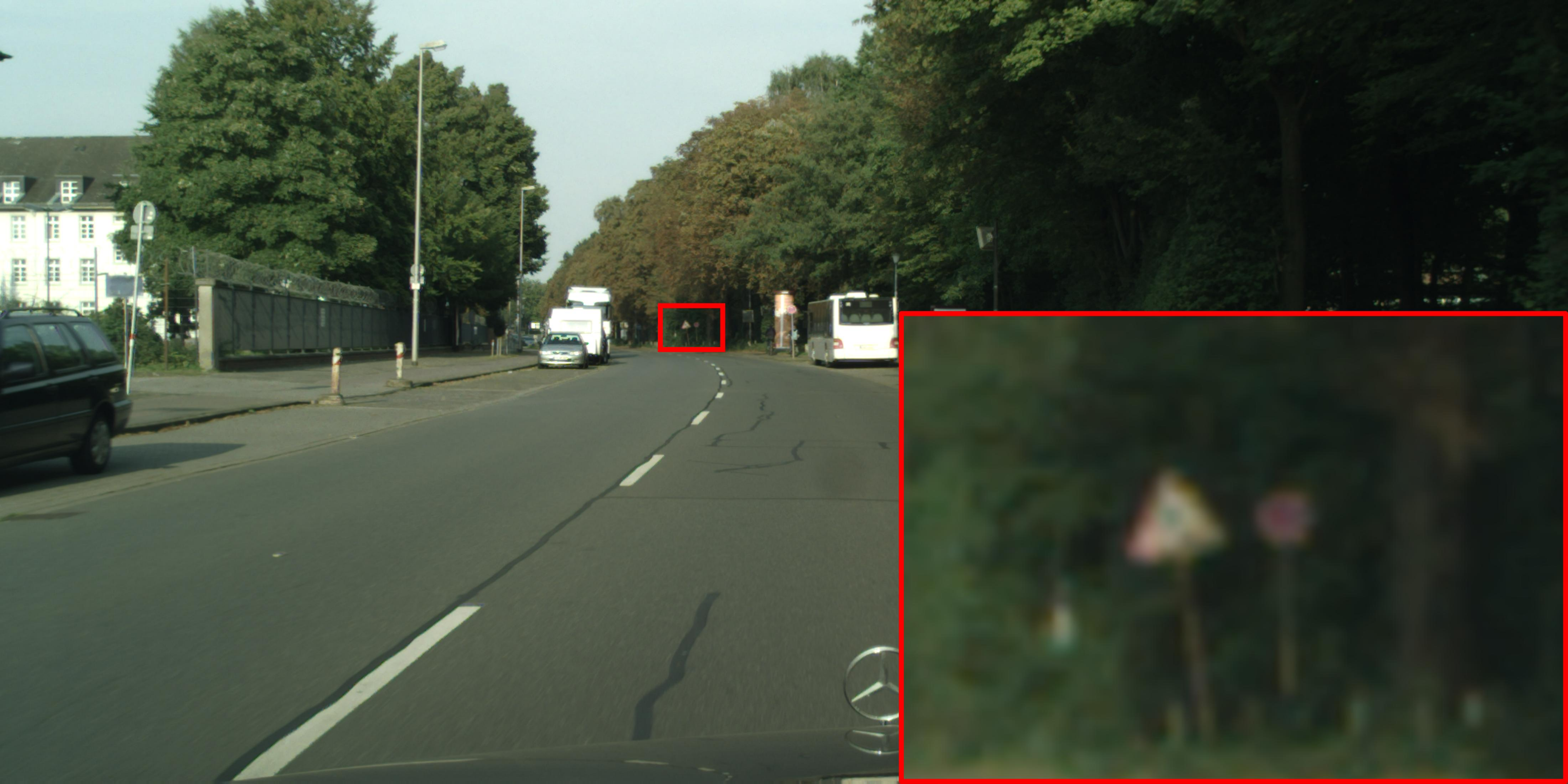}
  \caption*{Input}
\end{subfigure}
\begin{subfigure}{0.19\textwidth}
  \includegraphics[width=\linewidth]{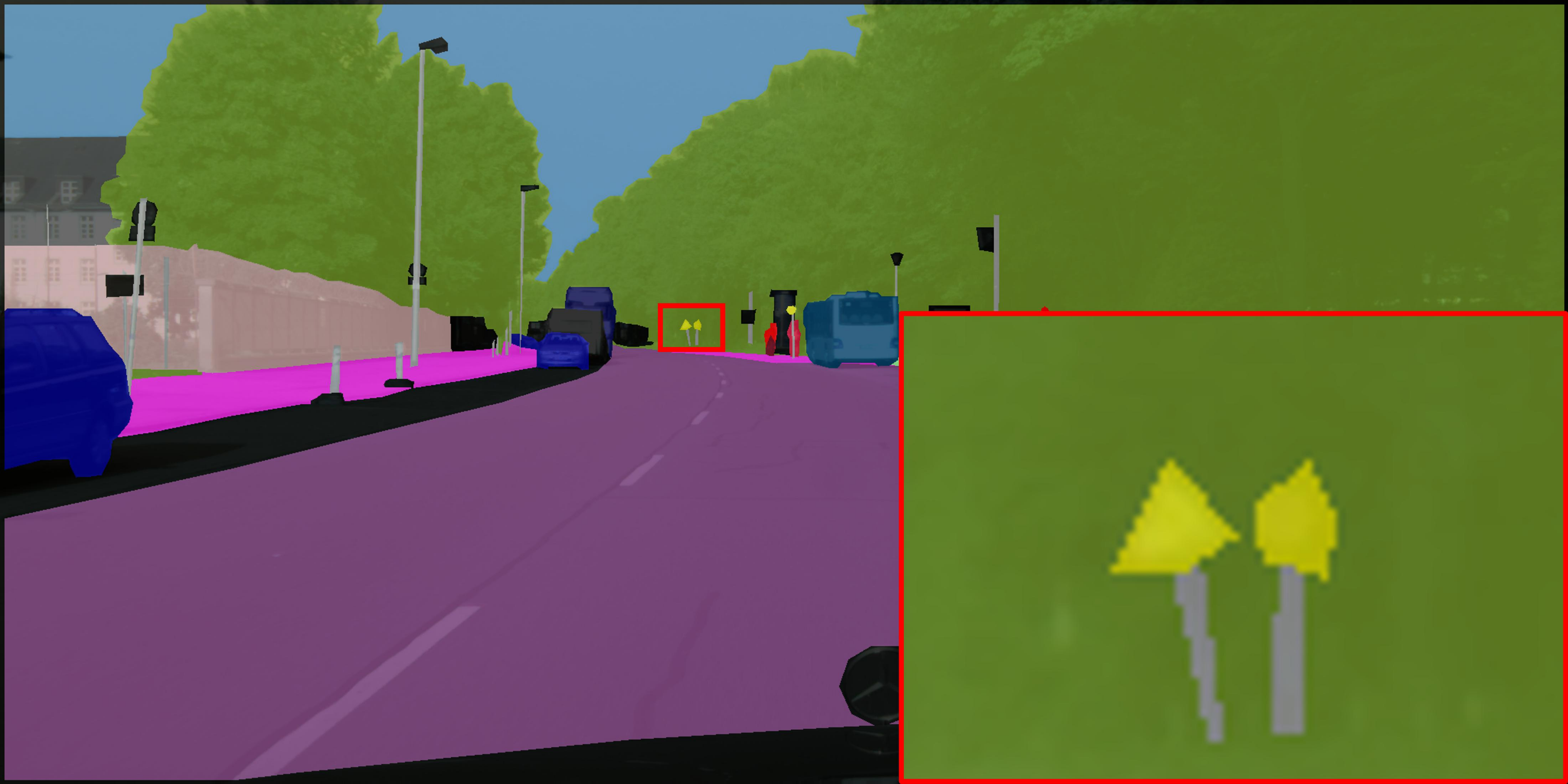}
  \caption*{Ground Truth}
\end{subfigure}
\begin{subfigure}{0.19\textwidth}
  \includegraphics[width=\linewidth]{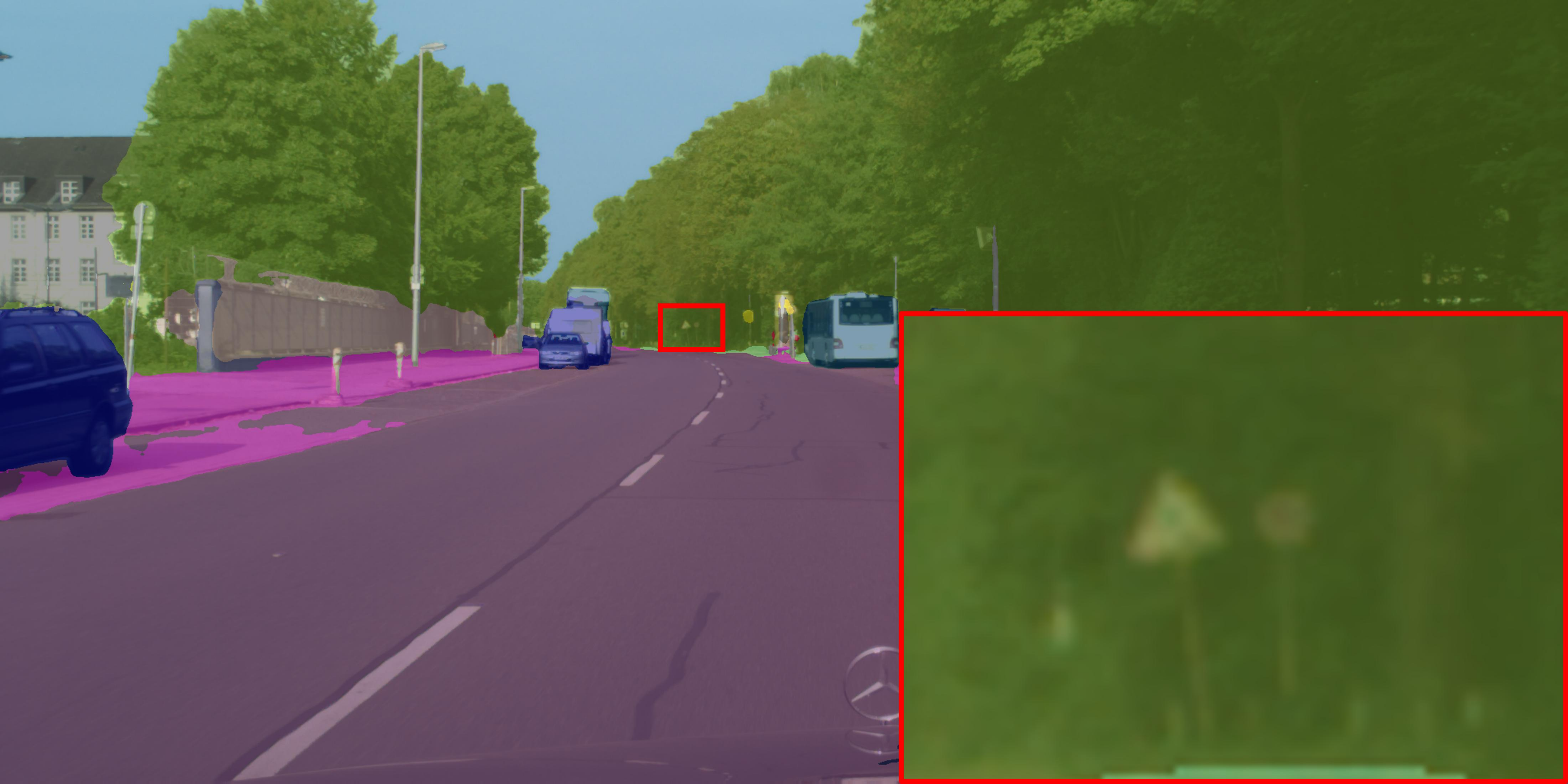}
  \caption*{DeepLabV3+}
\end{subfigure}
\begin{subfigure}{0.19\textwidth}
  \includegraphics[width=\linewidth]{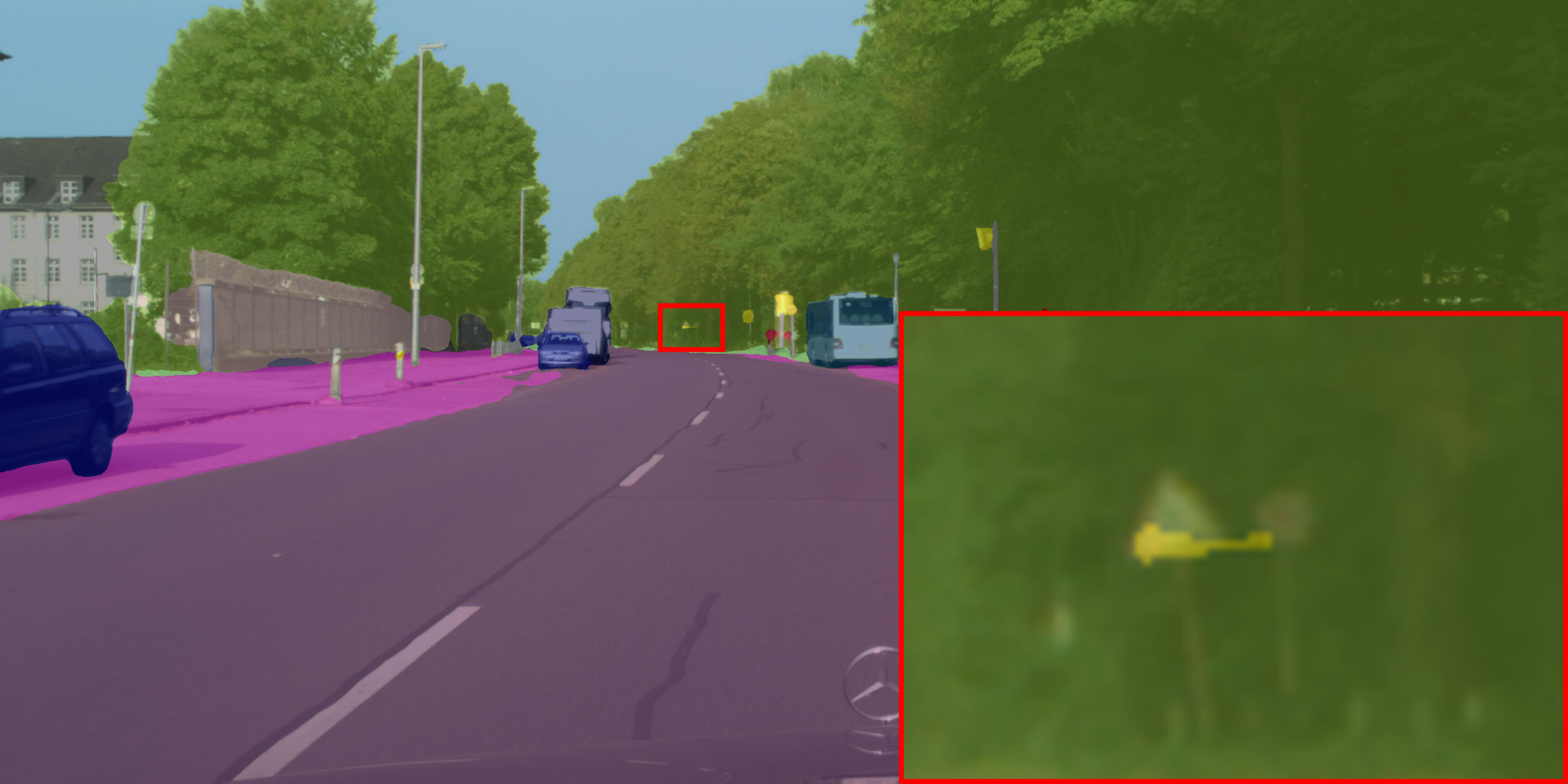}
  \caption*{SegNeXt}
\end{subfigure}
\begin{subfigure}{0.19\textwidth}
  \includegraphics[width=\linewidth]{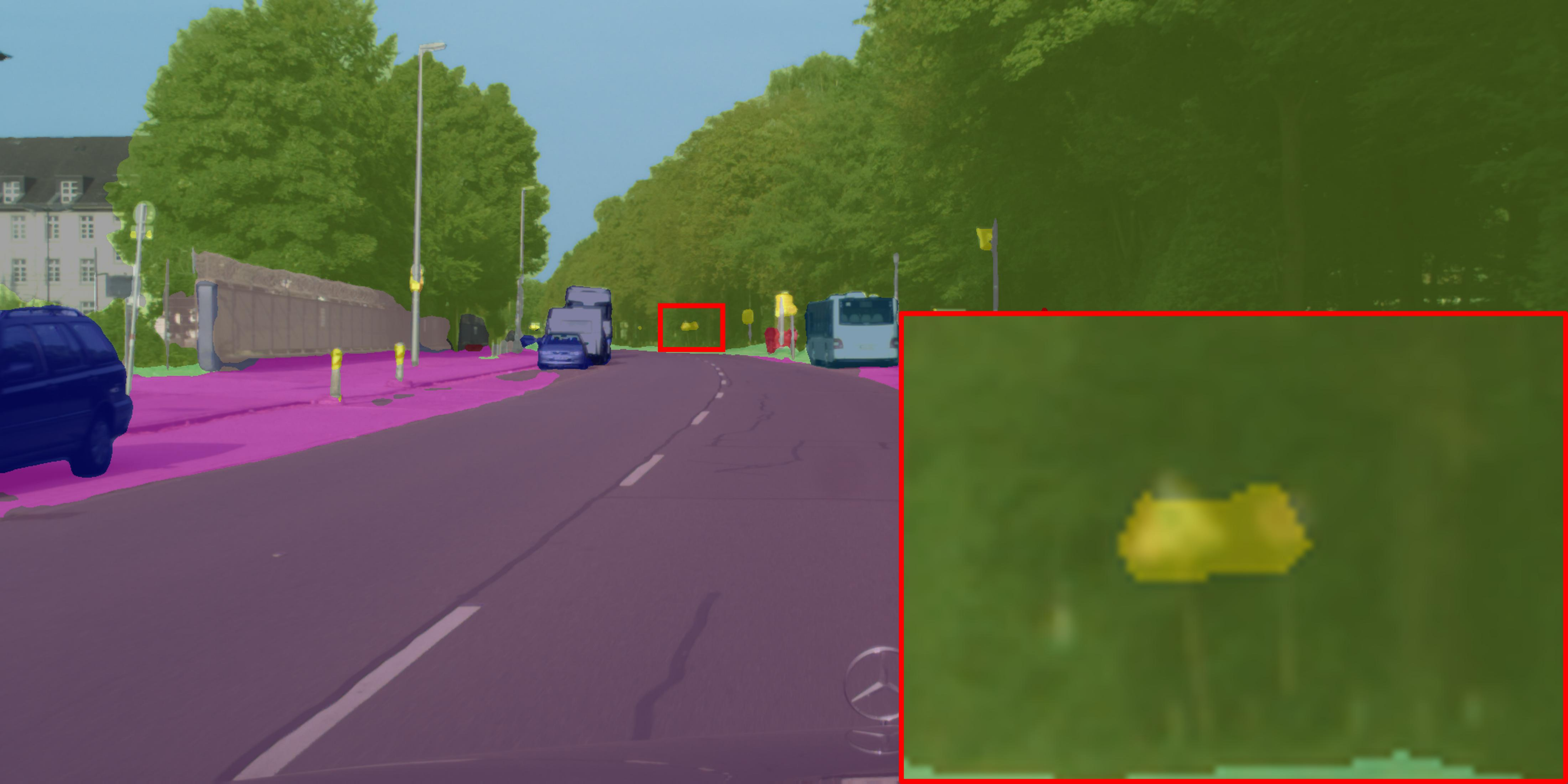}
  \caption*{AUCSeg (Ours)}
\end{subfigure}

\hdashrule[5pt]{0.99\textwidth}{0.5pt}{2mm}

\begin{subfigure}{0.19\textwidth}
  \includegraphics[width=\linewidth]{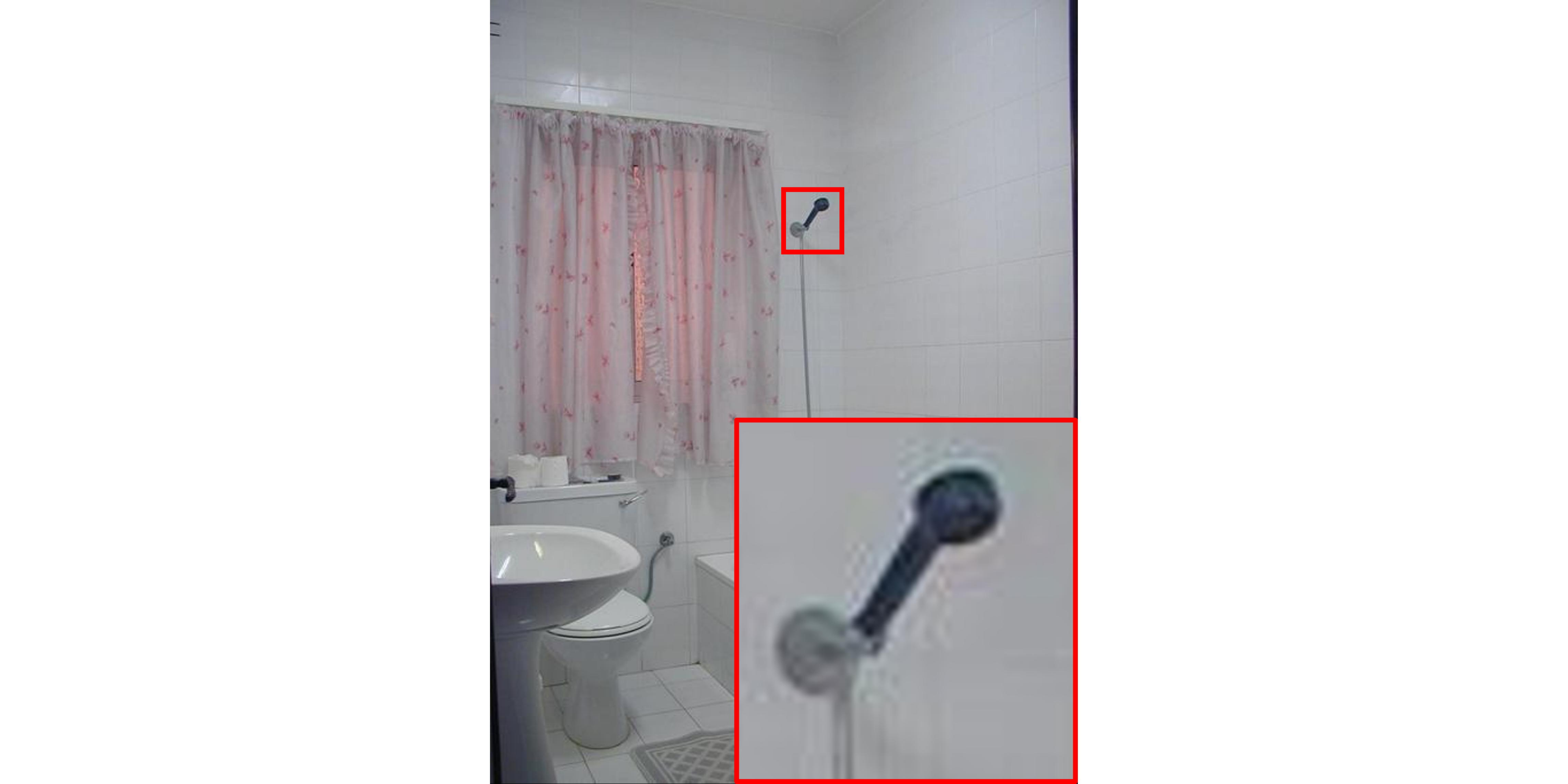}
\end{subfigure}
\begin{subfigure}{0.19\textwidth}
  \includegraphics[width=\linewidth]{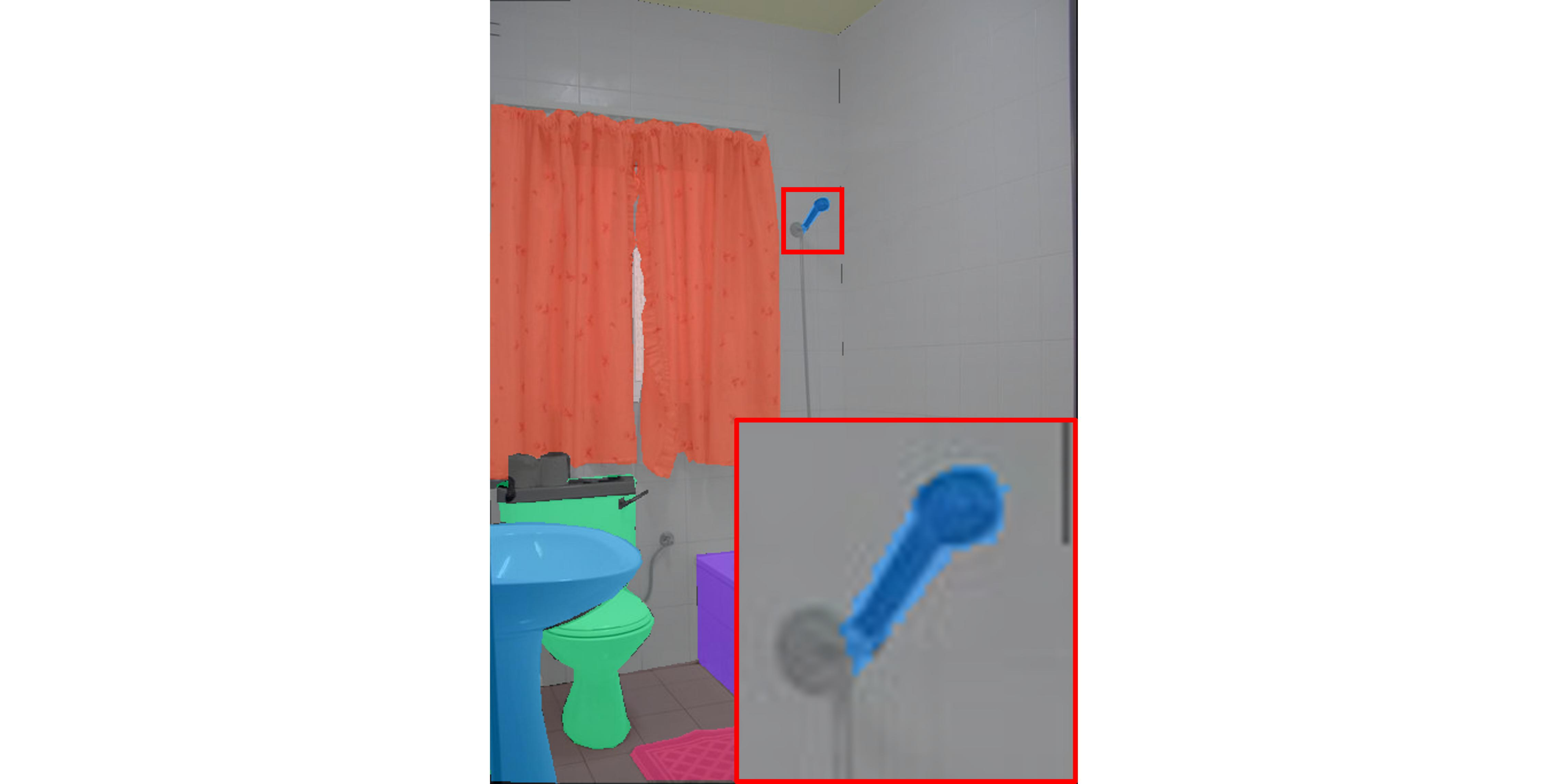}
\end{subfigure}
\begin{subfigure}{0.19\textwidth}
  \includegraphics[width=\linewidth]{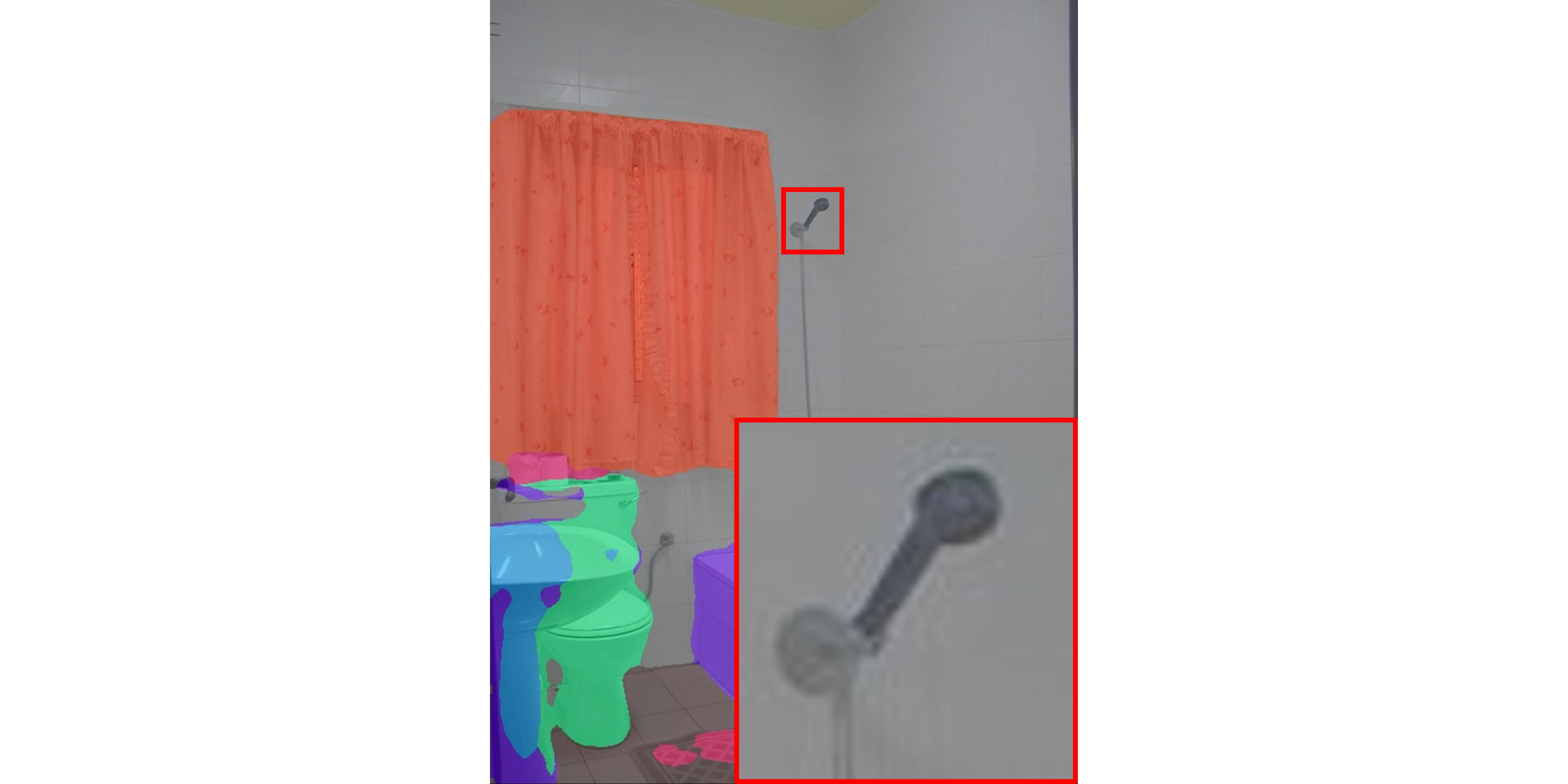}
\end{subfigure}
\begin{subfigure}{0.19\textwidth}
  \includegraphics[width=\linewidth]{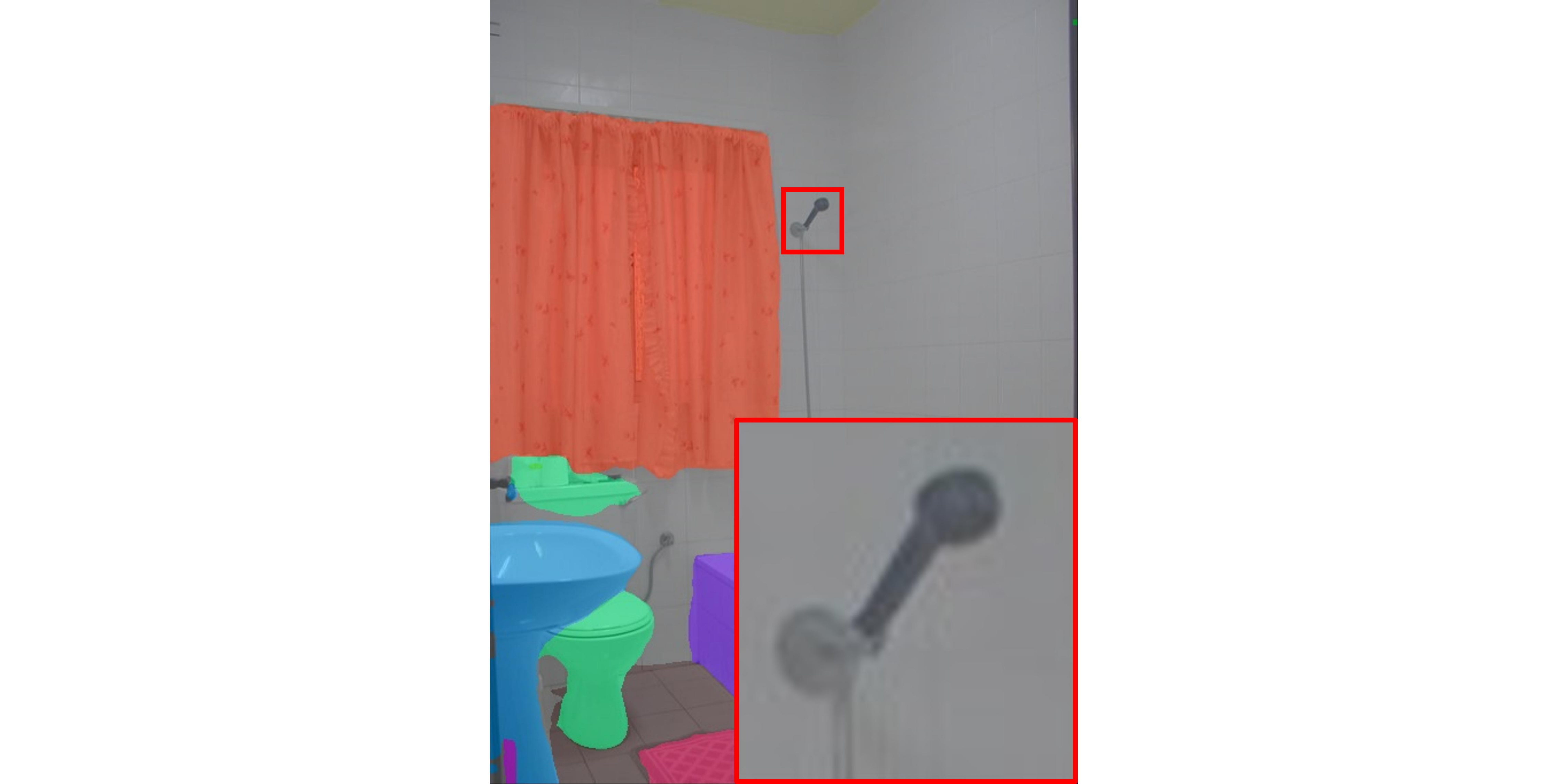}
\end{subfigure}
\begin{subfigure}{0.19\textwidth}
  \includegraphics[width=\linewidth]{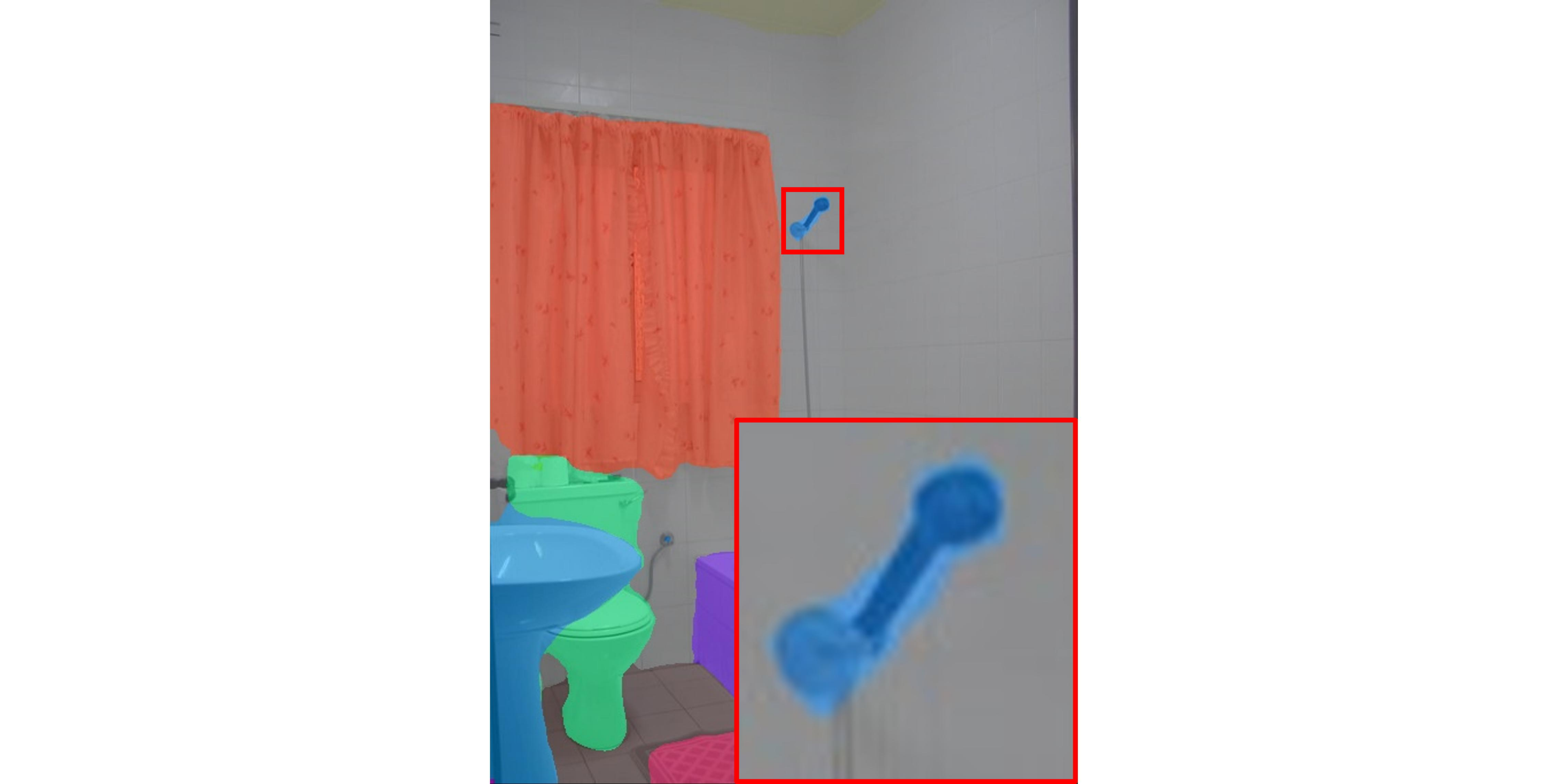}
\end{subfigure}

\begin{subfigure}{0.19\textwidth}
  \includegraphics[width=\linewidth]{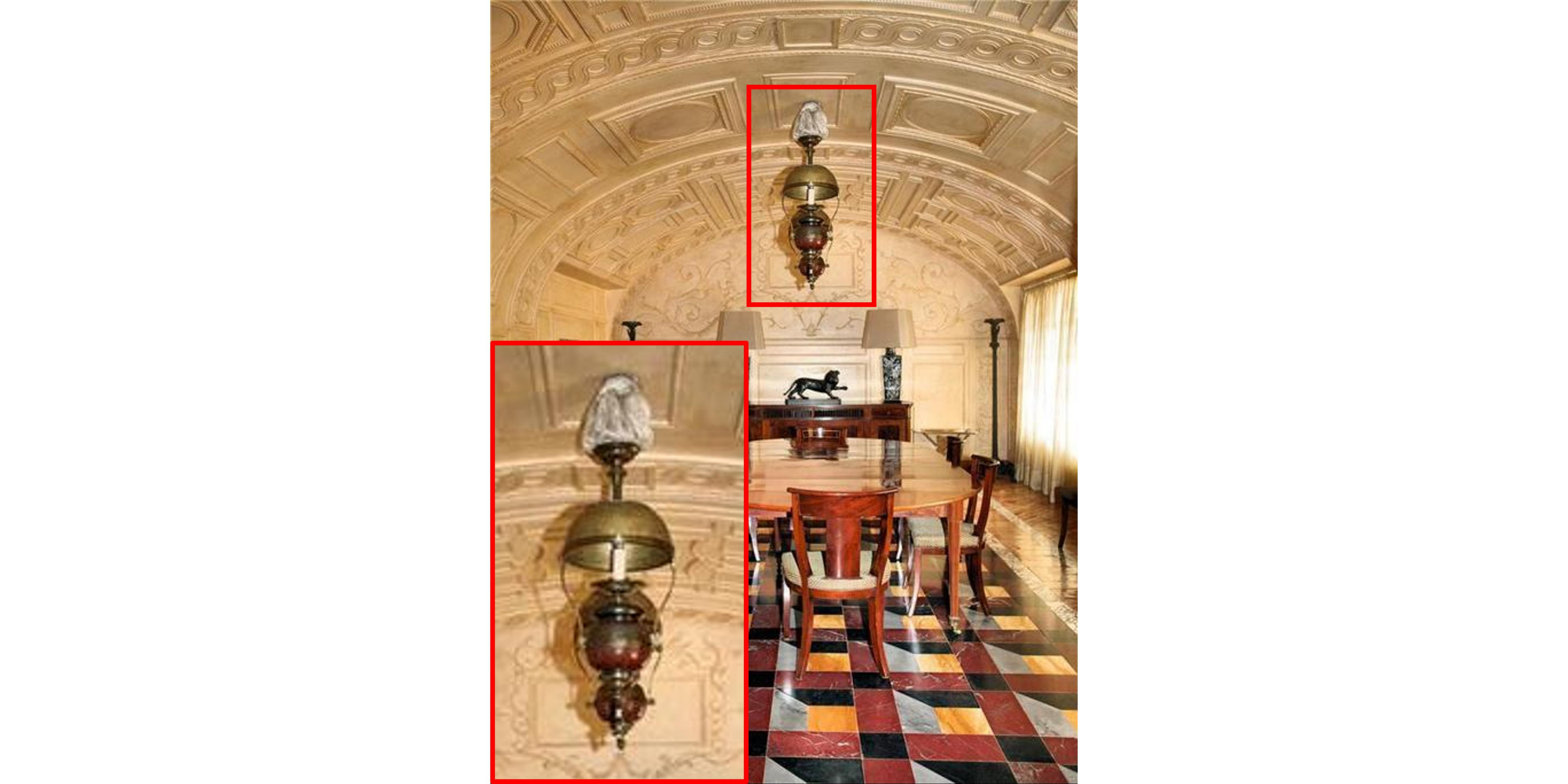}
\end{subfigure}
\begin{subfigure}{0.19\textwidth}
  \includegraphics[width=\linewidth]{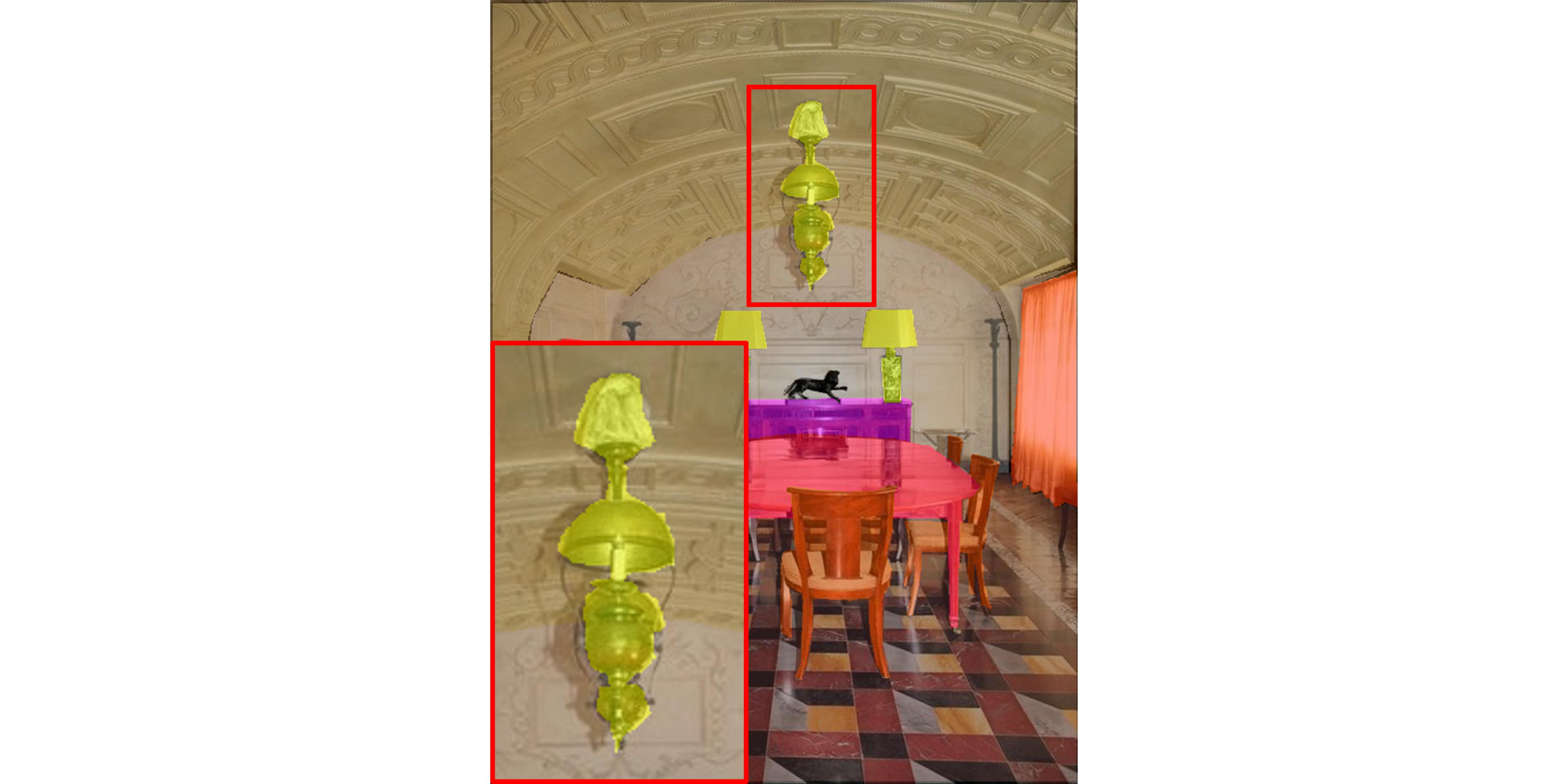}
\end{subfigure}
\begin{subfigure}{0.19\textwidth}
  \includegraphics[width=\linewidth]{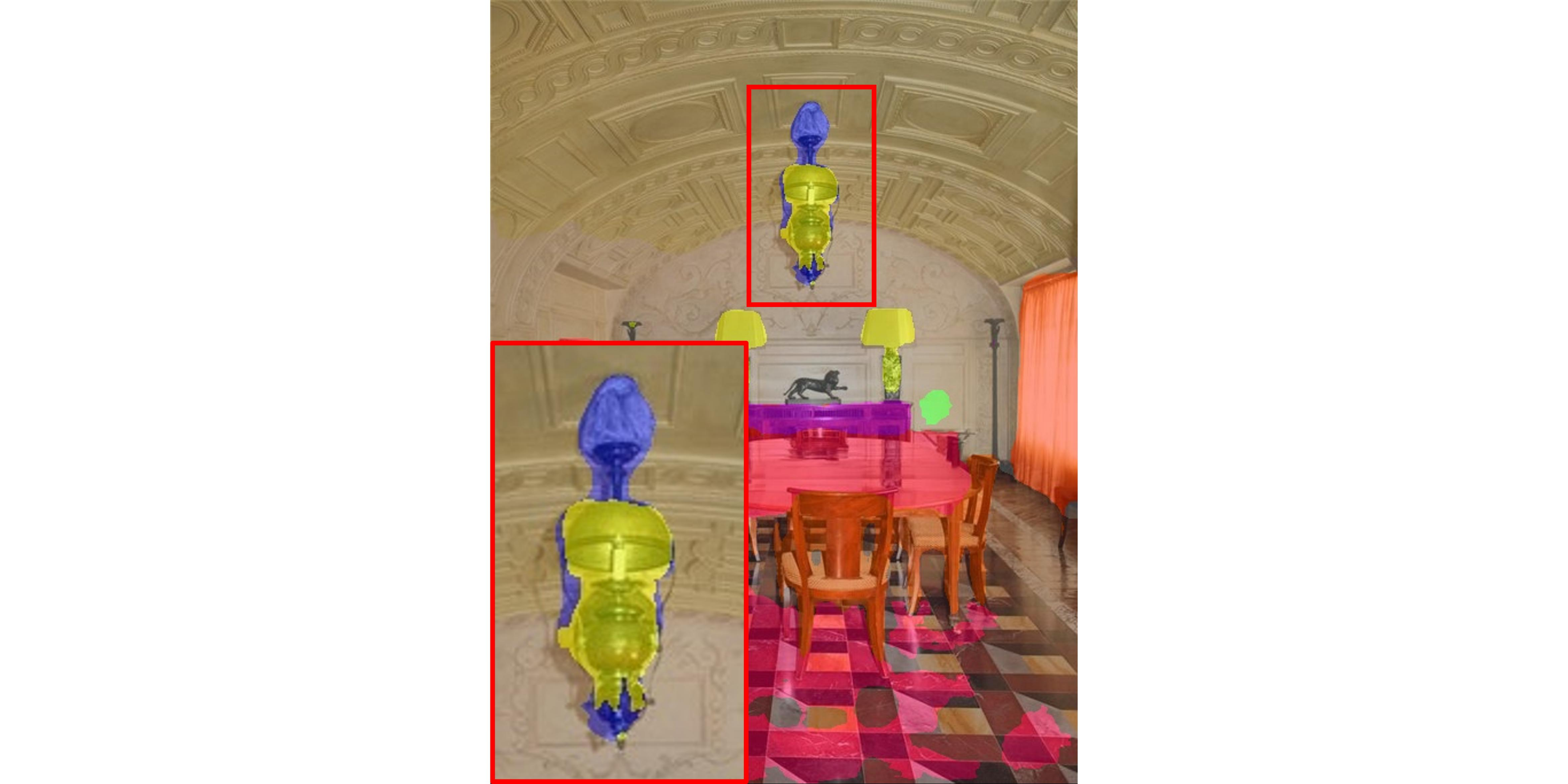}
\end{subfigure}
\begin{subfigure}{0.19\textwidth}
  \includegraphics[width=\linewidth]{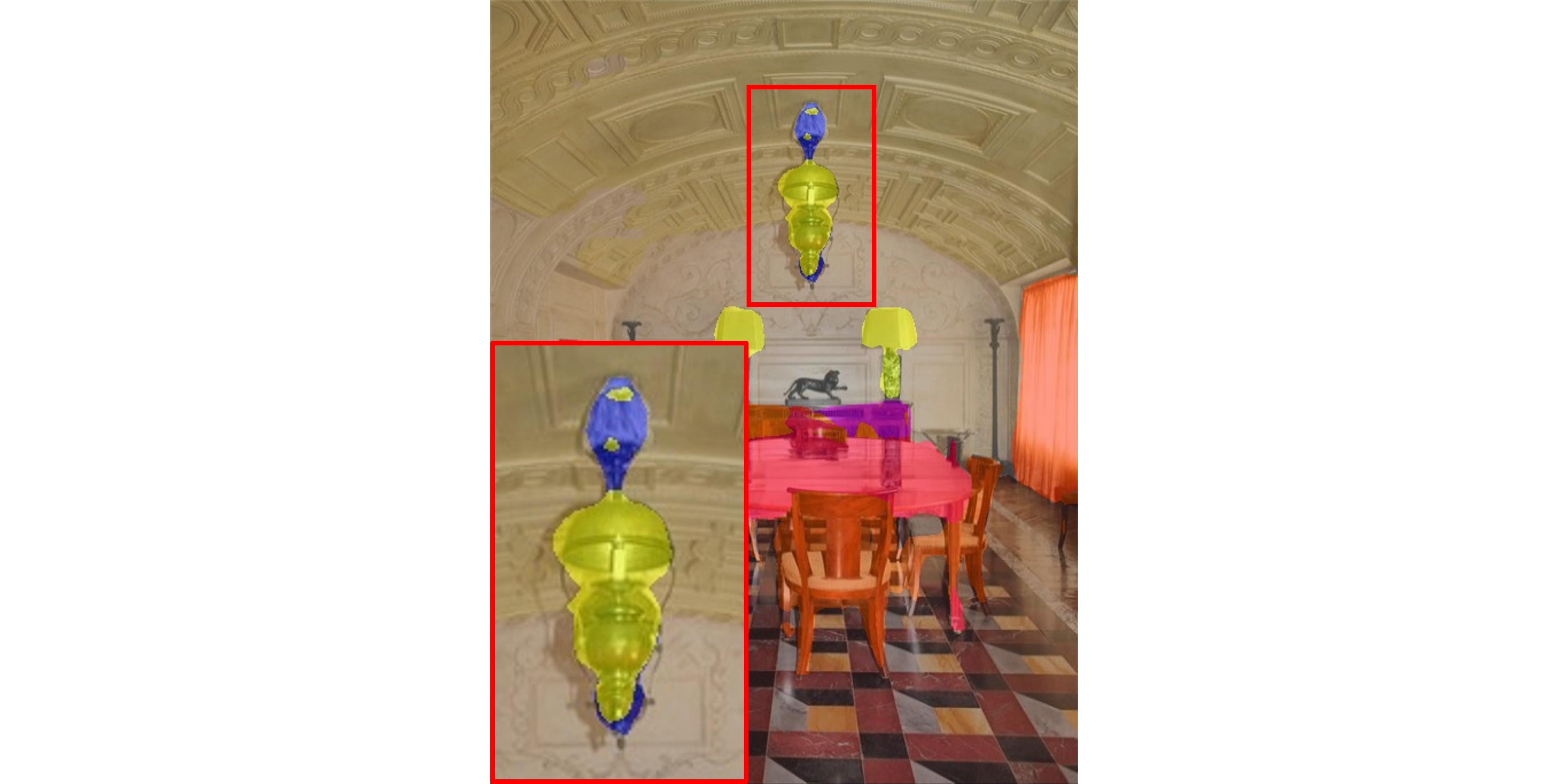}
\end{subfigure}
\begin{subfigure}{0.19\textwidth}
  \includegraphics[width=\linewidth]{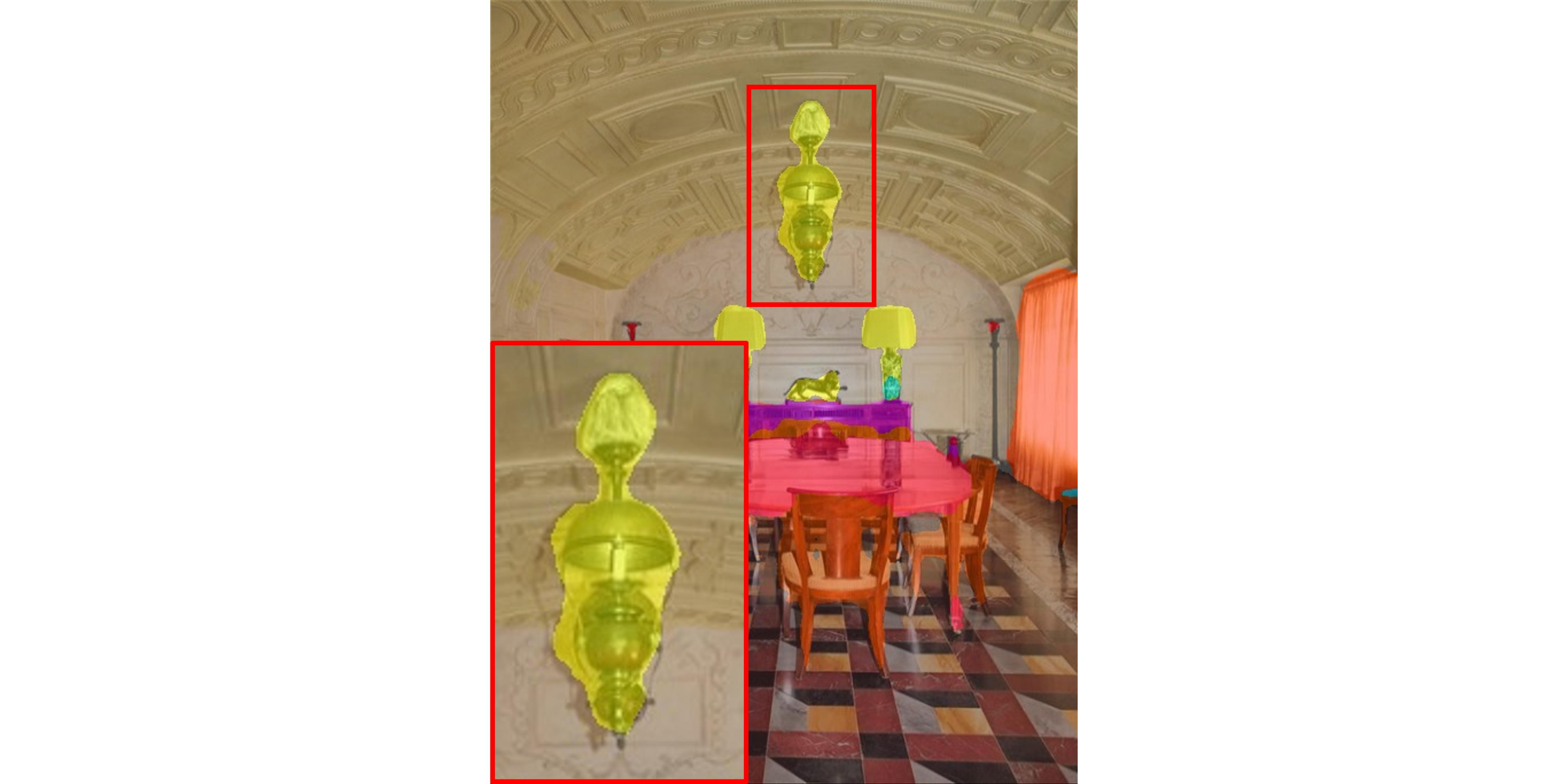}
\end{subfigure}

\begin{subfigure}{0.19\textwidth}
  \includegraphics[width=\linewidth]{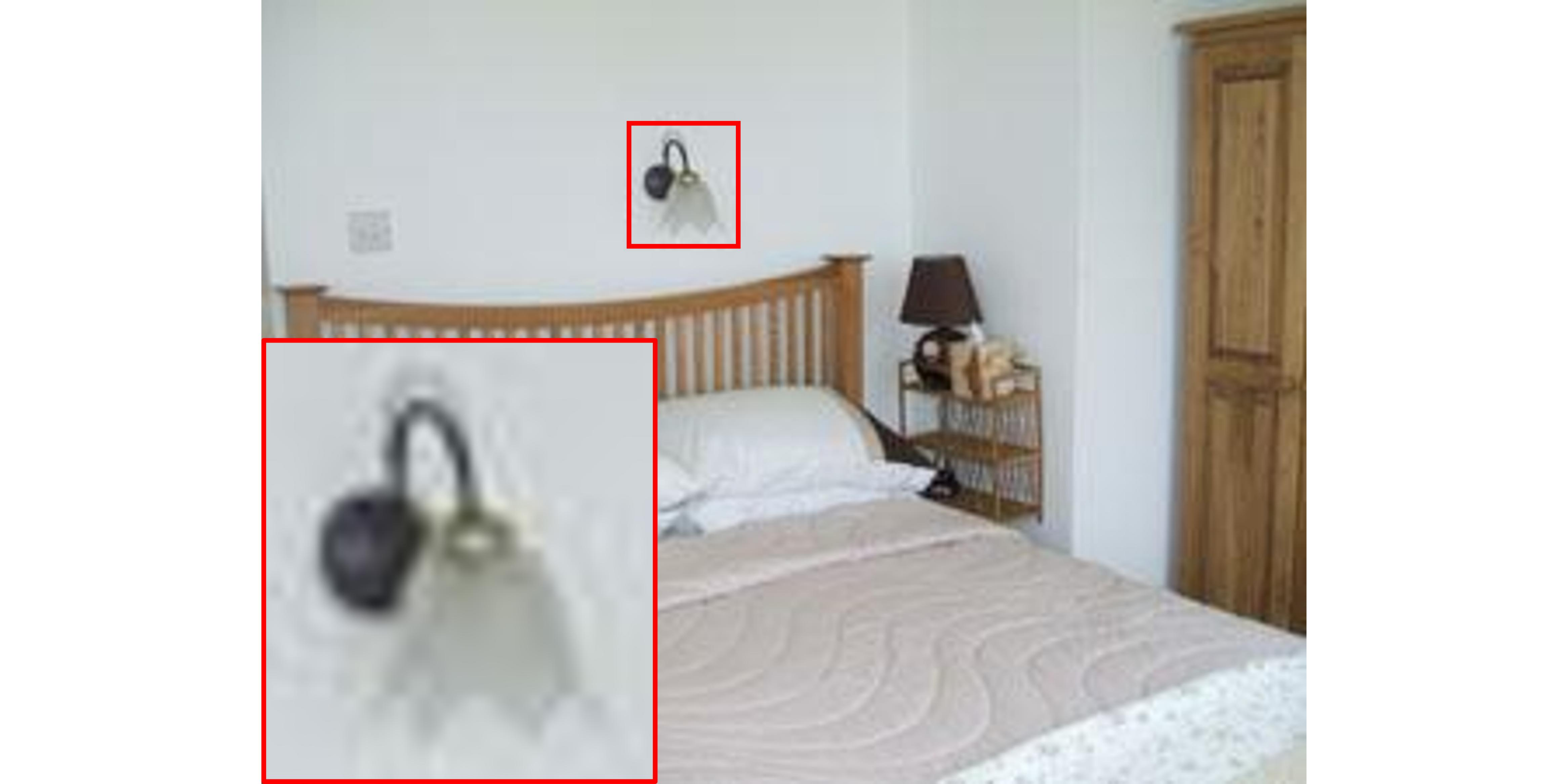}
\end{subfigure}
\begin{subfigure}{0.19\textwidth}
  \includegraphics[width=\linewidth]{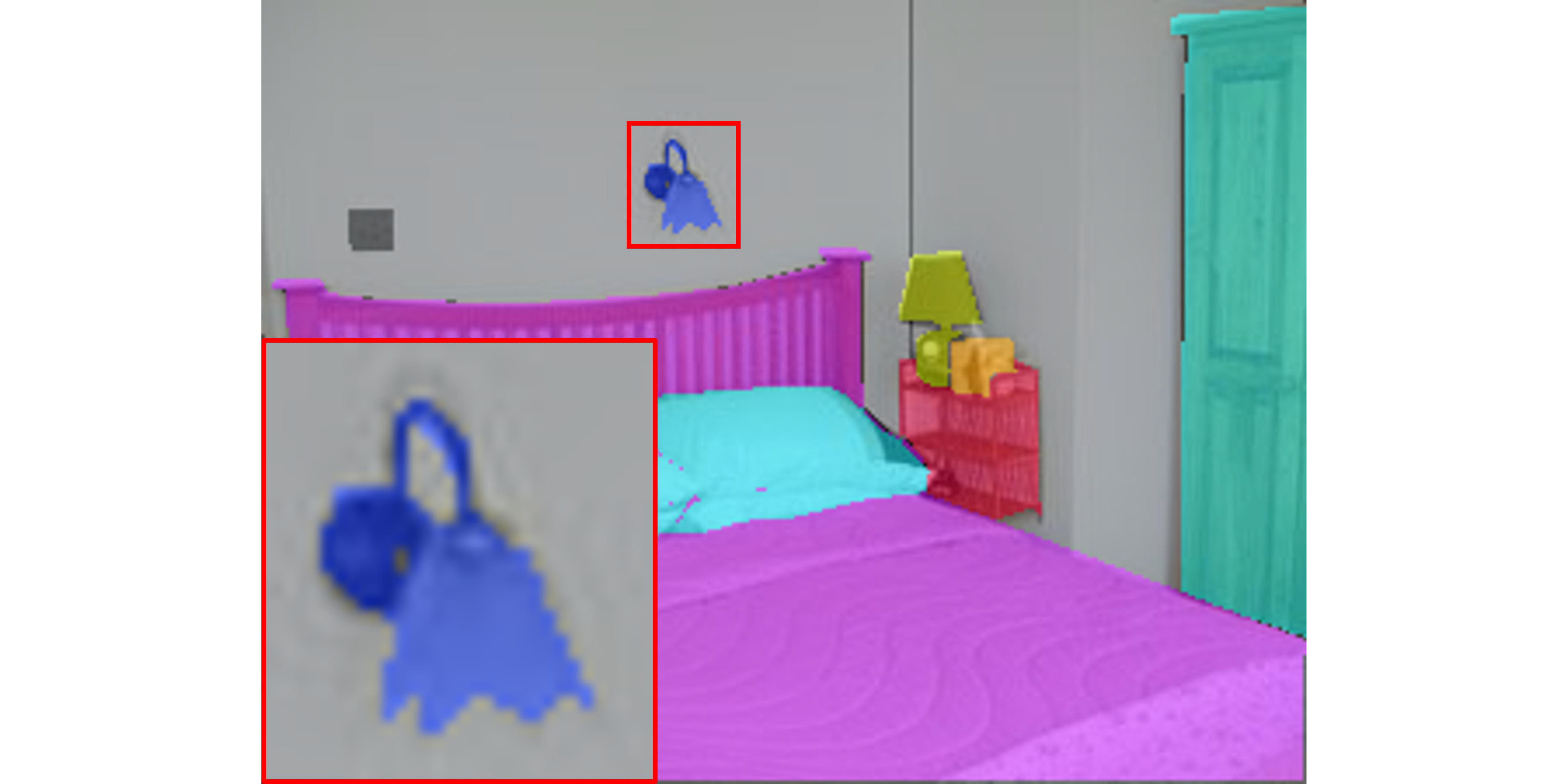}
\end{subfigure}
\begin{subfigure}{0.19\textwidth}
  \includegraphics[width=\linewidth]{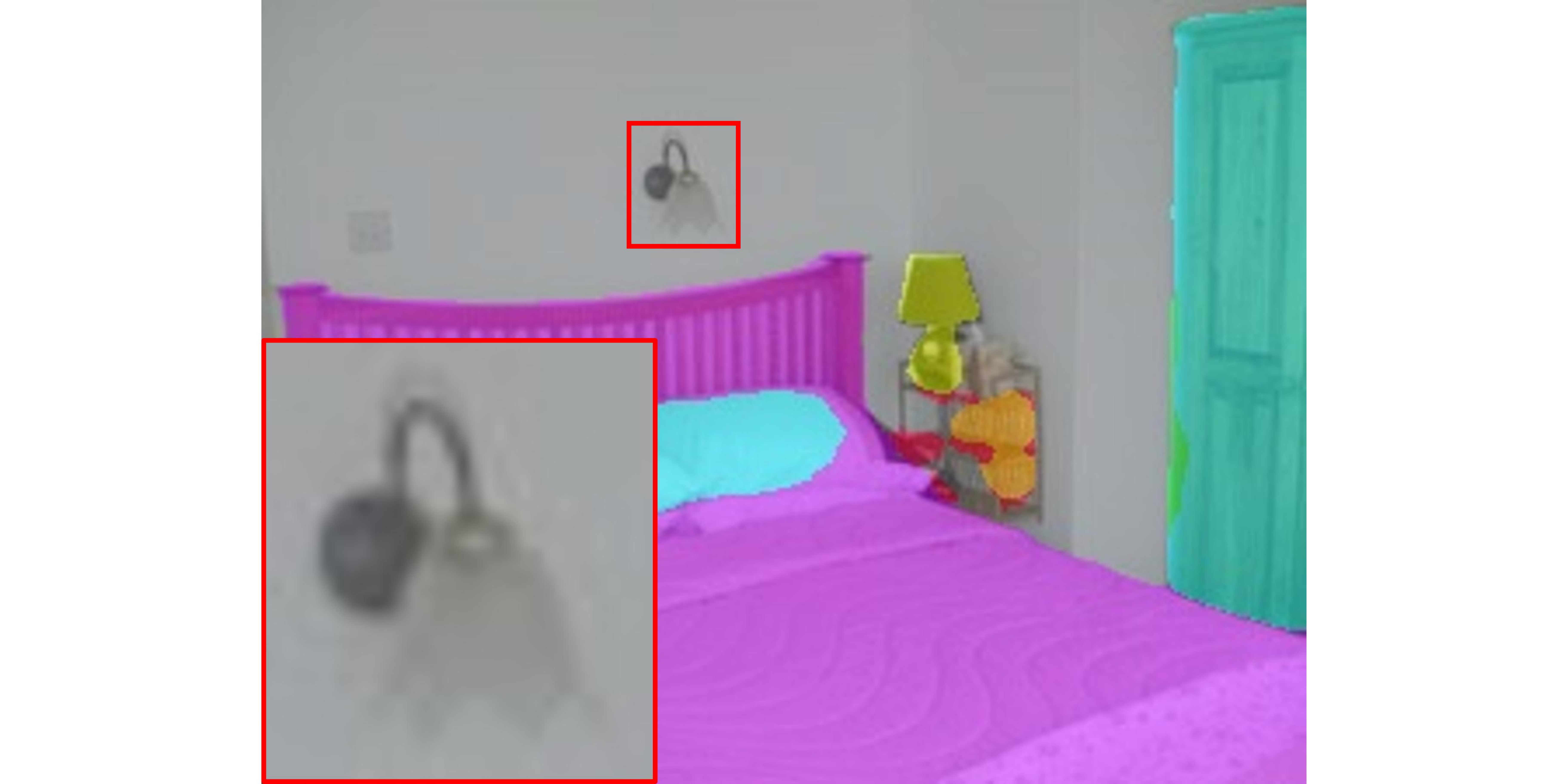}
\end{subfigure}
\begin{subfigure}{0.19\textwidth}
  \includegraphics[width=\linewidth]{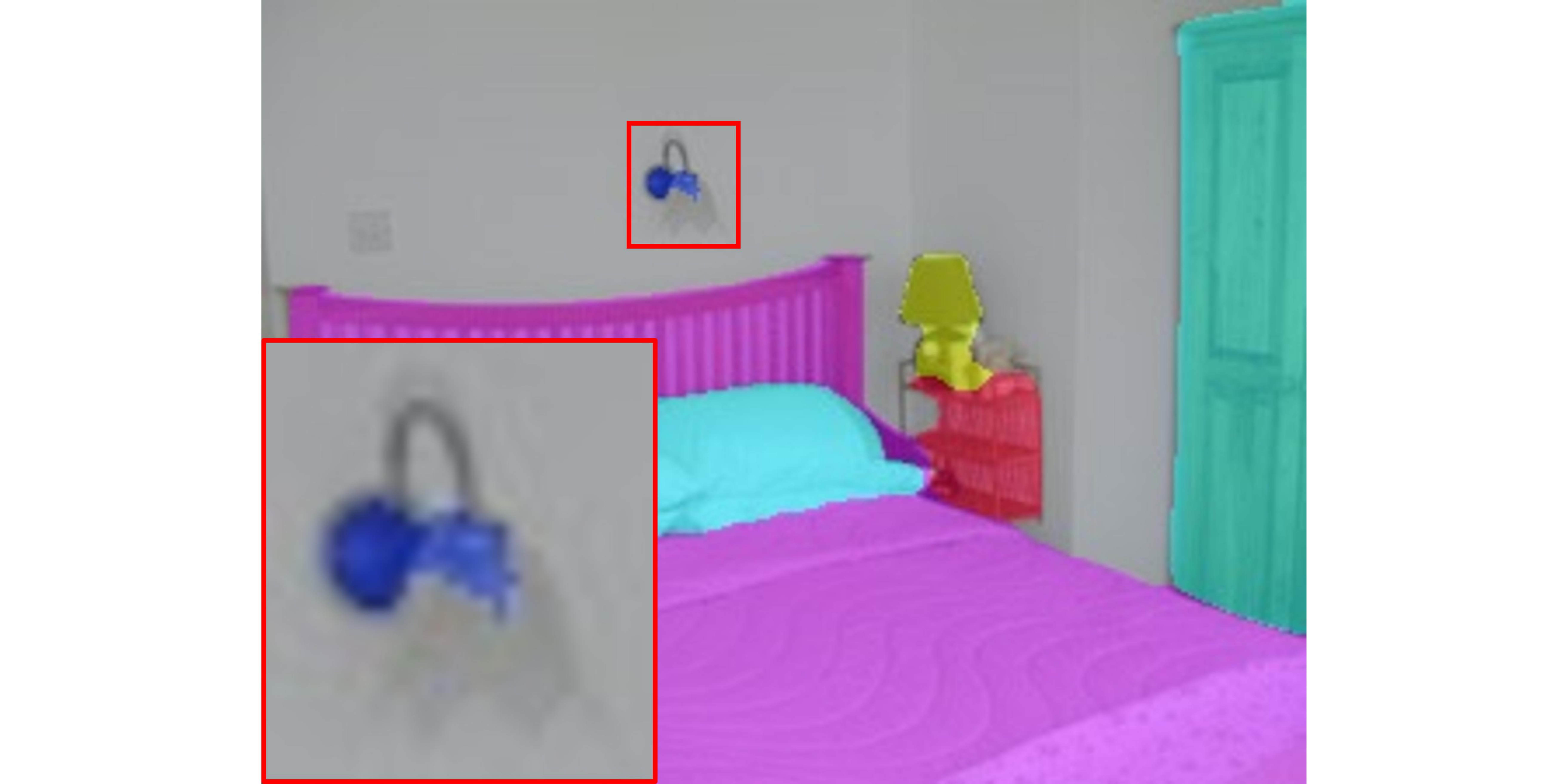}
\end{subfigure}
\begin{subfigure}{0.19\textwidth}
  \includegraphics[width=\linewidth]{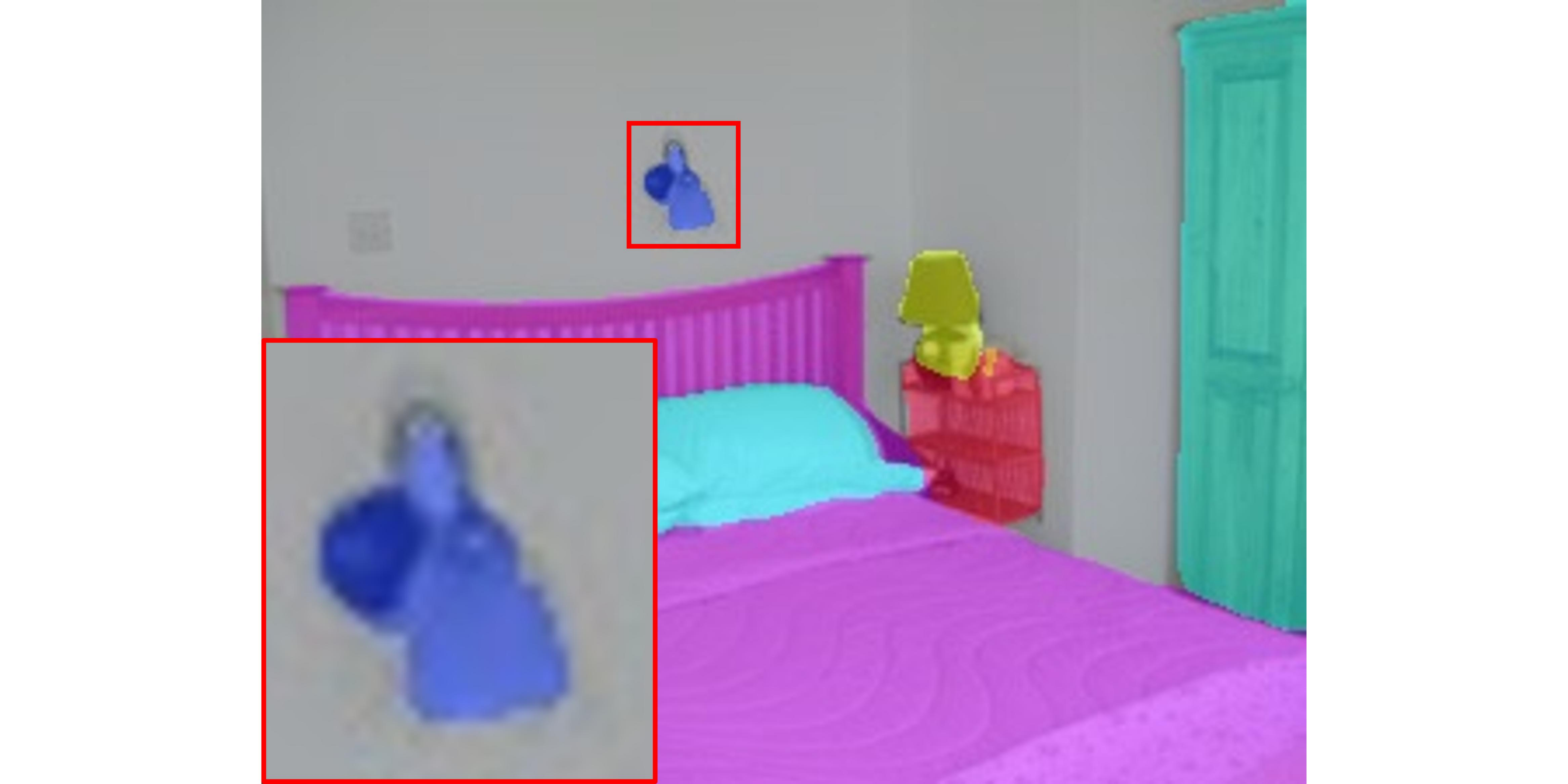}
\end{subfigure}

\begin{subfigure}{0.19\textwidth}
  \includegraphics[width=\linewidth]{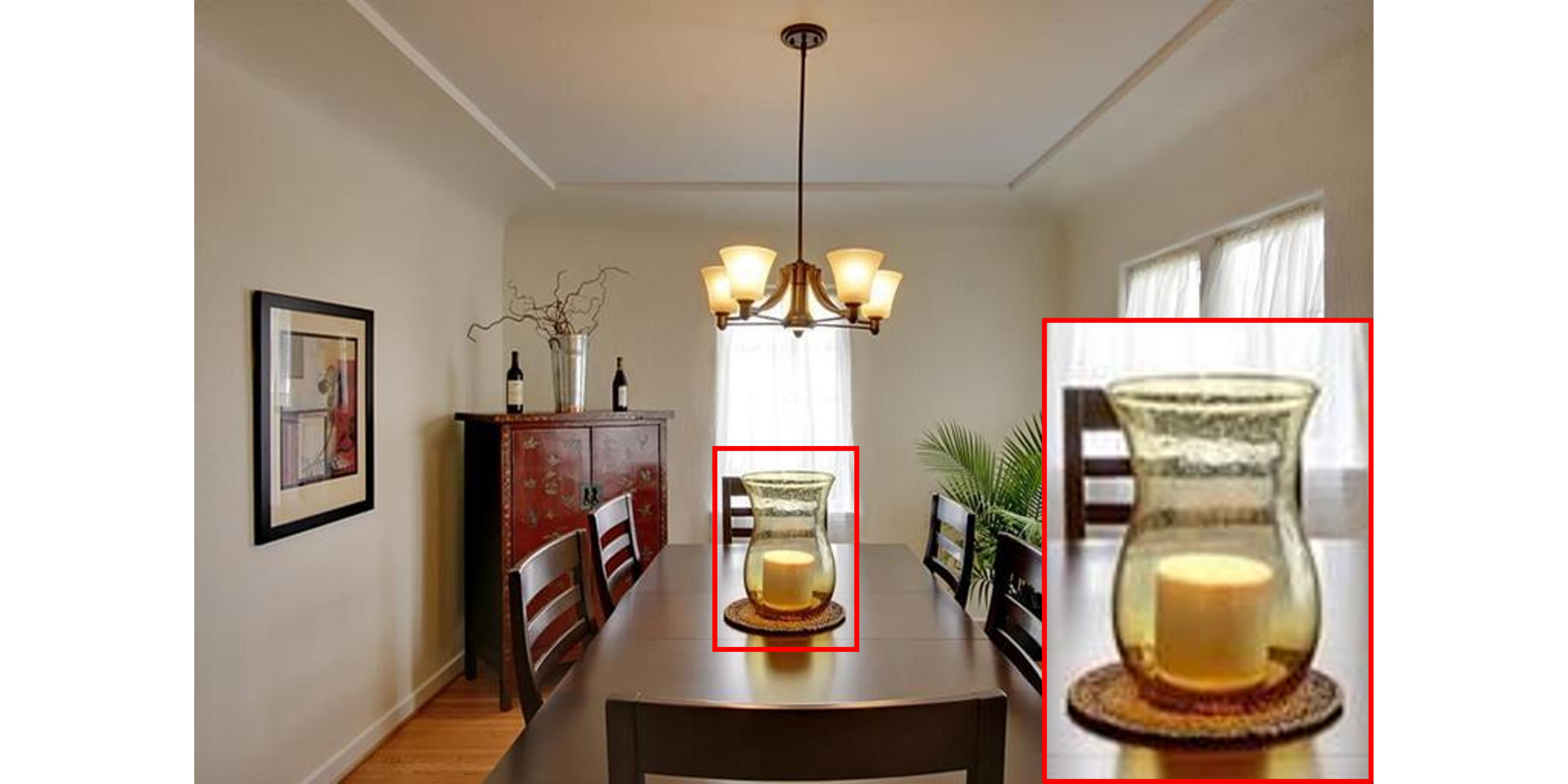}
  \caption*{Input}
\end{subfigure}
\begin{subfigure}{0.19\textwidth}
  \includegraphics[width=\linewidth]{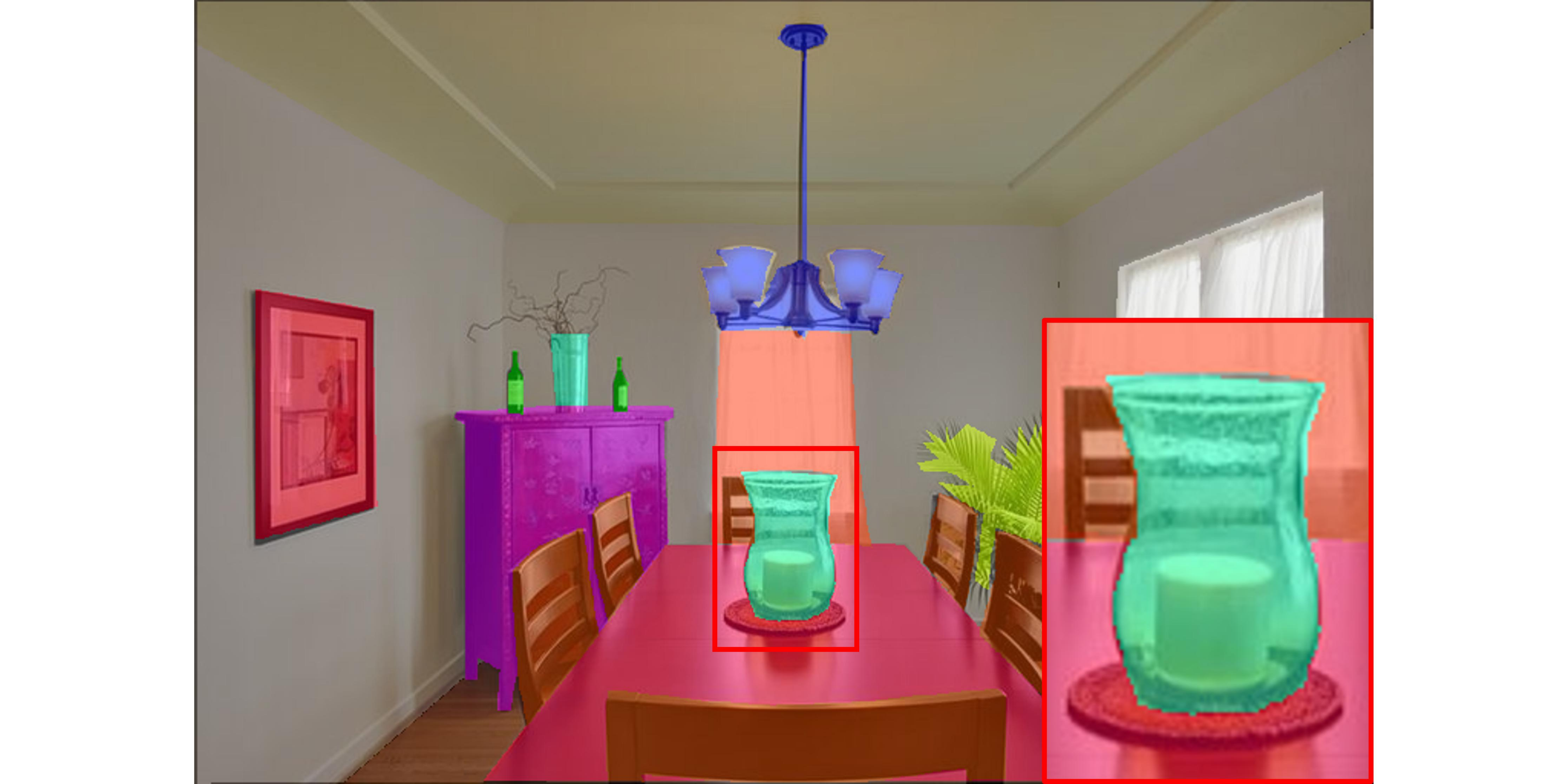}
  \caption*{Ground Truth}
\end{subfigure}
\begin{subfigure}{0.19\textwidth}
  \includegraphics[width=\linewidth]{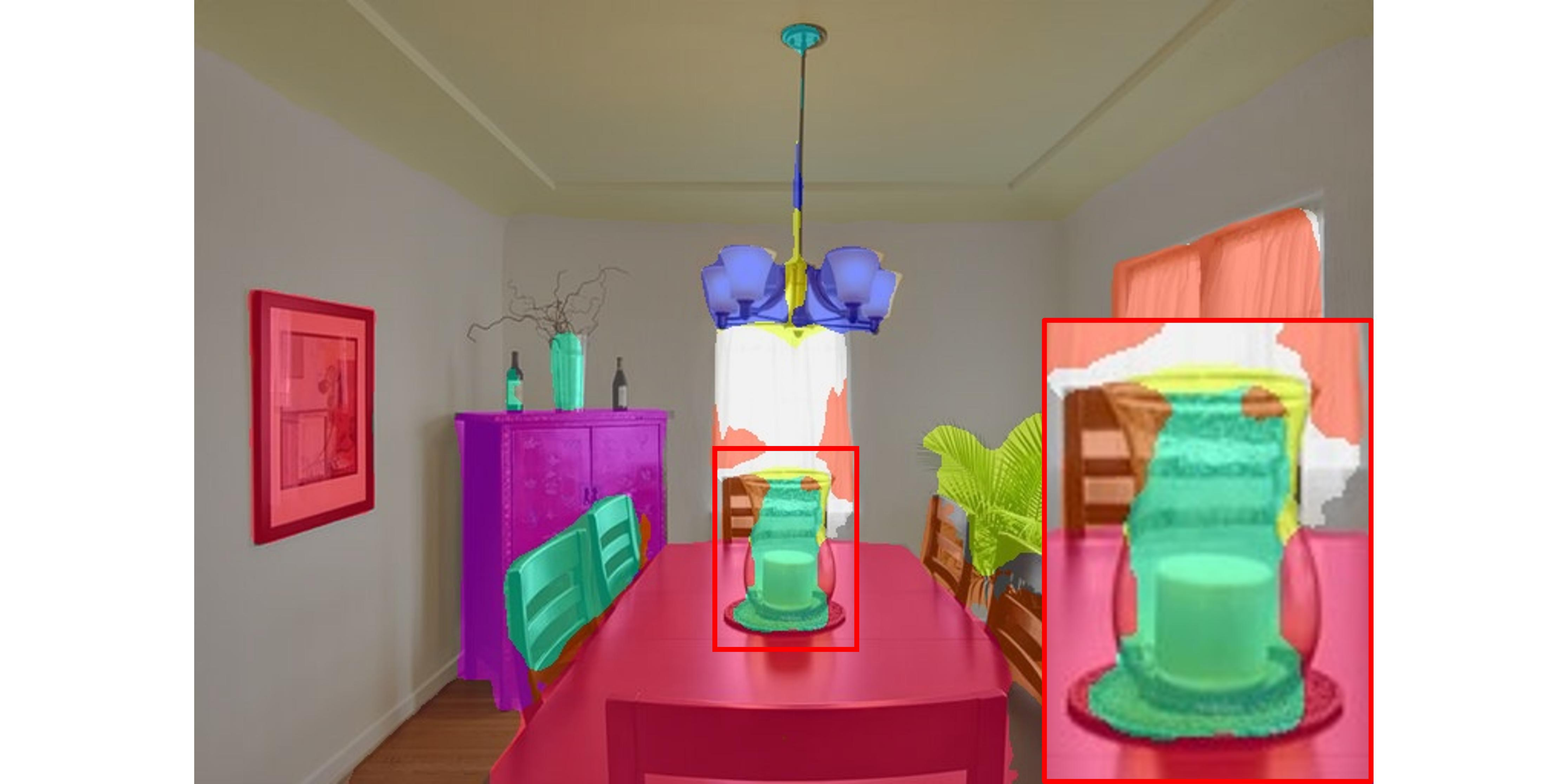}
  \caption*{DeepLabV3+}
\end{subfigure}
\begin{subfigure}{0.19\textwidth}
  \includegraphics[width=\linewidth]{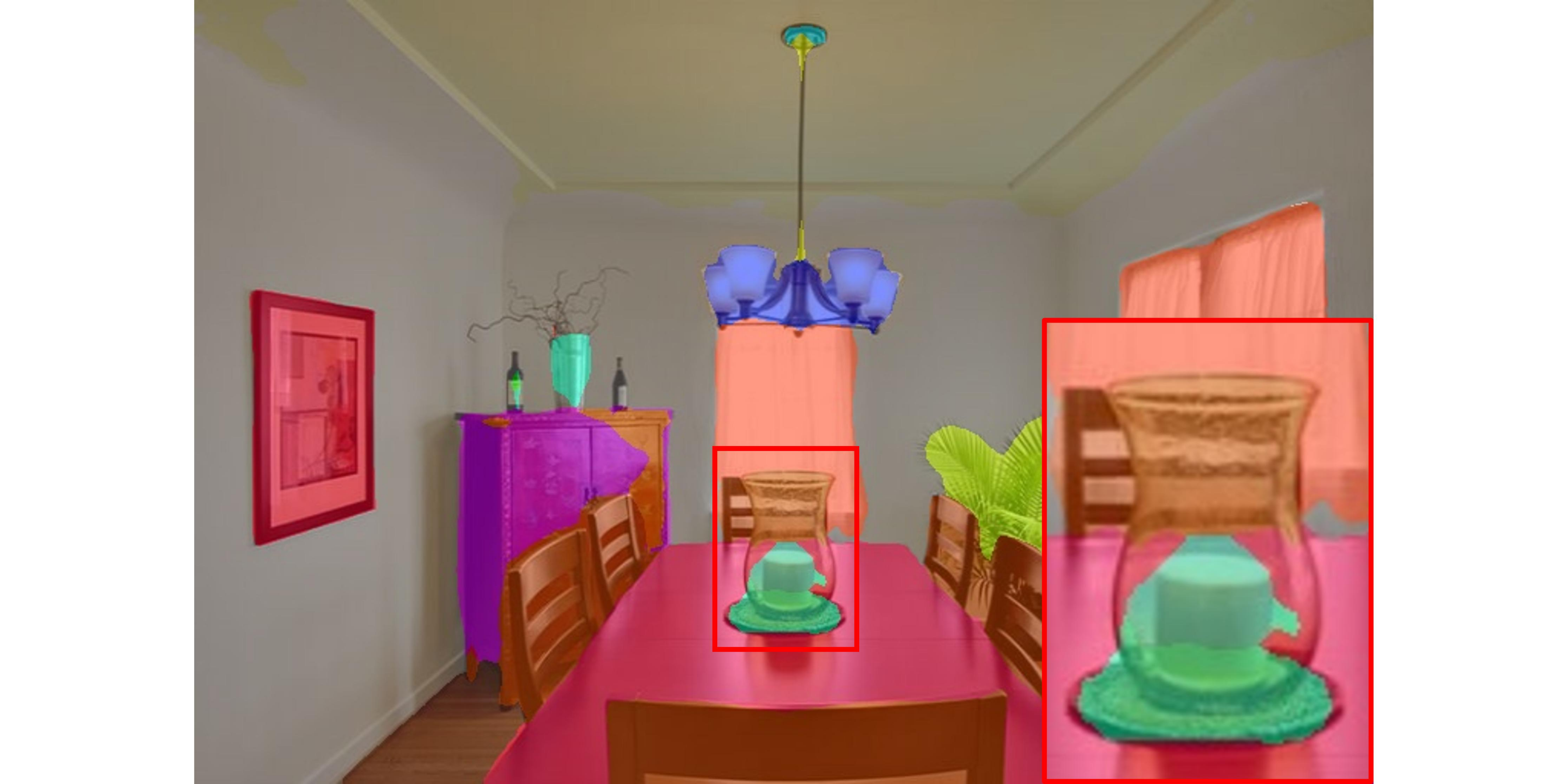}
  \caption*{SegNeXt}
\end{subfigure}
\begin{subfigure}{0.19\textwidth}
  \includegraphics[width=\linewidth]{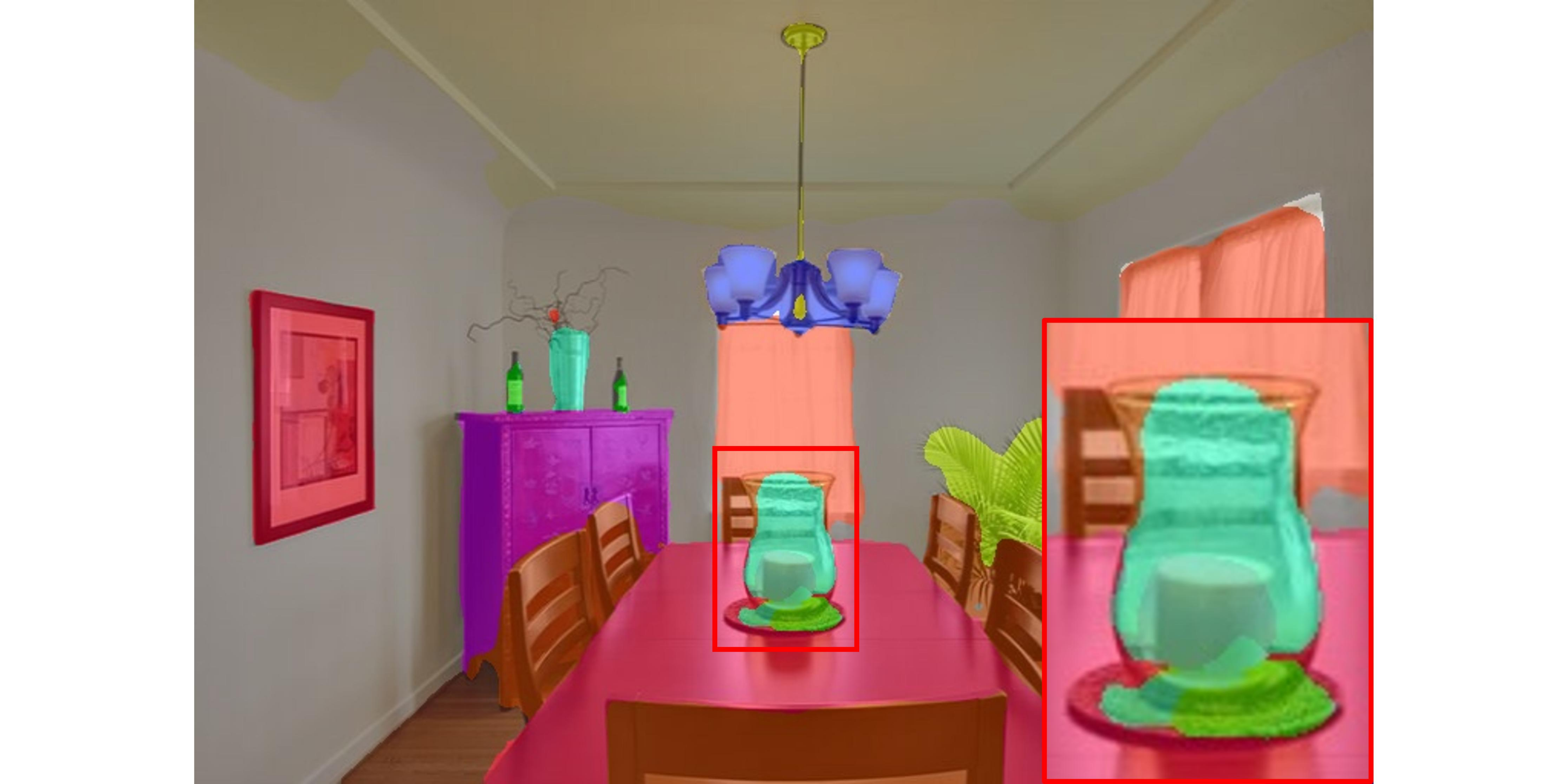}
  \caption*{AUCSeg (Ours)}
\end{subfigure}

\hdashrule[5pt]{0.99\textwidth}{0.5pt}{2mm}

\begin{subfigure}{0.19\textwidth}
  \includegraphics[width=\linewidth]{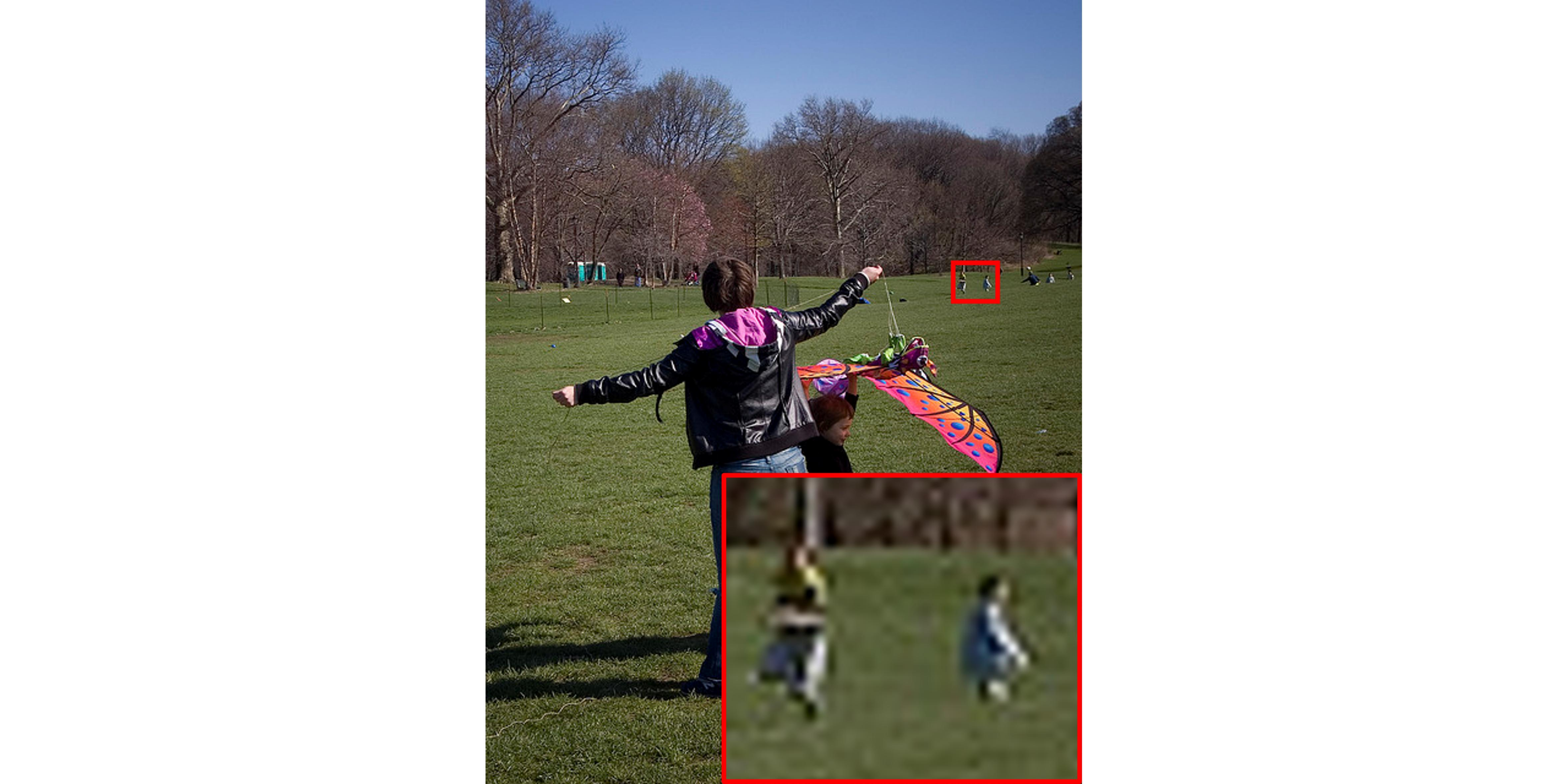}
\end{subfigure}
\begin{subfigure}{0.19\textwidth}
  \includegraphics[width=\linewidth]{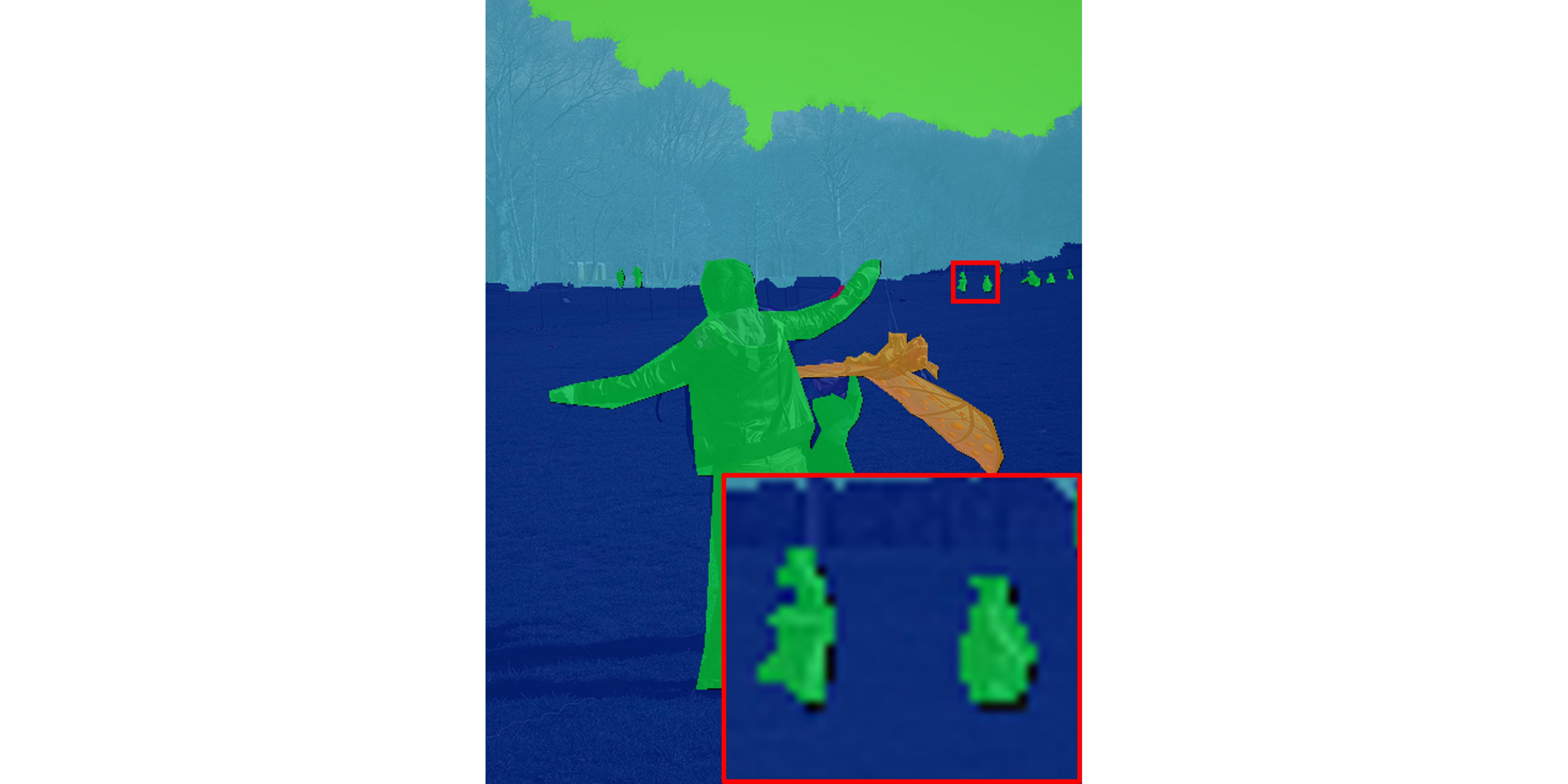}
\end{subfigure}
\begin{subfigure}{0.19\textwidth}
  \includegraphics[width=\linewidth]{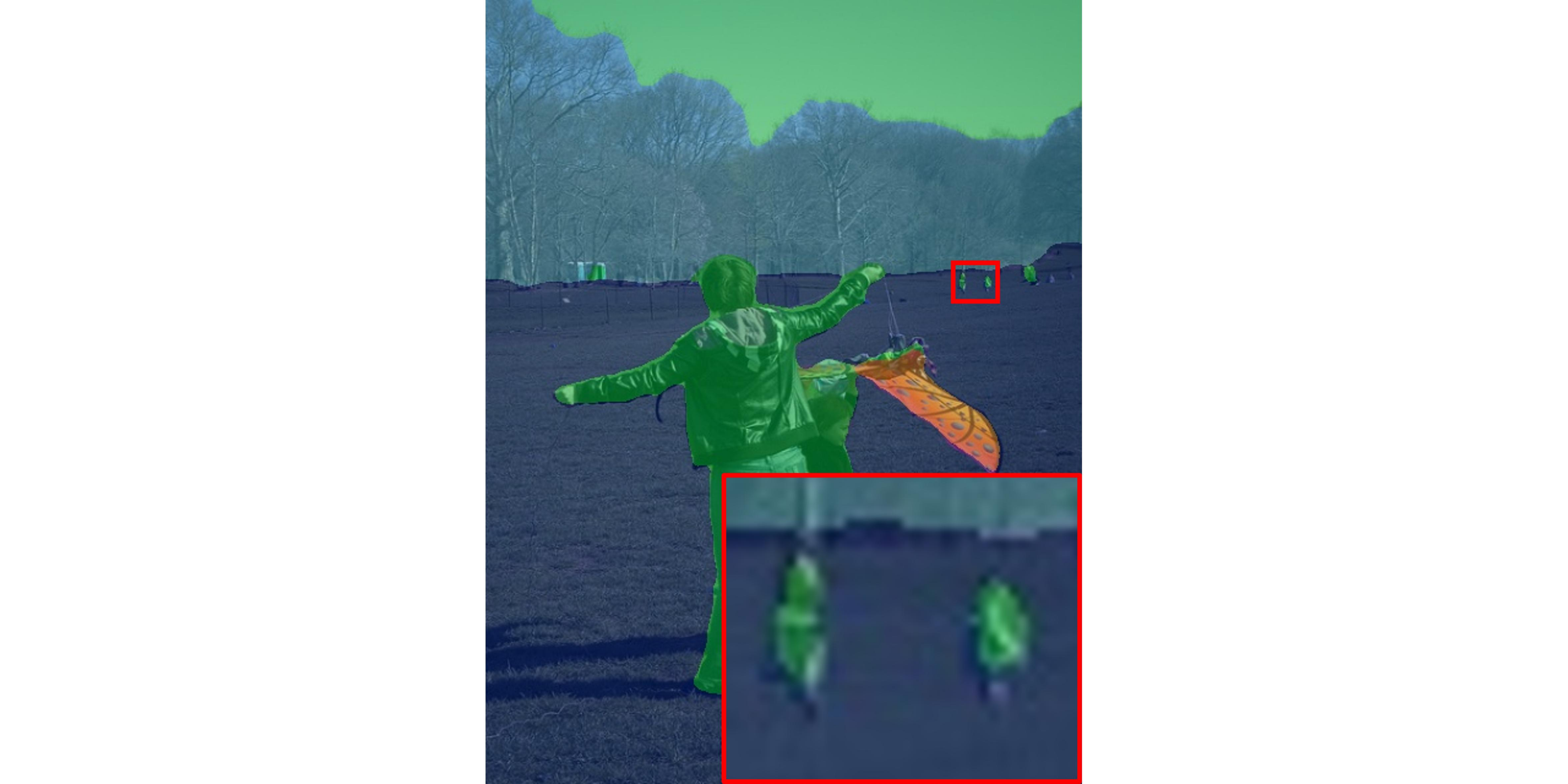}
\end{subfigure}
\begin{subfigure}{0.19\textwidth}
  \includegraphics[width=\linewidth]{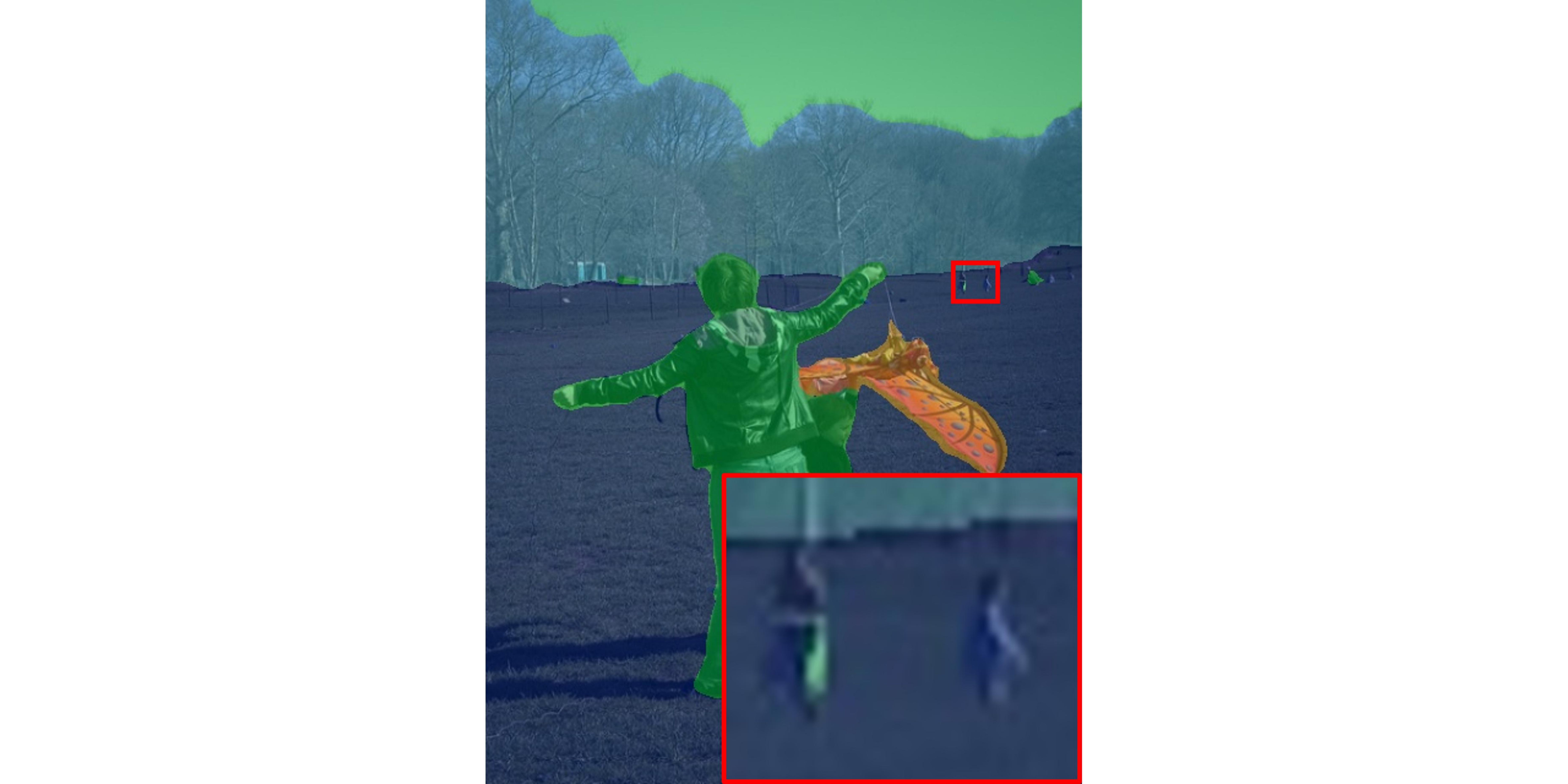}
\end{subfigure}
\begin{subfigure}{0.19\textwidth}
  \includegraphics[width=\linewidth]{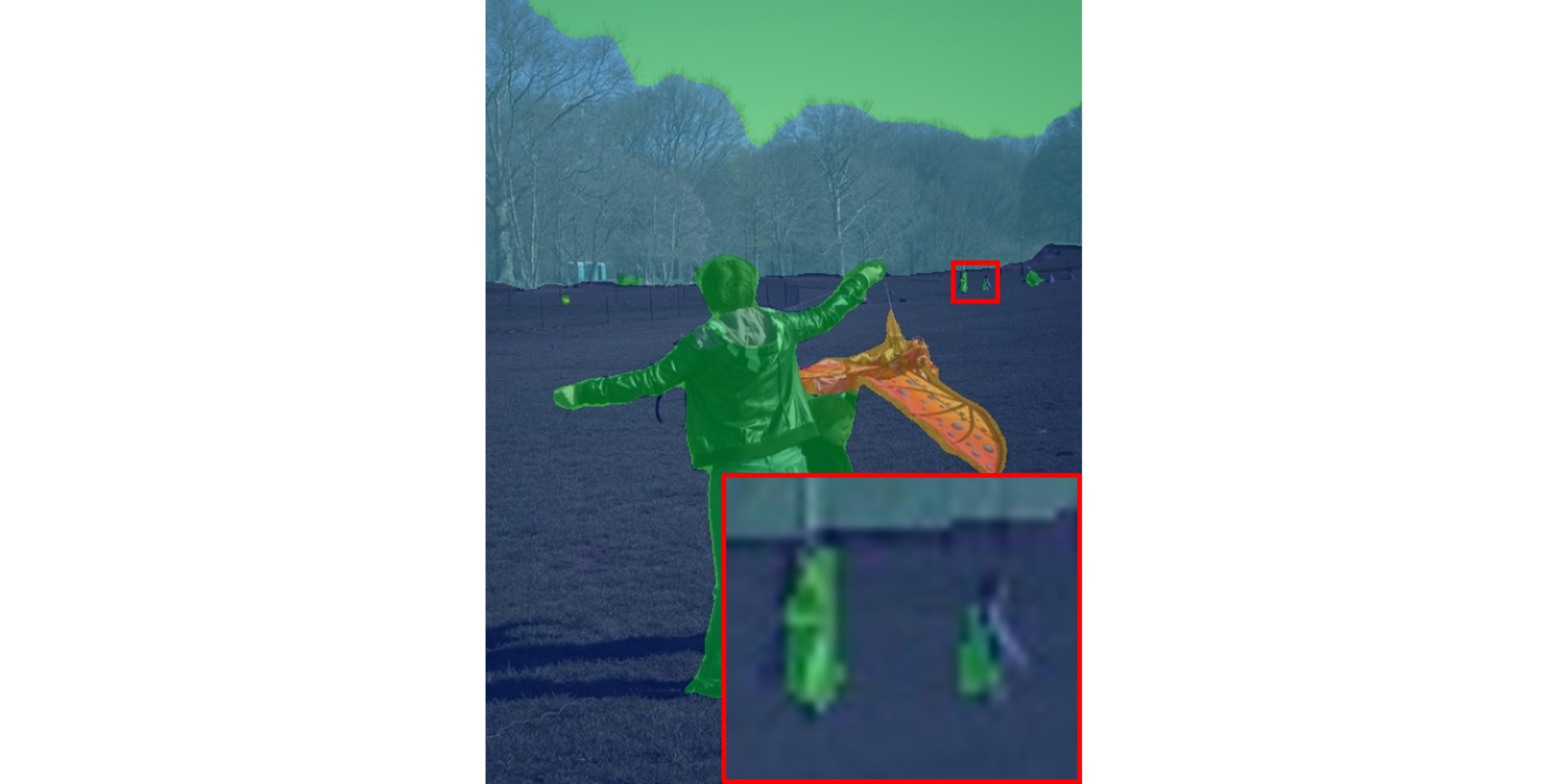}
\end{subfigure}

\begin{subfigure}{0.19\textwidth}
  \includegraphics[width=\linewidth]{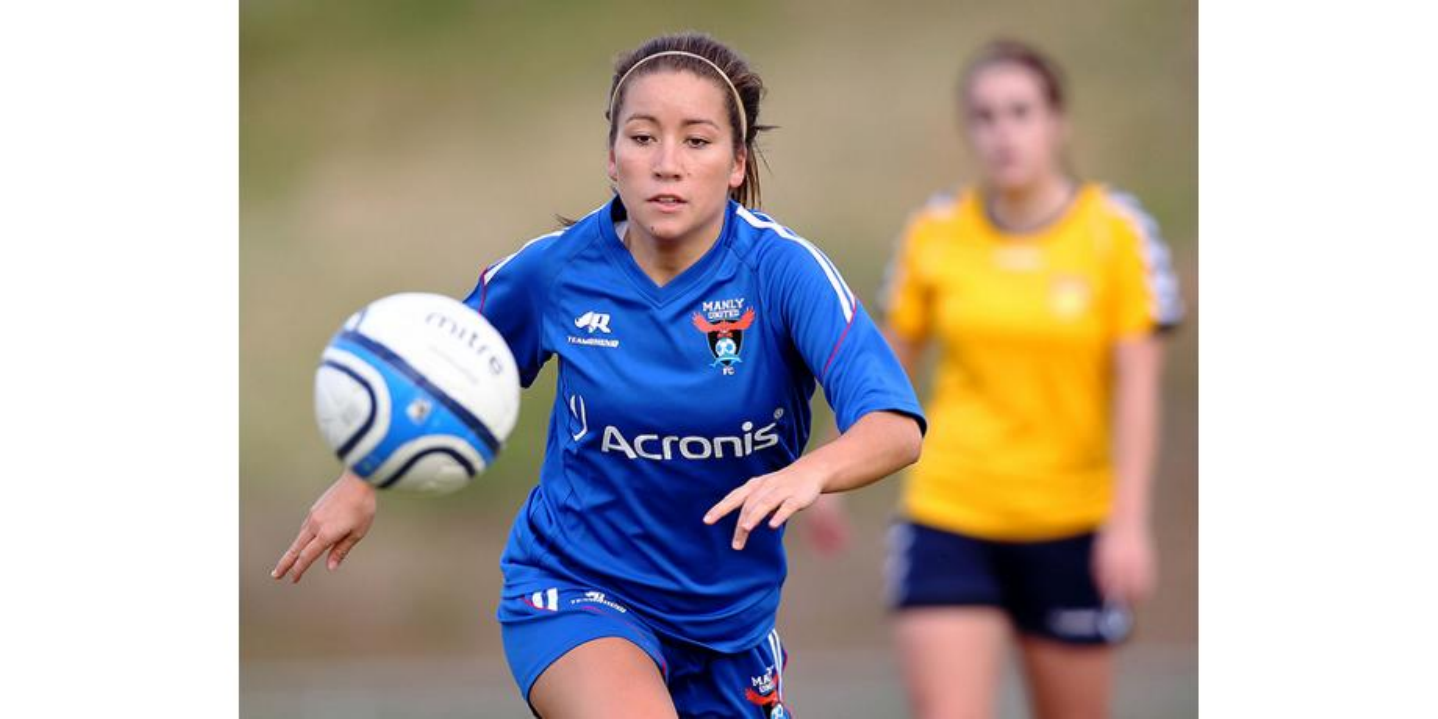}
\end{subfigure}
\begin{subfigure}{0.19\textwidth}
  \includegraphics[width=\linewidth]{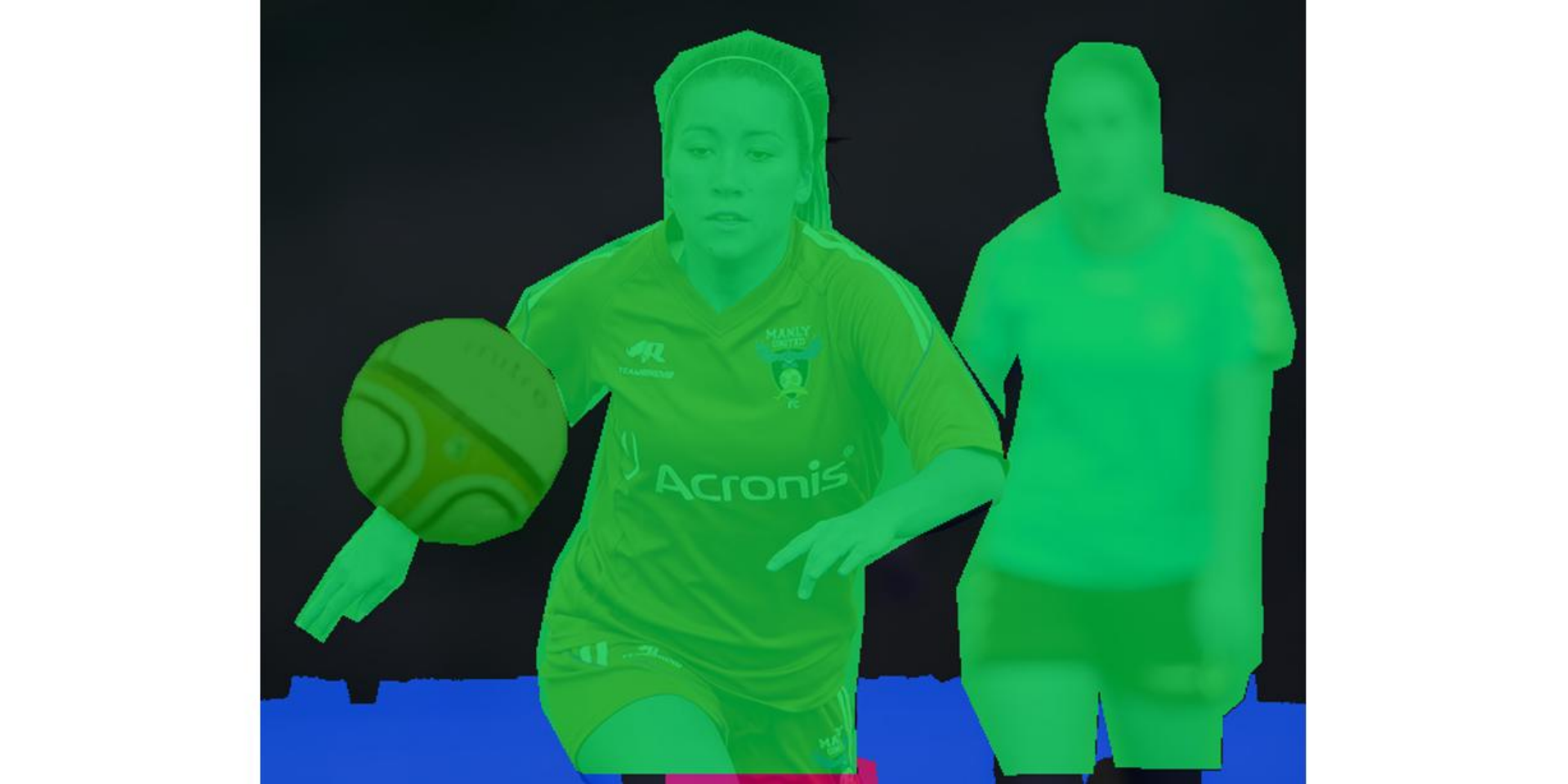}
\end{subfigure}
\begin{subfigure}{0.19\textwidth}
  \includegraphics[width=\linewidth]{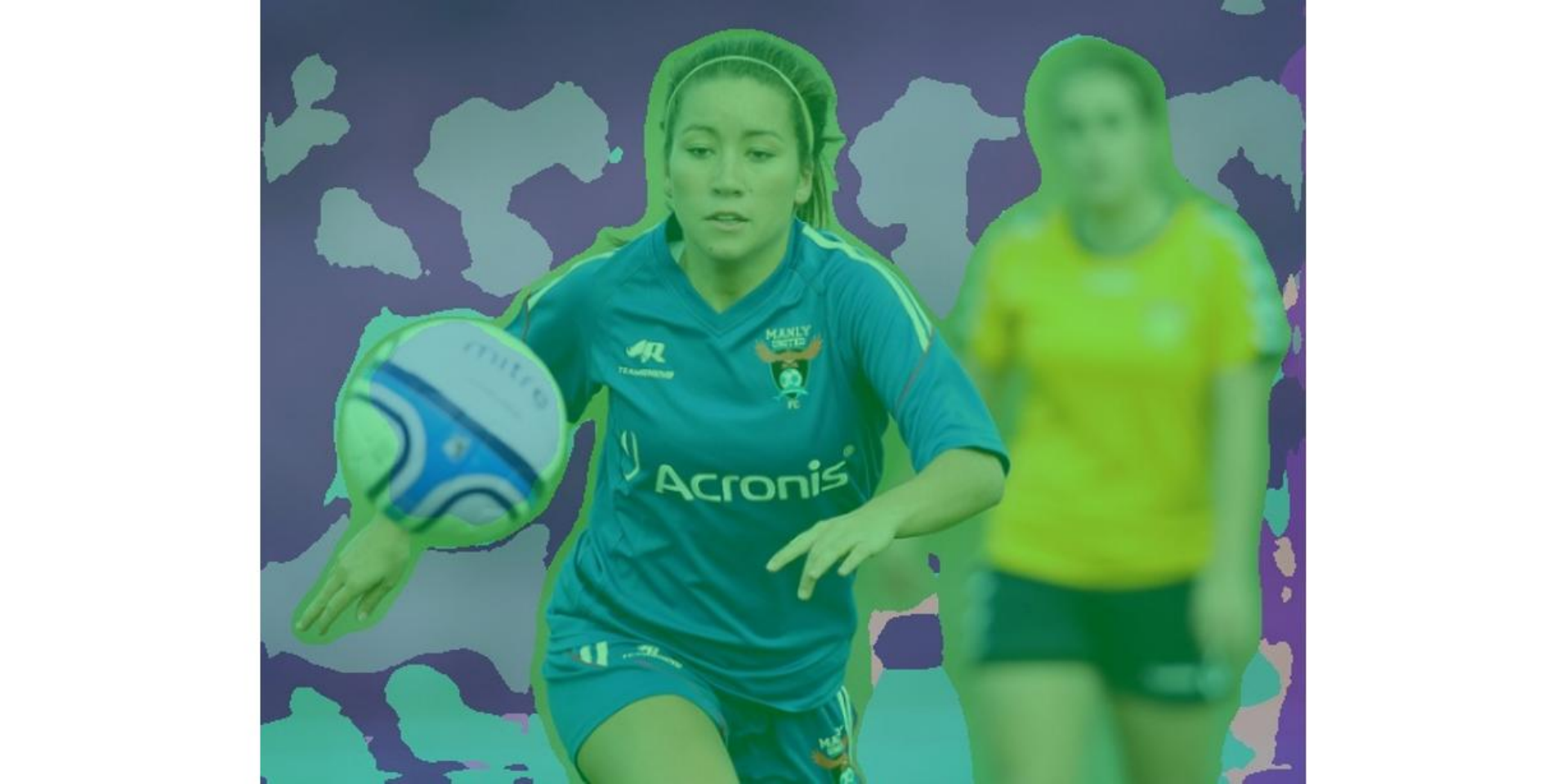}
\end{subfigure}
\begin{subfigure}{0.19\textwidth}
  \includegraphics[width=\linewidth]{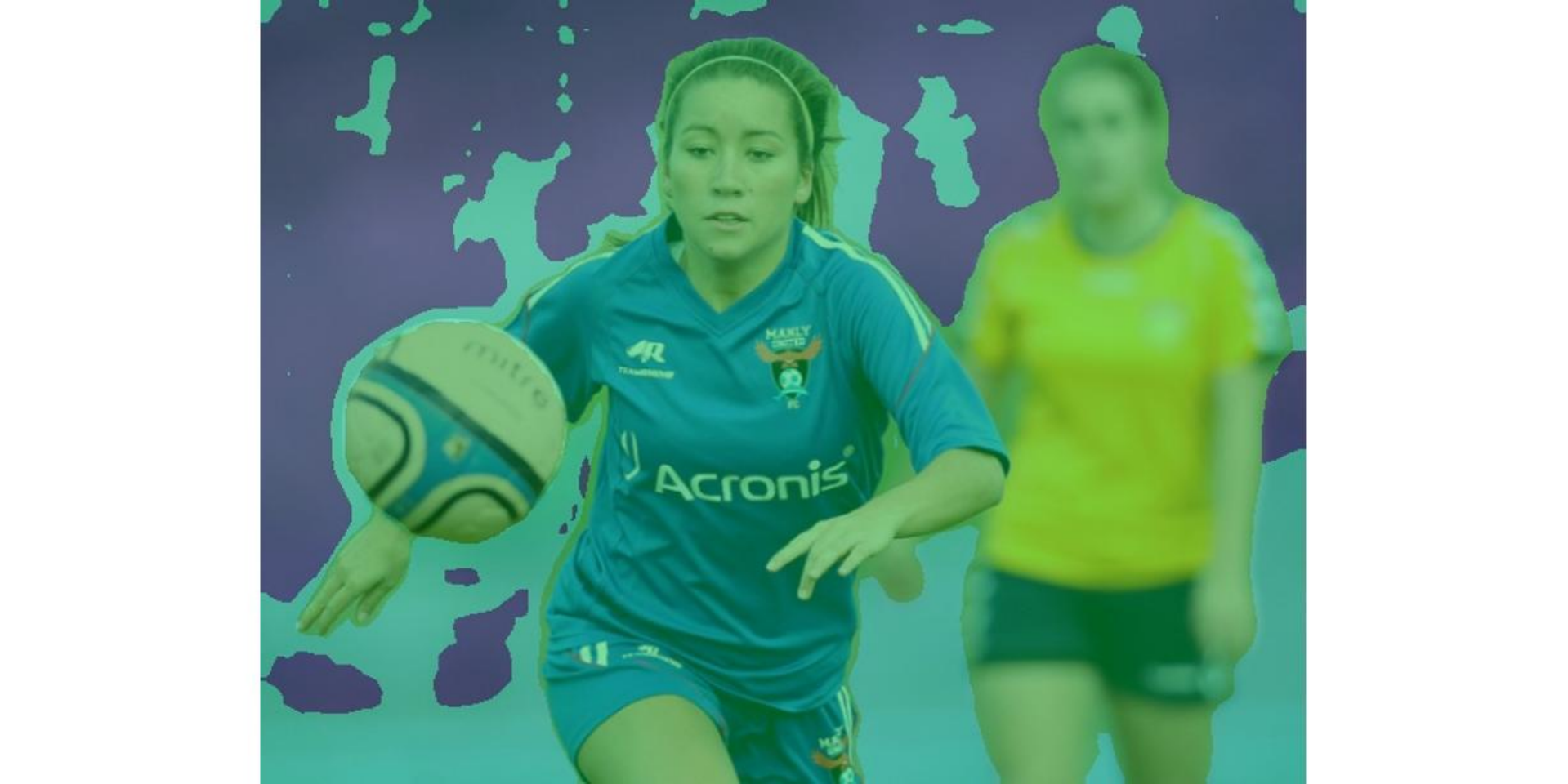}
\end{subfigure}
\begin{subfigure}{0.19\textwidth}
  \includegraphics[width=\linewidth]{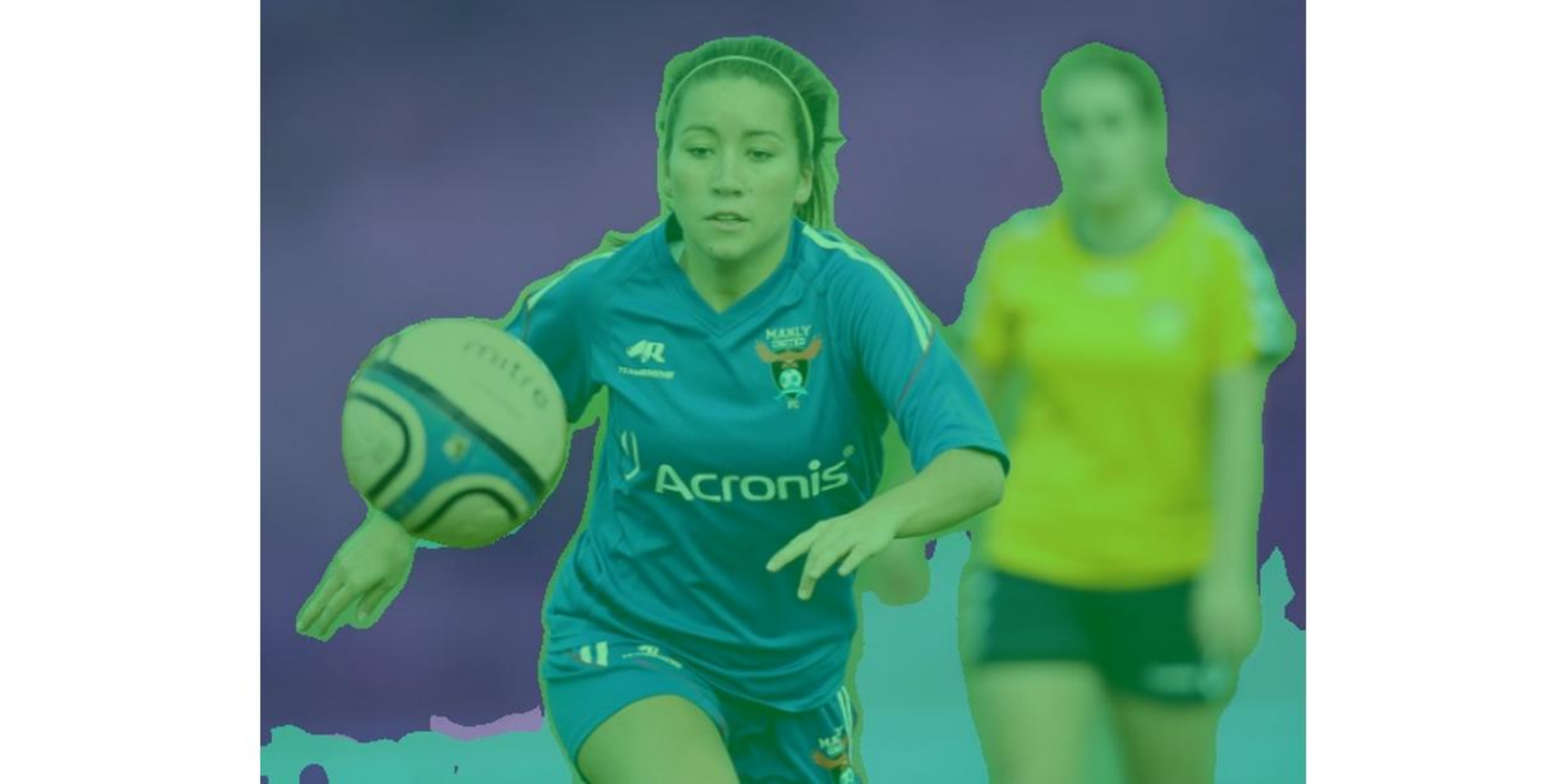}
\end{subfigure}

\begin{subfigure}{0.19\textwidth}
  \includegraphics[width=\linewidth]{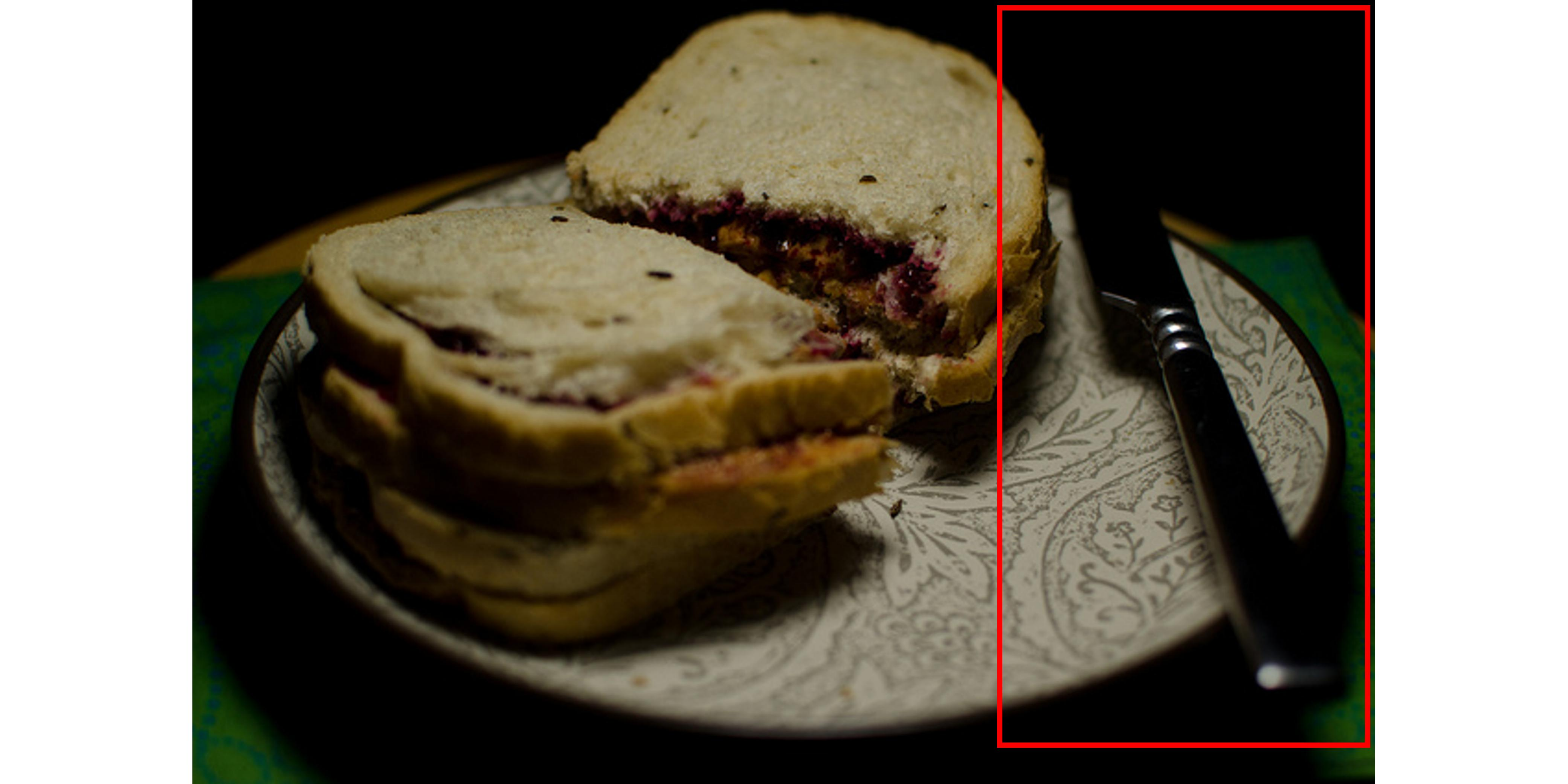}
\end{subfigure}
\begin{subfigure}{0.19\textwidth}
  \includegraphics[width=\linewidth]{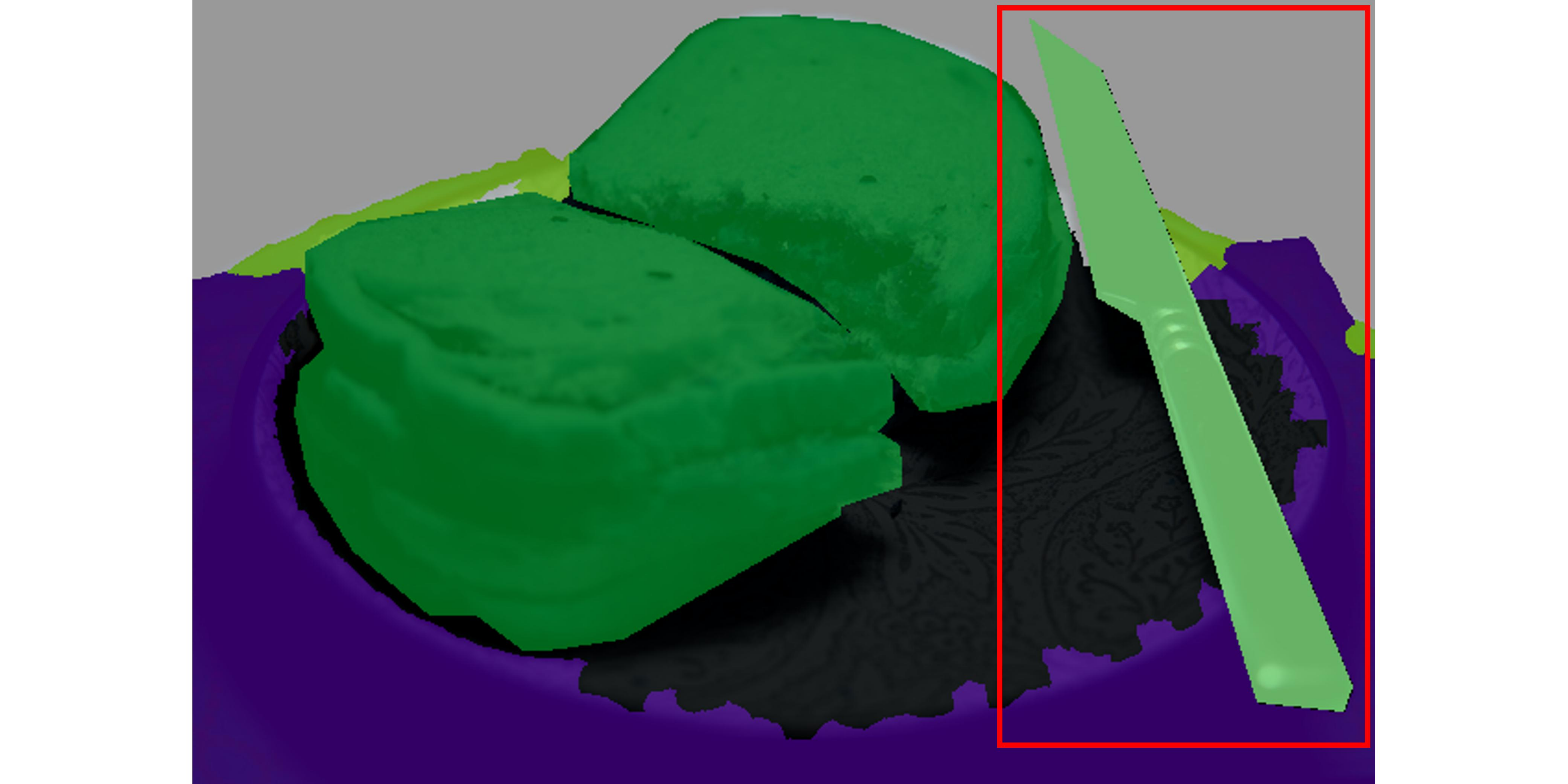}
\end{subfigure}
\begin{subfigure}{0.19\textwidth}
  \includegraphics[width=\linewidth]{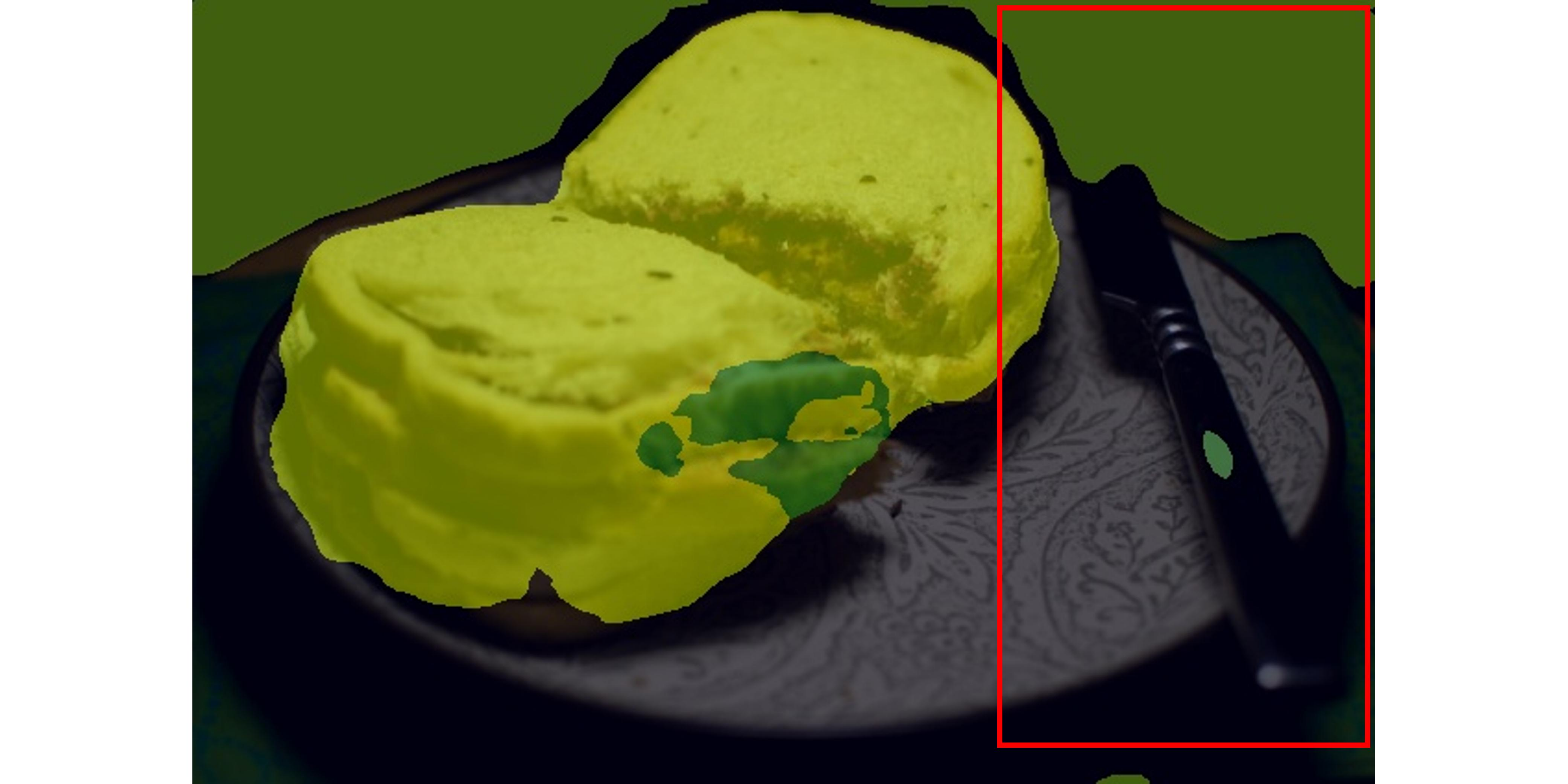}
\end{subfigure}
\begin{subfigure}{0.19\textwidth}
  \includegraphics[width=\linewidth]{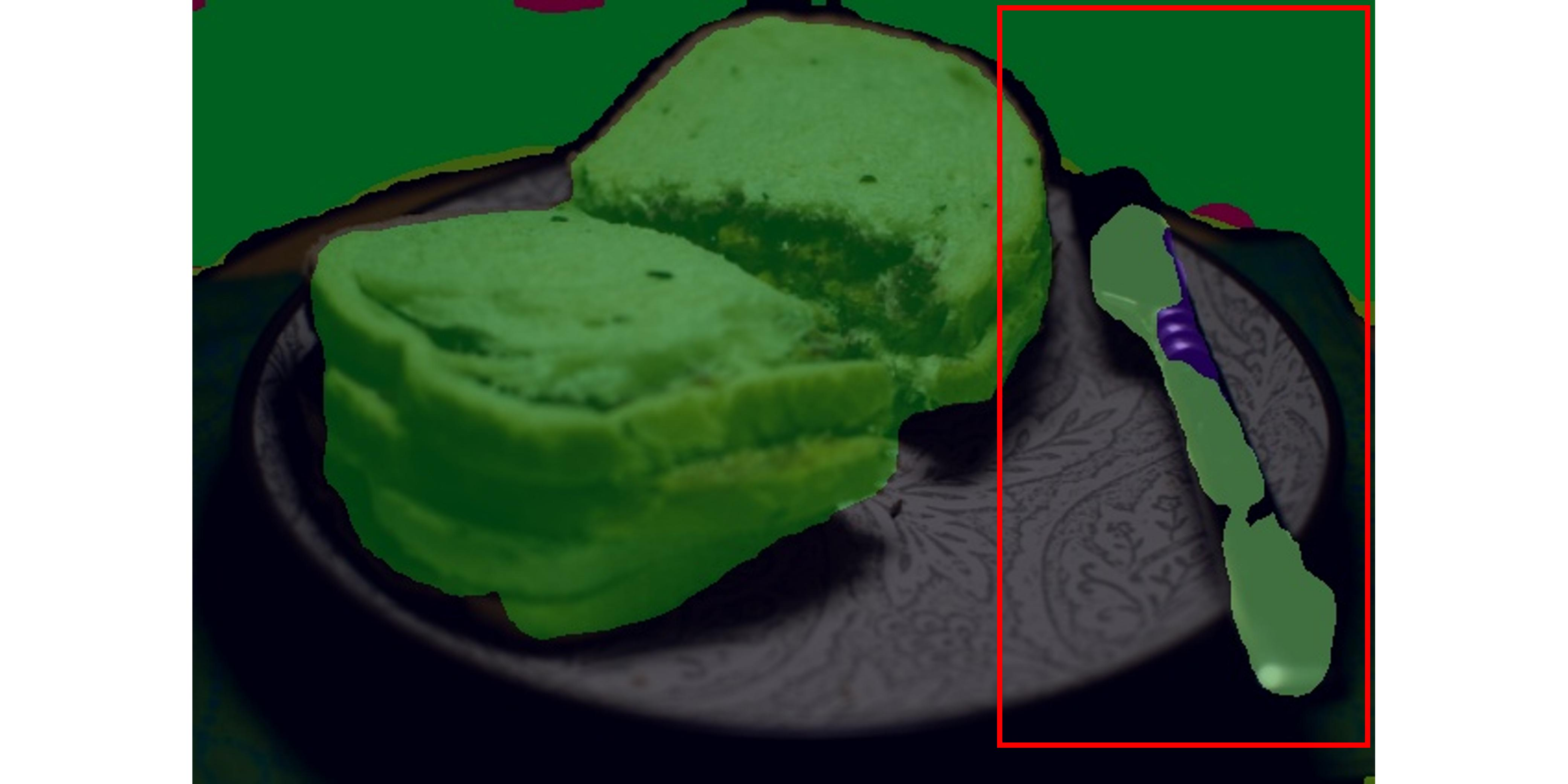}
\end{subfigure}
\begin{subfigure}{0.19\textwidth}
  \includegraphics[width=\linewidth]{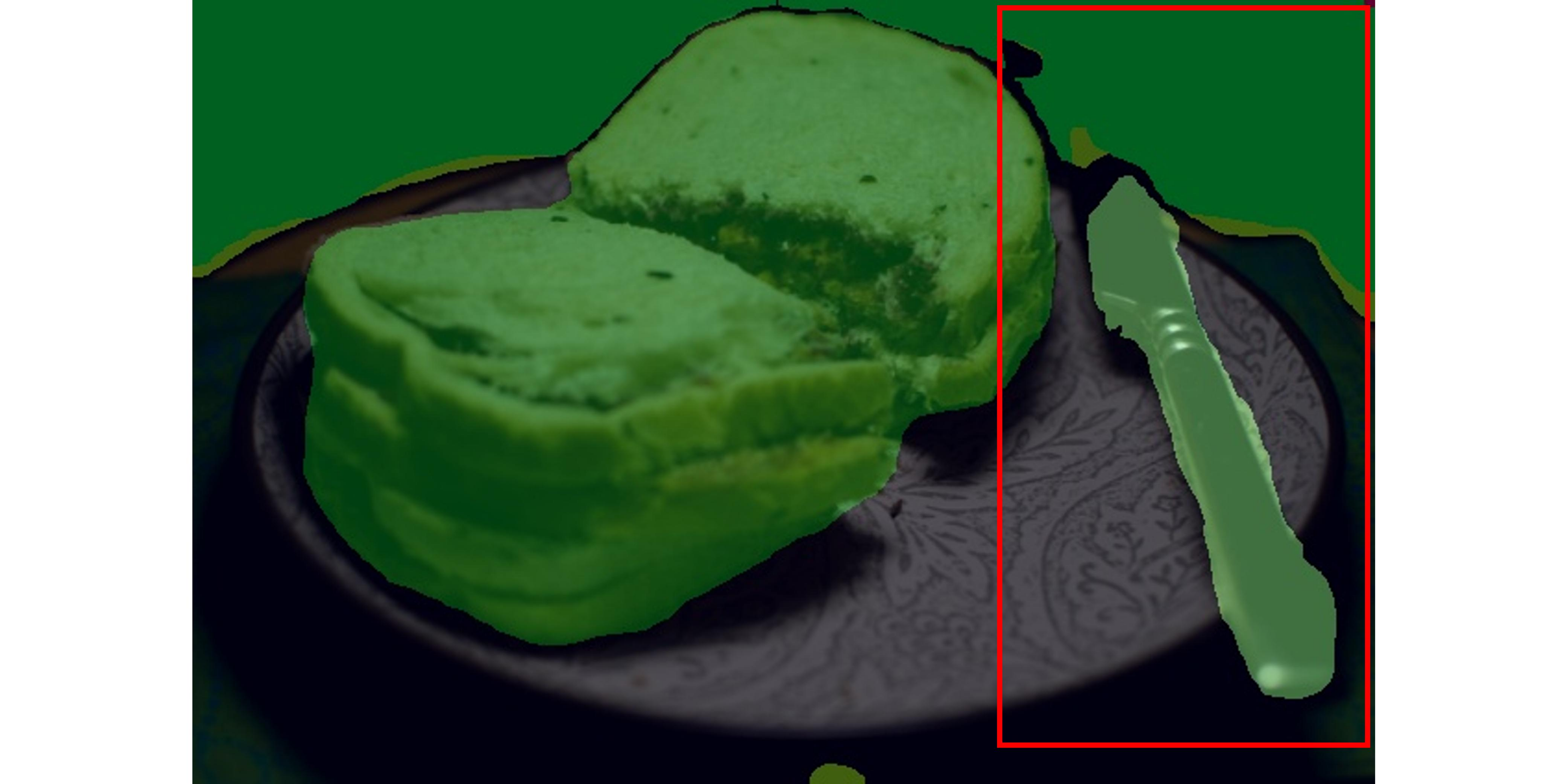}
\end{subfigure}

\begin{subfigure}{0.19\textwidth}
  \includegraphics[width=\linewidth]{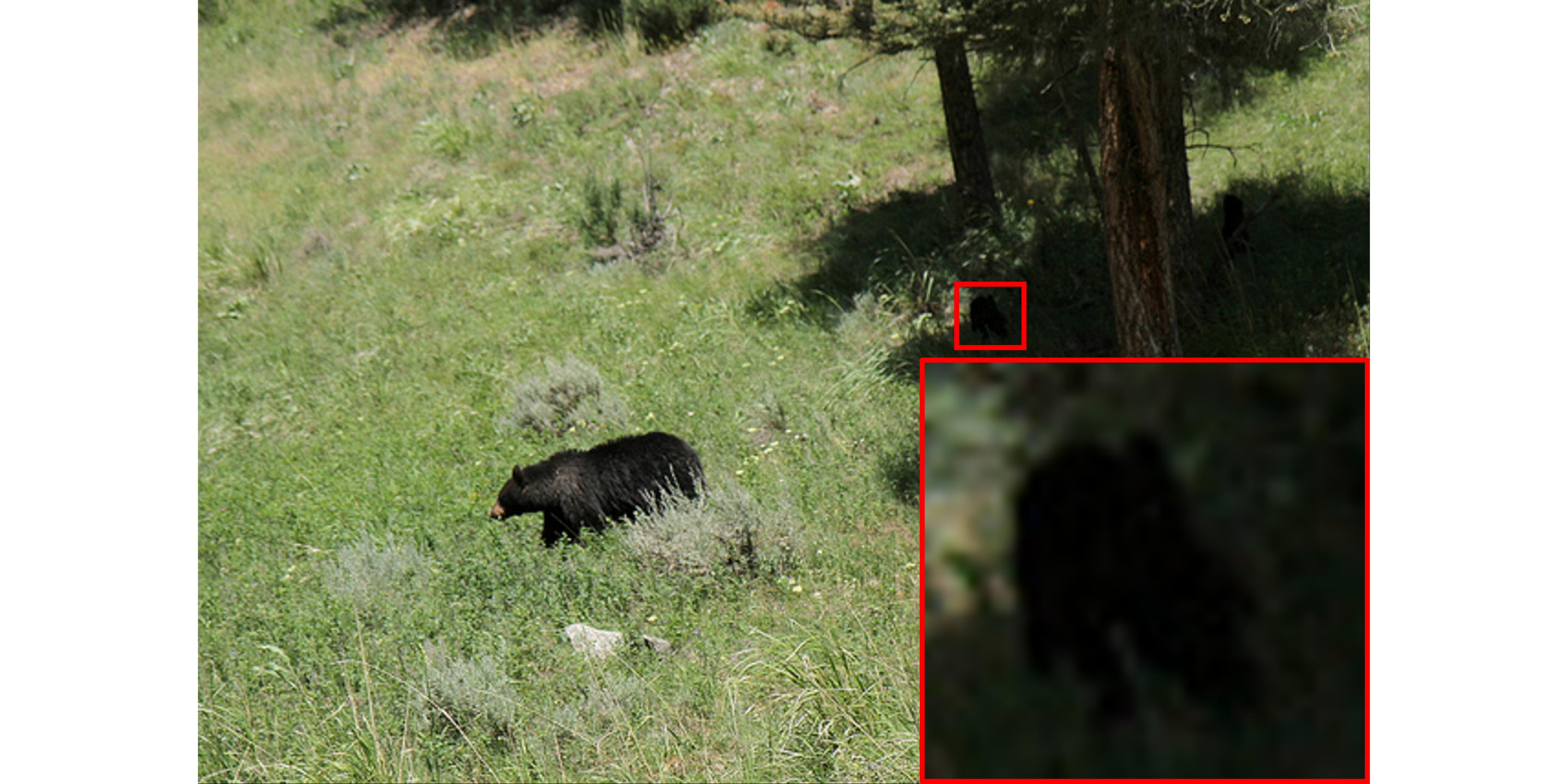}
  \caption*{Input}
\end{subfigure}
\begin{subfigure}{0.19\textwidth}
  \includegraphics[width=\linewidth]{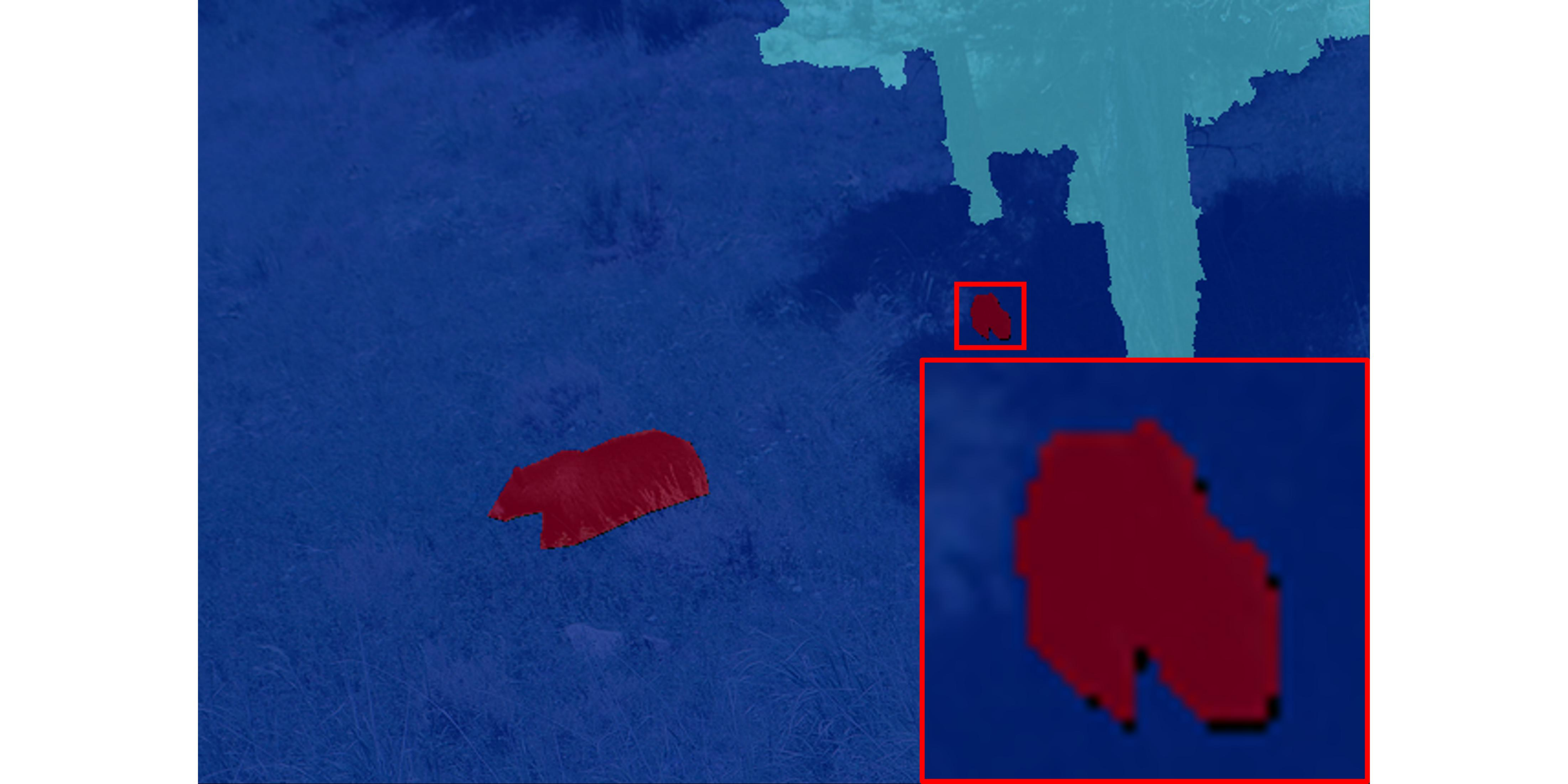}
  \caption*{Ground Truth}
\end{subfigure}
\begin{subfigure}{0.19\textwidth}
  \includegraphics[width=\linewidth]{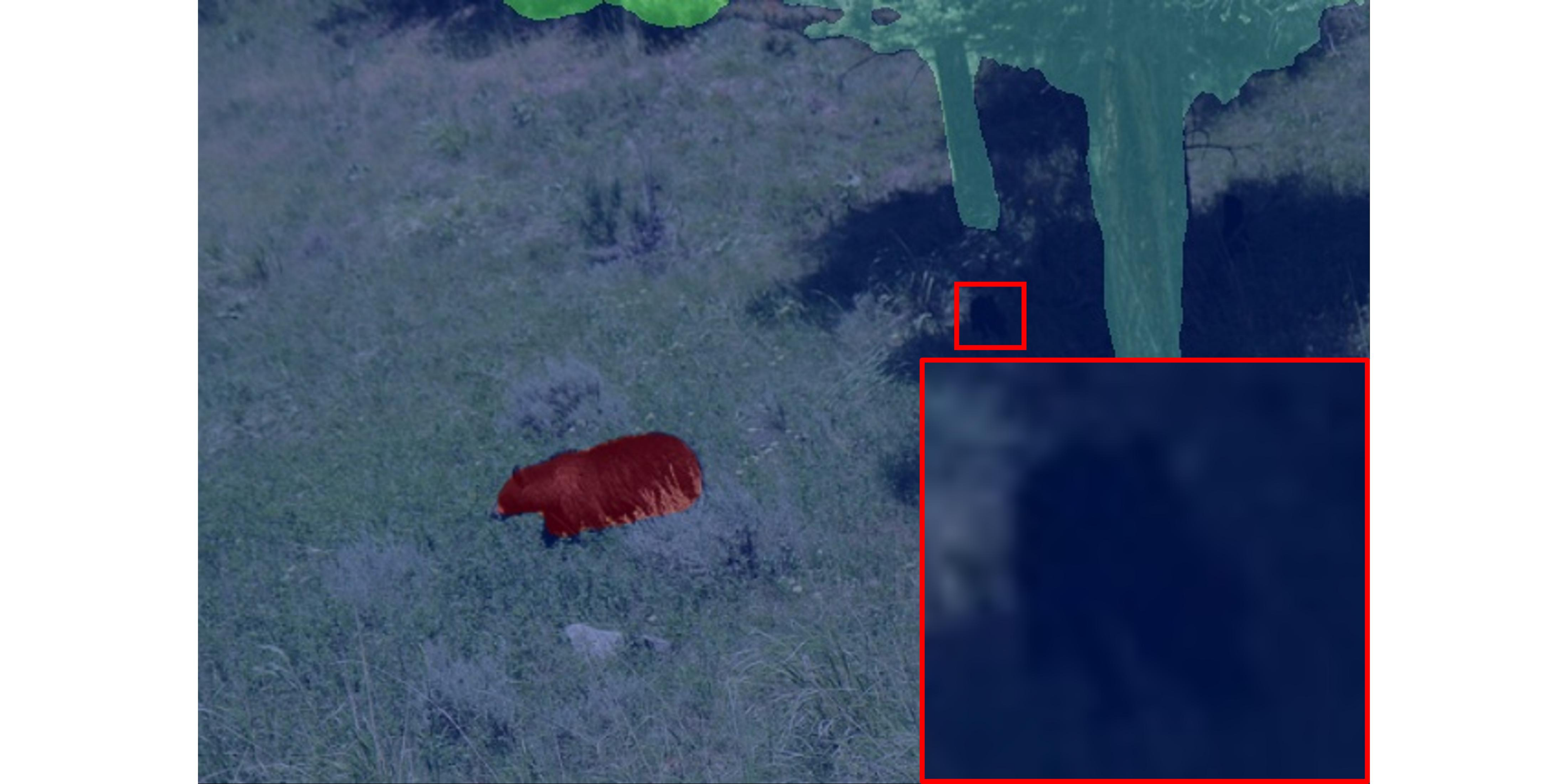}
  \caption*{DeepLabV3+}
\end{subfigure}
\begin{subfigure}{0.19\textwidth}
  \includegraphics[width=\linewidth]{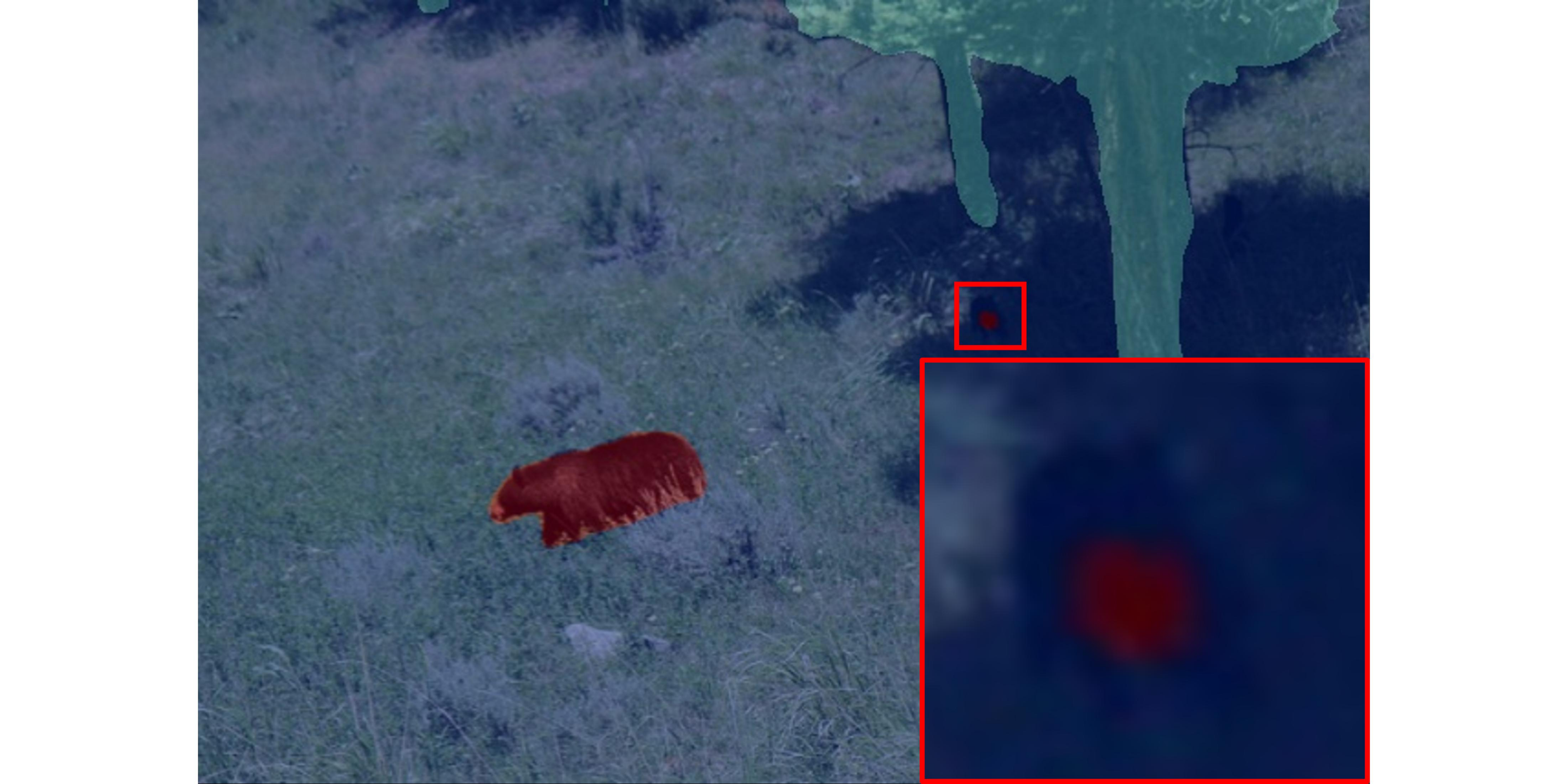}
  \caption*{SegNeXt}
\end{subfigure}
\begin{subfigure}{0.19\textwidth}
  \includegraphics[width=\linewidth]{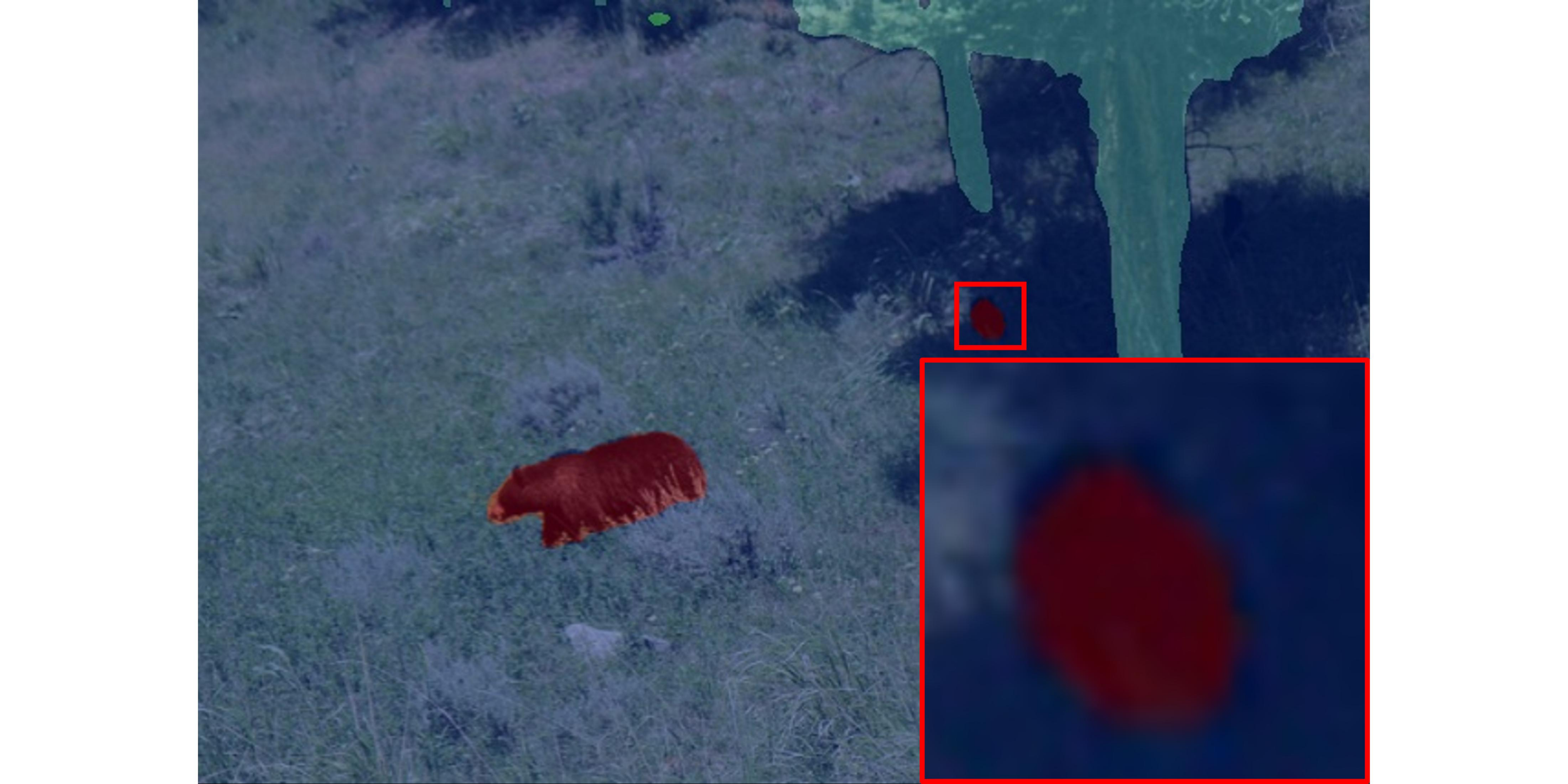}
  \caption*{AUCSeg (Ours)}
\end{subfigure}

\caption{More qualitative results on the {\bf Cityscapes}, {\bf ADE20K} and {\bf COCO-Stuff 164K} val set. Red rectangles highlight and magnify the details of the image.}
\label{fig: Pic_results}
\end{figure*}

\subsection{Backbone Extension of Different Model Sizes}
\label{subsec: Backbone Extension of Different Model Sizes}
In this paper, we use the large version of SegNeXt because of its outstanding performance. We also provide the results for the tiny, small, and base versions of SegNeXt, as shown in \Cref{tab: model sizes}. The experiments indicate that AUCSeg achieves better performance under any model size.

\begin{table}[htbp]
  \centering
  \renewcommand\arraystretch{1}
    \caption{Results of AUCSeg on different model sizes of SegNeXt in terms of mIoU (\%).}
    \begin{tabular}{c|c|cc}
    \toprule
    \textbf{Backbone} & \textbf{AUCSeg} & \textbf{Overall} & \textbf{Tail} \\
    \midrule
    \multirow{2}[2]{*}{Tiny} & \ding{53} & 38.73 & 33.96 \\
          & \checkmark & \textbf{39.00} & \textbf{34.52} \\
    \midrule
    \multirow{2}[2]{*}{Small} & \ding{53} & 43.25 & 38.90 \\
          & \checkmark & \textbf{43.29} & \textbf{39.18} \\
    \midrule
    \multirow{2}[2]{*}{Base} & \ding{53} & 45.45 & 41.33 \\
          & \checkmark & \textbf{46.37} & \textbf{42.49} \\
    \midrule
    \multirow{2}[2]{*}{Large} & \ding{53} & 47.45 & 43.28 \\
          & \checkmark & \textbf{49.20} & \textbf{45.52} \\
    \bottomrule
    \end{tabular}%
  \label{tab: model sizes}%
\end{table}%

\subsection{Backbone Extension of Different Pixel-level Long-tail Problems}
\label{subsec: Backbone Extension of Different Pixel-level Long-tail Problems}

Apart from semantic segmentation, salient object detection is also a pixel-level task. In salient object detection, the salient objects often exhibit a long-tailed distribution. We apply AUCSeg to this task, using the latest SOTA method SI-SOD-EDN~\cite{wu2022edn,li2024size} as the backbone. The results are shown in \Cref{tab: salient object detection}.

\begin{table}[htbp]
  \centering
  \renewcommand\arraystretch{1.0}
  \caption{Experimental results of AUCSeg in the salient object detection.}
  \resizebox{\linewidth}{!}{
    \begin{tabular}{c|ccc|ccc|ccc}
    \toprule
    \multirow{2}[1]{*}{\textcolor[rgb]{ .2,  .2,  .2}{Dataset}} & \multicolumn{3}{c|}{\textcolor[rgb]{ .2,  .2,  .2}{ECSSD}} & \multicolumn{3}{c|}{\textcolor[rgb]{ .2,  .2,  .2}{HKU-IS}} & \multicolumn{3}{c}{\textcolor[rgb]{ .2,  .2,  .2}{PASCAL-S}} \\
          & \textcolor[rgb]{ .2,  .2,  .2}{$\text{MAE} \downarrow$} & \textcolor[rgb]{ .2,  .2,  .2}{$\text{F}_m^\beta \uparrow$} & \textcolor[rgb]{ .2,  .2,  .2}{$\text{E}_m \uparrow$} & \textcolor[rgb]{ .2,  .2,  .2}{$\text{MAE} \downarrow$} & \textcolor[rgb]{ .2,  .2,  .2}{$\text{F}_m^\beta \uparrow$} & \textcolor[rgb]{ .2,  .2,  .2}{$\text{E}_m \uparrow$} & \textcolor[rgb]{ .2,  .2,  .2}{$\text{MAE} \downarrow$} & \textcolor[rgb]{ .2,  .2,  .2}{$\text{F}_m^\beta \uparrow$} & \textcolor[rgb]{ .2,  .2,  .2}{$\text{E}_m \uparrow$} \\
  \midrule
    \textcolor[rgb]{ .2,  .2,  .2}{SI-SOD-EDN} & \textcolor[rgb]{ .2,  .2,  .2}{0.0358} & \textcolor[rgb]{ .2,  .2,  .2}{0.9084} & \textcolor[rgb]{ .2,  .2,  .2}{0.9375} & \textcolor[rgb]{ .2,  .2,  .2}{0.0287} & \textcolor[rgb]{ .2,  .2,  .2}{0.8986} & \textcolor[rgb]{ .2,  .2,  .2}{0.9442} & \textcolor[rgb]{ .2,  .2,  .2}{0.0644} & \textcolor[rgb]{ .2,  .2,  .2}{0.826} & \textcolor[rgb]{ .2,  .2,  .2}{0.8859} \\
    \textcolor[rgb]{ .2,  .2,  .2}{+AUCSeg} & \textcolor[rgb]{ .2,  .2,  .2}{\textbf{0.0349}} & \textcolor[rgb]{ .2,  .2,  .2}{\textbf{0.9087}} & \textcolor[rgb]{ .2,  .2,  .2}{\textbf{0.9377}} & \textcolor[rgb]{ .2,  .2,  .2}{\textbf{0.0278}} & \textcolor[rgb]{ .2,  .2,  .2}{\textbf{0.8992}} & \textcolor[rgb]{ .2,  .2,  .2}{\textbf{0.9455}} & \textcolor[rgb]{ .2,  .2,  .2}{\textbf{0.0629}} & \textcolor[rgb]{ .2,  .2,  .2}{\textbf{0.8281}} & \textcolor[rgb]{ .2,  .2,  .2}{\textbf{0.8875}} \\
    \bottomrule
    \end{tabular}%
    }
  \label{tab: salient object detection}%
\end{table}%

Our AUCSeg achieves improvements across three commonly used evaluation metrics on three datasets, demonstrating that our method is highly versatile and extensible.

\subsection{Spatial Resource Consumption}
\label{subsec: Spatial Resource Consumption}

In this section, we thoroughly explore the spatial resource consumption of the T-Memory Bank.

\Cref{tab: TMB_Advantage} details the ablation experiments on the spatial resource use of the T-Memory Bank. We set the memory size $S_M$ to $5$ and conduct experiments on the ADE20K dataset. It is no longer necessary for samples of all classes to appear in the mini-batch, but rather only a minimum of $5$ tail class samples are needed. The results demonstrate that using AUC alone requires a batch size and graphics memory 5.5 times greater than the baseline to satisfy computational demands, which is a significant expense. However, the T-Memory Bank substantially reduces this cost, enabling more efficient training without an intolerant increase in graphics memory.

Moreover, we explore the effect of memory size $S_M$ on spatial resource consumption. The findings in \Cref{fig: memory} show that the graphics memory occupation increases slightly as $S_M$ rises. However, a smaller memory size generally suffices for effective performance, indicating that AUCSeg can achieve significant improvements with a manageable graphics memory burden. As shown in \Cref{fig: TMBS}, when $S_M=5$, significant performance improvements can be achieved with lower spatial resource consumption.

\begin{figure}[htbp]
    \centering
    \begin{minipage}{0.5\textwidth}
        \centering
        \renewcommand\arraystretch{1}
        \captionof{table}{Space resource consumption required for training properly. TMB is the abbreviation of T-Memory Bank.}
        \begin{tabular}{c|c|cc}
            \toprule
            \textbf{AUC} & \textbf{TMB} & \textbf{Batch Size} & \textbf{Graphic Memory} \\
            \midrule
            \ding{53} & \ding{53} & 4     & 13.29G \\
            \checkmark & \ding{53} & 22    & 72.90G \\
            \checkmark & \checkmark & 4     & 15.45G \\
            \bottomrule
        \end{tabular}
        \label{tab: TMB_Advantage}
    \end{minipage}
    \hfill
    \begin{minipage}{0.4\textwidth}
        \centering
        \includegraphics[width=0.5\linewidth]{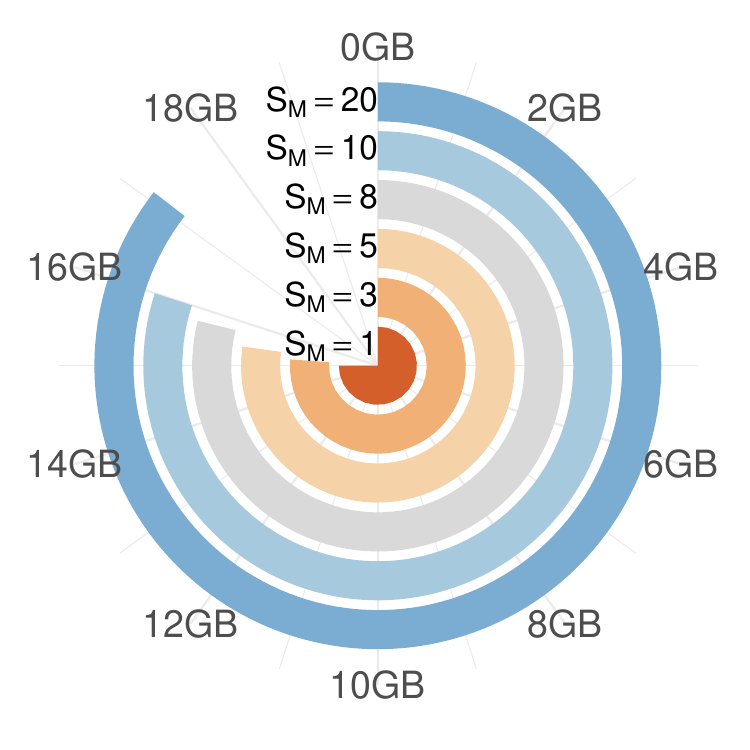}
        \captionof{figure}{The effect of memory size on spatial resource consumption.}
        \label{fig: memory}
    \end{minipage}
\end{figure}

\subsection{Results of Different AUC Surrogate Losses and Calculation Methods}
\label{subsec: Results of Different AUC Surrogate Losses and Calculation Methods}

AUCSeg can adopt various surrogate losses. In the previous section, we use the square loss. In \Cref{tab: Results of different AUC surrogate losses}, we explore two other popular surrogate losses (hinge loss and exponential loss) for AUCSeg. Additionally, we include results for two AUC loss calculation methods (one-vs-one and one-vs-all) applied to AUCSeg using the square loss. The results are presented in \Cref{tab: Results of different AUC calculation methods}.

\begin{table}[htbp]
  \centering
  \renewcommand\arraystretch{1}
    \caption{Results of different AUC surrogate losses in terms of mIoU (\%).}
    \begin{tabular}{cccc}
    \toprule
    \textcolor[rgb]{ .2,  .2,  .2}{\textbf{Dataset}} & \textcolor[rgb]{ .2,  .2,  .2}{\textbf{AUC Surrogate Loss}} & \textcolor[rgb]{ .2,  .2,  .2}{\textbf{Overall}} & \textcolor[rgb]{ .2,  .2,  .2}{\textbf{Tail}} \\
    \midrule
    \multirow{4}[2]{*}{\textcolor[rgb]{ .2,  .2,  .2}{ADE20K}} & \textcolor[rgb]{ .2,  .2,  .2}{-} & \textcolor[rgb]{ .2,  .2,  .2}{47.45} & \textcolor[rgb]{ .2,  .2,  .2}{43.28} \\
          & \textcolor[rgb]{ .2,  .2,  .2}{Hinge} & \textcolor[rgb]{ .2,  .2,  .2}{48.59(+1.14)} & \textcolor[rgb]{ .2,  .2,  .2}{44.76(+1.48)} \\
          & \textcolor[rgb]{ .2,  .2,  .2}{Exp} & \textcolor[rgb]{ .2,  .2,  .2}{48.86(+1.41)} & \textcolor[rgb]{ .2,  .2,  .2}{45.07(+1.79)} \\
          & \textcolor[rgb]{ .2,  .2,  .2}{Square} & \textcolor[rgb]{ .2,  .2,  .2}{\textbf{49.2(+1.75)}} & \textcolor[rgb]{ .2,  .2,  .2}{\textbf{45.52(+2.24)}} \\
    \midrule
    \multirow{4}[2]{*}{\textcolor[rgb]{ .2,  .2,  .2}{Cityscapes}} & \textcolor[rgb]{ .2,  .2,  .2}{-} & \textcolor[rgb]{ .2,  .2,  .2}{82.41} & \textcolor[rgb]{ .2,  .2,  .2}{80.92} \\
          & \textcolor[rgb]{ .2,  .2,  .2}{Hinge} & \textcolor[rgb]{ .2,  .2,  .2}{82.64(+0.23)} & \textcolor[rgb]{ .2,  .2,  .2}{81.35(+0.43)} \\
          & \textcolor[rgb]{ .2,  .2,  .2}{Exp} & \textcolor[rgb]{ .2,  .2,  .2}{82.45(+0.04)} & \textcolor[rgb]{ .2,  .2,  .2}{81.55(+0.63)} \\
          & \textcolor[rgb]{ .2,  .2,  .2}{Square} & \textcolor[rgb]{ .2,  .2,  .2}{\textbf{82.71(+0.30)}} & \textcolor[rgb]{ .2,  .2,  .2}{\textbf{81.67(+0.75)}} \\
    \midrule
    \multirow{4}[2]{*}{\textcolor[rgb]{ .2,  .2,  .2}{COCO-Stuff 164K}} & \textcolor[rgb]{ .2,  .2,  .2}{-} & \textcolor[rgb]{ .2,  .2,  .2}{42.42} & \textcolor[rgb]{ .2,  .2,  .2}{40.33} \\
          & \textcolor[rgb]{ .2,  .2,  .2}{Hinge} & \textcolor[rgb]{ .2,  .2,  .2}{42.52(+0.10)} & \textcolor[rgb]{ .2,  .2,  .2}{40.49(+0.16)} \\
          & \textcolor[rgb]{ .2,  .2,  .2}{Exp} & \textcolor[rgb]{ .2,  .2,  .2}{42.52(+0.10)} & \textcolor[rgb]{ .2,  .2,  .2}{40.53(+0.20)} \\
          & \textcolor[rgb]{ .2,  .2,  .2}{Square} & \textcolor[rgb]{ .2,  .2,  .2}{\textbf{42.73(+0.31)}} & \textcolor[rgb]{ .2,  .2,  .2}{\textbf{40.72(+0.39)}} \\
    \bottomrule
    \end{tabular}%
  \label{tab: Results of different AUC surrogate losses}%
\end{table}%

\begin{table}[htbp]
  \centering
  \renewcommand\arraystretch{1}
    \caption{Results of different AUC calculation methods in terms of mIoU (\%).}
    \begin{tabular}{cccc}
    \toprule
    \textcolor[rgb]{ .2,  .2,  .2}{\textbf{Dataset}} & \textcolor[rgb]{ .2,  .2,  .2}{\textbf{AUC Calculation Method}} & \textcolor[rgb]{ .2,  .2,  .2}{\textbf{Overall}} & \textcolor[rgb]{ .2,  .2,  .2}{\textbf{Tail}} \\
    \midrule
    \multirow{2}[2]{*}{\textcolor[rgb]{ .2,  .2,  .2}{ADE20K}} & \textcolor[rgb]{ .2,  .2,  .2}{ova} & \textcolor[rgb]{ .2,  .2,  .2}{48.46} & \textcolor[rgb]{ .2,  .2,  .2}{44.58} \\
          & \textcolor[rgb]{ .2,  .2,  .2}{ovo} & \textcolor[rgb]{ .2,  .2,  .2}{\textbf{49.2}} & \textcolor[rgb]{ .2,  .2,  .2}{\textbf{45.52}} \\
    \midrule
    \multirow{2}[2]{*}{\textcolor[rgb]{ .2,  .2,  .2}{Cityscapes}} & \textcolor[rgb]{ .2,  .2,  .2}{ova} & \textcolor[rgb]{ .2,  .2,  .2}{82.31} & \textcolor[rgb]{ .2,  .2,  .2}{80.79} \\
          & \textcolor[rgb]{ .2,  .2,  .2}{ovo} & \textcolor[rgb]{ .2,  .2,  .2}{\textbf{82.71}} & \textcolor[rgb]{ .2,  .2,  .2}{\textbf{81.67}} \\
    \midrule
    \multirow{2}[2]{*}{\textcolor[rgb]{ .2,  .2,  .2}{COCO-Stuff 164K}} & \textcolor[rgb]{ .2,  .2,  .2}{ova} & \textcolor[rgb]{ .2,  .2,  .2}{42.25} & \textcolor[rgb]{ .2,  .2,  .2}{40.17} \\
          & \textcolor[rgb]{ .2,  .2,  .2}{ovo} & \textcolor[rgb]{ .2,  .2,  .2}{\textbf{42.73}} & \textcolor[rgb]{ .2,  .2,  .2}{\textbf{40.72}} \\
    \bottomrule
    \end{tabular}%
  \label{tab: Results of different AUC calculation methods}%
\end{table}%

The results indicate that AUCSeg shows improved performance with any of the surrogate functions. Among them, using square loss and the ovo calculation method delivers the best overall performance.

\subsection{Results of the Comparison Between PMB and TMB}
\label{subsec: Results of the Comparison Between PMB and TMB}

There are two differences between the Pixel-level Memory Bank (PMB) and our Tail-class Memory Bank (TMB). \textbf{First}, the PMB stores pixels from all classes, whereas TMB only stores pixels from tail classes. \textbf{Second}, in TMB, the storing and retrieving processes are conducted on an entire object (we ensure that the pasted pixel forms a meaningful object). However, the PMB typically focuses on a fixed number of pixels without structural information (regardless of whether these pixels can form a complete image).

\textbf{Why do we only store tail class pixels instead of all pixels?}

\Cref{tab: The average number of pixels from head and tail classes per image.} shows the average number of pixels from head and tail classes per image in the ADE20K, Cityscapes, and COCO-Stuff 164K datasets.

\begin{table}[htbp]
  \centering
  \renewcommand\arraystretch{1}
    \caption{The average number of pixels from head and tail classes per image.}
    \begin{tabular}{cccc}
    \toprule
    \textcolor[rgb]{ .2,  .2,  .2}{\textbf{Dataset}} & \textcolor[rgb]{ .2,  .2,  .2}{\textbf{ADE20K}} & \textcolor[rgb]{ .2,  .2,  .2}{\textbf{Cityscapes}} & \textcolor[rgb]{ .2,  .2,  .2}{\textbf{COCO-Stuff 164K}} \\
    \midrule
    \textcolor[rgb]{ .2,  .2,  .2}{Head} & \textcolor[rgb]{ .2,  .2,  .2}{46685} & \textcolor[rgb]{ .2,  .2,  .2}{294290} & \textcolor[rgb]{ .2,  .2,  .2}{60157} \\
    \textcolor[rgb]{ .2,  .2,  .2}{Tail} & \textcolor[rgb]{ .2,  .2,  .2}{18977} & \textcolor[rgb]{ .2,  .2,  .2}{31128} & \textcolor[rgb]{ .2,  .2,  .2}{22526} \\
    \bottomrule
    \end{tabular}%
  \label{tab: The average number of pixels from head and tail classes per image.}%
\end{table}%

It can be observed that the number of head class pixels in each image is $2.46$ to $9.45$ times greater than that of the tail classes, meaning that storing head class pixels would require significantly more memory.

\Cref{tab: The performance differences between PMB and TMB} compares the performance differences between storing all and tail class pixels. It shows that the PMB, which incurs additional memory costs, performs almost the same as the TMB, and even shows a noticeable decline in the Cityscapes dataset. This is because head classes appear in almost every image (for example, in urban road datasets, it is hard to find an image without head class pixels like `road' or `sky'), so they do not need additional supplementation. Even if some images require supplementation of head classes, their larger pixel counts might cause them to overwrite the original tail class pixels when pasted, leading to a decline in performance. Thus, we only store tail class pixels.

\begin{table}[htbp]
  \centering
  \renewcommand\arraystretch{1}
    \caption{The performance differences between PMB and TMB in terms of mIoU (\%).}
    \begin{tabular}{cccc}
    \toprule
    \textcolor[rgb]{ .2,  .2,  .2}{\textbf{Dataset}} & \textcolor[rgb]{ .2,  .2,  .2}{\textbf{ADE20K}} & \textcolor[rgb]{ .2,  .2,  .2}{\textbf{Cityscapes}} & \textcolor[rgb]{ .2,  .2,  .2}{\textbf{COCO-Stuff 164K}} \\
    \midrule
    \textcolor[rgb]{ .2,  .2,  .2}{PMB} & \textcolor[rgb]{ .2,  .2,  .2}{49.09} & \textcolor[rgb]{ .2,  .2,  .2}{82.07} & \textcolor[rgb]{ .2,  .2,  .2}{42.66} \\
    \textcolor[rgb]{ .2,  .2,  .2}{TMB} & \textcolor[rgb]{ .2,  .2,  .2}{49.2} & \textcolor[rgb]{ .2,  .2,  .2}{82.71} & \textcolor[rgb]{ .2,  .2,  .2}{42.73} \\
    \bottomrule
    \end{tabular}%
  \label{tab: The performance differences between PMB and TMB}%
\end{table}%

\textbf{Why it is not feasible to focus on a fixed number of pixels?}

We conduct tests on the ADE20K dataset by supplementing a fixed number of tail class pixels ($10000$/$20000$/$30000$/$40000$) in each image and find that compared to AUCSeg, the performance differences are $-3.00\%$/$-1.93\%$/$-0.86\%$/$-0.73\%$. This is because supplementing a fixed number of pixels can result in incomplete images, such as only adding the front wheel of a bicycle, therefore loss of the structural information. The model is then unable to learn complete and accurate features. Therefore, in TMB, storing and retrieving are conducted on all pixels of an entire image.

\subsection{Results of Different Memory Bank Update Strategies}
\label{subsec: Results of Different Memory Bank Update Strategies}

In the previous section, we use the random replacement strategy to update the T-Memory Bank. We experiment with three other selection methods on the ADE20K dataset:

\begin{itemize}[leftmargin=*]
    \item \textbf{First-In-First-Out (FIFO) replacement}: Prioritizes replacing the images that were first stored in the Tail-class Memory Bank.
    \item \textbf{Last-In-First-Out (LIFO) replacement}: Prioritizes replacing the images that were last stored in the Tail-class Memory Bank.
    \item \textbf{Priority Used (PU) replacement}: Prioritizes replacing images that have previously been selected by the retrieval branch.
\end{itemize}

The results are shown in \cref{tab: Results of different memory bank update Strategies on ADE20K}.

\begin{table}[htbp]
  \centering
  \renewcommand\arraystretch{1}
    \caption{Results of different memory bank update Strategies on ADE20K in terms of mIoU (\%).}
    \begin{tabular}{ccccc}
    \toprule
    \textcolor[rgb]{ .2,  .2,  .2}{} & \textcolor[rgb]{ .2,  .2,  .2}{\textbf{Overall}} & \textcolor[rgb]{ .2,  .2,  .2}{\textbf{Head}} & \textcolor[rgb]{ .2,  .2,  .2}{\textbf{Middle}} & \textcolor[rgb]{ .2,  .2,  .2}{\textbf{Tail}} \\
    \midrule
    \textcolor[rgb]{ .2,  .2,  .2}{Random} & \textcolor[rgb]{ .2,  .2,  .2}{49.2} & \textcolor[rgb]{ .2,  .2,  .2}{\textbf{80.59}} & \textcolor[rgb]{ .2,  .2,  .2}{\textbf{59.45}} & \textcolor[rgb]{ .2,  .2,  .2}{45.52} \\
    \textcolor[rgb]{ .2,  .2,  .2}{FIFO} & \textcolor[rgb]{ .2,  .2,  .2}{\textbf{49.35}} & \textcolor[rgb]{ .2,  .2,  .2}{80.51} & \textcolor[rgb]{ .2,  .2,  .2}{58.71} & \textcolor[rgb]{ .2,  .2,  .2}{\textbf{45.8}} \\
    \textcolor[rgb]{ .2,  .2,  .2}{LIFO} & \textcolor[rgb]{ .2,  .2,  .2}{49.05} & \textcolor[rgb]{ .2,  .2,  .2}{80.35} & \textcolor[rgb]{ .2,  .2,  .2}{58.76} & \textcolor[rgb]{ .2,  .2,  .2}{45.45} \\
    \textcolor[rgb]{ .2,  .2,  .2}{PU} & \textcolor[rgb]{ .2,  .2,  .2}{49.21} & \textcolor[rgb]{ .2,  .2,  .2}{80.24} & \textcolor[rgb]{ .2,  .2,  .2}{58.73} & \textcolor[rgb]{ .2,  .2,  .2}{45.65} \\
    \bottomrule
    \end{tabular}%
  \label{tab: Results of different memory bank update Strategies on ADE20K}%
\end{table}%

FIFO and PU both show better performance overall and on tail classes compared to random sampling. However, LIFO, by updating only the most recently added images in the T-Memory Bank, causes the earlier images to remain unchanged. This leads to overfitting and, consequently, a decline in performance.

While these complex strategies can improve performance, the gains are relatively limited. On the other hand, the random replacement method is easy to implement. Exploring more complex and effective replacement methods could be a promising direction for future work.

\subsection{Detailed Results of the Ablation Study on Hyper-Parameters}
\label{subsec: detailed_results_of_the_ablation_study_on_hyper-parameters}

The detailed results of the ablation study on hyper-parameters are shown in \Cref{tab: TMBS}, \Cref{tab: SR}, \Cref{tab: RR}, and \Cref{tab: lambda}.

\begin{table*}[htbp]
  \begin{minipage}[c]{0.45\textwidth}
    \centering
    \caption{Ablation study on Memory Size ($S_{M}$) in terms of mIoU (\%).}
    \begin{tabular}{c|cc}
    \toprule
    \textbf{$S_{M}$} & \textbf{Overall} & \textbf{Tail} \\
    \midrule
    1     & 48.46 & 44.68 \\
    3     & 49.09 & 45.40 \\
    \textbf{5} & \textbf{49.20} & \textbf{45.52} \\
    8     & 48.81 & 45.06 \\
    10    & 48.80 & 44.99 \\
    20    & 48.63 & 44.96 \\
    \bottomrule
    \end{tabular}%
    \label{tab: TMBS}
  \end{minipage}%
  \hfill
  \begin{minipage}[c]{0.45\textwidth}
    \centering
    \caption{Ablation study on Sample Ratio ($R_S$) in terms of mIoU (\%).}
    \begin{tabular}{c|cc}
    \toprule
    \textbf{$R_S$} & \textbf{Overall} & \textbf{Tail} \\
    \midrule
    0.01  & 48.91 & 45.13 \\
    0.04  & 48.70 & 44.97 \\
    \textbf{0.05}  & \textbf{49.20} & \textbf{45.52} \\
    0.07   & 48.47 & 44.75 \\
    0.1  & 48.34 & 44.53 \\
    0.15   & 47.27 & 43.30 \\
    \bottomrule
    \end{tabular}%
    \label{tab: SR}
  \end{minipage}%
\end{table*}

\begin{table*}[htbp]
  \begin{minipage}[c]{0.45\textwidth}
    \centering
    \caption{Ablation study on Resize Ratio ($R_R$) in terms of mIoU (\%).}
    \begin{tabular}{c|cc}
    \toprule
    \textbf{$R_R$} & \textbf{Overall} & \textbf{Tail} \\
    \midrule
    0.3   & 49.07 & 45.42 \\
    \textbf{0.4} & \textbf{49.20} & \textbf{45.52} \\
    0.5   & 48.89 & 45.23 \\
    0.6   & 48.01 & 44.11 \\
    0.7   & 47.47 & 43.55 \\
    1     & 44.41 & 40.26 \\
    \bottomrule
    \end{tabular}%
    \label{tab: RR}
  \end{minipage}%
  \hfill
  \begin{minipage}[c]{0.45\textwidth}
    \centering
    \caption{Ablation study on the weight $\lambda$ for $\ell_{auc}$ and $\ell_{ce}$ in terms of mIoU (\%).}
    \begin{tabular}{c|cc}
    \toprule
    \textbf{$\lambda$} & \textbf{Overall} & \textbf{Tail} \\
    \midrule
    1/6     & 48.88 & 45.20 \\
    1/5     & 49.07 & 45.41 \\
    1/4     & \textbf{49.20} & \textbf{45.52} \\
    1/3     & 48.85 & 45.07 \\
    1/2     & 48.76 & 44.92 \\
    1     & 48.53 & 44.74 \\
    \bottomrule
    \end{tabular}%
    \label{tab: lambda}
  \end{minipage}%
\end{table*}

\textbf{Explanation of the reasons why performance improvement decreases as memory size increases.} This is a trade-off between diversity and learnability. When the memory size is too large, the probability of any single sample being effectively learned decreases. So the model fails to focus on important examples, and thus fails to capture their features, ultimately leading to underfitting. Conversely, if the memory size is too small, the diversity of samples is limited, which leads to overfitting. Hence, there is no free lunch for increasing the bank.

Therefore, we pursue a reasonable memory size. As shown in \Cref{tab: TMBS}, we believe that a memory size of $5$ is suitable in most cases. As the memory size increases/decreases, the performance slightly declines due to model overfitting/underfitting.

\subsection{Results of the Ablation Study on the Impact of Batch Size}
\label{subsec: Results of the Ablation Study on the Impact of Batch Size}

We conduct ablation experiments on the ADE20K dataset to evaluate the impact of batch size. The results are shown in \Cref{tab: Results of the ablation study on the impact of batch size}:

\begin{table}[htbp]
  \centering
  \renewcommand\arraystretch{1}
    \caption{Results of the ablation study on the impact of batch size in terms of mIoU (\%).}
    \begin{tabular}{ccc}
    \toprule
    \textcolor[rgb]{ .2,  .2,  .2}{\textbf{Batch Size}} & \textcolor[rgb]{ .2,  .2,  .2}{\textbf{Overall}} & \textcolor[rgb]{ .2,  .2,  .2}{\textbf{Tail}} \\
    \midrule
    \textcolor[rgb]{ .2,  .2,  .2}{1} & \textcolor[rgb]{ .2,  .2,  .2}{34.73} & \textcolor[rgb]{ .2,  .2,  .2}{29.66} \\
    \textcolor[rgb]{ .2,  .2,  .2}{+AUCSeg} & \textcolor[rgb]{ .2,  .2,  .2}{\textbf{40.14}} & \textcolor[rgb]{ .2,  .2,  .2}{\textbf{35.74}} \\
    \midrule
    \textcolor[rgb]{ .2,  .2,  .2}{2} & \textcolor[rgb]{ .2,  .2,  .2}{45.5} & \textcolor[rgb]{ .2,  .2,  .2}{41.34} \\
    \textcolor[rgb]{ .2,  .2,  .2}{+AUCSeg} & \textcolor[rgb]{ .2,  .2,  .2}{\textbf{46.86}} & \textcolor[rgb]{ .2,  .2,  .2}{\textbf{42.93}} \\
    \midrule
    \textcolor[rgb]{ .2,  .2,  .2}{4} & \textcolor[rgb]{ .2,  .2,  .2}{47.45} & \textcolor[rgb]{ .2,  .2,  .2}{43.28} \\
    \textcolor[rgb]{ .2,  .2,  .2}{+AUCSeg} & \textcolor[rgb]{ .2,  .2,  .2}{\textbf{49.2}} & \textcolor[rgb]{ .2,  .2,  .2}{\textbf{45.52}} \\
    \midrule
    \textcolor[rgb]{ .2,  .2,  .2}{8} & \textcolor[rgb]{ .2,  .2,  .2}{49.35} & \textcolor[rgb]{ .2,  .2,  .2}{45.46} \\
    \textcolor[rgb]{ .2,  .2,  .2}{+AUCSeg} & \textcolor[rgb]{ .2,  .2,  .2}{\textbf{49.36}} & \textcolor[rgb]{ .2,  .2,  .2}{\textbf{45.53}} \\
    \midrule
    \textcolor[rgb]{ .2,  .2,  .2}{16} & \textcolor[rgb]{ .2,  .2,  .2}{50.07} & \textcolor[rgb]{ .2,  .2,  .2}{46.32} \\
    \textcolor[rgb]{ .2,  .2,  .2}{+AUCSeg} & \textcolor[rgb]{ .2,  .2,  .2}{\textbf{50.96}} & \textcolor[rgb]{ .2,  .2,  .2}{\textbf{47.03}} \\
    \bottomrule
    \end{tabular}%
  \label{tab: Results of the ablation study on the impact of batch size}%
\end{table}%

The performance improves as the batch size increases. Moreover, our AUCSeg is consistently effective across different batch sizes.

\section{More Discussions About AUCSeg}
\label{sec: More Discussions About AUCSeg}

\textbf{Training and inference efficiency.} During training, since AUCSeg ($1.62 \pm 0.2s$ per iteration) adopts pairwise loss, it will inevitably suffer from extra complexity compared to the standard CE-based SegNeXt ($0.76 \pm 0.15s$ per iteration). Besides, during inference, since AUCSeg does not modify the model's backbone, all algorithms with the same backbone achieve similar inference efficiency ($60 \pm 5ms$ per image). Overall, AUCSeg could perform well with acceptable efficiency.

\textbf{Performance trade-off between head and tail classes.} We recognize that AUCSeg might slightly impair the performance of head classes due to an increased focus on tail classes. However, this often results in substantial improvements for tail classes, a trade-off that is generally beneficial since tail classes are typically more critical. For instance, on the Cityscapes dataset, AUCSeg exhibits a marginal decrease of $0.17\%$ in head classes but gains $0.75\%$ in tail classes compared to the runner-up, SegNeXt. Furthermore, we note that performance decreases in head classes mainly arise from misclassification at the blurry edges of objects. In contrast, gains in tail classes often stem from either the successful detection of smaller objects or the more complete detection of such objects. To summarize, on the Cityscapes datasets, detecting a new tail object like `Traffic Lights' is far more significant than precisely detecting the edge pixels of a head class like `sky'. Thus, we consider this trade-off highly beneficial.

\section{Broader Impact}
\label{sec: Broader Impact}
We propose a general semantic segmentation method to deal with the potential bias toward long-tail objects. For fairness-sensitive scenarios, it might be helpful to improve fairness for long-tail groups.

\newpage
\section*{NeurIPS Paper Checklist}

\begin{enumerate}

\item {\bf Claims}
    \item[] Question: Do the main claims made in the abstract and introduction accurately reflect the paper's contributions and scope?
    \item[] Answer: \answerYes{}
    \item[] Justification: We briefly summarize it in the abstract and detail the paper's contributions and scope in the introduction.
    \item[] Guidelines:
    \begin{itemize}
        \item The answer NA means that the abstract and introduction do not include the claims made in the paper.
        \item The abstract and/or introduction should clearly state the claims made, including the contributions made in the paper and important assumptions and limitations. A No or NA answer to this question will not be perceived well by the reviewers. 
        \item The claims made should match theoretical and experimental results, and reflect how much the results can be expected to generalize to other settings. 
        \item It is fine to include aspirational goals as motivation as long as it is clear that these goals are not attained by the paper. 
    \end{itemize}

\item {\bf Limitations}
    \item[] Question: Does the paper discuss the limitations of the work performed by the authors?
    \item[] Answer: \answerYes{}
    \item[] Justification: We discuss this issue in \Cref{sec: More Discussions About AUCSeg}.
    \item[] Guidelines:
    \begin{itemize}
        \item The answer NA means that the paper has no limitation while the answer No means that the paper has limitations, but those are not discussed in the paper. 
        \item The authors are encouraged to create a separate "Limitations" section in their paper.
        \item The paper should point out any strong assumptions and how robust the results are to violations of these assumptions (e.g., independence assumptions, noiseless settings, model well-specification, asymptotic approximations only holding locally). The authors should reflect on how these assumptions might be violated in practice and what the implications would be.
        \item The authors should reflect on the scope of the claims made, e.g., if the approach was only tested on a few datasets or with a few runs. In general, empirical results often depend on implicit assumptions, which should be articulated.
        \item The authors should reflect on the factors that influence the performance of the approach. For example, a facial recognition algorithm may perform poorly when image resolution is low or images are taken in low lighting. Or a speech-to-text system might not be used reliably to provide closed captions for online lectures because it fails to handle technical jargon.
        \item The authors should discuss the computational efficiency of the proposed algorithms and how they scale with dataset size.
        \item If applicable, the authors should discuss possible limitations of their approach to address problems of privacy and fairness.
        \item While the authors might fear that complete honesty about limitations might be used by reviewers as grounds for rejection, a worse outcome might be that reviewers discover limitations that aren't acknowledged in the paper. The authors should use their best judgment and recognize that individual actions in favor of transparency play an important role in developing norms that preserve the integrity of the community. Reviewers will be specifically instructed to not penalize honesty concerning limitations.
    \end{itemize}

\item {\bf Theory Assumptions and Proofs}
    \item[] Question: For each theoretical result, does the paper provide the full set of assumptions and a complete (and correct) proof?
    \item[] Answer: \answerYes{}
    \item[] Justification: We provide complete proofs for each theoretical result in \Cref{sec: Generalization Bounds and Its Proofs} and \Cref{sec: Proof for Propositions of Tail-class Memory Bank}.
    \item[] Guidelines:
    \begin{itemize}
        \item The answer NA means that the paper does not include theoretical results. 
        \item All the theorems, formulas, and proofs in the paper should be numbered and cross-referenced.
        \item All assumptions should be clearly stated or referenced in the statement of any theorems.
        \item The proofs can either appear in the main paper or the supplemental material, but if they appear in the supplemental material, the authors are encouraged to provide a short proof sketch to provide intuition. 
        \item Inversely, any informal proof provided in the core of the paper should be complemented by formal proofs provided in appendix or supplemental material.
        \item Theorems and Lemmas that the proof relies upon should be properly referenced. 
    \end{itemize}

    \item {\bf Experimental Result Reproducibility}
    \item[] Question: Does the paper fully disclose all the information needed to reproduce the main experimental results of the paper to the extent that it affects the main claims and/or conclusions of the paper (regardless of whether the code and data are provided or not)?
    \item[] Answer: \answerYes{}
    \item[] Justification: We describe our algorithm in \Cref{sec: methodology} and fully disclose all the information needed to reproduce the main experimental results of the paper in \Cref{subsec: Setup} and \Cref{sec: Additional Experiment Settings}.
    \item[] Guidelines:
    \begin{itemize}
        \item The answer NA means that the paper does not include experiments.
        \item If the paper includes experiments, a No answer to this question will not be perceived well by the reviewers: Making the paper reproducible is important, regardless of whether the code and data are provided or not.
        \item If the contribution is a dataset and/or model, the authors should describe the steps taken to make their results reproducible or verifiable. 
        \item Depending on the contribution, reproducibility can be accomplished in various ways. For example, if the contribution is a novel architecture, describing the architecture fully might suffice, or if the contribution is a specific model and empirical evaluation, it may be necessary to either make it possible for others to replicate the model with the same dataset, or provide access to the model. In general. releasing code and data is often one good way to accomplish this, but reproducibility can also be provided via detailed instructions for how to replicate the results, access to a hosted model (e.g., in the case of a large language model), releasing of a model checkpoint, or other means that are appropriate to the research performed.
        \item While NeurIPS does not require releasing code, the conference does require all submissions to provide some reasonable avenue for reproducibility, which may depend on the nature of the contribution. For example
        \begin{enumerate}
            \item If the contribution is primarily a new algorithm, the paper should make it clear how to reproduce that algorithm.
            \item If the contribution is primarily a new model architecture, the paper should describe the architecture clearly and fully.
            \item If the contribution is a new model (e.g., a large language model), then there should either be a way to access this model for reproducing the results or a way to reproduce the model (e.g., with an open-source dataset or instructions for how to construct the dataset).
            \item We recognize that reproducibility may be tricky in some cases, in which case authors are welcome to describe the particular way they provide for reproducibility. In the case of closed-source models, it may be that access to the model is limited in some way (e.g., to registered users), but it should be possible for other researchers to have some path to reproducing or verifying the results.
        \end{enumerate}
    \end{itemize}

\item {\bf Open access to data and code}
    \item[] Question: Does the paper provide open access to the data and code, with sufficient instructions to faithfully reproduce the main experimental results, as described in supplemental material?
    \item[] Answer: \answerYes{}
    \item[] Justification: We provide a link to the data and code in the abstract.
    \item[] Guidelines:
    \begin{itemize}
        \item The answer NA means that paper does not include experiments requiring code.
        \item Please see the NeurIPS code and data submission guidelines (\url{https://nips.cc/public/guides/CodeSubmissionPolicy}) for more details.
        \item While we encourage the release of code and data, we understand that this might not be possible, so “No” is an acceptable answer. Papers cannot be rejected simply for not including code, unless this is central to the contribution (e.g., for a new open-source benchmark).
        \item The instructions should contain the exact command and environment needed to run to reproduce the results. See the NeurIPS code and data submission guidelines (\url{https://nips.cc/public/guides/CodeSubmissionPolicy}) for more details.
        \item The authors should provide instructions on data access and preparation, including how to access the raw data, preprocessed data, intermediate data, and generated data, etc.
        \item The authors should provide scripts to reproduce all experimental results for the new proposed method and baselines. If only a subset of experiments are reproducible, they should state which ones are omitted from the script and why.
        \item At submission time, to preserve anonymity, the authors should release anonymized versions (if applicable).
        \item Providing as much information as possible in supplemental material (appended to the paper) is recommended, but including URLs to data and code is permitted.
    \end{itemize}

\item {\bf Experimental Setting/Details}
    \item[] Question: Does the paper specify all the training and test details (e.g., data splits, hyperparameters, how they were chosen, type of optimizer, etc.) necessary to understand the results?
    \item[] Answer: \answerYes{}
    \item[] Justification: We provide the completed experimental setting in \Cref{sec: Additional Experiment Settings}.
    \item[] Guidelines:
    \begin{itemize}
        \item The answer NA means that the paper does not include experiments.
        \item The experimental setting should be presented in the core of the paper to a level of detail that is necessary to appreciate the results and make sense of them.
        \item The full details can be provided either with the code, in appendix, or as supplemental material.
    \end{itemize}

\item {\bf Experiment Statistical Significance}
    \item[] Question: Does the paper report error bars suitably and correctly defined or other appropriate information about the statistical significance of the experiments?
    \item[] Answer: \answerYes{}
    \item[] Justification: We report this issue in \Cref{sec: experiments}.
    \item[] Guidelines:
    \begin{itemize}
        \item The answer NA means that the paper does not include experiments.
        \item The authors should answer "Yes" if the results are accompanied by error bars, confidence intervals, or statistical significance tests, at least for the experiments that support the main claims of the paper.
        \item The factors of variability that the error bars are capturing should be clearly stated (for example, train/test split, initialization, random drawing of some parameter, or overall run with given experimental conditions).
        \item The method for calculating the error bars should be explained (closed form formula, call to a library function, bootstrap, etc.)
        \item The assumptions made should be given (e.g., Normally distributed errors).
        \item It should be clear whether the error bar is the standard deviation or the standard error of the mean.
        \item It is OK to report 1-sigma error bars, but one should state it. The authors should preferably report a 2-sigma error bar than state that they have a 96\% CI, if the hypothesis of Normality of errors is not verified.
        \item For asymmetric distributions, the authors should be careful not to show in tables or figures symmetric error bars that would yield results that are out of range (e.g. negative error rates).
        \item If error bars are reported in tables or plots, The authors should explain in the text how they were calculated and reference the corresponding figures or tables in the text.
    \end{itemize}

\item {\bf Experiments Compute Resources}
    \item[] Question: For each experiment, does the paper provide sufficient information on the computer resources (type of compute workers, memory, time of execution) needed to reproduce the experiments?
    \item[] Answer: \answerYes{}
    \item[] Justification: We provide sufficient information on the computer resources in \Cref{subsec: Implementation Details}.
    \item[] Guidelines:
    \begin{itemize}
        \item The answer NA means that the paper does not include experiments.
        \item The paper should indicate the type of compute workers CPU or GPU, internal cluster, or cloud provider, including relevant memory and storage.
        \item The paper should provide the amount of compute required for each of the individual experimental runs as well as estimate the total compute. 
        \item The paper should disclose whether the full research project required more compute than the experiments reported in the paper (e.g., preliminary or failed experiments that didn't make it into the paper). 
    \end{itemize}
    
\item {\bf Code Of Ethics}
    \item[] Question: Does the research conducted in the paper conform, in every respect, with the NeurIPS Code of Ethics \url{https://neurips.cc/public/EthicsGuidelines}?
    \item[] Answer: \answerYes{}
    \item[] Justification: The research conducted in the paper conforms in every respect with the NeurIPS Code of Ethics.
    \item[] Guidelines:
    \begin{itemize}
        \item The answer NA means that the authors have not reviewed the NeurIPS Code of Ethics.
        \item If the authors answer No, they should explain the special circumstances that require a deviation from the Code of Ethics.
        \item The authors should make sure to preserve anonymity (e.g., if there is a special consideration due to laws or regulations in their jurisdiction).
    \end{itemize}

\item {\bf Broader Impacts}
    \item[] Question: Does the paper discuss both potential positive societal impacts and negative societal impacts of the work performed?
    \item[] Answer: \answerYes{}
    \item[] Justification: We discuss this issue in \Cref{sec: Broader Impact}.
    \item[] Guidelines:
    \begin{itemize}
        \item The answer NA means that there is no societal impact of the work performed.
        \item If the authors answer NA or No, they should explain why their work has no societal impact or why the paper does not address societal impact.
        \item Examples of negative societal impacts include potential malicious or unintended uses (e.g., disinformation, generating fake profiles, surveillance), fairness considerations (e.g., deployment of technologies that could make decisions that unfairly impact specific groups), privacy considerations, and security considerations.
        \item The conference expects that many papers will be foundational research and not tied to particular applications, let alone deployments. However, if there is a direct path to any negative applications, the authors should point it out. For example, it is legitimate to point out that an improvement in the quality of generative models could be used to generate deepfakes for disinformation. On the other hand, it is not needed to point out that a generic algorithm for optimizing neural networks could enable people to train models that generate Deepfakes faster.
        \item The authors should consider possible harms that could arise when the technology is being used as intended and functioning correctly, harms that could arise when the technology is being used as intended but gives incorrect results, and harms following from (intentional or unintentional) misuse of the technology.
        \item If there are negative societal impacts, the authors could also discuss possible mitigation strategies (e.g., gated release of models, providing defenses in addition to attacks, mechanisms for monitoring misuse, mechanisms to monitor how a system learns from feedback over time, improving the efficiency and accessibility of ML).
    \end{itemize}
    
\item {\bf Safeguards}
    \item[] Question: Does the paper describe safeguards that have been put in place for responsible release of data or models that have a high risk for misuse (e.g., pretrained language models, image generators, or scraped datasets)?
    \item[] Answer: \answerNA{}
    \item[] Justification: The paper does not release data or models that have a high risk for misuse.
    \item[] Guidelines:
    \begin{itemize}
        \item The answer NA means that the paper poses no such risks.
        \item Released models that have a high risk for misuse or dual-use should be released with necessary safeguards to allow for controlled use of the model, for example by requiring that users adhere to usage guidelines or restrictions to access the model or implementing safety filters. 
        \item Datasets that have been scraped from the Internet could pose safety risks. The authors should describe how they avoided releasing unsafe images.
        \item We recognize that providing effective safeguards is challenging, and many papers do not require this, but we encourage authors to take this into account and make a best faith effort.
    \end{itemize}

\item {\bf Licenses for existing assets}
    \item[] Question: Are the creators or original owners of assets (e.g., code, data, models), used in the paper, properly credited and are the license and terms of use explicitly mentioned and properly respected?
    \item[] Answer: \answerYes{}
    \item[] Justification: We cite the original paper that produced the code package and dataset.
    \item[] Guidelines:
    \begin{itemize}
        \item The answer NA means that the paper does not use existing assets.
        \item The authors should cite the original paper that produced the code package or dataset.
        \item The authors should state which version of the asset is used and, if possible, include a URL.
        \item The name of the license (e.g., CC-BY 4.0) should be included for each asset.
        \item For scraped data from a particular source (e.g., website), the copyright and terms of service of that source should be provided.
        \item If assets are released, the license, copyright information, and terms of use in the package should be provided. For popular datasets, \url{paperswithcode.com/datasets} has curated licenses for some datasets. Their licensing guide can help determine the license of a dataset.
        \item For existing datasets that are re-packaged, both the original license and the license of the derived asset (if it has changed) should be provided.
        \item If this information is not available online, the authors are encouraged to reach out to the asset's creators.
    \end{itemize}

\item {\bf New Assets}
    \item[] Question: Are new assets introduced in the paper well documented and is the documentation provided alongside the assets?
    \item[] Answer: \answerYes{}
    \item[] Justification: We provide the code with documentation in the link within the abstract.
    \item[] Guidelines:
    \begin{itemize}
        \item The answer NA means that the paper does not release new assets.
        \item Researchers should communicate the details of the dataset/code/model as part of their submissions via structured templates. This includes details about training, license, limitations, etc. 
        \item The paper should discuss whether and how consent was obtained from people whose asset is used.
        \item At submission time, remember to anonymize your assets (if applicable). You can either create an anonymized URL or include an anonymized zip file.
    \end{itemize}

\item {\bf Crowdsourcing and Research with Human Subjects}
    \item[] Question: For crowdsourcing experiments and research with human subjects, does the paper include the full text of instructions given to participants and screenshots, if applicable, as well as details about compensation (if any)? 
    \item[] Answer: \answerNA{}
    \item[] Justification: The paper does not involve crowdsourcing and research with human subjects.
    \item[] Guidelines:
    \begin{itemize}
        \item The answer NA means that the paper does not involve crowdsourcing nor research with human subjects.
        \item Including this information in the supplemental material is fine, but if the main contribution of the paper involves human subjects, then as much detail as possible should be included in the main paper. 
        \item According to the NeurIPS Code of Ethics, workers involved in data collection, curation, or other labor should be paid at least the minimum wage in the country of the data collector. 
    \end{itemize}

\item {\bf Institutional Review Board (IRB) Approvals or Equivalent for Research with Human Subjects}
    \item[] Question: Does the paper describe potential risks incurred by study participants, whether such risks were disclosed to the subjects, and whether Institutional Review Board (IRB) approvals (or an equivalent approval/review based on the requirements of your country or institution) were obtained?
    \item[] Answer: \answerNA{}
    \item[] Justification: The paper does not involve crowdsourcing and research with human subjects.
    \item[] Guidelines:
    \begin{itemize}
        \item The answer NA means that the paper does not involve crowdsourcing nor research with human subjects.
        \item Depending on the country in which research is conducted, IRB approval (or equivalent) may be required for any human subjects research. If you obtained IRB approval, you should clearly state this in the paper. 
        \item We recognize that the procedures for this may vary significantly between institutions and locations, and we expect authors to adhere to the NeurIPS Code of Ethics and the guidelines for their institution. 
        \item For initial submissions, do not include any information that would break anonymity (if applicable), such as the institution conducting the review.
    \end{itemize}

\end{enumerate}

\end{document}